\begin{document}

\title{Set-Valued Rigid-Body Dynamics for Simultaneous, Inelastic, Frictional Impacts}

\author{Mathew Halm\affilnum{1} and Michael Posa\affilnum{1}}

\runninghead{Halm and Posa}

\affiliation{\affilnum{1}GRASP Laboratory, University of Pennsylvania}

\corrauth{Mathew Halm \\
GRASP Laboratory \\
University of Pennsylvania \\
Philadelphia, PA, 19104
}

\email{mhalm@seas.upenn.edu}

\keywords{Rigid-body Dynamics; Simulation; Contact Modeling; Legged Locomotion; Manipulation; Linear Complementarity Problems}

\begin{abstract}
Robotic manipulation and locomotion often entail nearly-simultaneous collisions---such as heel and toe strikes during a foot step---with outcomes that are extremely sensitive to the order in which impacts occur.
Robotic simulators and state estimation commonly lack the fidelity and accuracy to predict this ordering, and instead pick one with a heuristic.
This discrepancy degrades performance when model-based controllers and policies learned in simulation are placed on a real robot.
We reconcile this issue with a \textit{set-valued} rigid-body model which generates a broad set of outcomes to simultaneous frictional impacts with any impact ordering.
We first extend Routh's impact model to multiple impacts by reformulating it as a differential inclusion (DI), and show that any solution will resolve all impacts in finite time.
By considering time as a state, we embed this model into another DI which captures the continuous-time evolution of rigid body dynamics, and guarantee existence of solutions.
We finally cast simulation of simultaneous impacts as a linear complementarity problem (LCP), and develop an algorithm for tight approximation of the post-impact velocity set with probabilistic guarantees.
We demonstrate our approach on several examples drawn from manipulation and legged locomotion, and compare the predictions to other models of rigid and compliant collisions.
\end{abstract}

\maketitle


\section{Introduction}
\label{section:introduction}
\begin{figure*}[h]
    \center
     \begin{subfigure}[b]{.16\textwidth}
        \includegraphics[width=.98\hsize]{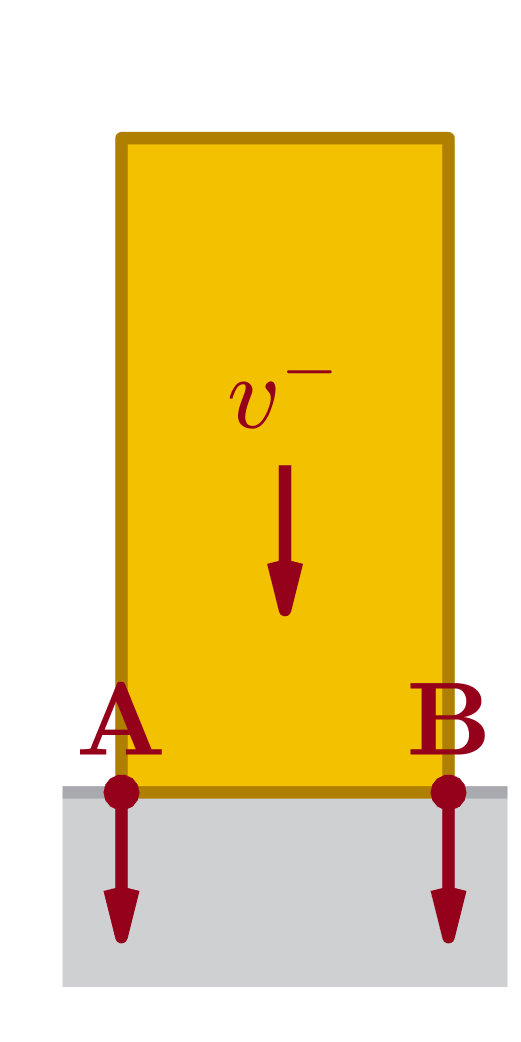}
        \centering
        \caption{\label{fig:phone_cartoon}Initial condition\\ (pre-impact)}
    \end{subfigure}
    \begin{subfigure}[b]{.16\textwidth}
        \includegraphics[width=.98\hsize]{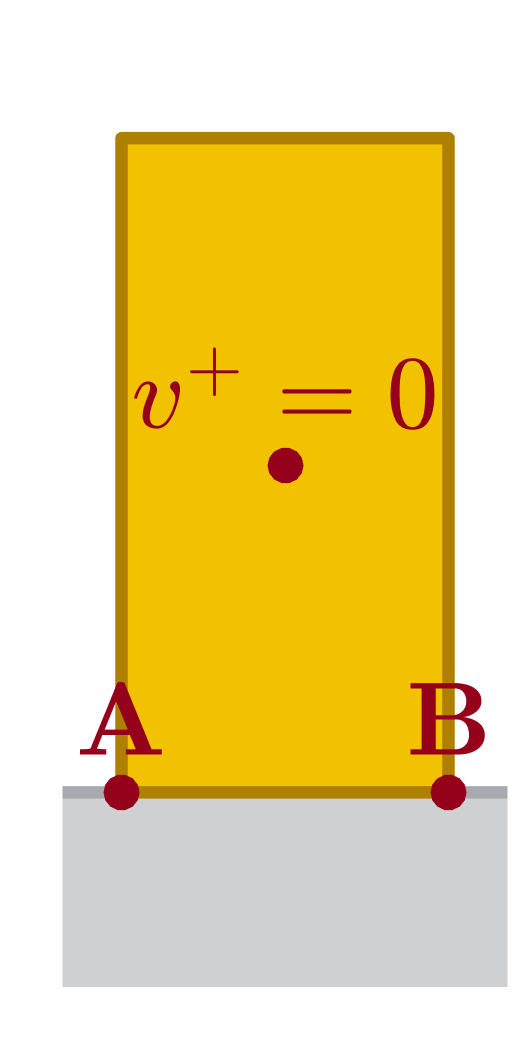}
        \centering
        \caption{\label{fig:phone_example_1}Symmetric impact \\ (post-impact)}
    \end{subfigure}
    \begin{subfigure}[b]{.32\textwidth}
        \includegraphics[width=.49\hsize]{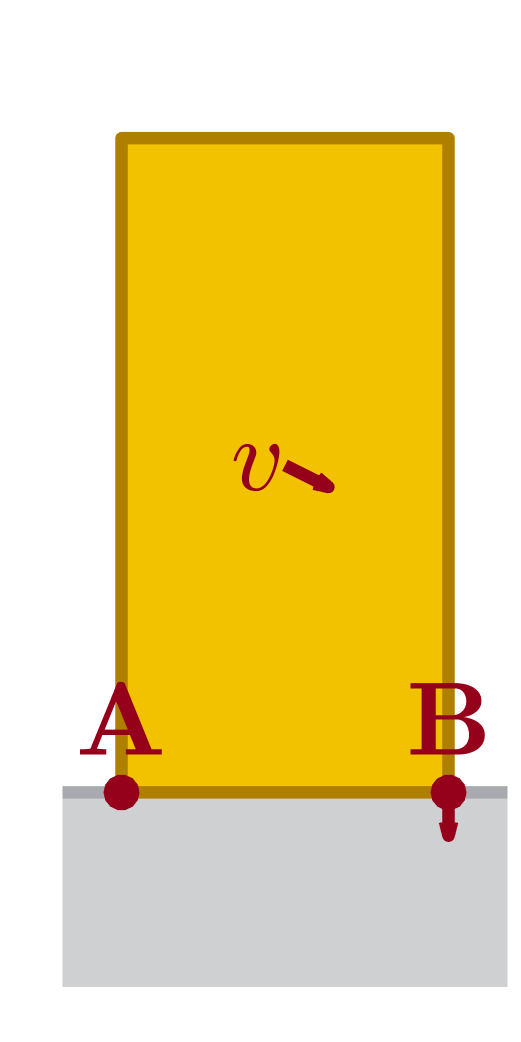}
        \includegraphics[width=.49\hsize]{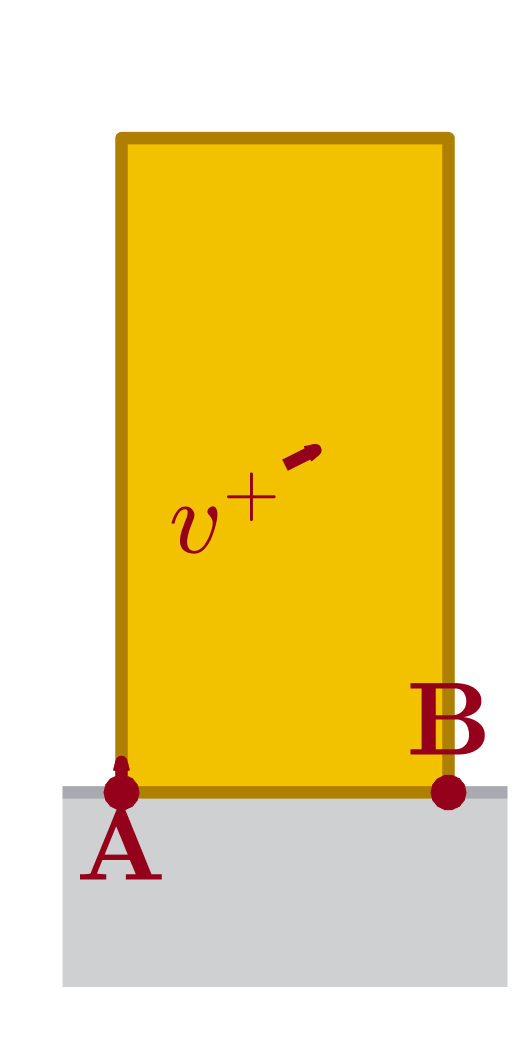}
        \centering
        \caption{\label{fig:phone_example_2}A-then-B sequential impacts\\ (mid-, then post-impact)}
    \end{subfigure}
    \begin{subfigure}[b]{.32\textwidth}
        \includegraphics[width=.49\hsize]{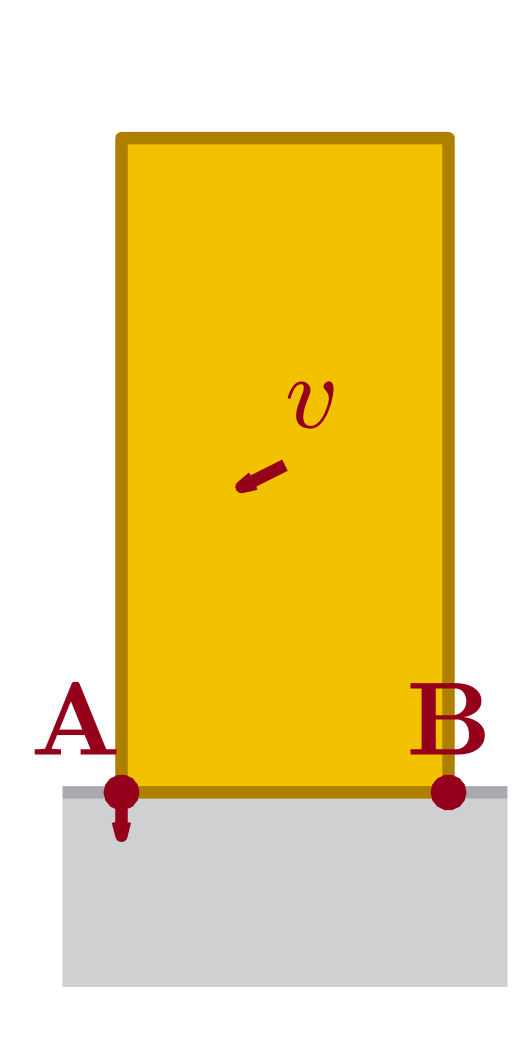}
        \includegraphics[width=.49\hsize]{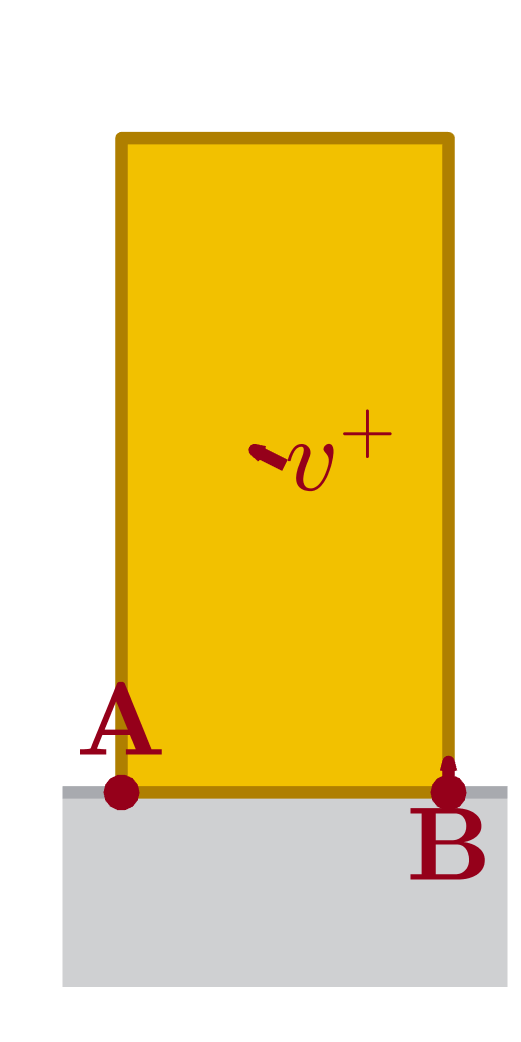}
        \centering
        \caption{\label{fig:phone_example_3}B-then-A sequential impacts\\ (mid-, then post-impact)}
    \end{subfigure}
    
    \caption{Differing outcomes under differing impulse orderings in instantaneous, inelastic impact models are considered for a rocking block (yellow), which collides with flat ground (gray) at 2 corners $\vect A, \vect B$(\subref*{fig:phone_cartoon}); additional details provided in \Cref{subsec:impactbackground} and \Cref{adx:exampledetails}.
        (\subref*{fig:phone_example_1}) One perspective is that impacts at $\vect A$ and $\vect B$ should be resolved simultaneously. Applying the inelastic variant of \citet{Anitescu97} (see \Cref{eq:AnitestcuDiscreteFormulation}) for example results in the block coming to rest.
        (\subref*{fig:phone_example_2}) Other models process impacts one-at-a-time, e.g. \citet{Ivanov1995}.
        With sufficient friction, any impact at a single corner can stick.
        If $\vect A$'s impact is processed first, the block pivots counter-clockwise, necessitating a second impact at $\vect B$, which then causes the block to pivot about $\vect B$ with $\vect A$ lifting off the ground.
        If the impacts are instead ordered $\vect B$-then-$\vect A$, then by symmetry is $\vect A$ pivots and $\vect B$ lifts off. }
    \label{fig:phone}
\end{figure*}
Imperfect but useful physical models have long enabled improvements in planning and control of robotic locomotion and manipulation.
However, the shift from slow, simple motion in tightly-controlled laboratories to dynamic, complex, real-world tasks has dramatically increased accuracy requirements and decreased calibration data availability for these models.
As a result, model inaccuracy 
has become a common bottleneck in developing modern machine learning and mechanics-based methods alike;
in particular, inaccurate prediction of collisions among robots and their surroundings is a longstanding failure of robotics models, especially when multiple impacts happen simultaneously or in quick succession \citep{Ibarz2021,Wensing2023}.

From a mechanical perspective, these failures arise in part from inherent unpredictability of simultaneous collisions.
While some robotics systems and environments are intentionally soft (e.g. cloth manipulation), many locomotion and manipulation tasks inherently involve contact between nearly-rigid components of robots and their environment \citep{Wieber2016,Kemp2007}.
When such objects collide, materials deform on an imperceptibly-small spatial and temporal scale to prevent interpenetration, inducing extreme sensitivity in their motion.
Even small changes in initial conditions and material properties generate large changes in real-world outcomes; accordingly, small errors in state estimation and identification produce large prediction error \citep{Ibarz2021,Chatterjee1997}.
A familiar occurrence of this sensitivity is the unpredictability of billiards breaks \citep{Wang2015} and dice rolls, though even a simple rectangular block impacting flat ground (Figure \ref{fig:phone}) is difficult to model \citep{Housner1963,Zhang2001,Lygeros2003,Yilmaz2009}.
Unfortunately, sensitive, simultaneous impacts regularly occur in robotics (see Section \ref{subsection:motivatingexamples}).

This sensitivity is highly dependent the rapid ordering or sequencing of impact forces between the various colliding bodies \citep{Wang1992, Hurmuzlu1994, Chatterjee1999,  Ivanov1995, Smith2012, Uchida2015}.
In reality, this ordering emerges from material properties and deformation dynamics \citep{Chatterjee1999}, which are generally not tractable to fully identify or simulate in real-world robotics scenarios.
Instead, robotics models typically make a rigid-body assumption, a tractable but coarse approximation of contact mechanics in which objects do not deform.
Such models inherently do not fully capture true impact physics; instead, they typically select a single outcome according to some physically-principled constraints.
There is broad agreement that it is important that solutions to such models should exist over arbitrary time horizons; and that collisions not inject energy into the system \citep{Stewart2000,Stronge90}.
Most models also add additional and seemingly well-motivated constraints in the pursuit of uniqueness of solutions, such as maximum dissipation \citep{Drumwright2010}, minimum potential energy \citep{Uchida2015}, symmetry \citep{Smith2012}, and velocity-based complementarity \citep{Anitescu97}.
Additionally some models have a handful of non-unique solutions, but rely on a numerical solvers which may be biased toward a particular solution \citep{Anitescu97,Stewart1996a,Remy2017}.
However, differing constraints inevitably lead to disagreeing or unrealistic predictions \citep{Remy2017,Fazeli2017a}, and unique outcomes do not reflect the large uncertainty generated from the practically-unknowable sequencing of impacts.
Under restrictions on the systems and mechanics involved, such as massless limbs and no kinetic friction, such models may lead to useful, accurate modeling of robotic systems and tasks \citep{Johnson2016a,Burden2016}.
As both simple and complex robotics systems violate these assumptions \citep{Remy2017,Fazeli2017a}, it is still important to investigate principled modeling approaches that faithfully represent such systems.

In the examples we discuss in \Cref{sec:examples}, we find that the discrepancies between and within models can be significantly large.
This may be particularly problematic for model-based controllers which have built around and are fragile to deviations from a single, expected behavior \citep{Wensing2023}, such as learned policies trained on a single set of settings in a single simulator \citep{Peng2018}, or tracking a dynamically-feasible trajectory of a particular, approximate model \citep{Yang2023}.
This work takes a fundamentally different perspective, in which we propose the development of \textit{set-valued} rigid-body models that attempt to generate all physically-reasonable outcomes, particularly by capturing the effects of arbitrary ordering of impacts.
Though some predictions from such a model may not ultimately occur, controllers guaranteed to stabilize the model---and learned policies trained on the model's predictions---are well-positioned to perform reliably in the real world.

While non-unique predictions through randomly-sequenced individual impacts has existed conceptually for decades \citep{Ivanov1995}, such methods do not capture the subtleties of partially-concurrent impacts \citep{Chatterjee1998TwoInterpretations}, and feasible computation of the entire set of possibilities has remained an open problem \citep{Stewart2000}.
In the domain of inelastic impacts, we tackle both of these issues by developing a differential impact model which allows impacts to resolve at arbitrary relative rates, first conceptually explored in \citet{Posa16}.
This construction is similar mathematically to other methods, including Darboux-Keller \citep{Keller1986} and LZB \citep{Nguyen2018} approaches, in that it extends Routh's original method for inelastic impact \citep{Routh91}; such extensions have so far however been focused on producing a single outcome when well-identified material properties are available \citep{Nguyen2018}.
We also find that intentionally permitting many different behaviors enables proofs of existence of well-behaved solutions under exceptionally few assumptions---ones which terminate a single impact in both continuous and discrete domains; and continuous-time solutions incorporating both movement under sustained contact as well as instantaneous impacts.
In particular, in the latter case we guarantee solutions through well-known pathological scenrios of rigid-body motion, such as Painlev\'e paradoxes \citep{Stewart2000} and Zeno behaviors \citep{Ames2006}.
We will pair these theoretical advances with practical algorithms for approximation of the set of outcomes to individual impacts, which can be readily integrated with event-based simulation schemes.

This work extends our previous work \citep{Halm2019}, in which we first extended Routh's impact method to set-valued simultaneous frictional impacts. This paper supplements the scope of this work with the following:
\begin{itemize}
	\item In \Cref{section:model}, we provide a simplified theoretical analysis of our set-valued impact model (\Cref{eq:multicontact}). We prove that solutions to this model always exist (\Cref{lem:NonEmptyClosure}), and that each solution is physically reasonable in that it dissipates kinetic energy (\Cref{lem:dissipate}) and terminates the impact process over a finite duration (\Cref{thm:nondegeneratedissipation}). We include new motivating examples highlighting the inconsistencies between existing models of simultaneous impact in \Cref{subsection:motivatingexamples}.
	\item In \Cref{section:continuous_time_model}, we unify set-valued impacts and continuous-time evolution into a single model (\Cref{eq:ContinuousTimeModel}).
	We prove that solutions to this model as well always exist (\Cref{thm:ContinuousTimeSolutions}) over arbitrary time horizons (\Cref{thm:ContinuousTimeMinimumAdvancement} and \Cref{coro:AggregateAdvancement}). We illuminate via example how pathological scenarios including Painlev\'e paradoxes and Zeno behaviors are captured by the model.
	\item In Section \ref{section:simulation}, we formulate an implicit numerical integration scheme for impact model, encoded as a linear complementarity problem (LCP) (\Cref{eq:SimulationLCP}).
	We demonstrate that each integration step LCP is solvable (\Cref{thm:TimesteppingSolutionExistence}) and dissipates kinetic energy (\Cref{thm:TimesteppingDissipation}).
We provide algorithms with probabilistic bounds on computation time for both sampling from (Algorithm \ref{alg:ImpactSimulation} and \Cref{thm:ExponentialSimulationDecay}) and global approximation of (Algorithm \ref{alg:Approximate} and \Cref{thm:VelocitySetApproximation}) the feasible post-impact velocity set of a simultaneous impact event. In Section \ref{sec:examples}, we apply our model to several examples from robotic locomotion and manipulation.
\end{itemize}
\section{Background}

We now introduce notation (summarized in \Cref{table:GenericMathNotation,table:ContactDynamicsTerms}) for and review the mathematics underpinning continuous-time rigid-body dynamics with contact. Well-versed readers may skip to Section \ref{section:model} and use this section and the appendix as required.
 We use several set-, matrix-, and vector-valued operations and constants, the most common of which are listed in \Cref{table:GenericMathNotation}.

 We begin with mathematical foundations: sampling-based set approximation (\Cref{subsec:setapproximationviasampling}), set-valued maps (\Cref{section:SetValuedMaps}), differential inclusions (\Cref{subsec:differentialinclusions}), and linear complementarity problems (\Cref{subsec:lcps}).

 We conclude with an overview of rigid-body dynamics under sustained contact (\Cref{subsec:continuoustimedynamics}), 
 impacts (\Cref{subsec:impactbackground}); and initial value problems that combone both of these behaviors (\Cref{subsubsec:IVPs}); a listing of the associated system terms is in \Cref{table:ContactDynamicsTerms}.
 
 For notational brevity, we frequently write a singleton set $\Braces{a}$ without braces (e.g. $a + B$ is the Minkowski sum of $\Braces{a}$ and $B$) and suppress dynamics terms' inputs whenever clear (i.e. we write $\Mass$ instead of $\Mass(\Configuration)$).

\begin{table}
	\caption{Frequently-used constants and operations on sets $A, A_i, A', B$, scalars $c$, vectors $\vect v, \vect w$, matrices $\vect M, \vect N$, and functions $f : A \to B$, $\vect g(t) : \Real \to \Real^n$, $D: A \to \PowerSet{B}$. For brevity, we frequently write a singleton $\Braces{a}$ without braces.}
	\label{table:GenericMathNotation}
	\centering
	\setlength\tabcolsep{0.7mm}
	\begin{tabular}{c l}
		\toprule
		Expression & Meaning \\

		\midrule
		$A^c$ & complement of $A$ \\
		$\Interior(A)$ & interior of $A$ \\
		$\Closure(A)$ & closure of $A$ \\ 
		$\Hull(A)$ & convex hull of $A$ \\ 
		$\PowerSet{A}$ & power set $\Braces{A' : A' \subseteq A}$ \\
		$f : A \to B$ & $f$ maps $a \in A$ to $f(a) \in B$\\
		$D : A \to \PowerSet{B}$ & $D$ maps $a \in A$ to $D(a) \subseteq B$\\
		$f(A')$ & image of $A'$, $\Braces{f(a') : a' \in A'}$\\
		$D(A')$ & image of $A'$, $\cup_{a' \in A'} D(a')$\\
		$\vect MA$ & scaled set, $\Braces{ \vect M a: a \in A}$ \\
		$-A$ & $(-1) A$ \\
		$A + B$ & Minkowski sum $\Braces{a+b : a\in A, b\in B }$ \\
		$A - B$ & Minkowski sum of $A$ and $-B$ \\
		$[A_1;\; \dots \; A_k]$\hspace{-3mm} & Cartesian product $A_1 \times \dots \times A_k$\hspace{-2mm}\\
		$\dot{\vect g}(s)$ & total Lebesgue derivative $\TotalDiff{}{s}\vect g$\\
		$\vect v_i$ & $i$th element of $\vect v$\\
		$\sigma_{max}(\vect M)$ & maximum singular value of $\vect M$\\
		$\sigma_{min}(\vect M)$ & minimum singular value of $\vect M$\\
		$\vect M \succ \vect N$ & $\vect M - \vect N$ is positive definite\\
		$\vect M \succeq \vect N$ & $\vect M - \vect N$ is pos. semi-definite\\
		$\vect v > \vect w$ & $\vect v_i > \vect w_i$ for each $i$\\
		$\vect v \geq \vect w$ & $\vect v_i \geq \vect w_i$ for each $i$\\
		$A > 0$ & each element of $A$ is positive \\
		$A \geq 0$ & each element of $A$ is non-negative\hspace{-5mm} \\
		$\Norm{\vect A}_{F}$ & Frobenius norm of $A$\\
		$\Norm{\vect v}_p$ & $l_p$ norm of $\vect v$, $\Parentheses{\sum_i |\vect v_i|^p}^{\frac{1}{p}}$\\
		$\Norm{\vect v}_{\vect M}$ & $\vect M$-norm $\sqrt{\Velocity^T \vect M \Velocity}$, $\vect M \succeq \ZeroVector$\\
		$\Direction{\vect v}$ & unit direction, $\frac{\vect v}{\TwoNorm{\vect v}}$, of $\vect v \neq \ZeroVector$\\
		$\Ball[c]$ & $c$-radius ball $\Braces{ \vect v: \TwoNorm{\vect v} < c}$\\
		$\OneVector$ & matrix/vector of all $1$'s\\
		$\ZeroVector$ & matrix/vector of all $0$'s\\
		$\Real^{n+}$ & $\Braces{\vect v \in \Real^n, \vect v \geq \ZeroVector}$\\
		\bottomrule
	\end{tabular}
\end{table}
\begin{table}
	\centering
	\caption{Dynamics terms for rigid bodies and frictional contact. Some terms are written with the dependence on their inputs suppressed.}
	\label{table:ContactDynamicsTerms}
	\centering
	\setlength\tabcolsep{0.7mm}
	\begin{tabular}{c c l}
		\toprule
		Term & Space & Meaning \\

		\midrule
		$\Configurations$ & $\Natural$ & number of configuration variables \\
		$\Velocities $ & $\Natural$  & number of generalized velocities \\
		$\States$ & $\Natural$  & number of states $\Configurations + \Velocities$ \\
		$\Contacts $ & $\Natural$  & number of contacts \\
		$t$ & $\Real$ & time \\
		$\Configuration$ & $\ConfigurationSpace$ & robot/environment configuration \\
		$\Velocity$ & $\VelocitySpace$ & robot/environment velocity \\
		$\State$ & $\StateSpace$ & robot/environment state \\
		$\bar \State$ & $\StateTimeSpace$ & time-augmented state \eqref{eq:AugmentedStateDefinition} \\
		$\Input$ & $\Real^{n_u}$ & robot/environment input forces \\
		$\GeneralizedVelocityJacobian(\Configuration)$ & $\Real^{\Configurations\times\Velocities}$ & generalized velocity Jacobian \eqref{eq:ConfigurationEvolution}\\
		$\Mass(\Configuration)$ &$\Real^{\Velocities\times\Velocities}$& generalized mass-inertia matrix\\
		$\NetForce[s](\State,\Input)$ & $\VelocitySpace$ & non-contact forces \eqref{eq:ManipulatorEquations}\\
		$\KineticEnergy(\Configuration, \Velocity)$ &$\Real$& total kinetic energy \eqref{eq:KineticEnergyDefinition} \\
		$\Jn(\Configuration)$ &$\Real^{\Contacts\times\Velocities}$& normal velocity Jacobian \\
		$\Jt(\Configuration)$ &$\Real^{2\Contacts\times\Velocities}$& tangent velocity Jacobian \\
		$\J(\Configuration)$ &$\Real^{3\Contacts\times\Velocities}$& full contact velocity Jacobian \eqref{eq:BlockJacobianDefinition} \\
		$\NormalForce$ &$\Real^{\Contacts}$& normal forces vector\\
		$\FrictionForce$ &$\Real^{2\Contacts}$& frictional contact forces vector\\
		$\Force$ &$\Real^{3\Contacts}$& full contact forces vector \eqref{eq:BlockForceDefinition} \\
		$\FrictionCoeff[i]$ &$\Real$& $i$th contact Coulomb friction coeff.\\
		$\FrictionCone \Parentheses{\Configuration}$ &$\PowerSet{\VelocitySpace}$& Coulomb friction cone at $\Configuration$ \eqref{eq:FrictionCone} \\
		$\JD$ &$\Real^{\ConeBases\Contacts\times\Velocities}$& linear tangent vel. Jacobian \eqref{eq:ForceDJD} \\
		$\FrictionBasisForce$ &$\Real^{\ConeBases\Contacts}$& linear friction forces vector \eqref{eq:ForceDJD}  \\
		$\bar\J$ &\hspace{-2mm}$\Real^{(\ConeBases+1)\Contacts\times\Velocities}$\hspace{-2mm}& linear velocity Jacobian \eqref{eq:barJbarForce} \\
		$\bar\Force$ &$\Real^{(\ConeBases+1)\Contacts}$& linear contact forces vector \eqref{eq:barJbarForce} \\
		$\LinearFrictionCone \Parentheses{\Configuration}$ &$\PowerSet{\VelocitySpace}$& linear friction cone at $\Configuration$ \eqref{eq:LinearFrictionCone} \\
		$\ContactsSpace$ &$\PowerSet{\Natural}$& set of all contacts \\
		$\ContactSet_A(\Configuration)$ &$\PowerSet{\ContactsSpace}$& active/touching contact set at $\Configuration$ \eqref{eq:ActiveContactsDefinition} \\
		$\ContactSet_P(\Configuration)$ &$\PowerSet{\ContactsSpace}$& penetrating contact set at $\Configuration$ \eqref{eq:PenetratingContactsDefinition} \\
		$\ConfigurationSet_A$ &$\PowerSet{\ConfigurationSpace}$& set of active-contact configurations\\
		$\ConfigurationSet_P$ &$\PowerSet{\ConfigurationSpace}$& set of penetrating configurations\\
		$\bar\StateSet_A$ &$\PowerSet{\StateTimeSpace}$& set of active-contact states\\
		$\bar\StateSet_P$ &$\PowerSet{\StateTimeSpace}$& set of penetrating states\\
		$\ActiveSet (\Configuration) $ &$\PowerSet{\VelocitySpace}$& set of colliding velocities \eqref{eq:ImpactingVelocitiesDefinition} \\
		$\InactiveSet (\Configuration)$ &$\PowerSet{\VelocitySpace}$& set of separating velocities \eqref{eq:SeparatingVelocitiesDefinition} \\
		\bottomrule
	\end{tabular}
\end{table}
\subsection{Mathematical Foundations}
The total derivative of an absolutely continuous function $\vect f(t)$ is denoted $\dot{\vect f}(t)$.
$f : A \to B$ is \textit{Lipschitz continuous} with constant $L$ if for all $a_1$, $a_2$ in $A$, $\TwoNorm{f(a_1) - f(a_2)} \leq L\TwoNorm{a_1 - a_2}$.
An absolutely continuous ${\vect f}(t)$ has this property if $\TwoNorm{\dot{\vect f}(t)} \leq L$ almost everywhere (a.e.).
Furthermore, (partial) compositions of Lipschitz functions are also Lipschitz with constant no more than the product of the composed functions. That is, if $f,g: A \times B \to A$ are two Lipschitz functions with constants $L_f$ and $L_g$, $h(a,b_1,b_2) = f(g(a,b_1),b_2)$ is Lipschitz with constant no more than $L_fL_g$.

We say a function $\DissipationRate(s):\Domain \rightarrow \Closure \Real^+$, is positive definite if it is positive on $\Domain \setminus \Braces{0}$ and $\DissipationRate(0) = 0$.

\subsubsection{Set Approximation via Sampling}\label{subsec:setapproximationviasampling}
Problems in robotics can often be approximately solved with arbitrary-close approximation (up to limitations stemming from machine precision) via stochastic sampling (e.g. planning with RRT* \citep{Karaman2011}).
In Section \ref{section:simulation}, we will use sampling to approximate the set of post-impact velocities corresponding to a pre-impact state with an \textit{$\varepsilon$-net}:
\begin{definition}
	For $\varepsilon \geq 0$, an $\varepsilon$-net of a set $\StateSet$ is a set $\StateSet' \subseteq \StateSet$ such that for each $x \in \StateSet$, $\exists x' \in \StateSet'$ with $\TwoNorm{x-x'} \leq \varepsilon$.
\end{definition}
In the spirit of probabilistic completeness, we will show that, with sufficient samples and ignoring limitations on machine precision, our simulation scheme can approximate this set to arbitrary $\varepsilon$ with arbitrary confidence.
The essential goal is to show that a sufficient quantity of independent and identically distributed samples of a set tends to yield an $\varepsilon$-net of the set with low $\varepsilon$.
In particular, we will be approximating the image of a box under a Lipschitz continuous function via uniform sampling on the input space:
\begin{lemma}[Dense Sampling (Appendix \ref{adx:epsilonnetproof})]
	Let $g(x): \Real^n \to \Real^m$ be Lipschitz with constant $L$. Consider a set of $N$ uniform i.i.d. samples $\StateSet = \Braces{x_1,\dots,x_N}$ from $[0,h]^n$.
	Then $g(\StateSet)$ is an $\varepsilon$-net of $g([0,h]^n)$ with probability at least 
	\begin{align}
		&1 - \frac{(1-\Omega)^N}{\Omega}\,, & \Omega &= \left\lceil \frac{hL\sqrt{n}}{\varepsilon} \right\rceil^{-n}\,.
	\end{align}
	\label{prop:epsilonnet}
\end{lemma}

\subsubsection{Set-Valued Maps}\label{section:SetValuedMaps}
Our mathematical constructions and theoretical results will frequently make use of \textit{set-valued} maps $\DerivativeMap(a): A \to \PowerSet{B}$, which take as input an element $a \in A$ an output a subset of $B'$ of some output space $B$.
As complex operations on the sets involved in such maps are essential to our analysis, some abbreviated notation is required for the readability of our constructions and derivations.
We list these abbreviations as part of \Cref{table:GenericMathNotation}.
Set-valued maps may exhibit properties reminiscent of continuity for single-valued functions. We in particular will make frequent use of an \textit{upper semi-continuity} (u.s.c.) property:
\begin{definition}
	A function $\DerivativeMap : A \to \PowerSet{B}$, where $A \subseteq \Real^{n_A}$, $B \subseteq \Real^{n_B}$, is \textit{upper semi-continuous} if for any input $a$ and neighborhood $B'$ of $ 
	\DerivativeMap (a)$, there exists a neighborhood $A'$ of $a$ with $B' \subseteq \DerivativeMap (A')$. Equivalently, if $B$ is compact, for all convergent sequences $\Sequence{a}{i}$ and $\Sequence{b}{i}$,
	$$b_i \in \DerivativeMap (a_i)\,, \forall i \implies \lim b_i \in \DerivativeMap (\lim a_i)\,.$$
\end{definition}

Similar to continuous functions, there are several useful compositional rules which preserve upper semicontinuity; finite combination of u.s.c. functions by cartesian product, convex hull, composition, union, and addition are all u.s.c. \citep{Aubin1984}.
\subsubsection{Differential Inclusions}\label{subsec:differentialinclusions}
\begin{figure}
		\centering
    \begin{subfigure}{.49 \hsize}
	   \includegraphics[width=\hsize,trim={1mm 0mm 14mm 0mm},clip]{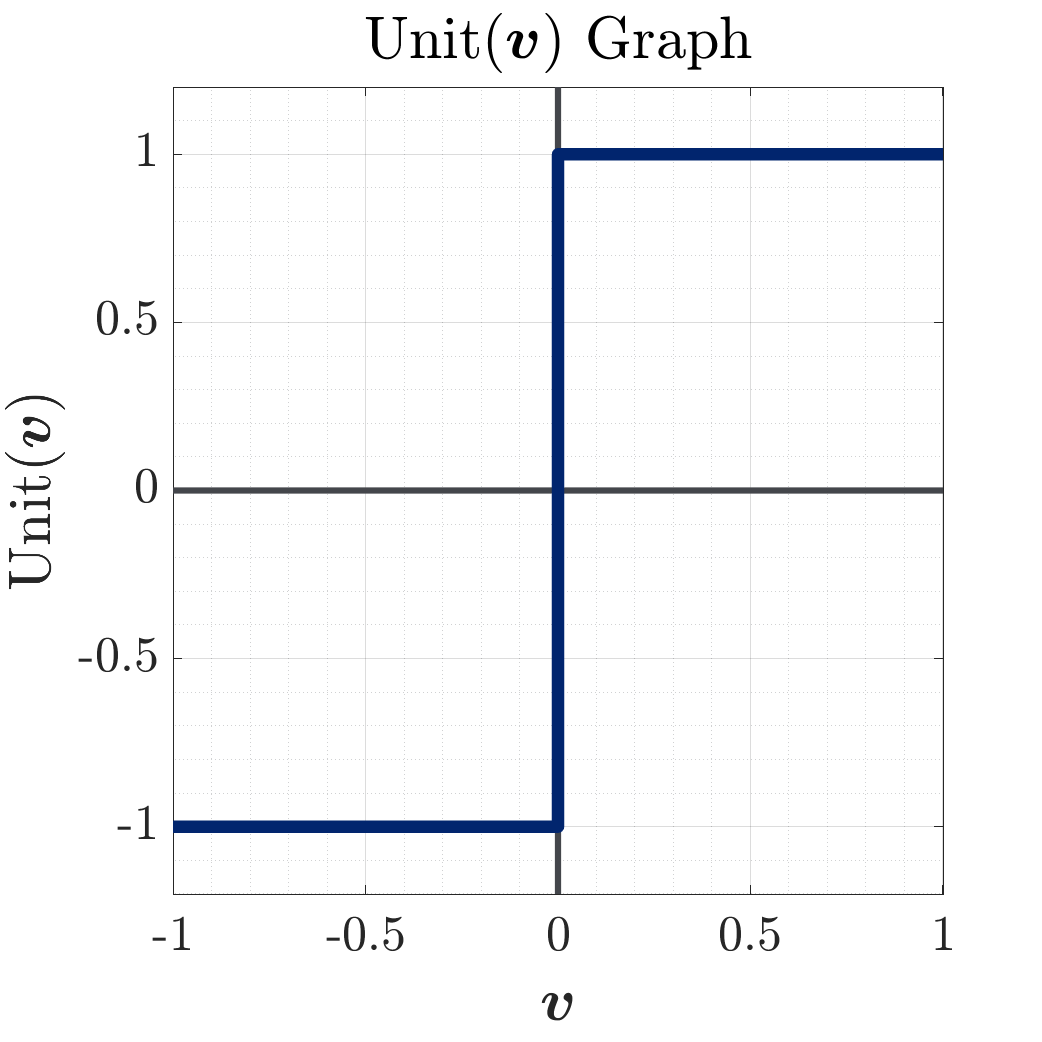}
       \caption{\label{figure_Ua}}
    \end{subfigure}
    \hfill
    \begin{subfigure}{.49 \hsize}
        \includegraphics[width=\hsize,trim={1mm 0mm 14mm 0mm},clip]{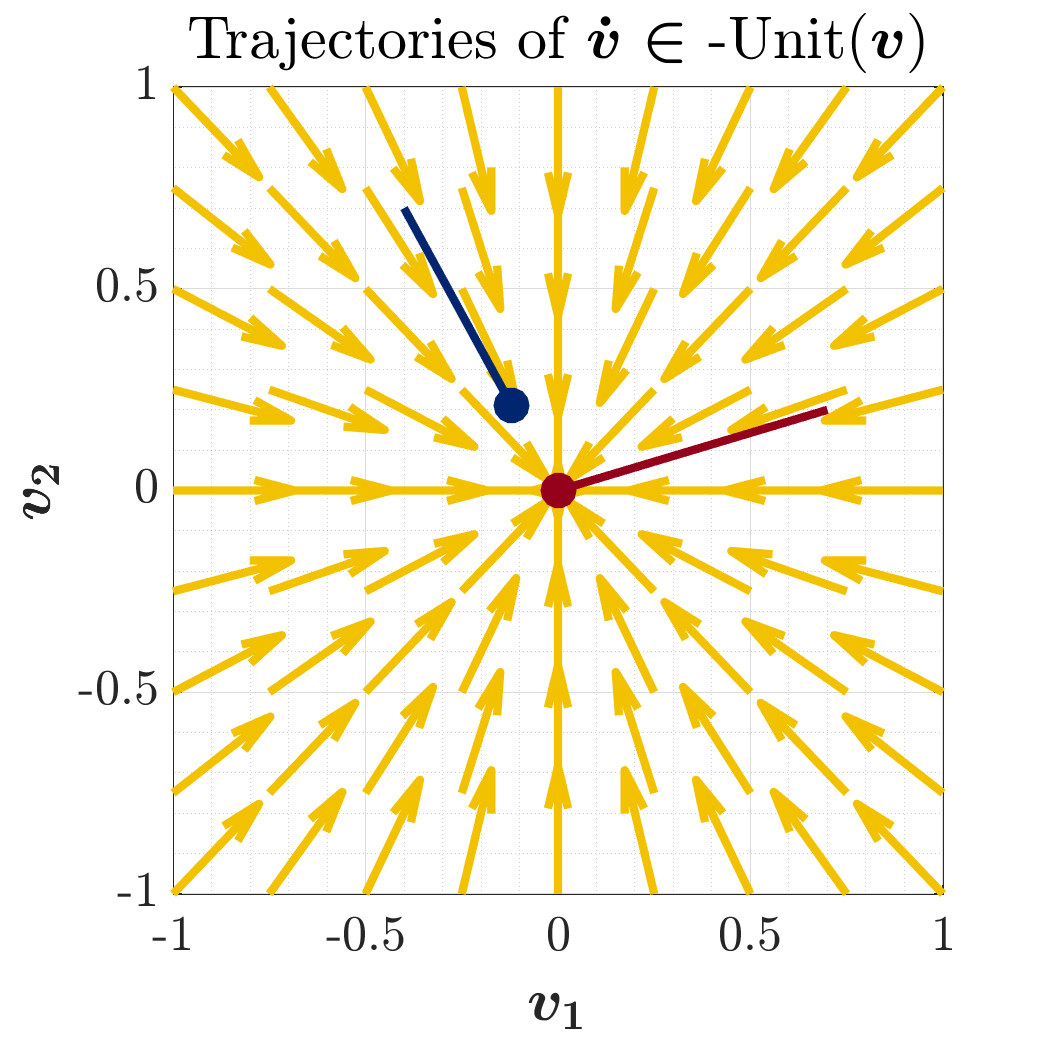}
        \caption{\label{figure_Ub}}
    \end{subfigure}
	\caption{Illustration of a set-valued function and corresponding differential inclusion. (\subref*{figure_Ua}) Graph of $\Unit(\Velocity)$, the set-valued unit direction for  dimension $n=1$. $\Unit(\Velocity)$ is continuous at $\Velocity \neq 0$. At $\ZeroVector$, $\Unit$ takes the value $\Brackets{-1,1}$, which contains a continuous extension of $\Direction{\Velocity}$ from both the left $(-1)$ and the right $(+1)$, so that $\Unit$ is u.s.c.. (\subref*{figure_Ub}) Flow field (yellow) of the solutions (blue, red) to the inclusion $\dot \Velocity \in -\Unit(\Velocity)$ for the $2$-dimensional unit direction.}
	\label{fig:unit}
\end{figure}
We will later see that in continuous time, the dynamics of rigid bodies under frictional contact present complexities that Ordinary Differential Equation (ODE) formulations cannot capture, as multiple outcomes that obey the constituent laws of contact may exist (\textit{non-unique} behaviors) \citep{Stewart2000}.
It is then useful to define an object that, unlike ODEs, allows for the derivative at each state to lie in a set of possible values
\begin{equation}
\dot \State \in \DerivativeMap (\State)\,.
\label{eq:differentialinclusion}
\end{equation}
As the set-valued map $\DerivativeMap (\State)$ associated with friction may not be continuous, conditions for a function $\State(t)$ to solve this \textit{differential inclusion} (DI) are weaker from those for an ODE:
\begin{definition}\label{def:inclusionsolution}
	For a compact interval $[a,b]$, $\State(t) : [a,b] \to \Real^n$ is a solution to the differential inclusion $\dot \State \in \DerivativeMap (\State)$ if $\State(t)$ is absolutely continuous and $\dot \State(t) \in \DerivativeMap (\State(t))$ a.e. on $[a,b]$. Denote the set of such solutions as $\SolutionSet{\DerivativeMap}[[a,b]]$.
\end{definition}
\noindent Solutions to initial value problems for (\ref{eq:differentialinclusion}) are defined similarly:
\begin{definition}
	The set of solutions to $\dot \State(t) \in \DerivativeMap (\State(t))$ with initial condition $\State(a) = \State_0$ over the interval $t \in [a, b]$ are denoted as $\IVP{\DerivativeMap}{\State_0}{[a,b]}$.
\end{definition}
\noindent In Figure \ref{fig:unit}, we consider an example DI
\begin{equation}
	\dot \Velocity \in -\Unit\Parentheses{\Velocity}\,,
\end{equation}
where $\Unit\Parentheses{\State}$ is the set-valued unit direction function
\begin{equation}
	\Unit\Parentheses{\Velocity} = \begin{cases}
		\Braces{\Direction{\Velocity}} & \Velocity \neq \ZeroVector\,, \\
		\Braces{\Velocity' : \TwoNorm{\Velocity'} \leq 1}  & \Velocity = \ZeroVector\,.
	\end{cases}
\end{equation}
The unique solution to the initial value problem starting from $\Velocity(0) = \Velocity_0$ has the form
\begin{equation}
	\Velocity(t) = \max\Parentheses{\TwoNorm{\Velocity_0} - t, 0}\Direction{\Velocity}_0 \,.
\end{equation}
This solution is non-differentiable at $t = \TwoNorm{\Velocity_0}$ and thus is not a solution of any ODE.
In general, non-emptiness and regularity of the initial value problem depends on the structure of $\DerivativeMap (\State)$; fortunately, we will later show that solution sets for frictional dynamics are well-behaved due to their upper semi-continuous (u.s.c.) structure:
\begin{proposition}[\citet{Aubin1984}]\label{prop:closure}
	Let $\State_0 \in \Real^n$ and $[a,b]$ be a compact interval. Suppose $\DerivativeMap (\State)$ is uniformly bounded (i.e. $\DerivativeMap (\State) \subseteq \Ball[c]$ for some $c>0$). If $\DerivativeMap (\State)$ is u.s.c., closed, convex, and non-empty at all $\State$, then $\IVP{\DerivativeMap}{\State_0}{[a,b]}$ is non-empty and u.s.c. in $\State_0$ under uniform convergence.
\end{proposition}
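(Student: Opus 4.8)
The result is the classical existence-and-closure theorem for a differential inclusion whose right-hand side is uniformly bounded, upper semi-continuous, and closed-, convex-, nonempty-valued (this is the content of \citep{Aubin1984}); my plan is to run the standard Euler-polygonal construction together with a compactness argument. First I would build, for each $k \in \Natural$, an approximate solution $\Velocity_k$: partition $[a,b]$ into subintervals of length at most $1/k$, set $\Velocity_k(a) = \Velocity_0$, and on each subinterval extend $\Velocity_k$ affinely with constant slope equal to some chosen element of $\DerivativeMap$ evaluated at the left endpoint of that subinterval (possible since $\DerivativeMap(\Velocity)$ is nonempty for every $\Velocity$). Since $\DerivativeMap(\Velocity) \subseteq \Ball[c]$, each $\Velocity_k$ is Lipschitz with constant $c$, so the family $\Sequence{\Velocity}{k}$ is uniformly bounded on $[a,b]$ and equicontinuous; Theorem \ref{thm:Rellich} then supplies a subsequence with $\Velocity_{k_j} \UniformConvergence \Velocity$, where $\Velocity$ is again Lipschitz-$c$ and satisfies $\Velocity(a) = \Velocity_0$.

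Next I would identify $\dot\Velocity$ and prepare to pass to the limit in the inclusion. The derivatives $\dot\Velocity_{k_j}$ are uniformly bounded in $L^\infty([a,b])$, hence lie in a weakly precompact subset of $L^2([a,b];\Real^n)$; after passing to a further subsequence, $\dot\Velocity_{k_j} \rightharpoonup w$ weakly in $L^2$. Testing this against indicator functions of subintervals of $[a,b]$ and using $\Velocity_{k_j} \UniformConvergence \Velocity$ shows $\int w = \Velocity(\beta) - \Velocity(\alpha)$ over every subinterval $[\alpha,\beta] \subseteq [a,b]$, so $\Velocity$ is absolutely continuous with $\dot\Velocity = w$ a.e.

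The crux --- and the step I expect to be the main obstacle --- is to show $\dot\Velocity(t) \in \DerivativeMap(\Velocity(t))$ a.e., since a weak (or even pointwise a.e.) limit of the $\dot\Velocity_{k_j}$ need not inherit the inclusion on its own. I would route through Mazur's lemma: some sequence of finite convex combinations of tails of $\Parentheses{\dot\Velocity_{k_j}}$ converges to $\dot\Velocity$ strongly in $L^2$, hence pointwise a.e.\ along a further subsequence. Fix a $t$ where this convergence holds and $\dot\Velocity(t)$ exists; each $\dot\Velocity_{k_j}(t)$ lies in $\DerivativeMap(\xi_{k_j})$ for the left endpoint $\xi_{k_j}$ of the subinterval containing $t$, and $\TwoNorm{\xi_{k_j} - \Velocity(t)} \leq c/k_j + \TwoNorm{\Velocity_{k_j}(t) - \Velocity(t)} \to 0$. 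Given $\delta > 0$, upper semi-continuity of $\DerivativeMap$ yields a neighborhood of $\Velocity(t)$ on which $\DerivativeMap$ is contained in the convex set $\DerivativeMap(\Velocity(t)) + \Ball[\delta]$, so for all sufficiently large $j$ we have $\dot\Velocity_{k_j}(t) \in \DerivativeMap(\Velocity(t)) + \Ball[\delta]$; every convex combination built from such large indices then lies in this convex set, and therefore so does the limit $\dot\Velocity(t)$. Since this holds for every $\delta$ and $\DerivativeMap(\Velocity(t))$ is closed, $\dot\Velocity(t) \in \DerivativeMap(\Velocity(t))$, so $\Velocity \in \IVP{\DerivativeMap}{\Velocity_0}{[a,b]}$, giving non-emptiness. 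The key ingredients here are convexity of $\DerivativeMap$ (to push convex combinations through) together with its u.s.c.\ closed-graph structure (to localize the argument values).

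For upper semi-continuity of $\Velocity_0 \mapsto \IVP{\DerivativeMap}{\Velocity_0}{[a,b]}$, I would first note that every solution issuing from an initial condition in any bounded set is Lipschitz-$c$, so all such solutions lie in a single uniformly bounded, equicontinuous --- hence, by Theorem \ref{thm:Rellich}, precompact --- family in $C([a,b];\Real^n)$; this makes the sequential (closed-graph) characterization of u.s.c.\ applicable. It then suffices to take $\Velocity_0^n \to \Velocity_0$ and $\Velocity^n \in \IVP{\DerivativeMap}{\Velocity_0^n}{[a,b]}$ with $\Velocity^n \UniformConvergence \Velocity$, observe $\Velocity(a) = \lim \Velocity_0^n = \Velocity_0$, and rerun the argument of the previous two paragraphs on the $\Velocity^n$ (which already satisfy $\dot\Velocity^n(t) \in \DerivativeMap(\Velocity^n(t))$ a.e.) to conclude $\Velocity \in \IVP{\DerivativeMap}{\Velocity_0}{[a,b]}$, which is precisely the desired upper semi-continuity.
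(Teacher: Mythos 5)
Your proof is correct, but note that the paper does not prove this proposition at all --- it is stated as an imported result with a citation to Aubin and Cellina (1984), so there is no in-paper argument to compare against. What you have written is essentially the classical proof from that reference: Euler polygonal approximations, Arzel\`a--Ascoli for uniform convergence, weak $L^2$ compactness of the derivatives, and Mazur's lemma combined with upper semi-continuity and convexity of the right-hand side to pass the inclusion to the limit (the ``convergence theorem''), with the closed-graph/compactness argument giving upper semi-continuity of the solution map. The only blemish is a notational slip where you call $\xi_{k_j}$ ``the left endpoint of the subinterval containing $t$'' when it must be the \emph{value of the polygon} at that left endpoint --- your subsequent estimate $\|\xi_{k_j} - v(t)\| \leq c/k_j + \|v_{k_j}(t) - v(t)\|$ makes clear this is what you meant.
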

U.s.c. functions have the useful property that they map compact sets to closed sets, and \Cref{prop:closure} immediately and crucially implies that $\SolutionSet{\DerivativeMap}[[a,b]]$ and $\IVP{\DerivativeMap}{\State_0}{[a,b]}$ are non-empty and closed under uniform convergence. The DI in Figure \ref{fig:unit} for example exhibits this structure.

\subsubsection{Linear Complementarity Problems}\label{subsec:lcps}
We will formulate multi-impact simulation as a sequence of linear complementarity problems (LCP's), which have been widely used for frictional contact simulation \citep{Anitescu97,Stewart1996a}. We refer the reader to \citet{Cottle09} for a complete description.

\begin{definition}\label{def:LCPDef}
The linear complementarity problem with parameters $\LCPMatrix \in \Real^{n \times n}$ and $\LCPVector \in \Real^n$ is the constraint satisfaction problem
\begin{align}
& {\text{find}}
& & \LCPVariables \in \Real^n\,,\\
& \text{subject to}
& & \LCPVariables^T \Parentheses{\LCPMatrix \LCPVariables + \LCPVector} = \ZeroVector \,, \label{eq:lcpcompbeg}\\
& & & \LCPVariables, \LCPMatrix \LCPVariables + \LCPVector \geq \ZeroVector \,, \label{eq:lcpcompend}
\end{align}
for which the set of solutions is denoted $\LCP{\LCPMatrix }{\LCPVector}$. \eqref{eq:lcpcompbeg}--\eqref{eq:lcpcompend} are often abbreviated as $\Complementary{\LCPVariables}{\LCPMatrix \LCPVariables + \LCPVector}$. 
\end{definition}

For LCPs related to frictional behavior, $\LCPMatrix$ is often copositive (i.e. $\vect{x}^T \LCPMatrix \vect{x} \geq 0$ for all $\vect{x} \geq \ZeroVector$). This property provides a sufficient condition for LCP feasibility and computability:
\begin{proposition}[\citep{Cottle09}]\label{prop:LCPCopExist}
Let $\LCPVector \in \Real^n$, and let $\LCPMatrix \in \Real^{n \times n}$ be copositive. If $\LCPVector^T\LCP{\LCPMatrix}{\ZeroVector} \geq 0$, then $\LCP{\LCPMatrix}{\LCPVector}$ contains a solution which can be computed in finite time.
\end{proposition}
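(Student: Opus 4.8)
The plan is to run Lemke's algorithm on the augmented system with artificial (driving) variable $z_0$ and covering vector $d = \mathbf{1}$, i.e. $w = \LCPMatrix z + z_0 d + \LCPVector$ with $w, z, z_0 \geq \ZeroVector$ and $w^T z = 0$, and to argue that the stated hypothesis rules out the only bad outcome. Recall the standard dichotomy for Lemke's method run with a lexicographic or least-index anti-cycling rule (see \citet{Cottle09}): after finitely many pivots it either drives $z_0$ to $0$, returning a terminal point whose $z$-part solves $\LCP{\LCPMatrix}{\LCPVector}$, or it terminates on a \emph{secondary ray}, i.e. a half-line $\Braces{(\bar w, \bar z, \bar z_0) + \lambda (w^*, z^*, z_0^*) : \lambda \geq 0}$ on which every point is a nonnegative complementary solution of the augmented system, with $(w^*, z^*, z_0^*) \geq \ZeroVector$, $(w^*, z^*) \neq \ZeroVector$, and $w^* = \LCPMatrix z^* + z_0^* d$. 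Matching powers of $\lambda$ in the complementarity condition along the ray yields $\bar w^T \bar z = (w^*)^T z^* = 0$ and $\bar w^T z^* + (w^*)^T \bar z = 0$, whence $(w^*)^T \bar z = \bar w^T z^* = 0$ by nonnegativity. If $\bar z_0 = 0$ the ray point itself solves $\LCP{\LCPMatrix}{\LCPVector}$, so we may assume $\bar z_0 > 0$; and if $z^* = \ZeroVector$ then $w^* = z_0^* d$ with $z_0^* > 0$, which forces $\bar z = \ZeroVector$ via $(w^*)^T \bar z = 0$ and identifies the half-line with Lemke's primary ray, which its simple pivot path cannot revisit. Hence it suffices to derive a contradiction from a secondary ray with $z^* \neq \ZeroVector$.

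First I would extract what copositivity buys. From $(w^*)^T z^* = 0$ we get $(z^*)^T \LCPMatrix z^* + z_0^* (z^*)^T d = 0$; copositivity makes the first term nonnegative, while $d > \ZeroVector$ and $z^* \geq \ZeroVector$, $z^* \neq \ZeroVector$ make $(z^*)^T d > 0$, so necessarily $z_0^* = 0$, $(z^*)^T \LCPMatrix z^* = 0$, and $w^* = \LCPMatrix z^* \geq \ZeroVector$. Together with $z^* \geq \ZeroVector$, this is exactly the statement $z^* \in \LCP{\LCPMatrix}{\ZeroVector}$, so by the hypothesis $\LCPVector^T \LCP{\LCPMatrix}{\ZeroVector} \geq 0$ we have $\LCPVector^T z^* \geq 0$.

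Next I would play the cross relations against copositivity. Since $w^* = \LCPMatrix z^*$, the relation $(w^*)^T \bar z = 0$ reads $\bar z^T \LCPMatrix z^* = 0$. The key step is that $\bar z + t z^* \geq \ZeroVector$ for all $t \geq 0$, so $0 \leq (\bar z + t z^*)^T \LCPMatrix (\bar z + t z^*) = \bar z^T \LCPMatrix \bar z + t\,(z^*)^T \LCPMatrix \bar z$, using $(z^*)^T \LCPMatrix z^* = 0$ and $\bar z^T \LCPMatrix z^* = 0$; letting $t \to \infty$ forces $(z^*)^T \LCPMatrix \bar z \geq 0$. Finally, expanding $\bar w^T z^* = (z^*)^T(\LCPMatrix \bar z + \bar z_0 d + \LCPVector) = (z^*)^T \LCPMatrix \bar z + \bar z_0 (z^*)^T d + \LCPVector^T z^* = 0$ gives $\LCPVector^T z^* = -(z^*)^T \LCPMatrix \bar z - \bar z_0 (z^*)^T d < 0$, since $(z^*)^T \LCPMatrix \bar z \geq 0$, $\bar z_0 > 0$, and $(z^*)^T d > 0$. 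This contradicts $\LCPVector^T z^* \geq 0$, so no secondary ray can occur, and Lemke's method terminates with a solution in finitely many elementary operations.

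I expect the main obstacle to be precisely the asymmetry of $\LCPMatrix$: copositivity controls only quadratic forms $x^T \LCPMatrix x$ on the nonnegative orthant, yet the contradiction hinges on sign information about the mixed term $(z^*)^T \LCPMatrix \bar z$. The device of sending $t \to \infty$ in the copositivity inequality applied to $\bar z + t z^*$ is what supplies that sign; everything else — the structure and finite termination of Lemke's pivot path, the anti-cycling rule, and the exclusion of the primary-ray case — is classical and can be quoted from \citet{Cottle09}.
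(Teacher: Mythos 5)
Your argument is correct and is precisely the classical ray-termination proof for copositive matrices from the cited reference (Cottle et al.): rule out the primary ray, use copositivity and $d>0$ to force $z_0^*=0$ and $z^*\in\LCP{\LCPMatrix}{\ZeroVector}$, obtain $(z^*)^T\LCPMatrix\bar z\geq 0$ via the $t\to\infty$ trick, and contradict $\LCPVector^Tz^*\geq 0$. The paper gives no proof of its own for this proposition and simply cites \citet{Cottle09}, so your write-up matches the intended argument.
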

While solution uniqueness is not guaranteed, if mapping the solution through a matrix $\vect A$ produces uniqueness, it also produces Lipschitz continuity:
\begin{proposition}[\citep{Facchinei2003}]\label{prop:AffineLCPLipschitz}
For all matrices $\LCPMatrix \in \Real^{n \times n}$, $\vect A \in \Real^{m \times n}$, if the function $\vect f(\LCPVector) = \vect A\LCP{\LCPMatrix}{\LCPVector}$ is unique over a convex domain $\Omega \subseteq \Real^n$, it is also Lipschitz on $\Omega$.
\end{proposition}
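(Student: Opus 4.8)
The plan is to recognize $\vect f$ as a single-valued selection of a \emph{polyhedral multifunction} and then combine Robinson's upper-Lipschitz property of such maps with the convexity of $\Omega$. Concretely, I would first exhibit the solution map as piecewise polyhedral, then transfer Lipschitz continuity through the linear map $\vect A$, and finally patch local estimates into a global one along straight-line segments.

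First I would write the graph of the solution map $\LCPVector \mapsto \LCP{\LCPMatrix}{\LCPVector}$ as a finite union of polyhedra. For each index set $\alpha \subseteq \Braces{1,\dots,n}$, set
\begin{equation*}
	P_\alpha = \Braces{ (\LCPVector, \LCPVariables) :\ \LCPVariables \geq \ZeroVector,\ \LCPMatrix\LCPVariables + \LCPVector \geq \ZeroVector,\ \LCPVariables_i = 0 \ \forall i \notin \alpha,\ (\LCPMatrix\LCPVariables + \LCPVector)_i = 0 \ \forall i \in \alpha }\,.
\end{equation*}
Each $P_\alpha$ is a polyhedron, and a pair $(\LCPVector,\LCPVariables)$ satisfies $\Complementary{\LCPVariables}{\LCPMatrix\LCPVariables + \LCPVector}$ precisely when it lies in $P_\alpha$ for $\alpha = \Braces{i : (\LCPMatrix\LCPVariables + \LCPVector)_i = 0}$. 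Hence the graph of the solution map equals $\bigcup_\alpha P_\alpha$. Pushing the $\LCPVariables$-coordinate through $\vect A$ (a linear image of a polyhedron is again a polyhedron) sends this to another finite union of polyhedra, so $\vect f(\LCPVector) = \vect A\,\LCP{\LCPMatrix}{\LCPVector}$, viewed as a multifunction in $\LCPVector$, is a polyhedral multifunction.

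Next I would invoke the classical fact (Robinson; see \citet{Facchinei2003}) that every polyhedral multifunction $\Phi$ is \emph{upper Lipschitz with one global modulus}: there is an $L \geq 0$ such that for every point, some neighborhood $U$ satisfies $\Phi(\LCPVector') \subseteq \Phi(\LCPVector) + L\TwoNorm{\LCPVector' - \LCPVector}\,\Closure(\Ball[1])$ for all $\LCPVector' \in U$. Restricted to $\Omega$, where by hypothesis $\vect f$ is single-valued (so both sides are singletons), this reads exactly $\TwoNorm{\vect f(\LCPVector') - \vect f(\LCPVector)} \leq L\TwoNorm{\LCPVector' - \LCPVector}$ for all $\LCPVector'$ near $\LCPVector$ in $\Omega$; that is, $\vect f$ is locally Lipschitz on $\Omega$ with one common constant $L$.

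Finally I would globalize using convexity. For $\LCPVector_1, \LCPVector_2 \in \Omega$, the segment $\gamma(t) = (1-t)\LCPVector_1 + t\LCPVector_2$ stays in $\Omega$; its image is compact, so finitely many of the above neighborhoods cover it, and a fine enough partition $0 = t_0 < \dots < t_k = 1$ keeps consecutive nodes in a common neighborhood. Summing the local bounds along $\gamma$ gives $\TwoNorm{\vect f(\LCPVector_2) - \vect f(\LCPVector_1)} \leq \sum_j L(t_{j+1}-t_j)\TwoNorm{\LCPVector_2 - \LCPVector_1} = L\TwoNorm{\LCPVector_2 - \LCPVector_1}$, the desired global bound. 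The only nontrivial ingredient is Robinson's polyhedral-multifunction theorem, which I would cite rather than reprove; the polyhedral decomposition of the LCP graph and the convexity-chaining step are routine, the only subtlety being to make sure the uniform Lipschitz modulus from Robinson's theorem (not merely a pointwise-local one) is what is being used.
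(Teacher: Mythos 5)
The paper does not actually prove this statement---it is imported verbatim from \citet{Facchinei2003}---and your argument is precisely the standard one underlying that citation: the LCP solution graph decomposes into the finite union of polyhedra $P_\alpha$, its linear image under $\vect A$ is again a polyhedral multifunction, Robinson's theorem supplies a single local upper-Lipschitz modulus $L$ valid at every point, and single-valuedness on the convex domain $\Omega$ turns the local inclusions into local Lipschitz estimates that chain along segments into the global bound. Your proof is correct; the only detail to handle carefully in a full write-up is the chaining step, where each local estimate should be anchored at a partition node lying on the segment itself so that the incremental distances telescope exactly to $\TwoNorm{\LCPVector_2 - \LCPVector_1}$ rather than merely being bounded by a detour through an off-segment center.
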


\subsection{Rigid-Body Dynamics with Friction}
We now describe the mathematics and assumptions of rigid body modeling of multiple articulated-body systems which undergo Coulomb friction and inelastic impacts; notation is summarized in \Cref{table:ContactDynamicsTerms}.

As discussed in \Cref{section:introduction}, both the suitability of rigid-body modeling and the motion that results is dependent on the properties of the materials involved.
While the following sections will specifically outline some narrow, technical assumptions, we first establish three high-level modeling decisions which inform the scope of applicability of our models; our derivations; and our comparisons to the surrounding literature.
\begin{itemize}
	\item \textbf{All bodies are rigid}. We assume that every body deforms negligibly, i.e. bodies' stiffnesses are high enough that the energy input to the system is much lower than the potential energy required to compress objects significantly.
	In this setting, continuous-time evolution under sustained contact can be tracked with a state containing the position, orientation, linear velocity, and angular velocity of a nominal frame affixed to each body; and impacts can be reasonably modeled as instantaneous.
	There are multiple, nuanced interpretations of what can be considered ``negligible'' deformation, especially when concurrent impacts are involved; we refer the reader to \citet{Chatterjee1998TwoInterpretations} for a detailed discussion.
	\item \textbf{Contact forces are dominated by dry friction}, specifically Coulomb's law \citep{Popova2015} described in \Cref{subsec:continuoustimedynamics}. This law is often appropriate e.g. for manipulation of clean objects or locomotion over dry terrain, rather than interaction with viscous or adhesive substances.
	\item \textbf{Impacts are completely inelastic}, in that they dissipate kinetic energy as much as possible.
	Such assumptions are appropriate e.g. for materials which plastically deform under impact; have viscous deformation behavior; or for which the energy is lost to elastic vibrations \citep{Stoianovici1996,Stewart2000}.
	Inelastic impact models been employed effectively in robotics simulation, planning, and control \citep{Wieber2016,Wensing2023}.
	For a single impact, this property characterized by the bodies having no separating velocity post-impact, though there is in general no single accepted rule for sumultaneous impacts \citep{Stewart2000}.
\end{itemize}

\subsubsection{Continuous-time evolution without impacts}\label{subsec:continuoustimedynamics}
Rigid robots contacting rigid objects and environment can be modeled with inputs $\Input$ (e.g. motor torques) and states $\State = [\Configuration; \Velocity] \in \StateSpace$, where $\Configuration \in \ConfigurationSpace$ represents the robot's configuration and object poses.
Though $\Velocity \in \VelocitySpace$ is simply $\TimeDiff{\Configuration}{}$ for some systems, others (e.g. those relating angular velocities and quaternion derivatives) obey
\begin{equation}\label{eq:ConfigurationEvolution}
	\Differential \Configuration = \GeneralizedVelocityJacobian (\Configuration ) \Velocity \Differential t \,,
\end{equation}
for some smooth, bounded, full-column-rank $\GeneralizedVelocityJacobian (\Configuration) \in \Real^{\Configurations \times \Velocities}$ \citep{Tedrake, Castro2020}.
Contact between these bodies is modeled as occurring at up to $\Contacts \in \Natural$ point pairs (for a thorough introduction, see  \citet{Brogliato99} and \citet{Stewart2000}) referred to as the \textit{contacts} $\ContactsSpace = \Braces{1,\dots,\Contacts}$.
Impactless evolution of the system is governed by
\begin{equation}\label{eq:ManipulatorEquations}
	\Mass(\Configuration)\TimeDiff{\Velocity} =\NetForce[s](\State,\Input) + \sum_{i \in \ContactsSpace} \J[i](\Configuration)^T\Force[i]\,.
\end{equation}
Here, the continuous function $\Mass(\Configuration) \succ 0, \Mass \in \Real^{\Velocities \times \Velocities}$ is the generalized inertial matrix, related to the kinetic energy $\KineticEnergy(\Configuration,\Velocity) \in \Real$ by
\begin{equation}\label{eq:KineticEnergyDefinition}
	\KineticEnergy(\Configuration,\Velocity) = \frac{1}{2}\Norm{\Velocity}_{\Mass(\Configuration)}^2 = \frac{1}{2}\Velocity^T\Mass(\Configuration )\Velocity\,.
\end{equation}
By assumption, there exist global $c_1,c_2 > 0$ such that $c_1\Identity \succeq \Mass \succeq c_2\Identity$.
$\NetForce[s]$ aggregates smooth, non-contact forces (e.g. potential, gyroscopic, and input forces as well as Coriolis and centrifugal effects).
For each $i$, $\J[i]^T\Force[i] \in \Real^{\Velocities}$ is the net (generalized) force due to the $i$th contact.
$\J[i]= [\Jn[i]; \Jt[i]] \in \Real^{3 \times \Velocities}$ is the contact Jacobian which maps generalized velocities into Euclidean velocities in the $i$th contact frame normal ($\Jn[i] \in \Real^{\Velocities}$) and tangential ($\Jt[i] \in \Real^{2 \times \Velocities}$) directions.
$\Force[i] = [\NormalForce[i]; \FrictionForce[i]] \in \Real^3$ are the contact-frame normal forces $\NormalForce[i] \in \Real$ and frictional forces $\FrictionForce[i] \in \Real^2$, which are typically dictated by two essential physical laws:
\begin{itemize}
	\item \textbf{Normal complementarity}: The signed distance $\Gap (\Configuration)\in \Real^\Contacts$ captures object geometry as inter-body distances.
Normal forces push bodies apart, and neither penetration nor force-at-a-distance are possible; that is, for each $i$,
\begin{align}\label{eq:NormalComplementarity}
	\Jn[i] = \PartialDiff{\Gap_i}{\Configuration} \GeneralizedVelocityJacobian\,, && \Complementary{\NormalForce[i]}{\Gap_{i}(\Configuration)} \,.
\end{align}
We denote the active and penetrating contacts at $\Configuration$ as
\begin{align}
	\ContactSet_A (\Configuration) & = \Braces{ i \in \ContactsSpace : \Gap_i(\Configuration ) \leq 0}\,,\label{eq:ActiveContactsDefinition}\\
	\ContactSet_P (\Configuration) &= \Braces{ i \in \ContactsSpace: \Gap_i(\Configuration ) < 0}\,.\label{eq:PenetratingContactsDefinition}
\end{align}
\item \textbf{Maximal dissipation}:
Friction dissipates as much power ($\Jt[i]\Velocity \cdot \FrictionForce[i]$) as possible.
Coulomb friction \citep{Popova2015} with coefficient $\FrictionCoeff[i]$ in particular obeys this property within the admissible set
\begin{equation}
	\Braces{\FrictionForce[i]: \TwoNorm{\FrictionForce[i]} \leq \FrictionCoeff[i]\NormalForce[i]}\,.\label{eq:CoulombAdmissibleSet}
\end{equation}
The corresponding set of generalized forces is the \textit{friction cone}
\begin{align*}
	\FrictionCone(\Configuration) &= \sum_{i \in \ContactSet_A(\Configuration)}\Braces{\J[i](\Configuration)^T\Force[i] : \TwoNorm{\FrictionForce[i]} \leq \FrictionCoeff[i]\NormalForce[i]}\,.
\end{align*}

The maximally-dissipative friction force and associated generalized force $\NetForce[i]$ opposes the sliding direction $\Unit(\Jt[i]\Velocity)$ as much as possible:
\begin{align}
	 \FrictionForce[i] &\in -\FrictionCoeff[i]\NormalForce[i]\Unit(\Jt[i]\Velocity)\,,\label{eq:CoulombForce}\\
	 \NetForce[i](\Configuration, \Velocity, \NormalForce[i]) &= \Parentheses{\Jn[i]^T - \FrictionCoeff[i]\Jt[i]^T\Unit(\Jt[i]\Velocity)}\NormalForce[i]\,.
\end{align}
We note in particular the identity
\begin{equation}
	\FrictionCone \Parentheses{\Configuration} = \sum_{i \in \ContactSet_A(\Configuration)} \NetForce[i](\Configuration, 0, \Real^+)\,.\label{eq:FrictionCone}
\end{equation}
A common variant of this model is the \textit{linearized} Coulomb model, in which the admissible set is replaced with $\Braces{\FrictionForce[i] \in  \FrictionCoeff[i]\NormalForce[i]\Hull\Parentheses{\Braces{\vect d_1, \dots, \vect d_\ConeBases}}}$ for $\ConeBases \in \Natural$ unit-length vectors $\vect D = [\vect d_1, \dots, \vect d_\ConeBases] \in \Real^{2 \times \ConeBases}$, leading to similar definitions of forces and a \textit{linearized friction cone}:
\begin{align}
	\Unit_{\vect D}(\vect r) &= \Hull\Parentheses{\arg\max_{d_i} \vect d_i \cdot \vect r}\,,\\
	 \FrictionForce[i] &\in -\FrictionCoeff[i]\NormalForce[i]\Unit_{\vect D}(\Jt[i]\Velocity)\,,\\
	 \NetForce[{\vect D},i](\Configuration, \Velocity, \NormalForce[i]) &= \Parentheses{\Jn[i]^T - \FrictionCoeff[i]\Jt[i]^T\Unit_{\vect D}(\Jt[i]\Velocity)}\NormalForce[i]\,,\\
	 \LinearFrictionCone \Parentheses{\Configuration} &= \sum_{i \in \ContactSet_A(\Configuration)} \NetForce[\vect D, i](\Configuration, 0, \Real^+)\,.\label{eq:LinearFrictionCone}
\end{align}
The identity $\Unit_{\vect D}(\vect r) \subseteq \Unit_{\vect D}(\ZeroVector) \subseteq \Unit(\ZeroVector)$ leads to
\begin{equation}
\begin{aligned}\textstyle
	\sum_{i \in \ContactSet_A(\Configuration)} \NetForce[\vect D, i](\Configuration, \Velocity, \Real^+) &\subseteq \LinearFrictionCone \Parentheses{\Configuration} &\subseteq \FrictionCone \Parentheses{\Configuration} \label{eq:LinearConeContainment}\,.
\end{aligned}
\end{equation}

\end{itemize}
$\Gap(\Configuration)$ is Lipschitz and continuously differentiable. We also assume that for all active, non-penetraing contacts, there exists a generalized velocity for which the contact is separating:
\begin{assumption}\label{assump:NoDegenerateContacts}
	$\forall i \in \ContactSet$, $\Gap_i(\Configuration) = 0 \implies \Jn[i](\Configuration) \neq \ZeroVector$.
\end{assumption}
$\Jn[i]$ is bounded and continuous by the properties of $\Gap$ and $\GeneralizedVelocityJacobian$, while $\Jt[i]$ has the same properties by assumption.
These properties can be guaranteed, for instance, for piecewise-smooth bodies with bounded curvature.
We note that because $\Gap$ is continuous, $\ContactSet_A (\Configuration)$ and $\ContactsSpace \setminus \ContactSet_P (\Configuration)$ are u.s.c. in $\Configuration$.
From these functions we also define $\ConfigurationSet_A = \Braces{\Configuration: \ContactSet_A (\Configuration) \neq \emptyset}$, the configurations with active contact, and $\ConfigurationSet_P = \Braces{\Configuration: \ContactSet_P (\Configuration) \neq \emptyset}$, the interpenetrating configurations.

We will often see that various theoretical guarantees (seminally including existence of solutions in continuous and discrete time \citep{Stewart2000}) for such systems depend on a \textit{pointedness} assumption on the friction cone $\FrictionCone$:
\begin{assumption}[Pointed Friction Cone]\label{assump:nondegenerate}
	At any configuration $\Configuration$, the friction cone $\FrictionCone(\Configuration)$ is \textit{pointed} in some direction $\vect d(\Configuration)$:
	\begin{equation}
		\forall \vect F \in \FrictionCone(\Configuration),\quad \vect d(\Configuration) \cdot \vect F \geq \TwoNorm{\vect F}\,.
	\end{equation}
	Therefore, there also exists $\Pointedness (\Configuration)$ such that for any $\Force$ with each $\FrictionForce[i]$ in the Coulomb admissible set \eqref{eq:CoulombAdmissibleSet},
	\begin{equation}
		\TwoNorm{\J^T\Force} \geq p(\Configuration)\TwoNorm{\Force}\,.
	\end{equation}
\end{assumption}
Finally, we define the following notation:
\begin{align}
	\Jn &= \begin{bmatrix}
		\Jn[1] \\
		\vdots \\
		\Jn[\Contacts]
	\end{bmatrix}\,, \quad 
	\Jt = \begin{bmatrix}
		\Jt[1] \\
		\vdots \\
		\Jt[\Contacts]
	\end{bmatrix}\,,  \quad 
	\J = \begin{bmatrix}
		\Jn \\
		\Jt
	\end{bmatrix}\,,\label{eq:BlockJacobianDefinition} \\
	\NormalForce &= \begin{bmatrix}
		\NormalForce[1] \\
		\vdots \\
		\NormalForce[\Contacts]
	\end{bmatrix}\,, \quad 
	\FrictionForce = \begin{bmatrix}
		\FrictionForce[1] \\
		\vdots \\
		\FrictionForce[\Contacts]
	\end{bmatrix}\,,  \quad 
	\Force = \begin{bmatrix}
		\NormalForce \\
		\FrictionForce
	\end{bmatrix}\,,\label{eq:BlockForceDefinition}\\
	\ActiveSet (\Configuration) &= \{ \Velocity \in \VelocitySpace : \exists i \in \ContactSet_A(\Configuration),\, \Jn[i] \Velocity < 0\label{eq:ImpactingVelocitiesDefinition}\}\,,\\
	\InactiveSet (\Configuration) &= \{ \Velocity \in \VelocitySpace : \forall i \in \ContactSet_A(\Configuration),\,\Jn[i]\Velocity > 0\}\,.\label{eq:SeparatingVelocitiesDefinition}
\end{align}
$\ActiveSet(\Configuration)$ is the set of \textit{colliding} velocities, for which an active contact is moving towards penetration and must cause an impact.
$\InactiveSet(\Configuration)$ is the set of \textit{separating} velocities, where no impact occurs as all contacting surfaces are moving away from each other.
While $\ActiveSet(\Configuration)$ and $\InactiveSet(\Configuration)$ are disjoint, there may be some velocities in \textit{neither} set; these cases may generate impacts, as in Painlev\'e's Paradox \citep{Stewart2000}, discussed in \Cref{subsubsec:IVPs}.
By Assumption \ref{assump:NoDegenerateContacts}, when $\Configuration$ is non-penetrating, $\InactiveSet (\Configuration) = \Interior \Parentheses{\Complement{\ActiveSet(\Configuration)}}$ and $\ActiveSet (\Configuration) = \Interior \Parentheses{\Complement{\InactiveSet(\Configuration)}}$.
\subsubsection{Instantaneous, Inelastic Impact Laws}\label{subsec:impactbackground}
\eqref{eq:ManipulatorEquations}, \eqref{eq:NormalComplementarity}, and \eqref{eq:CoulombForce} provide only a partial solution to initial value problems (IVPs).
Bodies can {collide} or come into contact with non-zero velocity ($\Gap_i(\Configuration(t)) = 0$ and $\TimeDiff{}{}\Gap_i(\Configuration(t)) < 0$); penetration therefore must be avoided via an \textit{impact} or instantaneous velocity jump from $\Velocity^-$ to $\Velocity^+$ obeying
\begin{equation}\textstyle
	\Mass(\Configuration)(\Velocity^+(t) - \Velocity^-(t)) = \sum_{i \in \ContactSet_A(\Configuration)} \J[i](\Configuration)^T\Impulse[i],\label{eq:ImpulsiveNetwonsSecond}
\end{equation}
arising from instantaneous contact \textit{impulses} $\Impulse[i]$. As $\TimeDiff{\Velocity}$ does not exist, an alternative formulation to ODEs equations in time is required to capture this behavior.

Several models select $\Impulse$ via an impulsive analog to Coulomb's friction law \citep{Anitescu97,Glocker1995,Routh91}, with additional constraints pertaining to the elasticity of the impact.
We focus discussion and our own modeling efforts on \textit{inelastic} collisions, which are well defined in the single-impact case via the constraint $\Jn[i]\Velocity^+ = 0$.
Each discussed model makes its own generalization of this concept to simultaneous impacts, and there is in general no single accepted rule \citep{Stewart2000}.
 we note that many of the models here have extensions to partially- and fully-elastic collisions, with much effort going to preserving energy dissipation in these cases \citep{Stronge90,Mirtich1996,Anitescu97,Liu2008,Liu2008_2,Glocker2012,Glocker2013,Nguyen2018}.

In this paper, we will consider and combine concepts from two families of impact models: algebraic and differential. In this section, we discuss how different methods makes their own nuanced translations of the complementarity and maximal dissipation laws from sustained contact to impacts, resulting in distinct theoretical and computational characteristics.

Algebraic methods calculate $\Impulse[i]$ as the solution to a finite-dimensional system of algebraic equations \citep{Anitescu97,Hurmuzlu1994,Glocker1995,Chatterjee1998TwoInterpretations}, which relate the pre- and post-impact velocities to the impact's underlying impulses.
Such systems of equations can be approximately computed via numerical optimization.

In some of these models, all impacts are resolved simultaneously. For inelastic impacts, \citet{Glocker1995} and \citet{Anitescu97} for instance solve for an impulse $\Impulse$ which both prevents penetration and (approximately) satisfies linearized Coulomb friction at the \textit{post}-impact velocity $\Velocity^+$:
\begin{subequations}
\begin{align}
	\textrm{find} &&\Velocity^+; \Braces{\Impulse[i]:  i \in \ContactSet_A}\,, \\
	\textrm{s.t.}&& \textrm{impulse/impact balance \eqref{eq:ImpulsiveNetwonsSecond} }\,,\\
	&& \Complementary{\NormalImpulse[i]}{\Jn[i]\Velocity^+}\label{eq:velocitybasedcomplementarity}\,,\\
	&& \FrictionImpulse[i] \in -\FrictionCoeff[i]\NormalImpulse[i]\Unit_{\vect D}(\Jt[i]\Velocity^+)\label{eq:PostImpactLinearCoulomb}\,.
\end{align}\label{eq:AnitestcuDiscreteFormulation}
\end{subequations}
A critical feature of the algebraic formulation \eqref{eq:AnitestcuDiscreteFormulation} is the use of linearized Coulomb friction, which allows it to be cast as a solvable, copositive LCP (see \Cref{prop:LCPCopExist}).
We refer the reader to \cite{Stewart1996a} for a full description, but provide a short summary below.
Letting $\FrictionForce[i] = \vect D \FrictionBasisForce[i]$ and $\JD[i] = \vect D^T \Jt[i]$, \eqref{eq:PostImpactLinearCoulomb} can be captured as as the complementarity constraints
\begin{align}
&\Complementary{\FrictionBasisForce[i]}{\JD[i](\Configuration) \Velocity^+ + \OneVector \SlackVelocity[i]}\,,\label{eq:SimulationFrictionBasisComplementarity} \\
&\Complementary{\SlackVelocity[i]}{\FrictionCoeff[i]\NormalForce[i] - \OneVector^T \FrictionBasisForce[i]}
	\,.\label{eq:SimulationSlackVelocityComplementarity}
\end{align}
For convenience, we define the lumped terms
\begin{align}
\FrictionBasisForce &= \begin{bmatrix}
		\FrictionBasisForce[1] \\ \vdots \\ \FrictionBasisForce[\Contacts]
	\end{bmatrix}\,, & \JD &= \begin{bmatrix}
		\JD[1] \\ \vdots \\ \JD[\Contacts]
	\end{bmatrix}\,,\label{eq:ForceDJD} \\
\bar \Force &= \begin{bmatrix}
		\NormalForce \\ \FrictionBasisForce
	\end{bmatrix}\,,\ & \bar \J &= \begin{bmatrix}
		\Jn \\ \JD
	\end{bmatrix}\,.\label{eq:barJbarForce}
\end{align}
This casting of multiple, simultaneous impacts as a single LCP is a significant computational advantage, as only one, solvable numerical program must be instantiated to calculate the post-impact velocity.
Furthermore, it is know that solutions to this LCP always dissipate kinetic energy \citep{Anitescu97}.
However, the constraints embedded in this problem are often violated in real systems with multiple contacts, in particular the so-called velocity-based complementarity \eqref{eq:velocitybasedcomplementarity} formulation of inelasticity \citep{Chatterjee1999}.

An alternative algebraic view of simultaneous impacts that does not require the same velocity-based complementarity constraints is to resolve multi-impact as a sequence of individual impacts, as in \citet{Ivanov1995,Smith2012,Seghete2014}; and many other models.
To summarize this technique:
\begin{enumerate}
	\item Pick a single active contact $i \in \ContactSet_A(\Configuration)$.
	\item Resolve a single impact at $i$ with some impulse $\Impulse[i]$, and increment $\Velocity \gets \Velocity + \Mass^{-1}\J[i]^T\Impulse[i]$.
	\item Terminate and take $\Velocity^+ = \Velocity$ if it is non-colliding ($\Velocity \not \in \ActiveSet(\Configuration)$); otherwise, return to step 1.
\end{enumerate}
Various methods differ in their choice of contact ordering as well as single-impact resolution, resulting in distinctly different final outcomes to the same initial conditions.
Some such methods are only able to guarantee that the process terminates under significant assumptions, e.g. two or fewer contacts \citep{Seghete2014}.
Additionally, such methods by design are unable to directly represent partially-concurrent impacts that occur in real-world systems.
In \Cref{sec:examples}, sequences of single impacts resolved using \eqref{eq:AnitestcuDiscreteFormulation} will serve as a point of comparison for a new collision law that we develop.
As each individual impact dissipates kinetic energy, this sequential application will always predict a post-impact velocity with non-increased energy, provided that the termination condition is reached.

In \Cref{fig:phone} above, we provide a simple example which illustrates both how simultaneous vs. sequential resolution, as well as different sequential orderings, can result in distinct outcomes even for an extremely simple example.
We consider an instance of the classically-studied ``rocking block'' system \citep{Housner1963,Zhang2001,Lygeros2003,Yilmaz2009}.
A slender rectangular block with velocity $\Velocity^+$ is dropped onto flat ground, colliding at two corners $\vect A, \vect B$. It is assumed that the constituent materials generate inelastic impacts (zero coefficient of restitution), such that any concurrent collisions result in non-separating post-impact velocities.
Affixing \eqref{eq:AnitestcuDiscreteFormulation} as the model for impacts and only changing between simultaneous and sequential resolution, we find that 3 different outcomes might be predicted, corresponding to rest or rolling off of either corner. 

As opposed to algebraic models, differential impact models consider continuous evolution of velocity from pre- to post-impact velocity, in which the total derivative of $\Velocity$ satisfies laws of frictional contact in some form.
In the context of rigid contact models, this derivative $\TotalDiff{\Velocity}{s} = \dot\Velocity(s)$ is with respect to a variable of integration $s$ which does not correspond to time, but rather measures the impulse accumulated over an instantaneous collision.
At least in a limited capacity, such methods do directly represent the time-dependence and continual evolution of real-world object velocities during impact, which in \Cref{section:model} will allow us to represent partially-concurrent impacts resolving at arbitrary relative rates.
This fidelity however necessitates computationally expensive simulation of non-smooth or constrained differential equations to resolve impacts, and thus such methods have not been a focus of modern, efficient simulation \citep{Castro2020,Coumans2015}.

We will now describe one of the oldest differential models for a single impact \citep{Routh91}, which we will later extend to the simultaneous impact case.
This method was first presented by Routh in 2 dimensions, and extended to 3 dimensions later by \citet{Keller1986} \citep{Wang1992}.
For a single contact $\ContactSet_A(\Configuration) = \Braces i$, \citet{Routh91} proposed a method which satisfies Coulomb friction differentially.
To summarize this technique,
\begin{enumerate}
    \item Increase the normal impulse $\NormalImpulse[i]$ with slope $\NormalForce[i] = 1$.\label{item:routhnormalstep}
    \item Increment the tangential impulse with slope $\FrictionForce[i]$ satisfying Coulomb friction \eqref{eq:CoulombForce}
    for the mid-impact velocity $\bar{\Velocity}=\Velocity + \Mass^{-1}\J[i]^T\Impulse[i]$.\label{item:routhfrictionstep}
    \item Stop at the inelastic condition $\Jn[i]\bar{\Velocity}=0$; set $\Velocity^+ = \bar \Velocity$.\label{item:routhtermination}
\end{enumerate}
As observed in \cite{Posa16}, this process is equivalent to the DI
\begin{equation}
\TotalDiff{\Velocity}{s}\in \Mass(\Configuration)^{-1}\NetForce[i](\Configuration,\Velocity(s), 1)\,.\label{eq:routhsingle}
\end{equation}
Note that for a frictionless contact ($\FrictionCoeff = 0$), this simplifies to $
	\Mass\dot \Velocity = \Jn[i]^T$.
\begin{figure}
\centering
\includegraphics[width=0.6\hsize]{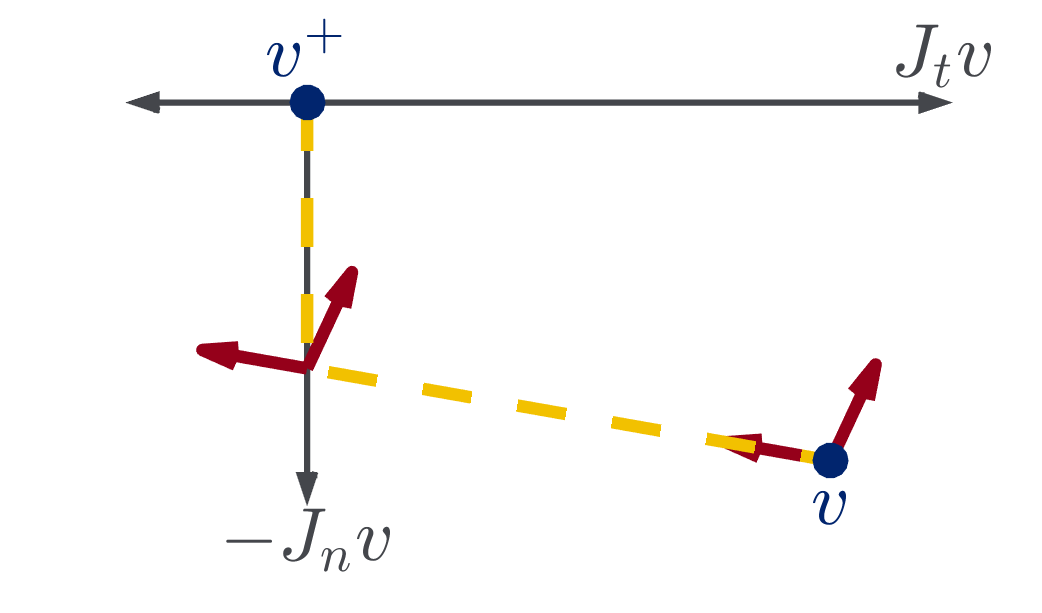}
\caption{Velocity through an impact resolved by Routh's method (adapted from \citet{Posa16}).
The extreme rays of the friction cone are shown as solid red arrows.
The contact begins in a sliding regime.
When $\Velocity$, shown in the yellow dotted line, intersects $\Jt \Velocity = \ZeroVector$, the contact transitions to sticking and the impact terminates when $\Jn \Velocity=\ZeroVector$.}
\label{fig:routh}
\vspace{-6mm}
\end{figure}
A diagram depicting the resolution of a planar impact with this method is shown in Figure~\ref{fig:routh}.
Solutions may transition from sliding to sticking, and the direction of slip may even reverse.
While the path is piecewise linear in the planar case, this is not true in three dimensions \citep{Keller1986,Wang1992}.
We additionally note that while \eqref{eq:routhsingle} predicts ``forces'' even when $\Velocity$ is separating ($\Jn[i]\Velocity > 0$), Routh's method is by definition only used on velocity trajectories starting with $\Jn[i]\Velocity \leq 0$ until the first moment that that $\Jn[i]\Velocity = 0$, and thus inelasticity is preserved.

Implicit in Routh's method is an assumption that the terminal condition in step \ref{item:routhtermination}) will eventually be reached; if it is possible to get ``stuck'' with $\Jn[i]\Velocity < 0$ forever, then Routh's method would be ill-defined and not predict a post impact state.
This does not happen in the frictionless case, as $\Jn[i]\Velocity$ has constant positive derivative $\Jn[i]\dot \Velocity = \Norm{\Jn[i]}_{\Mass^{-1}}^2$.
With more careful treatment capturing kinetic energy dissipation, a similar result can be shown for the frictional case:
\begin{lemma}[Single Impact Termination (Appendix \ref{adx:singlestopproof})]\label{lem:singlefrictional}
Let $\Configuration \not \in \ConfigurationSet_P$ be a non-penetrating configuration, and $i \in \ContactSet_A(\Configuration)$ be an active contact. Then there exists $\kappa(\Configuration)>0$ such that for any solution $\Velocity(s): \Brackets{0,\TwoNorm{\Velocity(0)}\kappa(\Configuration)} \to \VelocitySpace $ of the single frictional contact system \eqref{eq:routhsingle}, $\Velocity(s)$ exits the impact at some $s^* \leq \TwoNorm{\Velocity(0)}\kappa(\Configuration)$; i.e., $\Jn[i]\Velocity(s^*) \geq 0$.
\end{lemma}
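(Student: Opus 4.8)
The plan is to turn the paper's intuition (``friction dissipates kinetic energy and zero velocity is a valid exit'') into a quantitative estimate, by pairing a kinetic-energy bound with the fact that Routh's construction increases the \emph{normal} impulse at a fixed unit rate. Fix $\Configuration \notin \ConfigurationSet_P$ and an active contact $i$, and abbreviate $\Mass,\Jn[i],\Jt[i],\J[i],\FrictionCoeff$ for their (constant) values at $\Configuration$. Since $\NetForce[\Configuration,i]$ is uniformly bounded, every solution $\Velocity(s)$ is Lipschitz, hence absolutely continuous. We may assume $\Jn[i]\Velocity(0) < 0$ (otherwise $s^*=0$), and let $[0,s^*)$ be the maximal subinterval of the solution's domain on which $\Jn[i]\Velocity(s) < 0$. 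It then suffices to bound $s^*$ by $\TwoNorm{\Velocity(0)}\,S(\Configuration)$ for a constant $S(\Configuration)$ assembled from the system terms at $\Configuration$: by continuity and maximality this forces $\Jn[i]\Velocity(s^*) \geq 0$.

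\noindent\emph{Step 1 (energy bounds the velocity).} Put $T(s) = \KineticEnergy(\Configuration,\Velocity(s))$. For a.e.\ $s$ there is $u(s) \in \Unit(\Jt[i]\Velocity(s))$ with $\dot\Velocity(s) = \Mass^{-1}(\Jn[i]^T - \FrictionCoeff\,\Jt[i]^T u(s))$, so
\[
\dot T(s) = \Velocity(s)^T\Mass\,\dot\Velocity(s) = \Jn[i]\Velocity(s) - \FrictionCoeff\TwoNorm{\Jt[i]\Velocity(s)} \leq \Jn[i]\Velocity(s) < 0
\]
on $[0,s^*)$ — the unit normal impulse does nonpositive work while the contact is impacting, and friction is purely dissipative. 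Hence $T$ is non-increasing on $[0,s^*)$, and from $c_1\Identity \succeq \Mass \succeq c_2\Identity$,
\[
\TwoNorm{\Velocity(s)}^2 \leq \tfrac{2}{c_2}T(s) \leq \tfrac{2}{c_2}T(0) \leq \tfrac{c_1}{c_2}\TwoNorm{\Velocity(0)}^2 =: R^2 .
\]

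\noindent\emph{Step 2 (the normal velocity must rise, then conclude).} Integrating $\dot\Velocity$ and using that the normal force is identically $1$, the accumulated impulse $\Impulse[i](0,s)$ has normal component $s$ and friction component $\vect q(s) = -\FrictionCoeff\int_0^s u(\sigma)\,d\sigma$, so $\Mass(\Velocity(s)-\Velocity(0)) = s\,\Jn[i]^T + \Jt[i]^T\vect q(s)$. Left-multiplying by $\Jn[i]\Mass^{-1}$ and by $\Jt[i]\Mass^{-1}$ and eliminating $\vect q(s)$ through the block $\vect E := \Jt[i]\Mass^{-1}\Jt[i]^T$ of the matrix $\J[i]\Mass^{-1}\J[i]^T$ gives, with $\vect c := \Jt[i]\Mass^{-1}\Jn[i]^T$ and $C_0 := \Jn[i]\Velocity(0) - \vect c^T\vect E^{-1}\Jt[i]\Velocity(0)$,
\[
\Jn[i]\Velocity(s) = C_0 + s\,\kappa + \vect c^T\vect E^{-1}\Jt[i]\Velocity(s), \qquad \kappa = \Jn[i]\Mass^{-1}\Jn[i]^T - \vect c^T\vect E^{-1}\vect c > 0,
\]
$\kappa$ being the Schur complement of $\vect E$ in the positive-definite matrix $\J[i]\Mass^{-1}\J[i]^T$. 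On $[0,s^*)$, Step 1 bounds $|\vect c^T\vect E^{-1}\Jt[i]\Velocity(s)| \leq \TwoNorm{\vect E^{-1}\vect c}\TwoNorm{\Jt[i]}R =: C_1$, so $\Jn[i]\Velocity(s) \geq C_0 - C_1 + s\kappa$; since the left side is negative there, $s < (C_1 - C_0)/\kappa$, whence $s^* \leq (C_1-C_0)/\kappa$. As $C_1$ and $|C_0|$ are each at most a constant (depending only on $\Configuration$) times $\TwoNorm{\Velocity(0)}$, this gives $s^* \leq \TwoNorm{\Velocity(0)}\,S(\Configuration)$ with $S(\Configuration) > 0$ as claimed.

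\noindent\emph{Main obstacle.} The only delicate point is the strict positivity of $\kappa$ in Step 2 — equivalently that the contact Jacobian $\J[i](\Configuration)$ has full row rank, so that $\J[i]\Mass^{-1}\J[i]^T$ (and its principal block $\vect E$) is positive definite. Assumption \ref{assump:NoDegenerateContacts} provides only $\Jn[i]\neq\ZeroVector$; if $\Jn[i]^T$ lay in the range of $\Jt[i]^T$, a sufficiently large friction impulse could, in a Painlev\'e-like fashion, absorb the normal impulse without changing $\Velocity$, the $s\,\kappa$ term would degenerate, and the impact could stall with $\Jn[i]\Velocity < 0$. I expect the write-up to rule this out with a mild (and standard) non-degeneracy of $\J[i]$, or to appeal to the footnote's convention of terminating as soon as no instantaneous velocity change is required; the energy estimate of Step 1 and the impulse identity of Step 2 are otherwise routine.
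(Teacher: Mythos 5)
Your Step 1 is sound and parallels the dissipation estimates the paper uses elsewhere, but Step 2 takes a genuinely different route from the paper's proof, and the obstacle you flag at the end is a real gap rather than a technicality the paper assumes away. Your bound $s^* \leq (C_1 - C_0)/\kappa$ requires both that $\vect E = \Jt[i]\Mass^{-1}\Jt[i]^T$ be invertible and that the Schur complement $\kappa$ be strictly positive, i.e.\ that $\J[i]$ have full row rank. The paper never assumes this: Assumption \ref{assump:NoDegenerateContacts} gives only $\Jn[i] \neq \ZeroVector$, no extra non-degeneracy is introduced for this lemma, and the footnote about Painlev\'e resolutions is not invoked---the exit condition $\Jn[i]\Velocity(s^*)\geq 0$ is established outright.

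The paper's actual argument sidesteps the rank issue by not tracking the normal velocity at all. After normalizing $\Mass = \Identity$, it takes $\vect R$ to be an orthonormal basis of $\RangeSpace{\J[i]^T}$ and studies $V(s) = \TwoNorm{\vect R^T\Velocity(s)}^2$. The map $\vect w \mapsto \Norm{\Jn[i]\vect w} + \TwoNorm{\Jt[i]\vect w}$ is a genuine norm on the subspace $\RangeSpace{\J[i]^T}$ (it vanishes there only at $\vect w = \ZeroVector$), so equivalence of norms yields $\Norm{\Jn[i]\Velocity} + \TwoNorm{\Jt[i]\Velocity} \geq \varepsilon\TwoNorm{\vect R^T\Velocity}$ for some $\varepsilon>0$ depending only on $\Configuration$. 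While the contact is impacting, the same computation as your Step 1 gives
\[
\dot V = -2\Norm{\Jn[i]\Velocity} - 2\FrictionCoeff[i]\TwoNorm{\Jt[i]\Velocity} \leq -2\varepsilon\min(\FrictionCoeff[i],1)\sqrt{V}\,,
\]
and comparison with $\dot x = -\tfrac{2}{S}\sqrt{x}$ forces $V$ to reach zero by $s = S\TwoNorm{\vect R^T\Velocity(0)} \leq S\TwoNorm{\Velocity(0)}$, where $S = (\varepsilon\min(\FrictionCoeff[i],1))^{-1}$. At that instant the entire contact-frame velocity $\J[i]\Velocity$ vanishes, so in particular $\Jn[i]\Velocity = 0$ and the impact exits. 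This conclusion is stronger than what your Step 2 targets (the contact comes to rest, not merely crosses $\Jn[i]\Velocity = 0$), and it survives the degenerate case $\Jn[i]^T \in \RangeSpace{\Jt[i]^T}$ in which your $\kappa$ vanishes and your linear-growth term disappears. To salvage your route you would need to add a full-row-rank hypothesis on $\J[i]$ that the paper deliberately avoids; otherwise the projected-velocity comparison argument is what closes the gap.
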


The implication of \Cref{lem:singlefrictional} is that \emph{a priori}, one can determine an $s > 0$ proportional to the pre-impact speed $\TwoNorm{\Velocity}$ (with constant of proportionality $\kappa$) such that any solution to the DI \eqref{eq:routhsingle} on $\Brackets{0,s}$ can be used to construct the post-impact velocity $\Velocity^+$. We will see, however, that the extension of this methodology to multiple concurrent impacts is non-trivial, and that physical systems associated with these models often exhibit non-uniqueness.

We note that Routh's method has previously be extended to the multiple impact case by \citet*{Liu2008,Liu2008_2}, often called the LZB model \citep{Nguyen2018}.
In this framework, relative rates of impulse accrual are set via an energy-based framework, which takes as parameterization the stiffnesses of each contact involved.
These models have the capability to capture Coulumb friction as well as partially-elastic collisions via a bi-stiffness modeling approach.
As we instead develop a model which allows for simultaneous, inelastic impacts to resolve at arbitrary relative rates when stiffnesses are unknown, the special case of perfectly-inelastic LZB impacts with any material stiffnesses will be exactly captured by our model.

\subsubsection{Initial value problems through impact}\label{subsubsec:IVPs}
Any complete solution to continuous-time IVP's for rigid bodies undergoing impacts must somehow combine the sustained-contact and instantaneous impact models described above.
Several formalisms have been developed to this end. Hybrid systems modeling combines ODE's with discrete jumps which are triggered when the continuous-time state reaches certain algebraic conditions; in the context of rigid-body models, such events represent instantaneous impacts \citep{Brogliato2002,Ames2006,Johnson2016a,Burden2016}.
Such methods are commonly simulated in an event-driven scheme, in which ODE numerical integration is interrupted when impact conditions are met, and instantaneous impulses are resolved \citep{Ames2006,Johnson2016a}.
Building on the early ideas of \citet{Lecornu1905}, \citet{Moreau1977} instead developed an alternative measure differential inclusion (MDI) formalism which permits non-zero impulses in $\FrictionCone(\Configuration)$ to occur over an infinitesimal time period $\Differential t$.
Similar to differential inclusions, MDI's are rigorously defined in the language of Lebesgue calculus and measure theory.
These models are often simulated with a \textit{time-stepping} scheme \citep{Stewart1996a}, in which net impulses combining continuous forces and and impacts over a non-zero time period $\Delta t$ are determined.

Much theoretical work has been concerned with the \textit{consistency} of such models \citep{Stewart1998,Stewart2000,Brogliato2002,Ames2006,Marques2013} or the existence of solutions to IVP's for every valid initial condition.
Two types of pathological scenarios to this end have received much attention: \citet{Painleve1895} and Zeno \citep{Ames2006} behaviors. The model which we develop is capable of producing solutions through each of these scenarios; we accordingly now describe these behaviors and discuss related results in other modeling frameworks.

Early hybrid-system formulations trigger impact events if and only if a collision occurs \citep{Brogliato2002,Ames2006}.
However, since at least \citet{Jellet1872} and later detailed by \citet{Painleve1895}, this rule lead to non-existence of solutions for sustained contact when the continuous-time manipulator equations \eqref{eq:ManipulatorEquations} are combined with Coulomb friction\endnote{It is important to note that Painlev\'e also considered non-uniqueness to be a pathology of rigid-body assumptions and Coulomb friction; more discussion of this topic is covered at length in \citet{Stewart2000}.
As the subject of this paper concerns deliberate non-uniqueness, we forgo detailed discussion of this perspective in this work.}.
Although controversial, the prevailing treatment of these scenarios is to allow for impacts without collisions (IWC's, also called tangential collisions) when non-existence is encountered \citep{Genot1999,Stewart2000,Brogliato2002,Zhao2007}.
These behaviors are characterized by an instantaneous impact of the form \eqref{eq:ImpulsiveNetwonsSecond} despite the fact that no bodies are colliding (i.e. $\Jn[i]\Velocity^- = 0$ rather than $\Jn[i]\Velocity^- < 0$).
This can be modeled in hybrid systems for instance by adding additional events to trigger IWC's \citep{Genot1999,Brogliato2002}.
\citet{Stewart1998} seminally proved and demonstrated on a classic 2D rod example that Moreau's MDI naturally generates IWC behaviors, and accordingly IVP's can be solved with this model.
The associated proof of existence, derived by constructing a solution as the limit of discrete time-stepping simulations as the time-step duration $\Delta t \to 0$, is a preeminent consistency proof for MDI's and applies broadly to single-contact systems.
It is not known if such a method works completely for multiple contacts, in particular if such limits correctly comply with inelasticity constraints and Coulomb friction; a partial characterization of such limits is available assuming that the friction cone is pointed \citep{Stewart1998}.
\citet{Zhao2007} demonstrated that Routh's method can be used to resolve a 3-D analogue of this rod example, with an IWC that results in sticking contact.
Our model, also derived from Routh's model and equivalent to it in the one-contact case, accordingly produces solutions to such scenarios with IWC's.

Another pathology of particular interest for hybrid systems, \textit{Zeno behavior} \citep{Ames2006}, occurs when models lead to an infinite sequence of impact events within a finite duration of time (i.e. impact $i$ happens at $t_i$ with $\lim_{i\to\infty} t_i < \infty$).
Such behavior presents both a practical simulation challenge as well as a theoretical challenge, as numerical solvers would have to compute solutions to infinite impact resolutions to simulate a finite time duration.
A familiar example of Zeno behavior is a ball bouncing on flat ground with partially elastic collisions \citep{Acary08}.
Such phenomena can occur even with completely inelastic impacts, such as with a rocking block which wobbles from corner to corner, losing a fraction of momentum each time in a similar fashion to the bouncing ball; a detailed analysis is available in \citet{Lygeros2003}.
\citet{Johnson2016a} model this example by introducing a ``pseudo-impulse'' behavior that precludes Zeno phenomena, which modifies the wobbling behavior to predict sticking after finitely-many events.
\citet{Ames2006} instead proposes a ``completed'' hybrid system which extends solutions past the Zeno point by maintaining sticking contact at each contact involved in the Zeno phenomenon.
Neither method captures a broad array of frictional behaviors, with the former capturing only sticking friction on massless limbs, and the latter entirely frictionless.
In \Cref{subsec:zenoexample}, we reproduce a version of this example to illustrate our model's predictions in the presence of Zeno behavior.

In Section \ref{section:continuous_time_model}, we derive a differential inclusion model  (\Cref{eq:ContinuousTimeModel}) which generally applies to multi-body, multiple contact systems; specifies impacts to be inelastic; and guarantees existence of solutions (\Cref{thm:ContinuousTimeSolutions,thm:ContinuousTimeMinimumAdvancement}) under similar assumptions as \citet{Stewart1998} (see Assumption \ref{assump:nondegenerate}).
The theoretical guarantees for our model are more general than those for the MDI presented in \citet{Stewart1998}, in that Coulomb friction and inelasticity are well-characterized even in the multiple contacts case.
While DI's have long been used in rigid-body dynamics \citep{Leine08}, this paper and concurrent work \citep{Nurkanovic2020,Nurkanovic2021} are the first to solve IVPs through impacts via adding time as a state.
This work is the first DI to capture both inelasticity and friction in impact.
We additionally combine these ideas with the LCP-based structure of time-stepping simulation \cite{Stewart1996a} to develop our own discrete impact integrator in Section \ref{section:simulation}.

\section{Simultaneous Impact Model}
\label{section:model}
\begin{figure*}[h]
    \center        
    \begin{subfigure}[b]{.24\linewidth}
        \includegraphics[width=.96\hsize,trim={15mm 24mm 10mm 0mm},clip]{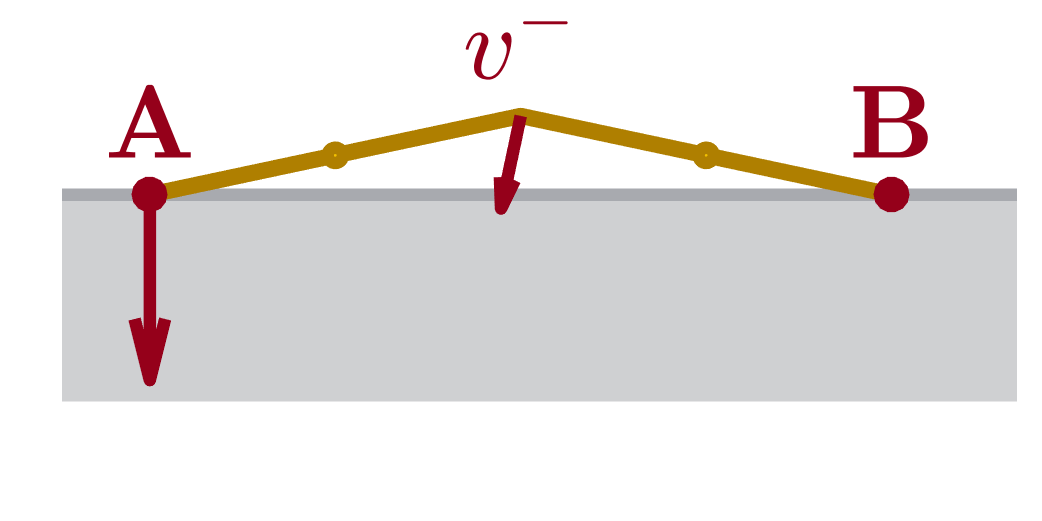}
        \centering
        \caption{\label{fig:compass_cartoon} Initial condition \\ (pre-impact)}
    \end{subfigure}
    \begin{subfigure}[b]{.72\linewidth}
    	\centering
        \includegraphics[width=.32\hsize,trim={15mm 24mm 10mm 0mm},clip]{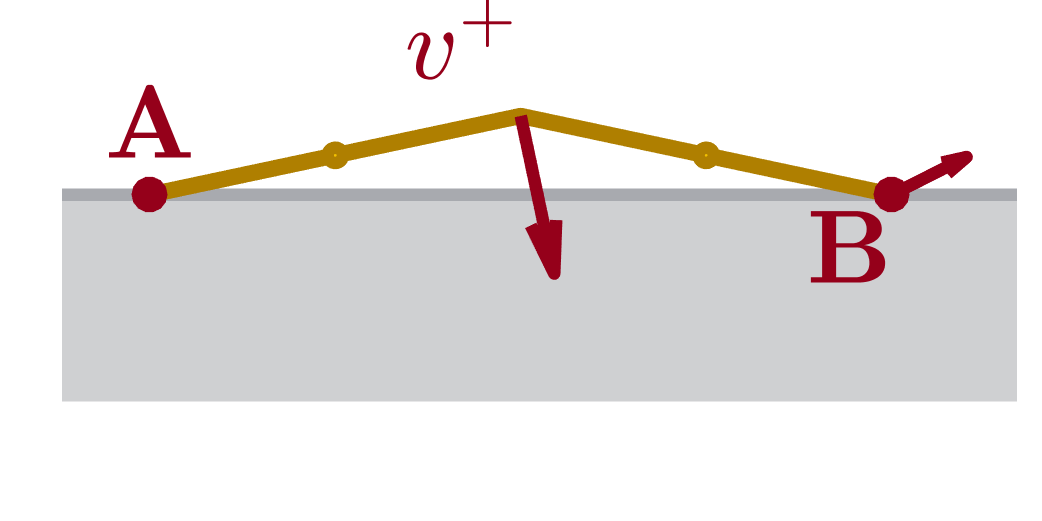}
        \includegraphics[width=.32\hsize,trim={15mm 24mm 10mm 0mm},clip]{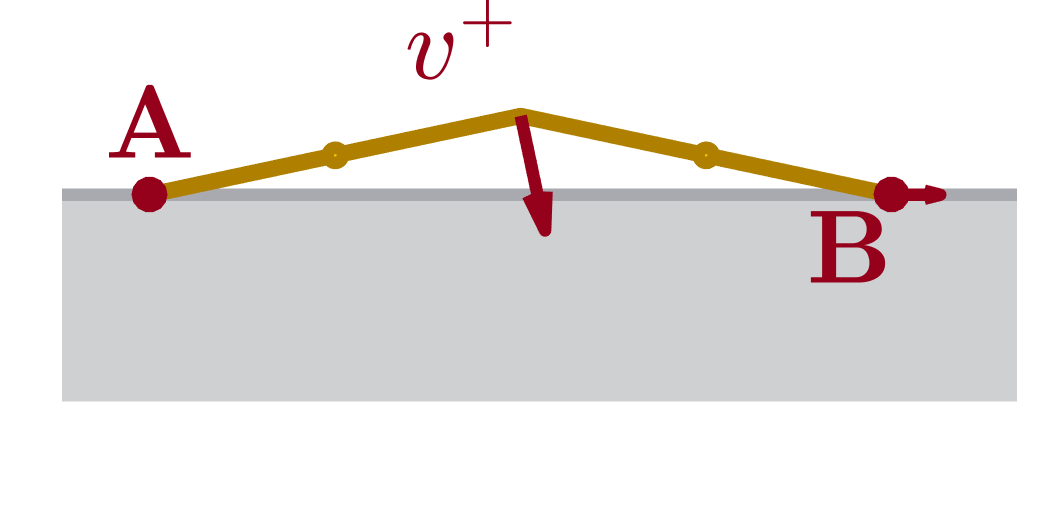}
        \includegraphics[width=.32\hsize,trim={15mm 24mm 10mm 0mm},clip]{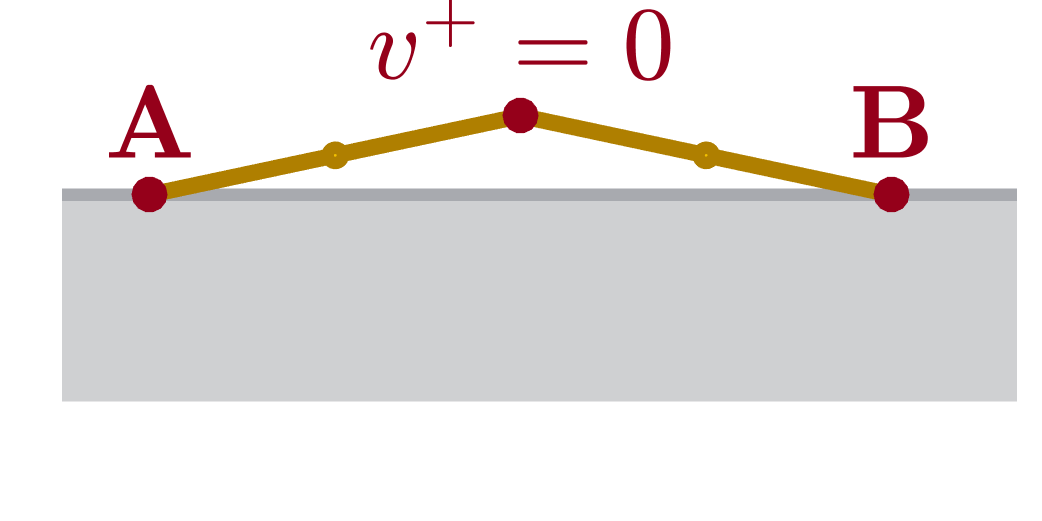}
        \caption{\label{fig:compass_examples} Simultaneous algebraic impacts \\ (post-impact)}
    \end{subfigure}
    \caption{(\subref*{fig:pushing_cartoon}) A compass gait walker, consisting of two legs attached with a hinge joint at the hip, takes a step with hip velocity $v$ and excites non-uniqueness in the model of \citet{Anitescu97}. 
        (\subref*{fig:compass_examples}) A single impact at that the leading foot (point A) can cause the trailing foot (point B) to lift off the ground. Alternatively, impacts at both feet can cause the trailing foot to slide or come to rest.\label{fig:compass_setup}}
\end{figure*}
\begin{figure*}[ht]
    \center        
    \begin{subfigure}[b]{.24\textwidth}
    \centering
        \includegraphics[width=.90\hsize,trim={10mm 14mm 10mm 5mm},clip]{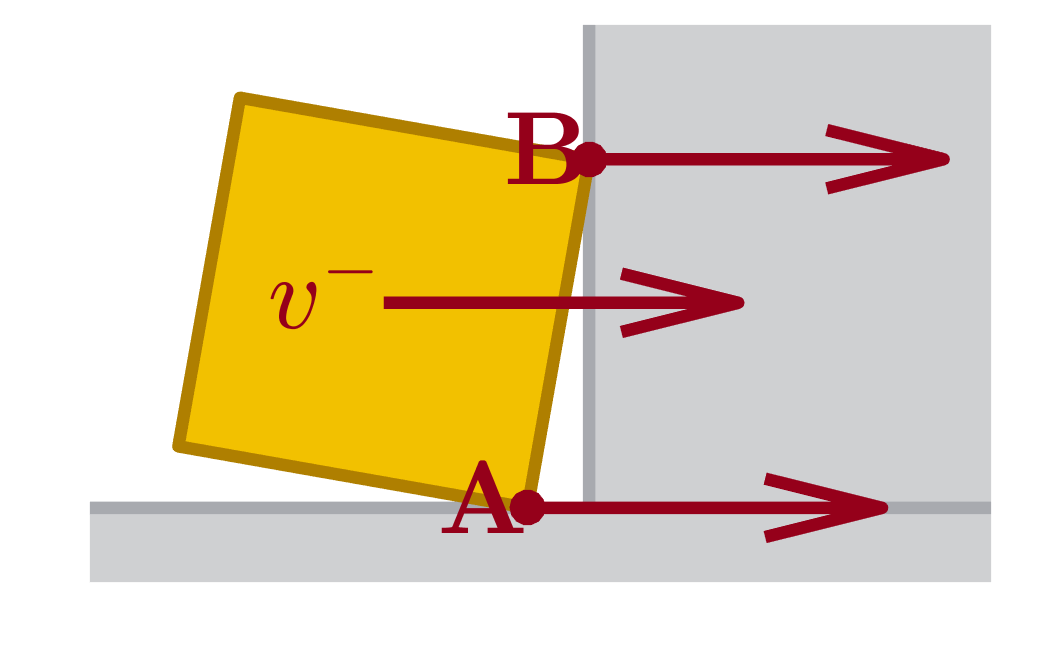}
        \caption{\label{fig:pushing_cartoon} Initial condition \\ (pre-impact)}
    \end{subfigure}
    \begin{subfigure}[b]{.24\textwidth}
    	\centering
        \includegraphics[width=.90\hsize,trim={10mm 14mm 10mm 5mm},clip]{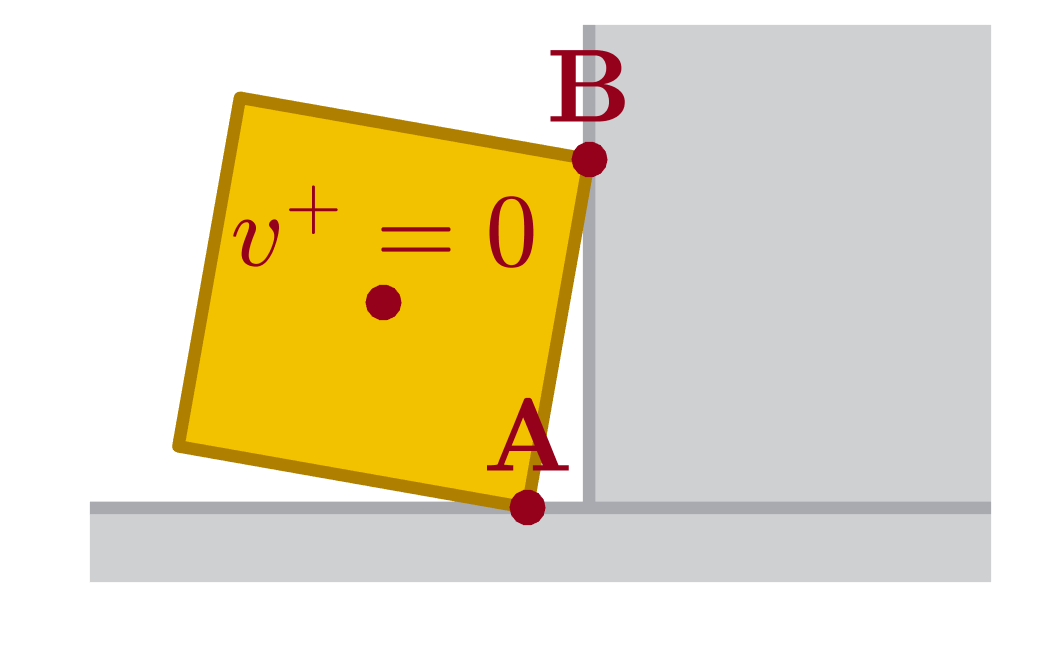}
        \caption{\label{fig:pushing_example_1} Simultaneous algebraic impact \\ (post-impact)}
    \end{subfigure}
    \begin{subfigure}[b]{.48\textwidth}
    	\centering
        \includegraphics[width=.45\hsize,trim={10mm 14mm 10mm 5mm},clip]{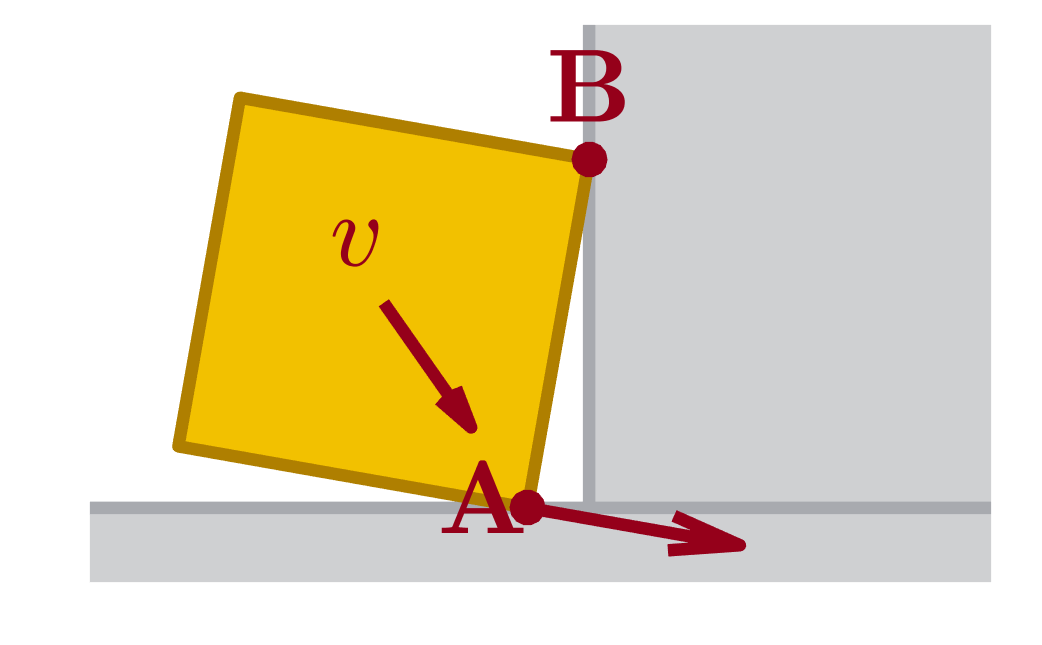}
        \includegraphics[width=.45\hsize,trim={10mm 14mm 10mm 5mm},clip]{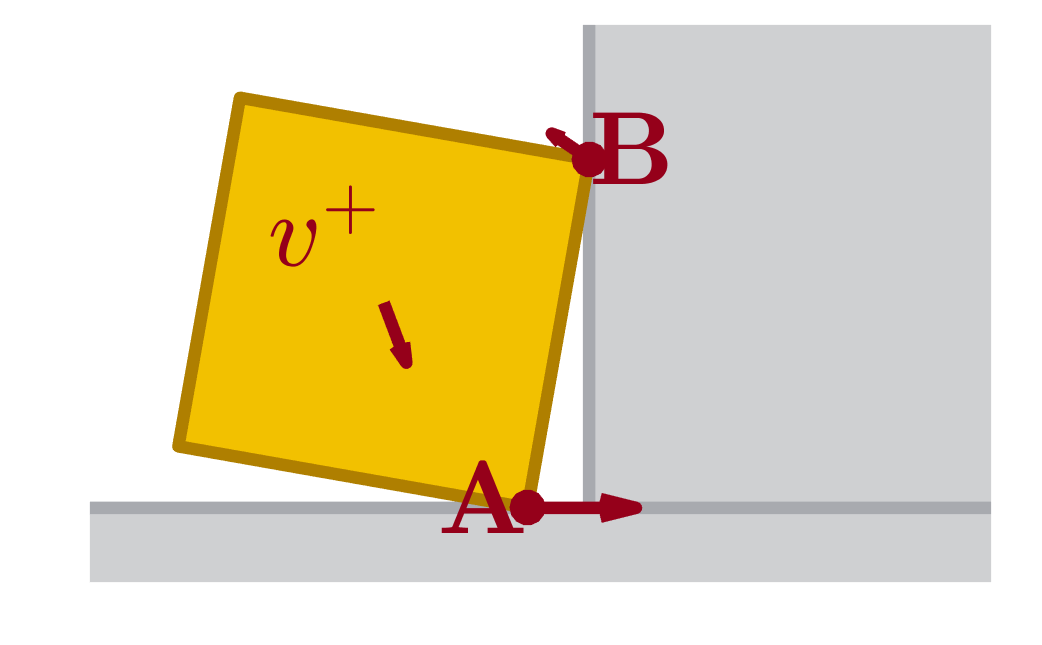}
        \caption{\label{fig:pushing_example_2} B-then-A sequential algebraic impact \\ (mid- then post-impact)}
    \end{subfigure}
    \caption{Subtly different solutions for a box sliding into a wall with velocity $v$ (\subref*{fig:pushing_cartoon}) are shown. 
        (\subref*{fig:pushing_example_1}) When a simultaneous impact is generated via \citet{Anitescu97}, the box comes to rest.
        (\subref*{fig:pushing_example_2}) When point B has an impact before point A, point A instead continues sliding, while point B lifts off the wall.\label{fig:box}}
\end{figure*}
Figure \ref{fig:phone} demonstrates that simultaneous collisions can excite quantitative and qualitative disagreement between common impact models' predictions, with even the post-impact contact mode differing.
This discrepancy occurs even when the same physical parameters such as mass and coefficients of friction and restitution are provided to these models.
However, making two points collide at \textit{exactly} the same time is unlikely in real life.
Nonetheless, as shown on a real-world system by \citet{Chatterjee1999}, even a single collision can result in multiple outcomes depending on the ordering of impulse accumulation between contacts.
In this section, we first offer two additional examples of this type---one related to legged locomotion and the other to manipulation; further details on the models and experiments can be found in Appendix \ref{adx:exampledetails} and \Cref{sec:examples}.
We then describe and characterize model that captures the non-uniqueness due to impulse ordering by extending Routh's method to multiple contacts with arbitrary relative rates.

\subsection{Motivating Examples}\label{subsection:motivatingexamples}

A ubiquitous model of bipedal walking is the compass gait walker, which consists of two rods (legs) connected with a revolute joint at the hip.
Bipedal walking involves stepping with a leading foot while a trailing foot rests on the ground, as shown in Figure \ref{fig:compass_setup}.
As observed by \citet{Remy2017}, if a wide step ($156^{\circ}$ between the legs) is taken by the model, then the simultaneous method of \citet{Anitescu97} results in three categorically different solutions.
In one case, there is only an impact at the leading foot, and the trailing foot lifts off the ground.
In two others, impacts at both feet can result in the trailing foot sliding or coming to rest.

In the second example, motivated by non-prehensile pushing of an object, we consider a box which slides on one corner on a floor before impacting a wall (Figure \ref{fig:box}).
If a single impact occurs between the box and the wall, it will trigger a second impact against the floor.
Due to the position of the center of mass of the box, both impacts add counter-clockwise rotational momentum to the box, causing the contact with the wall to lift off.
Alternatively, if both of these impacts are resolved simultaneously, the box comes to rest under sufficient friction.

\subsection{Simultaneous Impact Model Construction}\label{subsec:ImpactModelConstruction}
We have previously demonstrated that some simultaneous impact models are sensitive to impulse ordering.
As predicting this ordering demands precise knowledge of initial conditions and material properties beyond the fidelity of robotic sensors and simplified rigid-body models, we instead seek to predict the set of outcomes that result from arbitrary impulse orders.

The foundational concept of this model is that while Routh's method models impacts as instantaneous \citep{Routh91}, the variable of integration $s$ provides a natural way to specify the relative rates of impulse accural between concurrent impacts.
A similar model, without theoretical results or a detailed understanding, was proposed by \citet{Posa16} where it proved useful for stability analysis of robots undergoing simultaneous impact.
We consider the following extension to Routh's method which at any given instant during the resolution process, the impacts are allowed to concurrently resolve at \textit{any} relative rate:
\begin{enumerate}
	\item Increase $\NormalImpulse[i]$ on each non-separating $(\Jn[i]\Velocity \leq 0)$ active contact $i \in \ContactSet_A(\Configuration)$ at rate $\NormalForce[i] \geq 0 $ such that
		\begin{equation}
			\sum_{i} \NormalForce[i] = \Norm{\NormalForce}_1 = 1\,.\label{eq:convexforcecombination}
		\end{equation}\label{step:MultiContactRouthSelection}
	\item Increment each tangential impulse with slope $\FrictionForce[i]$ satisfying Coulomb friction \eqref{eq:CoulombForce}
    at $\bar{\Velocity}=\Velocity + \Mass^{-1}\J^T\Impulse$.
	\item Terminate when all $\Jn[i]\bar{\Velocity}\geq 0$, i.e. $\bar \Velocity \not \in \ActiveSet(\Configuration)$. $\Velocity^+ = \bar \Velocity$.
\end{enumerate}
We can understand the constraint (\ref{eq:convexforcecombination}) on $\Force$ as choosing a net force that comes from a convex combination of the forces that Routh's method might select for any of the individual contacts $i \in \ContactSet_A(\Configuration)$.
In particular, we note that step \ref{step:MultiContactRouthSelection} restricts the normal forces to be dissipative, i.e.
\begin{equation}
	\NormalForce[i] \cdot \Jn[i]\Velocity \leq 0\label{eq:NormalDissipationGuaranteed}\,.
\end{equation}
As before, we can capture this behavior as a DI:
\begin{align}
	\dot \Velocity &\in \DerivativeMap[\Configuration](\Velocity) = \Mass^{-1} \Hull \Parentheses{\bigcup_{i \in \ContactSet_{\Configuration}(\Velocity)}\NetForce[i](\Configuration, \Velocity, 1)}\,,\label{eq:multicontact}\\
	\ContactSet_{\Configuration}(\Velocity) 	&= \begin{cases}
	\Braces{i \in \ContactSet_A(\Configuration) : \Jn[i] \Velocity \leq 0} & \Velocity \in \Closure \ActiveSet(\Configuration)\,,\\
		\arg\min_{i \in \ContactSet_A(\Configuration)}\Jn[i] \Velocity & \Otherwise \,.\label{eq:permissiblecontacts}
	\end{cases}
\end{align}
While non-physical, the behavior outside of $\Closure \ActiveSet(\Configuration)$ has been chosen to preserve upper semi-continuity, and is not encountered when resolving impacts due to the termination condition $\Velocity \not \in \ActiveSet(\Configuration)$.
The construction of \eqref{eq:multicontact} is similar to that of the single contact system \eqref{eq:routhsingle}; it is furthermore equivalent to \eqref{eq:routhsingle} and therefore Routh's method when only one contact is active. 
\subsection{Properties}
We now detail properties of our simultaneous impact system that are useful for analyzing its solution set. 
\subsubsection{Existence and Closure}

For any configuration $\Configuration \in \ConfigurationSet_A$, $\DerivativeMap[\Configuration](\Velocity)$ is non-empty, closed, uniformly bounded, and convex. Therefore by \Cref{prop:closure}, we obtain the following: 
\begin{theorem}[Existence of Solutions (Appendix \ref{adx:NonEmptyClosureProof})]\label{lem:NonEmptyClosure}
	For all configurations $\Configuration \in \ConfigurationSet_A$, velocities $\Velocity_0$, and compact intervals $[a,b]$, $\SolutionSet{\DerivativeMap[\Configuration]}[[a,b]]$ and $\IVP{\DerivativeMap[\Configuration]}{\Velocity_0}{[a,b]}$ are non-empty and closed under uniform convergence.
\end{theorem}

\subsubsection{Energy Dissipation}\label{subsubsec:Dissipation}
An essential behavior of inelastic impacts reflected in our model is that they dissipate kinetic energy.
By construction of \eqref{eq:multicontact}, the kinetic energy $\KineticEnergy(\Configuration,\Velocity(s))$ is continually non-increasing during impact (i.e. when $\Velocity(s) \in \Closure \ActiveSet(\Configuration)$) as normal forces are constrained to be dissipative \eqref{eq:NormalDissipationGuaranteed} and frictional forces are naturally, maximally dissipative:
\begin{theorem}[Dissipation (Appendix \ref{adx:dissipateproof})]\label{lem:dissipate}
	Let $\Configuration \in \ConfigurationSet_A$, and let $[a,b]$ be a compact interval. If $\Velocity(s) \in \SolutionSet{\DerivativeMap[\Configuration]}[[a,b]]$ and $\Velocity([a,b]) \subseteq \Closure \ActiveSet(\Configuration)$, then $\Norm{\Velocity(s)}_{\Mass}$ is non-increasing.
\end{theorem}
\noindent The proof of this Theorem involves the calculation of the total derivative of $\KineticEnergy$ as
\begin{equation}
	\dot \KineticEnergy = \Velocity^T \J^T \Force\,.
\end{equation}
One might also wonder if $\KineticEnergy$ strictly decreases during impact; certainly, this would not be the case if $\Velocity(s)$ could stay constant.
Therefore, solutions to the differential inclusion must not be permitted to select $\dot \Velocity = \ZeroVector$, i.e., $\ZeroVector \not\in \DerivativeMap[\Configuration](\Velocity^*)$ for every $\Velocity^* \in \Closure\ActiveSet(\Configuration)$.
As $\DerivativeMap[\Configuration](\Velocity) \subseteq \Mass^{-1}\FrictionCone (\Configuration)$, this property is guaranteed by the pointed friction cone assumption (Assumption \ref{assump:nondegenerate}).
Assumption \ref{assump:nondegenerate} covers most situations in robotics---including grasping and locomotion---with the notable exception being jamming between immovable surfaces.
We note that this assumption does not preclude Painlev\'e-type scenarios necessitating impacts without collision \citep{Stewart1998}. 
Furthermore, it guarantees strict dissipation during the entirety of the impact process:
\begin{corollary}[Strict Dissipation (\Cref{adx:strictdissipationproof})]\label{thm:strictdissipation}
	Let $\Configuration \in \ConfigurationSet_A \setminus \ConfigurationSet_P$ and $[a,b]$ be a compact interval. If $\Velocity(s) \in \SolutionSet{\DerivativeMap[\Configuration]}[[a,b]]$ and $\Velocity([a,b]) \subseteq \Closure\ActiveSet(\Configuration)$, $\Norm{\Velocity(s)}_{\Mass}$ is strictly decreasing.
\end{corollary}
\subsubsection{Linear Impact Termination}
\label{section:termination}
While solutions to the underlying DI are guaranteed to exist in the simultaneous impact model, we have yet to prove that they terminate the impact process, as in Routh's single-contact method.
We now discuss a similar linear-duration condition:
\begin{proposition}[Finite Termination]\label{prop:nondegeneratedissipation}
	For any configuration $\Configuration \in \ConfigurationSet_A \setminus \ConfigurationSet_P$ and pre-impact velocity $\Velocity(0)$, the DI (\ref{eq:multicontact}) resolves the impact within a duration proportional to $\Norm{\Velocity(0)}_{\Mass}$.
\end{proposition}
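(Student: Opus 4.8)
The plan is to combine the homogeneity of the inclusion with strict dissipation and a compactness argument over trajectories. First I would use Lemma~\ref{lem:homogeneity} together with the fact that $\ActiveSet(\Configuration)$ is a positive cone (membership is determined by the signs of the scalars $\Jn[i]\Velocity$, which are unchanged under $\Velocity \mapsto k\Velocity$, $k>0$) to reduce to unit speed. Concretely, it suffices to produce one constant $S_0(\Configuration) > 0$ such that every solution of \eqref{eq:multicontact} with $\Norm{\Velocity(0)}_{\Mass} = 1$ satisfies $\Velocity(s^\ast) \notin \ActiveSet(\Configuration)$ for some $s^\ast \in [0, S_0(\Configuration)]$: given an arbitrary pre-impact velocity with $\Norm{\Velocity(0)}_{\Mass} = w_0 > 0$ (the cases $\Velocity(0) = \ZeroVector$ or $\Velocity(0) \notin \ActiveSet(\Configuration)$ being trivially resolved at $s=0$), the rescaled trajectory $\tfrac{1}{w_0}\Velocity(w_0 s)$ is again a solution with unit $\Mass$-norm, so it leaves $\ActiveSet(\Configuration)$ by $S_0(\Configuration)$, hence $\Velocity$ leaves $\ActiveSet(\Configuration)$ by $S(\Configuration) := w_0 S_0(\Configuration) = S_0(\Configuration)\Norm{\Velocity(0)}_{\Mass}$, which is the desired proportionality.

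Next I would establish the existence of such an $S_0(\Configuration)$ by contradiction. If no such constant existed, then for each $n \in \Natural$ there would be a solution $\Velocity_n \in \IVP{\DerivativeMap[\Configuration]}{\Velocity_n(0)}{[0,n]}$ with $\Norm{\Velocity_n(0)}_{\Mass} = 1$ that never leaves $\ActiveSet(\Configuration)$ on $[0,n]$, hence $\Velocity_n([0,n]) \subseteq \Closure\ActiveSet(\Configuration)$. Since $\DerivativeMap[\Configuration]$ is uniformly bounded, the $\Velocity_n$ are all Lipschitz with a common constant, so Theorem~\ref{thm:Rellich} applied on each $[0,m]$ together with a diagonal extraction, and closure of the solution set under uniform convergence (Lemma~\ref{lem:NonEmptyClosure}), yields a solution $\Velocity_\infty \in \SolutionSet{\DerivativeMap[\Configuration]}$ defined on $[0,\infty)$ with $\Velocity_\infty([0,\infty)) \subseteq \Closure\ActiveSet(\Configuration)$ and $\Norm{\Velocity_\infty(0)}_{\Mass} = 1$. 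By Theorem~\ref{thm:strictdissipation}, $\Norm{\Velocity_\infty(s)}_{\Mass}$ is strictly decreasing on $[0,\infty)$ and therefore converges to some limit $r_\infty \ge 0$.

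The final step is to extract an equilibrium of the inclusion from the tail of $\Velocity_\infty$ and contradict Assumption~\ref{assump:nondegenerate}. Because \eqref{eq:multicontact} is autonomous, each time-shift $s \mapsto \Velocity_\infty(s + k)$, $k \in \Natural$, is again a solution valued in $\Closure\ActiveSet(\Configuration)$; a second application of Theorem~\ref{thm:Rellich} with the same diagonal/closure argument produces a limiting solution $\tilde\Velocity$ on $[0,\infty)$, valued in $\Closure\ActiveSet(\Configuration)$, with $\Norm{\tilde\Velocity(s)}_{\Mass} = r_\infty$ for every $s$ (the value of the decreasing function $\Norm{\Velocity_\infty(\cdot)}_{\Mass}$ in the limit along the shifted times $s + k_j \to \infty$). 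Constancy of $\Norm{\tilde\Velocity(\cdot)}_{\Mass}$ forces $\tilde\Velocity$ to be constant, by Theorem~\ref{thm:stop} if $r_\infty > 0$ and trivially if $r_\infty = 0$. Then $\ZeroVector = \dot{\tilde\Velocity}(s) \in \DerivativeMap[\Configuration](\tilde\Velocity(0))$ with $\tilde\Velocity(0) \in \Closure\ActiveSet(\Configuration)$, contradicting $\ZeroVector \notin \DerivativeMap[\Configuration](\Closure\ActiveSet(\Configuration))$ from Assumption~\ref{assump:nondegenerate}. This contradiction yields the required $S_0(\Configuration)$, and the homogeneity reduction completes the proof.

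I expect the main obstacle to be the nested limiting argument: carrying out the two Arzel\`a--Ascoli extractions and the diagonalization over the growing intervals $[0,m]$ cleanly, and in particular transferring the monotone convergence of $\Norm{\Velocity_\infty(\cdot)}_{\Mass}$ to genuine constancy of $\Norm{\tilde\Velocity(\cdot)}_{\Mass}$ along the shifted arguments. A secondary source of care is the bookkeeping between the open cone $\ActiveSet(\Configuration)$ (``impact resolved'' means reaching its complement) and its closure $\Closure\ActiveSet(\Configuration)$ (on which Theorems~\ref{thm:strictdissipation} and~\ref{thm:stop} and Assumption~\ref{assump:nondegenerate} are stated), as well as verifying that this interacts correctly with the cone rescaling used in the homogeneity reduction.
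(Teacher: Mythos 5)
Your argument is correct, but it routes around the paper's actual decomposition. The paper proves this statement in two stages: Theorem~\ref{thm:nondegeneratedissipation} shows by contradiction (Arzel\`a--Ascoli on a \emph{fixed} compact interval $[0,S]$, closure of the solution set, Theorem~\ref{thm:stop}, Assumption~\ref{assump:nondegenerate}) that every $\Configuration \in \ConfigurationSet_A \setminus \ConfigurationSet_P$ admits a positive dissipation rate $\DissipationRate[\Configuration](s)$, and Lemma~\ref{lem:exit} then converts $\DissipationRate(s)$-dissipativity into the linear exit-time bound by iterating Lemma~\ref{lem:homogeneity}: after each sub-interval of length $s$ the $\Mass$-norm shrinks by the factor $(1-\DissipationRate[\Configuration](s))$, so the total dwell time in $\Closure\ActiveSet(\Configuration)$ is dominated by the geometric series $s\sum_k(1-\DissipationRate[\Configuration](s))^k = s/\DissipationRate[\Configuration](s)$, scaled by $\Norm{\Velocity(0)}_{\Mass}$. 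You instead use homogeneity only once (to normalize the initial speed), then build a single infinite-horizon non-exiting solution by diagonalizing over growing intervals, and extract an equilibrium from its tail by a second shift-and-limit (omega-limit) argument before invoking Theorem~\ref{thm:stop} and Assumption~\ref{assump:nondegenerate}. Both proofs rest on the same pillars (homogeneity, Theorem~\ref{thm:Rellich}, Lemma~\ref{lem:NonEmptyClosure}, Theorem~\ref{thm:stop}, Assumption~\ref{assump:nondegenerate}) and your nested extractions are all legitimate; the trade-off is that the paper's intermediate notion of $\DissipationRate(s)$-dissipativity is reused downstream (Corollary~\ref{coro:uniform} upgrades it to a rate uniform over compact configuration sets, which in turn drives Theorem~\ref{thm:ContinuousTimeMinimumAdvancement}), and it keeps every compactness argument on a fixed compact interval, whereas your version is self-contained but yields no reusable rate function and requires the more delicate double limiting procedure you flag yourself. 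Your bookkeeping between $\ActiveSet(\Configuration)$ and $\Closure\ActiveSet(\Configuration)$, and the treatment of the $r_\infty = 0$ case via $\ZeroVector \in \Closure\ActiveSet(\Configuration)$, are handled correctly.
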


We will prove this claim as a consequence of kinetic energy decreasing fast enough to force termination---a significant expansion of \Cref{thm:strictdissipation}. Even though $\KineticEnergy$ always decreases, \Cref{thm:strictdissipation} does not forbid $\dot\KineticEnergy$ from getting arbitrarily close to zero. For example, consider a 2 DoF system with 2 frictionless, axis-aligned contacts ($\Mass = \Jn = \Identity_2$). For any $\epsilon > 0$, we can pick a velocity and impulse increment which satisfy $\dot K > -\epsilon$: 
	\begin{align}
		\Velocity_\epsilon &= \frac{-1-\epsilon}{2}\begin{bmatrix}
			1 \\
			\epsilon
		\end{bmatrix} \in \ActiveSet(\Configuration)\,, &
		\Jn^T\begin{bmatrix}
			\epsilon \\ 1
		\end{bmatrix}\frac{1}{1+\epsilon} \in \DerivativeMap[\Configuration](\Velocity_\epsilon)\,. 
	\end{align}
	However as we take $\epsilon \StrongConvergence 0$, $\Velocity_\epsilon$ converges to a non-impacting velocity; thus, $\dot K$ only remains small for a short duration before impact termination. It remains possible that the aggregate dissipation over an interval of nonzero length can be bounded away from zero. We define this quality as $\DissipationRate(s)$-dissipativity:

\begin{definition}[$\DissipationRate(s)$-dissipativity]
For a positive definite function $\DissipationRate(s):\Closure \Real^+ \rightarrow [0,1]$, the system $\dot\Velocity \in \DerivativeMap[\Configuration](\Velocity)$ is said to be $\boldmath{\DissipationRate}(s)$\textbf{-dissipative} if for all $s > 0$, for all $\Velocity \in \SolutionSet{\DerivativeMap[\Configuration]}[[0,s]]$ s.t. $\Velocity\Parentheses{[0,s]} \subseteq \Closure \ActiveSet(\Configuration)$, if $\Norm{\Velocity(0)}_{\Mass} = 1$, $\Norm{v(s)}_{\Mass} \leq 1 - \DissipationRate(s)$.
\end{definition}
$\DissipationRate(s)$-dissipativity is a sufficient condition for linear-duration impact termination (\Cref{prop:nondegeneratedissipation}) from any initial velocity, and the particular form of $\DissipationRate(s)$ can be used to bound the linear rate:
\begin{lemma}[Termination via Aggregate Dissipation (Appendix \ref{adx:exitproof})]\label{lem:exit}
		Let $\Configuration \in \ConfigurationSet_A$ and let $\dot\Velocity \in \DerivativeMap[\Configuration](\Velocity)$ be $\DissipationRate[\Configuration](s)$-dissipative. Then if $\Velocity(s) \in \SolutionSet{\DerivativeMap[\Configuration]}[\Brackets{0, s^*}]$ and $\Velocity([0, s^*]) \subseteq \Closure\ActiveSet(\Configuration)$,
		$$s^* \leq \Parentheses{ \inf_{s>0} \frac{s}{\DissipationRate[\Configuration](s)}} \Norm{\Velocity(0)}_{\Mass}.$$
\end{lemma}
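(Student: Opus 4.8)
The plan is to normalize via homogeneity so that the pre-impact mass-norm is $1$, and then show that $\DissipationRate[\Configuration](s)$-dissipativity, together with the fact that $\Closure\ActiveSet(\Configuration)$ is a positive cone, drives $\Norm{\Velocity(\sigma)}_{\Mass}$ toward zero along a geometrically shrinking sequence of sampling instants, forcing the trajectory to leave $\Closure\ActiveSet(\Configuration)$ by time $s/\DissipationRate[\Configuration](s)$ for every $s>0$. For the normalization: $\Configuration$ is fixed during an impact, so $\Mass=\Mass(\Configuration)$ is constant, $\Norm{\cdot}_{\Mass}$ is a genuine norm, and $\ActiveSet(\Configuration)$ (hence $\Closure\ActiveSet(\Configuration)$) is invariant under multiplication by positive scalars, since each inequality $\Jn[i]\Velocity<0$ is homogeneous. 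Given $\Velocity\in\SolutionSet{\DerivativeMap[\Configuration]}[\Brackets{0,S}]$ with $\Velocity(\Brackets{0,S})\subseteq\Closure\ActiveSet(\Configuration)$ and $\Velocity(0)\neq\ZeroVector$, Lemma \ref{lem:homogeneity} with $k=1/\Norm{\Velocity(0)}_{\Mass}$ produces a solution on $[0,kS]$ that still lies in $\Closure\ActiveSet(\Configuration)$ and has unit initial mass-norm, and proving $kS\le s/\DissipationRate[\Configuration](s)$ for it is equivalent to the claimed bound for $\Velocity$; so we may assume $\Norm{\Velocity(0)}_{\Mass}=1$. (If $\Velocity(0)=\ZeroVector$ the right-hand side is $0$, and Lemma \ref{lem:dissipate} forces $\Velocity\equiv\ZeroVector$, which Assumption \ref{assump:nondegenerate} excludes for $S>0$, so the bound holds trivially.)

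The key estimate is a one-step contraction. Fix $s>0$, so $\DissipationRate[\Configuration](s)>0$ by positive definiteness, and set $n(\sigma)=\Norm{\Velocity(\sigma)}_{\Mass}$, which is continuous and, by Lemma \ref{lem:dissipate}, non-increasing with $n(0)=1$, hence $n(\sigma)\in[0,1]$. For any $\sigma$ with $n(\sigma)>0$ and $\sigma+n(\sigma)s\le S$, the shift $\tau\mapsto\Velocity(\sigma+\tau)$ is a solution on $[0,n(\sigma)s]$ lying in $\Closure\ActiveSet(\Configuration)$ (the inclusion is autonomous); rescaling it by $1/n(\sigma)$ via Lemma \ref{lem:homogeneity} gives a solution on $[0,s]$, still inside $\Closure\ActiveSet(\Configuration)$, with unit initial mass-norm, so $\DissipationRate[\Configuration](s)$-dissipativity yields
\[
	n\Parentheses{\sigma+n(\sigma)s}\;\le\;\Parentheses{1-\DissipationRate[\Configuration](s)}\,n(\sigma)\,.
\]

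Finally, suppose for contradiction that $S>s/\DissipationRate[\Configuration](s)$. Define $\sigma_0=0$ and $\sigma_{j+1}=\sigma_j+n(\sigma_j)s$. A straightforward induction using the displayed inequality gives $n(\sigma_j)\le(1-\DissipationRate[\Configuration](s))^j$ and $\sigma_{j+1}=s\sum_{i\le j}n(\sigma_i)\le s\sum_{i\le j}(1-\DissipationRate[\Configuration](s))^i<s/\DissipationRate[\Configuration](s)<S$, so every step remains in $[0,S]$ and the recursion never stalls; thus $\sigma_j$ increases to some $L\le s/\DissipationRate[\Configuration](s)<S$ while $n(\sigma_j)\to0$. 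Continuity of $n$ then gives $n(L)=0$, i.e.\ $\Velocity(L)=\ZeroVector$ with $L<S$, and monotonicity of $n$ (Lemma \ref{lem:dissipate}) forces $\Velocity\equiv\ZeroVector$ on $[L,S]$, so $\ZeroVector\in\DerivativeMap[\Configuration](\ZeroVector)$, contradicting Assumption \ref{assump:nondegenerate} (equivalently, strict dissipation, Theorem \ref{thm:strictdissipation}). Hence $S\le s/\DissipationRate[\Configuration](s)$; taking the infimum over $s>0$ and undoing the normalization completes the proof. The main obstacle here is the geometric-accumulation step: the natural contraction step has length $n(\sigma_j)s$, which shrinks geometrically, so infinitely many contractions are packed into a finite window bounded by the convergent series $s/\DissipationRate[\Configuration](s)$ — this is exactly what survives the Zeno-like accumulation — and the only other delicate point is ruling out a stall at $\Velocity=\ZeroVector$, which is where Assumption \ref{assump:nondegenerate} is needed.
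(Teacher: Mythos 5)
Your proof is correct and follows essentially the same route as the paper's: normalize to unit initial mass-norm via Lemma \ref{lem:homogeneity}, iterate the $\DissipationRate[\Configuration](s)$-dissipativity bound through shifted-and-rescaled solutions to build a geometrically accumulating sequence of times whose limit is at most $s/\DissipationRate[\Configuration](s) < S$, force $\Velocity$ to vanish there by continuity, and contradict strict dissipation (Assumption \ref{assump:nondegenerate} / Theorem \ref{thm:strictdissipation}). Your version is if anything slightly more careful than the paper's, in using the exact increments $n(\sigma_j)s$ rather than their geometric upper bounds and in explicitly handling the $\Velocity(0)=\ZeroVector$ and stall cases.
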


Under Assumption \ref{assump:nondegenerate}, $\dot\Velocity \in \DerivativeMap[\Configuration](\Velocity)$ exhibits $\DissipationRate(s)$-dissipativity for \textit{every} $\Configuration \in \ConfigurationSet_A \setminus \ConfigurationSet_P$, a direct proof of \Cref{prop:nondegeneratedissipation}: 
\begin{theorem}[Aggregate Dissipation (Appendix \ref{adx:nondegeneratedissipationproof})]\label{thm:nondegeneratedissipation}
	For every configuration $\Configuration \in \ConfigurationSet_A \setminus \ConfigurationSet_P$ there exists an $\DissipationRate[\Configuration](s)$ such that $\dot\Velocity \in \DerivativeMap[\Configuration](\Velocity)$ is $\DissipationRate[\Configuration](s)$-dissipative.
\end{theorem}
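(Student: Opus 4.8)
The plan is to prove this by a compactness-and-contradiction argument, exactly along the lines sketched just before the statement. Fix $\Configuration \in \ConfigurationSet_A \setminus \ConfigurationSet_P$, and for each $s \geq 0$ define the worst-case surviving speed
\[ M(s) = \sup\Braces{ \Norm{\Velocity(s)}_{\Mass} : \Velocity \in \SolutionSet{\DerivativeMap[\Configuration]}[[0,s]],\ \Velocity([0,s]) \subseteq \Closure\ActiveSet(\Configuration),\ \Norm{\Velocity(0)}_{\Mass} = 1 }\,, \]
with the convention $\sup \emptyset = 0$, and set $\DissipationRate[\Configuration](0) = 0$ and $\DissipationRate[\Configuration](s) = 1 - M(s)$ for $s > 0$. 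By Lemma \ref{lem:dissipate}, every feasible $\Velocity$ has $\Norm{\Velocity(s)}_{\Mass} \leq \Norm{\Velocity(0)}_{\Mass} = 1$, so $M(s) \in [0,1]$ and hence $\DissipationRate[\Configuration]$ maps $\Closure\Real^+$ into $[0,1]$. By construction, any $\Velocity \in \SolutionSet{\DerivativeMap[\Configuration]}[[0,s]]$ with $\Velocity([0,s]) \subseteq \Closure\ActiveSet(\Configuration)$ and $\Norm{\Velocity(0)}_{\Mass} = 1$ satisfies $\Norm{\Velocity(s)}_{\Mass} \leq M(s) = 1 - \DissipationRate[\Configuration](s)$, which is precisely the dissipativity inequality. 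Thus the theorem reduces entirely to showing that $\DissipationRate[\Configuration]$ is positive definite, i.e. that $M(s) < 1$ strictly for every $s > 0$.

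The core step is establishing $M(s) < 1$. First I would show the supremum is attained whenever the feasible family is nonempty: since $\DerivativeMap[\Configuration]$ is uniformly bounded (contained in some $\Ball[c]$, as noted in Section \ref{subsec:ImpactModelConstruction}), every solution on $[0,s]$ is Lipschitz with constant $c$, and the feasible solutions (which additionally have $\Norm{\Velocity(0)}_{\Mass} = 1$, hence bounded Euclidean norm) form a uniformly bounded, equicontinuous family. By Arzel\`a-Ascoli (Theorem \ref{thm:Rellich}), a maximizing sequence has a subsequence with $\Velocity_{n_k} \UniformConvergence \Velocity$; the limit $\Velocity$ lies in $\SolutionSet{\DerivativeMap[\Configuration]}[[0,s]]$ by closure under uniform convergence (Lemma \ref{lem:NonEmptyClosure}), stays in the closed set $\Closure\ActiveSet(\Configuration)$ by pointwise convergence, still has $\Norm{\Velocity(0)}_{\Mass} = 1$, and achieves $\Norm{\Velocity(s)}_{\Mass} = M(s)$ by continuity of $\Velocity \mapsto \Norm{\Velocity(s)}_{\Mass}$ in the uniform topology. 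Now suppose for contradiction that $M(s^*) = 1$ for some $s^* > 0$, witnessed by such a $\Velocity$. Then $\Norm{\Velocity(\cdot)}_{\Mass}$ is non-increasing (Lemma \ref{lem:dissipate}) and equals $1$ at both endpoints of $[0,s^*]$, hence is constant; by Theorem \ref{thm:stop} this forces $\Velocity$ to be constant on $[0,s^*]$, so $\ZeroVector = \dot\Velocity(s) \in \DerivativeMap[\Configuration](\Velocity(s))$ a.e. with $\Velocity(s) \in \Closure\ActiveSet(\Configuration)$. This contradicts Assumption \ref{assump:nondegenerate}, which applies precisely because $\Configuration \in \ConfigurationSet_A \setminus \ConfigurationSet_P$. (When the feasible family is empty, $M(s) = 0 < 1$ trivially and the dissipativity inequality holds vacuously.) Therefore $M(s) < 1$ for all $s > 0$, $\DissipationRate[\Configuration]$ is positive definite, and the system is $\DissipationRate[\Configuration](s)$-dissipative.

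I expect the only delicate point to be the attainment of the supremum --- verifying that all three defining constraints (being a solution, staying in $\Closure\ActiveSet(\Configuration)$, and unit initial $\Mass$-norm) together with the objective $\Norm{\Velocity(s)}_{\Mass}$ pass to uniform limits. Each is routine given closure of $\SolutionSet{\DerivativeMap[\Configuration]}$, closedness of $\Closure\ActiveSet(\Configuration)$, and continuity of evaluation followed by the $\Mass$-norm, but this is the step that converts ``arbitrarily little aggregate dissipation'' into an exact non-dissipating solution, which is exactly what Assumption \ref{assump:nondegenerate} forbids. Everything after that is bookkeeping; note in particular that no continuity or monotonicity of $\DissipationRate[\Configuration]$ beyond positive definiteness is needed, and combining this theorem with Lemma \ref{lem:exit} immediately yields the linear termination bound of Proposition \ref{prop:nondegeneratedissipation}.
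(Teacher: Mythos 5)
Your proof is correct and follows essentially the same route as the paper's: a maximizing (or nearly-non-dissipating) sequence of solutions is passed through Arzel\`a--Ascoli to a uniform limit, which by closure of the solution set (Lemma \ref{lem:NonEmptyClosure}), monotonicity of $\Norm{\Velocity(s)}_{\Mass}$ (Lemma \ref{lem:dissipate}), and Theorem \ref{thm:stop} must be a constant solution in $\Closure\ActiveSet(\Configuration)$, contradicting Assumption \ref{assump:nondegenerate}. The only cosmetic difference is that you explicitly construct $\DissipationRate[\Configuration](s) = 1 - M(s)$ from a supremum and verify its attainment, whereas the paper argues purely by contradiction without exhibiting the function; the substance is identical.
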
 

The u.s.c. structure of $\DerivativeMap[\Configuration]$ has the additional implication that nearby configurations obey a uniform dissipation rate:

\begin{corollary}[Uniform Aggregate Dissipation (Appendix \ref{adx:uniformproof})]\label{coro:uniform}
	For compact $\ConfigurationSet \subseteq \ConfigurationSet_A \setminus \ConfigurationSet_P $, there exists a single $\DissipationRate[\ConfigurationSet](s)$ such that $\dot\Velocity \in \DerivativeMap[\Configuration](\Velocity)$ is $\DissipationRate[\ConfigurationSet](s)$-dissipative for all $\Configuration \in \ConfigurationSet$.
\end{corollary}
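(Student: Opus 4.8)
I would mirror the proof of Theorem~\ref{thm:nondegeneratedissipation} but carry the configuration along as an additional limit variable. Take the explicit candidate
\begin{equation}
	\DissipationRate[\ConfigurationSet](s) = \min\Parentheses{1,\ \inf\Braces{1 - \Norm{\Velocity(s)}_{\Mass(\Configuration)} : \Configuration \in \ConfigurationSet,\ \Velocity \in \SolutionSet{\DerivativeMap[\Configuration]}[[0,s]],\ \Velocity([0,s]) \subseteq \Closure\ActiveSet(\Configuration),\ \Norm{\Velocity(0)}_{\Mass(\Configuration)} = 1}}\,,
\end{equation}
with $\inf\emptyset = +\infty$. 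By Lemma~\ref{lem:dissipate} every admissible $\Velocity$ has $\Norm{\Velocity(s)}_{\Mass}\le\Norm{\Velocity(0)}_{\Mass}=1$, so $\DissipationRate[\ConfigurationSet]$ is $[0,1]$-valued, and $\DissipationRate[\ConfigurationSet]$-dissipativity of $\dot\Velocity\in\DerivativeMap[\Configuration](\Velocity)$ for each $\Configuration\in\ConfigurationSet$ is immediate from the definition. Since $\ConfigurationSet\subseteq\ConfigurationSet_A$, for any $\Configuration\in\ConfigurationSet$ the cone $\Closure\ActiveSet(\Configuration)$ contains a unit-$\Mass(\Configuration)$-norm vector, which realizes the value $0$ at $s=0$, so $\DissipationRate[\ConfigurationSet](0)=0$. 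The remaining task is to show $\DissipationRate[\ConfigurationSet](s)>0$ for every fixed $s>0$.

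I argue by contradiction. If $\DissipationRate[\ConfigurationSet](s)=0$ for some $s>0$, there are $\Configuration_n\in\ConfigurationSet$ and admissible solutions $\Velocity_n\in\SolutionSet{\DerivativeMap[\Configuration_n]}[[0,s]]$ with $\Norm{\Velocity_n(0)}_{\Mass(\Configuration_n)}=1$, $\Velocity_n([0,s])\subseteq\Closure\ActiveSet(\Configuration_n)$, and $\Norm{\Velocity_n(s)}_{\Mass(\Configuration_n)}\to 1$. Compactness of $\ConfigurationSet$ gives a subsequence with $\Configuration_n\to\Configuration^*\in\ConfigurationSet$. The $\Velocity_n$ are uniformly bounded (from $\Norm{\Velocity_n(r)}_{\Mass(\Configuration_n)}\le 1$ by Lemma~\ref{lem:dissipate} and $\Mass\succeq c_2\Identity$, giving $\TwoNorm{\Velocity_n(r)}\le c_2^{-1/2}$) and equicontinuous (each is Lipschitz with a common constant $c$, since $\DerivativeMap[\Configuration](\Velocity)\subseteq\Ball[c]$ for a single $c$ fixed by the bounds on $\Mass^{-1}$, $\Jn$, $\Jt$, the $\FrictionCoeff[i]$, and $\Unit\subseteq\Closure(\Ball[1])$). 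Arzel\`a--Ascoli (Theorem~\ref{thm:Rellich}) yields a further subsequence with $\Velocity_n\UniformConvergence\Velocity^*$.

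The decisive step is to identify $\Velocity^*$ with a solution of the \emph{limiting} inclusion, $\Velocity^*\in\SolutionSet{\DerivativeMap[\Configuration^*]}[[0,s]]$. I would obtain this from the closure property underlying Proposition~\ref{prop:closure}, applied to the single, configuration-free differential inclusion $\frac{d}{ds}[\Configuration;\Velocity]\in\Braces{\ZeroVector}\times\DerivativeMap[\Configuration](\Velocity)$ in the augmented variable $[\Configuration;\Velocity]$: the curves $s\mapsto[\Configuration_n;\Velocity_n(s)]$ (with constant first component) solve this one inclusion and converge uniformly to $[\Configuration^*;\Velocity^*(\cdot)]$, so the limit is again a solution, i.e. $\dot\Velocity^*(r)\in\DerivativeMap[\Configuration^*](\Velocity^*(r))$ a.e. This requires $[\Configuration;\Velocity]\mapsto\DerivativeMap[\Configuration](\Velocity)$ to be u.s.c. with closed, convex, nonempty, uniformly bounded values on $\ConfigurationSet\times\VelocitySpace$; all but upper semicontinuity are inherited pointwise from $\DerivativeMap[\Configuration]$ (nonemptiness using $\ConfigurationSet\subseteq\ConfigurationSet_A$), and \emph{joint} upper semicontinuity follows from the compositional rules of Section~\ref{subsec:differentialinclusions}---continuity of $\Mass^{-1}$, $\Jn$, $\Jt$ in $\Configuration$, upper semicontinuity of $\Unit$ in $\Velocity$ and of $\ContactSet_A$ in $\Configuration$, and preservation of u.s.c. under finite union, convex hull, Cartesian product, and composition---the substance being that an index can only leave $\ContactSet_{\Configuration}(\Velocity)$ in the limit, never enter. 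I expect this joint-u.s.c. bookkeeping to be the only real obstacle.

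Finally I close the contradiction. For each $r\in[0,s]$, choosing $i_n\in\ContactSet_A(\Configuration_n)$ with $\Jn[i_n](\Configuration_n)\Velocity_n(r)\le 0$ and passing to a subsequence with $i_n$ constant, upper semicontinuity of $\ContactSet_A$ puts that index in $\ContactSet_A(\Configuration^*)$ while continuity gives $\Jn[i](\Configuration^*)\Velocity^*(r)\le 0$; by Assumption~\ref{assump:NoDegenerateContacts} this forces $\Velocity^*(r)\in\Closure\ActiveSet(\Configuration^*)$, so $\Velocity^*([0,s])\subseteq\Closure\ActiveSet(\Configuration^*)$. Continuity of $\Mass$ gives $\Norm{\Velocity^*(0)}_{\Mass(\Configuration^*)}=1$ and $\Norm{\Velocity^*(s)}_{\Mass(\Configuration^*)}=1$. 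But $\Configuration^*\in\ConfigurationSet\subseteq\ConfigurationSet_A\setminus\ConfigurationSet_P$, so Theorem~\ref{thm:strictdissipation} makes $\Norm{\Velocity^*(\cdot)}_{\Mass}$ strictly decreasing on $[0,s]$, hence $\Norm{\Velocity^*(s)}_{\Mass}<\Norm{\Velocity^*(0)}_{\Mass}=1$---a contradiction. Thus $\DissipationRate[\ConfigurationSet](s)>0$ for all $s>0$, $\DissipationRate[\ConfigurationSet]$ is positive definite, and $\dot\Velocity\in\DerivativeMap[\Configuration](\Velocity)$ is $\DissipationRate[\ConfigurationSet](s)$-dissipative uniformly over $\Configuration\in\ConfigurationSet$.
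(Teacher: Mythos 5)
Your proof is correct and follows essentially the same route as the paper's: both arguments freeze the configuration inside an augmented differential inclusion $[\ZeroVector;\,\DerivativeMap[\Configuration](\Velocity)]$, rely on joint upper semicontinuity and closure/compactness of its solution set over the compact set $\ConfigurationSet$, and extract the uniform rate from that compactness. The only difference is presentational---the paper takes the (attained) maximum of $\Norm{\Velocity(S)}_{\Mass}$ over the closed reachable set and invokes Theorem \ref{thm:nondegeneratedissipation} pointwise, whereas you run the Arzel\`a--Ascoli contradiction directly and close it with Theorem \ref{thm:strictdissipation}; these are two phrasings of the same compactness argument.
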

\section{Continuous-Time Dynamics Model}
\label{section:continuous_time_model}
We now describe how the simultaneous impact DI can be embedded into a full, continuous-time dynamics model.
As the impact model integrates over a variable other than time, rather than switching between integration spaces, we define time advancement $t$ as a variable in an augmented state $\bar \State(s)$:
\begin{equation}
	\bar \State(s) = \begin{bmatrix}
		\State(s) \\ t(s) \end{bmatrix} = \begin{bmatrix}
		\Configuration(s) \\ \Velocity(s) \\ t(s) \end{bmatrix} \in \Real^{\Configurations+\Velocities+1}\,.\label{eq:AugmentedStateDefinition}
\end{equation}
For any state $\bar \State(s)$ we can extract the relevant configuration, velocity, and time as by selecting the appropriate indices, e.g. as $\Configuration(\bar \State(s))$.
For notational compactness, whenever clear, we will write this construction in the shortened form $\Configuration(s)$. We will also frequently make use of the sets
\begin{align}
	\bar \StateSet_A = \Braces{\bar \State : \Configuration(\bar \State) \in \ConfigurationSet_A},\quad\label{eq:ActiveStateSetDefintion} &
	\bar \StateSet_P = \Braces{\bar \State : \Configuration(\bar \State) \in \ConfigurationSet_P}.
\end{align}
\subsection{Model Construction}
We now construct the dynamics model as a differential inclusion $\TotalDiff{}{s} \bar \State(s) \in \DerivativeMap(\bar \State(s))$.
Under this formulation, the velocity $\Velocity(s)$ is continuous with respect to $s$, but can be discontinuous with respect to time $t(s)$ in the sense that $\Velocity$ can evolve while $t$ is held constant.
To make the system autonomous, we represent the external forces $\Input$ as set-valued, time-varying full-state feedback $\InputSet(\bar \State)$.
In order for the system to be well-behaved, we assume that the convex-compact u.s.c. properties exploited in the impact dynamics carry over into the continuous time case:
\begin{assumption}\label{assump:ConvexCompactUSCInputs}
	$\NetForce[s](\State,\InputSet(\bar\State))$ is convex-compact u.s.c. in $\bar \State$.
\end{assumption}
\noindent We identify three behaviors that $\dot {\bar \State} \in \DerivativeMap(\bar \State)$ should obey:
\subsubsection{No Contact Forces}\label{subsubsec:NoContact} Whenever all active contacts have separating velocities (and when no contacts are active), i.e.
\begin{equation}
	\bar \State(s) \in \SeparatingStateSet = \Braces{\bar \State: \Velocity(\bar \State) \in \InactiveSet (\Configuration(\bar \State)) }\label{eq:SeparatingStateSetDefintion}  \,,
\end{equation}
$\bar \State(s)$ should evolve according to \eqref{eq:ManipulatorEquations} with no contact forces ($\Force = \ZeroVector$), in the sense that
\begin{subequations}
	\begin{align}
	\Mass(\Configuration)\Differential \Velocity &\in \NetForce[s]\Parentheses{\State, \InputSet(\bar \State)}\Differential s \,, \\ \Differential \Configuration &= \GeneralizedVelocityJacobian (\Configuration) \Velocity\Differential s\,, \\
		\Differential s &= \Differential t \,.
\end{align}
\end{subequations}
These equations can be packaged into DI form as
\begin{equation}
	\dot{\bar \State} \in \SeparatingStateDerivative(\bar \State) = {\begin{bmatrix}
		\GeneralizedVelocityJacobian \Velocity \\
		\Mass^{-1} \NetForce[s](\State,\InputSet(\bar\State)) \\
		1
	\end{bmatrix}}\,.
\end{equation}
\subsubsection{Collision}\label{subsubsec:Impact} Whenever $\Velocity(s)$ is colliding over $[a,b]$, i.e.
\begin{equation}
	\bar \State([a,b]) \subseteq \ImpactingStateSet = \Braces{\bar \State: \Velocity(\bar \State) \in \ActiveSet (\Configuration(\bar \State)) }\,,\label{eq:ImpactingStateSetDefintion}
\end{equation}
$t$ and $\Configuration$ should be constant, and $\Velocity$ should obey our simultaneous impact model:  
\begin{equation}
	\dot{\bar \State} \in \ImpactingStateDerivative (\bar \State) = \begin{bmatrix}
		\ZeroVector \\ 
		\DerivativeMap[\Configuration] (\Velocity) \\
		0
	\end{bmatrix}\,.
\end{equation}

\subsubsection{Sustained Contact}\label{subsubsec:SustainedContact} The model must capture continuous state evolution with respect to time under sustained contact, as in \eqref{eq:ManipulatorEquations}.
Additionally, proving that our model is well-behaved requires that $\DerivativeMap(\bar \State(s))$ be convex.
Conveniently, sustained contact can be represented as a convex combination of contactless and collision dynamics:
\begin{equation}
	\dot{\bar \State}(s) \in \Hull\Parentheses{ \SeparatingStateDerivative(\bar \State) \cup \ImpactingStateDerivative(\bar \State)}\,.\label{eq:SustainedContactInclusion}
\end{equation}
To demonstrate this property, we consider that \eqref{eq:ManipulatorEquations} dictates that the state $\Configuration$, $\Velocity$ under sustained contact obeys
\begin{align}
	\Differential \Configuration &= \GeneralizedVelocityJacobian \Velocity \Differential t\,, &
	\Mass \Differential\Velocity &\in \Parentheses{\J^T\Force + \NetForce[s]}\Differential t \,,\label{eq:SustainedEvolution}
\end{align}
for finite, non-zero contact forces $\Force = [\NormalForce;\;\FrictionForce]$.
Letting $\tilde \Force = \frac{\Force}{\Norm{\NormalForce}_1}$,
our impact model would allow $\Mass\Differential\Velocity \in \J^T\tilde\Force\Differential s$ at $\Configuration, \Velocity$.
Thus selecting $\dot t = \frac{1}{1 + \Norm{\NormalForce}_1} \in (0,1)$, we rewrite \eqref{eq:SustainedEvolution} as
\begin{subequations}
\begin{align}
	\Differential \Configuration &= ((1-\dot t)\ZeroVector + \dot t\GeneralizedVelocityJacobian \Velocity )\Differential s\,,\label{eq:SustainedConfigurationRearrangement} \\
	\Mass\Differential\Velocity &\in \Parentheses{(1-\dot t)\J^T\tilde \Force + \dot t \NetForce[s](\State,\InputSet(\bar\State))}\Differential s \,,\\
	\Differential t & = ((1-\dot t)0 + \dot t1)\Differential s\,.\label{eq:SustainedTimeRearrangement}
\end{align}
\label{eq:SustanedEvolutionRearrangement}
\end{subequations}
The convex combination DI \eqref{eq:SustainedContactInclusion} can then generate sustained contact with this choice of $\dot t$.
As a result, $t(s)$ neither evolves directly with $s$ nor remains constant; effectively, solutions of \eqref{eq:SustainedContactInclusion} slow down time by a factor of $(1 + \Norm{\NormalForce}_1)$. We will show that this factor is bounded on average under mild assumptions.

We now combine these three modes into a single differential inclusion.
While we might easily choose the contactless mode when $\bar \State \in \SeparatingStateSet$, switching between impact and sustained contact when the velocity is non-separating is less obvious, particularly as Painlev\'e's Paradox (see \citet{Stewart2000} for details) might require impact dynamics even without a collision (IWC's).
Furthermore, almost all selections of $\dot{\bar \State}$ from $\Hull\Parentheses{ \SeparatingStateDerivative(\bar \State) \cup \ImpactingStateDerivative(\bar \State)}$ will correspond to non-physical behavior; a particular $\dot{\bar \State}$ must be chosen to \textit{maintain} contact by exactly counteracting forces such that inter-body distance is \textit{identically} zero during contact.
In the subsequent section, we will prove that each of these behaviors correctly emerges in the following full DI model:
\begin{equation}\label{eq:ContinuousTimeModel}
	\dot{\bar \State} \in \DerivativeMap(\bar \State) = \begin{cases}
		\SeparatingStateDerivative(\bar \State) & \bar \State \in \SeparatingStateSet\,, \\
		\ImpactingStateDerivative(\bar \State) & \bar \State \in \Interior (\ImpactingStateSet)\,, \\
		\Hull\Parentheses{ \SeparatingStateDerivative(\bar \State) \cup \ImpactingStateDerivative(\bar \State)}& \textrm{otherwise}\,.\\
	\end{cases}
\end{equation}
By including $\ImpactingStateDerivative(\bar \State)$ in the right hand side whenever $\bar \State$ is not separating, \eqref{eq:ContinuousTimeModel} by construction allows IWC's to occur.
We will show that in this model, $\Gap(\Configuration) = \ZeroVector$ is effectively a \textit{barrier}: solutions beginning at a non-penetrating configuration are forced to \textit{never} penetrate.
Thus, under proven existence of solutions, the model will switch between sustained contact and impacts (possibly without collision) as necessary.
\subsection{Properties}\label{subsec:ContinuousTimeProperties}
\subsubsection{Existence and Closure}
As we previously reviewed, existence guarantees for continuous-time evolution through impact have thus far been severely limited.
We now show that our philosophy of including a wide set of behaviors leads to existence of solutions via \Cref{prop:closure}, and the only additional assumptions required are that energy and inputs are bounded (Assumptions \ref{assump:EnergyIncreasesSlowly} and \ref{assump:DerivativeMapCompactImages}).
The continuous-time DI \eqref{eq:ContinuousTimeModel} directly exhibits many of the properties required for \Cref{prop:closure}.
B its construction, at any $\bar \State$, $\DerivativeMap(\bar \State)$ is non-empty, compact, and convex.
We will additionally see that it is u.s.c. in our proof of \Cref{thm:ContinuousTimeSolutions}.
However, as Coriolis components of $\NetForce[s]$ can grow quadratically, $\DerivativeMap$ is often not uniformly bounded; thus \Cref{prop:closure} cannot be directly used to prove existence of solutions.
However, nearly identical properties of IVP's can still be established in the following manner.
Suppose first that smooth forces can only input power at a bounded rate: 
\begin{assumption}\label{assump:EnergyIncreasesSlowly}
	$\exists c > 0$, $\Velocity \cdot \NetForce[s](\State,\InputSet(\bar\State)) \leq c \Norm{\Velocity}_{\Mass}$.
\end{assumption}
This condition is widely satisfied by many robotic systems, including those with globally bounded controllers and potential gradients (such as gravity).
Assumption \ref{assump:EnergyIncreasesSlowly} implies that $\bar \State$ cannot diverge to infinity over a finite horizon.
Furthermore, we will assume that if $\bar \State$ is bounded, $\dot{\bar \State}$ is bounded as well:
\begin{assumption}\label{assump:DerivativeMapCompactImages}
	Over any compact set $\bar \StateSet$, $\NetForce[s](\State,\InputSet(\bar\State))$ is bounded, and therefore $\DerivativeMap(\bar \StateSet)$ is compact.
\end{assumption}
 Assumptions \ref{assump:EnergyIncreasesSlowly} and \ref{assump:DerivativeMapCompactImages} imply that over a finite interval, the solutions $\bar \State(s)$ beginning from a compact set $\bar \StateSet$ have bounded derivative and therefore inherit the key existence, closure, and u.s.c. structure of globally bounded DI's:
\begin{theorem}[Existence of Solutions (Appendix \ref{adx:ContinuousTimeSolutionsProof})]\label{thm:ContinuousTimeSolutions}
	Let $\bar \StateSet$ be a compact set and $[a,b]$ be a compact interval. Then $\IVP{\DerivativeMap}{\bar \StateSet}{[a,b]}$ is compact and $\IVP{\DerivativeMap}{\bar \State}{[a,b]}$ is non-empty, closed, convex, and u.s.c. in $\bar\State$ over $\bar \StateSet$.
\end{theorem}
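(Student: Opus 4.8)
The plan is to reduce the continuous-time inclusion \eqref{eq:ContinuousTimeModel} to a \emph{globally} bounded differential inclusion, to which Proposition \ref{prop:closure} applies verbatim, and then transfer its conclusions back. Two things must be supplied along the way: that the right-hand side $\DerivativeMap$ of \eqref{eq:ContinuousTimeModel} is u.s.c.\ (as was promised when the model was written down), and that solutions issuing from the compact set $\bar \StateSet$ cannot leave a fixed compact ``tube'' $\bar \StateSet'$ over $[a,b]$, so that as far as initial conditions in $\bar \StateSet$ are concerned nothing is lost by truncating $\DerivativeMap$ outside $\bar \StateSet'$.

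For the confinement I would run a Gr\"onwall-type estimate on kinetic energy. Along each branch of \eqref{eq:ContinuousTimeModel} the contribution of contact/impact forces to $\dot \KineticEnergy$ is nonpositive --- on the pure-impact branch this is Lemma \ref{lem:dissipate}, and on the other branches it is the dissipativity encoded in $\FrictionCone$ (normal forces do no work, friction is maximally dissipating) together with the work--energy identity \eqref{eq:KineticEnergyEvolution} --- while the contribution of $\InputSet - \Gravity$ is at most $c\Norm{\Velocity}_{\Mass}\,\dot t$ by Assumption \ref{assump:EnergyIncreasesSlowly}, with $\dot t \in [0,1]$. Hence $\dot \KineticEnergy \le c\Norm{\Velocity}_{\Mass} = c\sqrt{2\KineticEnergy}$ a.e., so a scalar comparison bounds $\KineticEnergy(s)$ --- and therefore $\Norm{\Velocity(s)}$ --- on $[a,b]$ by a constant depending only on $\bar \StateSet$ and $b-a$; since $\dot \Configuration = \GeneralizedVelocityJacobian \Velocity \dot t$ with $\GeneralizedVelocityJacobian$ bounded and $\dot t \le 1$, the coordinates $\Configuration(s)$ and $t(s)$ grow at most linearly too, so every solution from $\bar \StateSet$ stays in a compact $\bar \StateSet'$ (for instance a closed ball strictly larger than this a priori bound). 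By Assumption \ref{assump:DerivativeMapCompactImages}, $\DerivativeMap(\bar \StateSet') \subseteq \Closure(\Ball[M])$ for some $M$; I would then set $\tilde \DerivativeMap$ to equal $\DerivativeMap$ on $\bar \StateSet'$ and $\Closure(\Ball[M])$ elsewhere. This $\tilde \DerivativeMap$ is non-empty-, convex-, and compact-valued, uniformly bounded, and u.s.c.\ (the only new behavior is values jumping \emph{up} to $\Closure(\Ball[M])$ across $\partial \bar \StateSet'$, which is consistent with upper semicontinuity), so Proposition \ref{prop:closure} gives that $\IVP{\tilde \DerivativeMap}{\bar \State}{[a,b]}$ is non-empty and u.s.c.\ in $\bar \State$. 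A first-exit argument --- using the margin in the choice of $\bar \StateSet'$, any $\tilde \DerivativeMap$-solution started in $\bar \StateSet$ obeys $\DerivativeMap$-dynamics while inside $\bar \StateSet'$, hence the a priori bound, hence never reaches $\partial \bar \StateSet'$ --- shows $\IVP{\tilde \DerivativeMap}{\bar \State}{[a,b]} = \IVP{\DerivativeMap}{\bar \State}{[a,b]}$ for $\bar \State \in \bar \StateSet$, transferring non-emptiness, closedness, and u.s.c.\ in $\bar \State$.

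For the u.s.c.\ of $\DerivativeMap$ itself I would show that its graph is closed (which, given the local compactness of its values, is equivalent to upper semicontinuity). Both $\SeparatingStateDerivative$ and $\ImpactingStateDerivative$ are u.s.c.: the former is built from the u.s.c.\ input map $\InputSet$ (Assumption \ref{assump:ConvexCompactUSCInputs}) and continuous terms, and the latter from $\DerivativeMap[\Configuration](\Velocity)$, which is u.s.c.\ jointly in $(\Configuration,\Velocity)$ since $\Mass^{-1}$, $\Jn[i]$, $\Jt[i]$ are continuous, $\Unit$ is u.s.c., the index map $\ContactSet_{\Configuration}(\Velocity)$ inherits the right semicontinuity from the u.s.c.\ $\ContactSet_A(\Configuration)$, and finite products, unions, convex hulls, and compositions of u.s.c.\ maps are u.s.c. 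Moreover $\SeparatingStateSet$ is open (near any $\Configuration_0$ the active set $\ContactSet_A$ can only shrink and each $\Jn[i]\Velocity$ stays positive by continuity) and is disjoint from $\Interior(\ImpactingStateSet)$ because $\InactiveSet$ and $\ActiveSet$ are disjoint; consequently $\Closure(\SeparatingStateSet)$ misses $\Interior(\ImpactingStateSet)$, so it lies in the union of the $\SeparatingStateSet$- and ``otherwise''-branches, on both of which $\DerivativeMap \supseteq \SeparatingStateDerivative$. Symmetrically $\Closure(\Interior(\ImpactingStateSet))$ lies in the $\Interior(\ImpactingStateSet)$- and ``otherwise''-branches, on both of which $\DerivativeMap \supseteq \ImpactingStateDerivative$; and the ``otherwise'' region is closed, with $\DerivativeMap(\bar \State) = \Hull\Parentheses{\SeparatingStateDerivative(\bar \State) \cup \ImpactingStateDerivative(\bar \State)}$ there, which is u.s.c. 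Given $(\bar \State_n, y_n) \to (\bar \State, y)$ with $y_n \in \DerivativeMap(\bar \State_n)$, I would pass to a subsequence lying entirely in one of the three branches; each case, via the containments just listed and the u.s.c.\ of the corresponding piece, puts $(\bar \State, y)$ in the graph of $\DerivativeMap$. This gluing step --- reconciling the branch structure of \eqref{eq:ContinuousTimeModel} with the one-sided semicontinuities of $\ActiveSet$, $\InactiveSet$, and $\DerivativeMap[\Configuration]$ --- is the part I expect to require the most care.

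It then remains to package the conclusions. Every solution from $\bar \StateSet$ lies in $\bar \StateSet'$ with derivative bounded by $M$, hence is uniformly bounded and equi-Lipschitz, so Theorem \ref{thm:Rellich} makes $\IVP{\DerivativeMap}{\bar \StateSet}{[a,b]}$ precompact in $C([a,b])$; a uniform limit of solutions is again a solution (closure under uniform convergence, above) whose initial value lies in the closed set $\bar \StateSet$, so the set is closed, hence compact. Non-emptiness, closedness, and u.s.c.\ in $\bar \State$ of $\IVP{\DerivativeMap}{\bar \State}{[a,b]}$ come from the truncation step. For convexity of $\IVP{\DerivativeMap}{\bar \State}{[a,b]}$ at fixed $\bar \State$, I would attempt to verify that a convex combination of two solutions is again a solution; I flag this as the delicate point, since that property requires structure of $\DerivativeMap$ beyond convexity of its pointwise values, and if the direct argument does not close it is here --- rather than in any of the steps above --- that the real difficulty lies.
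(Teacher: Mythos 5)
Your proposal follows essentially the same route as the paper's proof in Appendix \ref{adx:ContinuousTimeSolutionsProof}: an a priori kinetic-energy bound from Assumption \ref{assump:EnergyIncreasesSlowly} and a scalar comparison confines solutions issuing from $\bar\StateSet$ to a compact tube $\bar\StateSet'$, the right-hand side is shown u.s.c.\ branch by branch using the openness of $\SeparatingStateSet$ and $\Interior(\ImpactingStateSet)$, and a truncated inclusion $\tilde\DerivativeMap$ agreeing with $\DerivativeMap$ on $\bar\StateSet'$ lets Proposition \ref{prop:closure} deliver non-emptiness, closedness, and upper semicontinuity, which then transfer back; your explicit first-exit argument only makes precise a step the paper states more tersely. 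Your flag on convexity is well founded but is not a defect relative to the paper: the paper's own proof never establishes convexity of $\IVP{\DerivativeMap}{\bar\State}{[a,b]}$ either (Proposition \ref{prop:closure} does not supply it, and a convex combination of two solutions is not in general a solution even when $\DerivativeMap$ is convex-valued), so that clause of the theorem statement is unsupported in both arguments.
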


\begin{figure}[h]
    \center        
    \begin{subfigure}[b]{.38\linewidth}
    	\raisebox{5mm}{
        \includegraphics[width=\hsize,trim={0mm 24mm 0mm 3mm},clip]{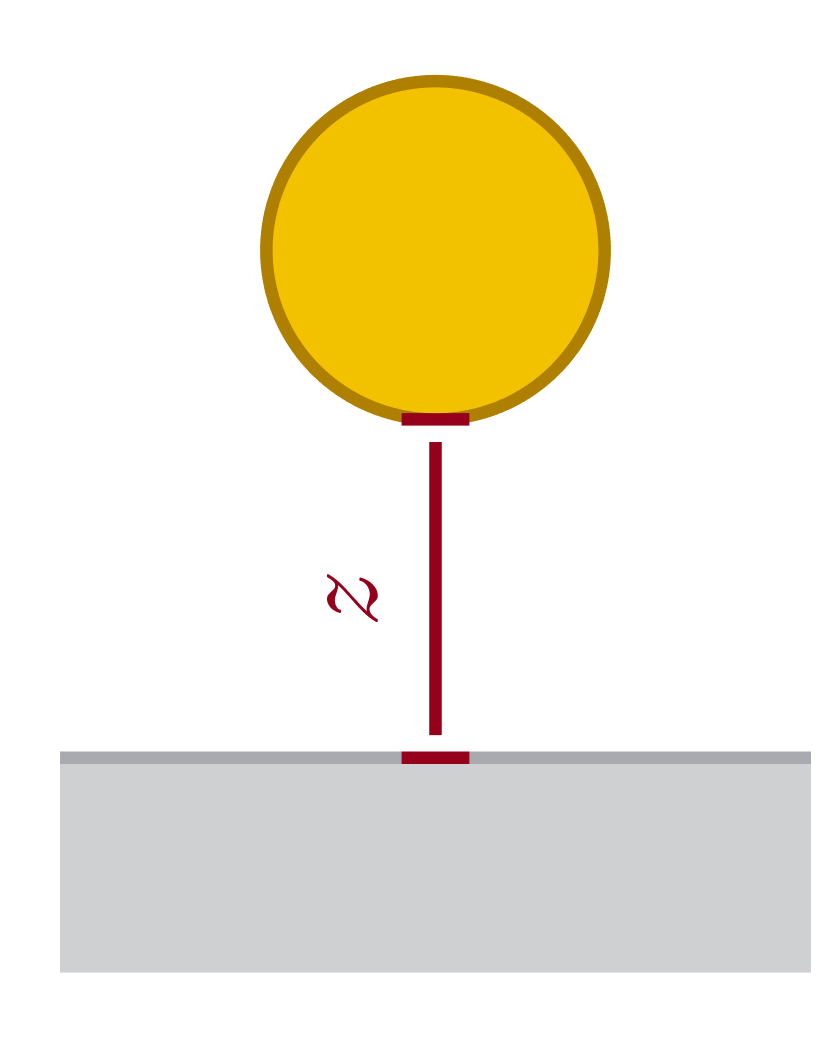}
        \centering
        }
        \caption{\label{fig:ballfall} 1D ball-ground system}
    \end{subfigure}
    \begin{subfigure}[b]{.60\linewidth}
    	\centering
        \includegraphics[width=\hsize,,trim={1mm 0mm 14mm 0mm},clip]{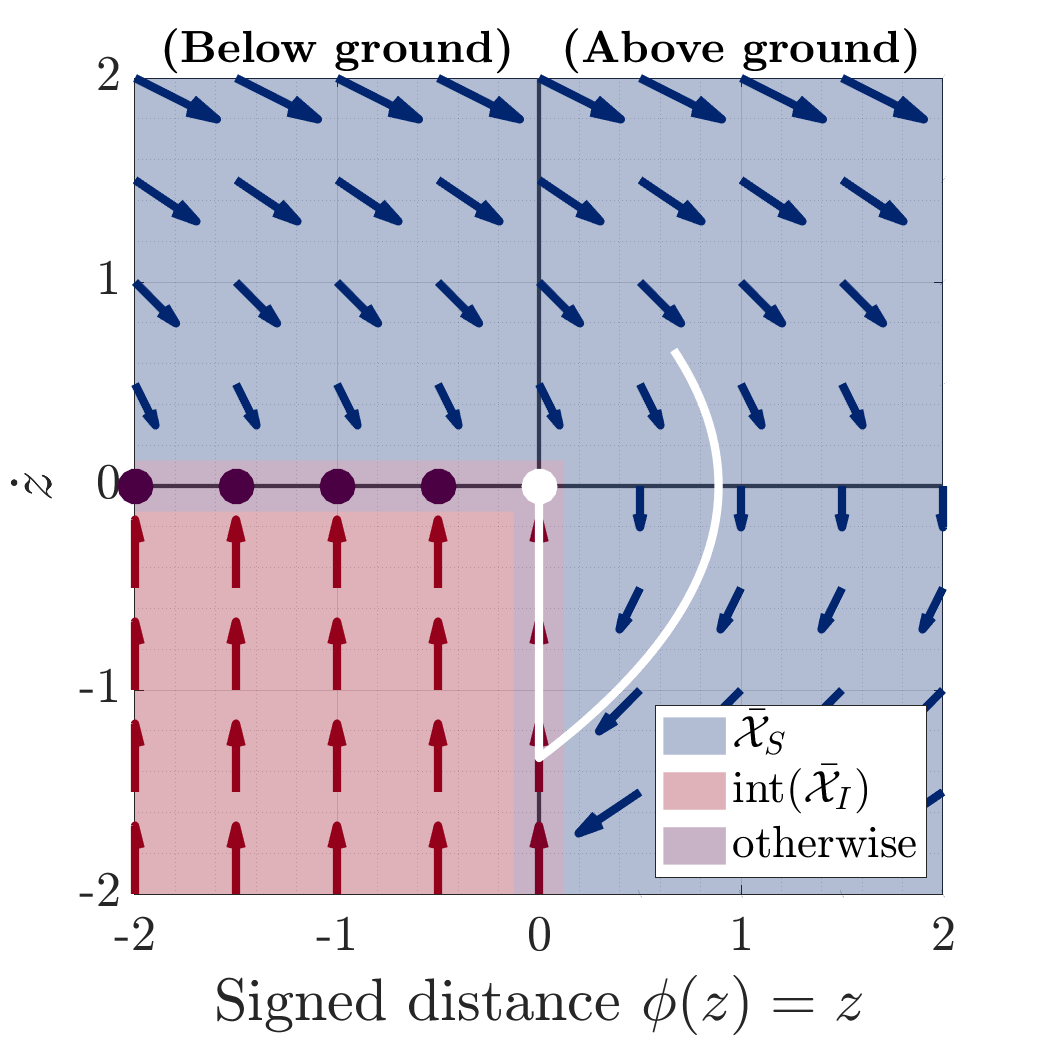}\\
        \caption{\label{fig:ball_phase} 1D system phase portrait}
    \end{subfigure}
    \caption{(\subref*{fig:pushing_cartoon}) A simple, 1D system of a non-rotating ball falling under gravity with configuration $\Configuration = z = \Gap(z)$ is shown. 
        (\subref*{fig:ball_phase}) A phase plot demonstrates why our DI prevents penetration; an example trajectory is shown in white. Penetration corresponds to crossing from the right-half- to the left-half-plane. This cannot happen on the top half of the vertical axis ($\Gap = 0$,  $\dot z \geq 0$), as the flow by definition points right. The cross also cannot happen on the bottom half of the axis, as the quadrant III has purely-vertical flow ($\Differential \Gap = \dot z\Differential t = 0$).}
    \label{fig:penetrationproof}
\end{figure}

\subsubsection{Non-Penetration} While there is no structure in $\DerivativeMap(\bar \State)$ that explicitly prevents penetration, $\Gap(\Configuration) \geq \ZeroVector$ is naturally, implicitly preserved, as the DI requires $\Gap_i(\Configuration(s))$ to be constant under penetration.
A graphical argument is given in Figure \ref{fig:penetrationproof}.
\begin{lemma}[Non-Penetration (Appendix \ref{adx:ContinuousTimeDoesNotPenetrateProof})]\label{thm:ContinuousTimeDoesNotPenetrate}
	Let $\bar \State_0 \not \in \bar \StateSet_P$ be non-penetrating, let $[a,b]$ be a compact interval, and let $\bar \State(s) \in \IVP{\DerivativeMap}{\bar \State_0}{[a,b]}$. Then $\bar \State(s) \not \in \bar\StateSet_P$ for all $s \in [a,b]$.
\end{lemma}
\subsubsection{Correct Mode Selection}
Our requirements dictate that solutions $\bar \State(s)$ containing only separating velocities ($\bar \State \in \SeparatingStateSet$) should comply with contactless dynamics, and likewise with impact dynamics when $\bar \State(s)$ contains only colliding velocities and non-penetrating configurations ($\bar \State \in \ImpactingStateSet \setminus \bar \StateSet_P$).
The former is a trivial result of the construction of $\DerivativeMap$, but the latter is only similarly trivial when $\bar \State \in  \Interior (\ImpactingStateSet) \setminus \bar \StateSet_P$. However, all states $\bar \State \subseteq \ImpactingStateSet$ have penetrating velocity, and thus any contactless dynamics component in $\dot{\bar \State}$ would by definition cause $\bar \State$ to penetrate (i.e. enter $\bar \StateSet_P$), allowing a proof by contradiction:
\begin{lemma}[Impact Dynamics (Appendix \ref{adx:ContinuousTimeImpactSelectionProof})]\label{thm:ContinuousTimeImpactSelection}
	Let $[a,b]$ be a compact interval and $\bar \State(s) \in \SolutionSet{ \DerivativeMap}[[a,b]]$ with $\bar \State([a,b]) \subseteq \ImpactingStateSet \setminus \bar \StateSet_P $. Then $\bar \State(s) \in \SolutionSet{ \ImpactingStateDerivative}[[a,b]]$.
\end{lemma}

\subsubsection{Linear Time Advancement}
While \Cref{thm:ContinuousTimeSolutions} guarantees existence of solutions over any interval of $s$, practical application often requires reasoning about solution sets over intervals in time (over $t(s)$).
To do so, solutions of the model must significantly advance time---i.e. for any time duration $t_f$, all solutions of the model have $t(s_f) - t(0) > t_f$ for large enough $s_f$.
For small enough $t_f$, this property only requires the solution to exit the impact dynamics regime, which by \Cref{thm:nondegeneratedissipation} is guaranteed to occur:
\begin{theorem}[Time Advancement (Appendix \ref{adx:ContinuousTimeMinimumAdvancementProof})]\label{thm:ContinuousTimeMinimumAdvancement}
	Let $\bar \StateSet \subseteq \Complement{\bar \StateSet_P}$ be a compact set with no penetrating configurations. Then there exists $s^*(\bar \StateSet), t^*(\bar \StateSet) > 0$, such that for all $s_f > s^*(\bar \StateSet)$, if $\bar \State(s) \in \IVP{\DerivativeMap}{\bar \StateSet}{[0,s_f]}$, then $t(s_f) - t(0) > t^*(\bar \StateSet)$.
\end{theorem}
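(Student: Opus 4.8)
The plan is to compare two time-scales: the "impact clock" $s$ and the physical time $t(s)$. Theorem \ref{thm:ContinuousTimeSolutions} already controls behavior over $s$-intervals; the task is to show that any $s$-interval long enough must contain a sub-interval on which $\dot t > 0$ bounded away from zero, and this must happen within a bounded amount of $s$ no matter where in $\bar\StateSet$ the solution starts. I would first invoke Assumptions \ref{assump:EnergyIncreasesSlowly} and \ref{assump:DerivativeMapCompactImages} (together with the compactness of $\bar\StateSet$) to produce a compact \emph{reachable set} $\bar\StateSet^* \supseteq \bar\StateSet$ that contains every solution $\bar\State(s) \in \IVP{\DerivativeMap}{\bar\StateSet}{[0,S']}$ for $s$ in, say, $[0,2S_0]$ for some fixed $S_0$ to be chosen; this uses the energy bound to cap $\Norm{\Velocity}_{\Mass}$ and hence $\Configuration$ drift over a bounded $s$-horizon. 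Crucially, since $\bar\StateSet$ contains no penetrating configurations and the set of penetrating states is open, $\bar\StateSet^*$ can be taken inside $\Complement{\bar\StateSet_P}$ if $S_0$ is small enough (solutions cannot travel far in configuration space over a short $s$-horizon, and by Theorem \ref{thm:ContinuousTimeDoesNotPenetrate} they never penetrate anyway).

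Next I would quantify how much $s$ a solution can spend purely in the impact regime. On $\ImpactingStateSet$, $\dot t = 0$, so time only advances while $\bar\State \in \SeparatingStateSet$ or in the "otherwise" (sustained-contact) case. Corollary \ref{coro:uniform} gives a \emph{uniform} dissipation rate $\DissipationRate[\bar\StateSet^*](s)$ valid for every configuration appearing in $\bar\StateSet^*$. Combined with Lemma \ref{lem:exit} and the uniform bound on $\Norm{\Velocity(0)}_{\Mass}$ over $\bar\StateSet$, this yields a single $S_{\mathrm{imp}} > 0$ such that no solution can remain in $\Closure\ActiveSet$ for an $s$-duration exceeding $S_{\mathrm{imp}}$ — actually I need the slightly stronger statement that on any $s$-interval of length $S_{\mathrm{imp}} + \delta$ the solution must spend positive measure outside $\ImpactingStateSet$, i.e.\ in a region where $\dot t > 0$. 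Pinning down this "escape" carefully is where I expect the main obstacle: a solution could oscillate between $\ImpactingStateSet$ and $\SeparatingStateSet$, or linger on the boundary $\ImpactingStateSet \setminus \Interior(\ImpactingStateSet)$; I would handle this by noting that whenever $\bar\State \notin \ImpactingStateSet$ we have $\dot t \ge \underline{t} > 0$ for some uniform $\underline{t}$ (from compactness of $\DerivativeMap(\bar\StateSet^*)$ and the fact that both $\SeparatingStateDerivative$ and every element of the convex hull have time-component at least the value forced by the sustained-contact construction, $\dot t = (1+\Norm{\NormalForce}_1)^{-1}$, which is bounded below on a compact set), and by using the dissipation argument to guarantee the solution cannot stay in $\Closure\ActiveSet(\Configuration)$ — hence in particular in $\ImpactingStateSet$ — for total $s$-measure more than $S_{\mathrm{imp}}$ on a single excursion, so within $s$-length $2S_{\mathrm{imp}}$ it accumulates at least, say, $s$-measure $S_{\mathrm{imp}}$ outside $\ImpactingStateSet$.

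Finally I would assemble the constants: set $S(\bar\StateSet) = 2 S_{\mathrm{imp}}$ (chosen $\le S_0$, shrinking $S_{\mathrm{imp}}$ or enlarging $\bar\StateSet^*$ as needed so the reachable-set argument remains valid on $[0, S(\bar\StateSet)]$) and $T(\bar\StateSet) = \underline{t}\, S_{\mathrm{imp}}$. Then for any $S' > S(\bar\StateSet)$ and any $\bar\State(s) \in \IVP{\DerivativeMap}{\bar\StateSet}{[0,S']}$,
\[
 t(S') - t(0) \;=\; \int_0^{S'} \dot t(s)\, ds \;\ge\; \int_{\{s \in [0,S'] \,:\, \bar\State(s)\notin \ImpactingStateSet\}} \underline{t}\, ds \;\ge\; \underline{t}\, S_{\mathrm{imp}} \;=\; T(\bar\StateSet)\,,
\]
which is the claim. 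The dependence of $S_{\mathrm{imp}}$ and $\underline{t}$ only on $\bar\StateSet$ (through $\bar\StateSet^*$ and the uniform dissipation rate) is exactly what makes the bound uniform over starting states. The one delicate point to write carefully is the circular-looking dependence between $S_0$ (the $s$-horizon defining the reachable set) and $S_{\mathrm{imp}}$ (the impact-escape time, defined using that reachable set); this is resolved by first fixing any convenient $S_0$, building $\bar\StateSet^*$, extracting $S_{\mathrm{imp}}$ and $\underline t$ from it, and only then possibly shrinking $S_0 \ge 2S_{\mathrm{imp}}$ is not needed — rather one takes $S_0$ large enough, recomputes once, and notes the reachable set only grows, so the uniform constants persist.
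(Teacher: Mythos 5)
There is a genuine gap at the step your whole integration bound rests on: the claim that $\dot t(s) \geq \underline{t} > 0$ whenever $\bar\State(s) \notin \ImpactingStateSet$. In the definition \eqref{eq:ContinuousTimeModel}, the ``otherwise'' branch --- which covers exactly the sustained-contact states where some active contact has $\Jn[i]\Velocity = 0$ but none has $\Jn[i]\Velocity < 0$ --- is the \emph{full} convex hull $\Hull\Parentheses{\SeparatingStateDerivative(\bar\State) \cup \ImpactingStateDerivative(\bar\State)}$, which contains $\ImpactingStateDerivative(\bar\State)$ itself and hence admits $\dot t = 0$. The value $\dot t = (1+\Norm{\NormalForce}_1)^{-1}$ you cite is only the particular selection used in Section \ref{subsubsec:SustainedContact} to show that sustained contact is \emph{representable}; the inclusion places no lower bound on the time component there, so no uniform $\underline{t}$ exists off $\ImpactingStateSet$, and the final inequality $t(S')-t(0) \geq \underline{t}\,S_{\mathrm{imp}}$ does not follow. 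Relatedly, your bound on the total $s$-measure spent in the impact regime leans on Lemma \ref{lem:exit} and Corollary \ref{coro:uniform}, which apply to solutions of the impact DI at a \emph{fixed} configuration; a solution of the full DI that repeatedly re-enters $\Closure\ActiveSet(\Configuration(s))$ while $\Configuration(s)$ drifts is not such a solution, and you acknowledge but do not close this oscillation/boundary-lingering case --- the proposed fix is precisely the false pointwise bound above.

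The paper avoids both difficulties with a compactness argument rather than a direct integral estimate: supposing no $T(\bar\StateSet)$ works, one takes a sequence of solutions with $t_j(S') - t_j(0) \to 0$, extracts a uniformly convergent subsequence via Theorem \ref{thm:Rellich} and the compactness/closedness of $\IVP{\DerivativeMap}{\bar\StateSet}{[0,S']}$ from Theorem \ref{thm:ContinuousTimeSolutions}, and observes that the limit has $\dot t_\infty = 0$ a.e.; since $\dot t = 0$ forces the selection to lie in $\ImpactingStateDerivative$ even in the ``otherwise'' branch, the limit is a pure impact solution at a frozen configuration in $\Configuration(\bar\StateSet)$ that remains in $\Closure\ActiveSet$ for duration $S' > S(\bar\StateSet) = \bar K / \DissipationRate[\bar\StateSet](1)$, contradicting Lemma \ref{lem:exit} with the uniform rate from Corollary \ref{coro:uniform}. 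Note this also makes your reachable-set construction $\bar\StateSet^*$ unnecessary: the limit solution's configuration equals its initial configuration, which lies in the closed set $\Configuration(\bar\StateSet)$. If you want to keep a direct argument, you would need a quantitative version of the dissipation bound that is robust to small convex perturbations by $\SeparatingStateDerivative$ and to configuration drift, which is substantially harder than the limit argument.
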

If $t(s_f) - t(0) > t^*$ is guaranteed over a set $\bar \StateSet$, then $t(s)$ must at least advance at rate $\frac{t^*}{s^*}$ over arbitrarily long horizons:
\begin{corollary}[Amortized Advancement (Appendix \ref{adx:AggregateAdvancementProof})]\label{coro:AggregateAdvancement}
	Let $\bar \StateSet \subseteq \Complement{\bar \StateSet_P}$ be a compact set with no penetrating configurations, such that
	\begin{multline}
	\bar \StateSet (s_f) = \\ \Braces{\bar \State(\cdot) \in  \IVP{\DerivativeMap}{\bar \StateSet}{[0,s_f]} : \bar \State([0,s_f]) \subseteq \bar \StateSet}\,,
		\end{multline}
	is non-empty for all $s_f > 0$.
	Define $s^*(\bar \StateSet), t^*(\bar \StateSet) > 0$ as in \Cref{thm:ContinuousTimeMinimumAdvancement}, and let 
	\begin{equation}
		t_f(s_f) = \min_{\bar \State(\cdot) \in \bar \StateSet(s_f)} t(\bar \State(s_f)) - t(\bar \State(0))\,.
	\end{equation}
	Then $\lim \inf_{s_f \to \infty} \frac{t_f(s_f)}{s_f} \geq \frac{t^*(\bar \StateSet)}{s^*(\bar \StateSet)}$.
\end{corollary}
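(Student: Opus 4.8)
The plan is to prove Corollary~\ref{coro:AggregateAdvancement} by a tiling argument: chop a long interval $[0,S']$ of simulation time into consecutive windows, each slightly longer than $S(\bar\StateSet)$, apply Theorem~\ref{thm:ContinuousTimeMinimumAdvancement} once per window to harvest a fixed amount of time advancement, sum the contributions, and divide by $S'$ before sending $S' \to \infty$.

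First I would record two elementary facts about the model. Since \eqref{eq:ContinuousTimeModel} is autonomous, for any $\bar\State(s) \in \SolutionSet{\DerivativeMap}[[0,S']]$ and any $s_0 \in [0,S']$ the shifted restriction $s \mapsto \bar\State(s+s_0)$ belongs to $\IVP{\DerivativeMap}{\bar\State(s_0)}{[0,S'-s_0]}$. Moreover, in every branch of \eqref{eq:ContinuousTimeModel} the time component obeys $\dot t \in [0,1]$ (it is $1$ on $\SeparatingStateSet$, $0$ on $\Interior(\ImpactingStateSet)$, and a convex combination of the two otherwise), so $t(s)$ is non-decreasing along any solution. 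Combined with the defining property of $\bar\StateSet(S')$, namely that a member $\bar\State(s)$ remains in $\bar\StateSet \subseteq \Complement{\bar\StateSet_P}$ for all $s \in [0,S']$, this shows that every intermediate point $\bar\State(s_0)$ of such a trajectory is a legitimate initial state for Theorem~\ref{thm:ContinuousTimeMinimumAdvancement}.

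Then I would fix $\delta > 0$, set $L = S(\bar\StateSet) + \delta$, and for $S' > L$ and any $\bar\State(s) \in \bar\StateSet(S')$ let $k = \lfloor S'/L \rfloor \geq 1$. Applying Theorem~\ref{thm:ContinuousTimeMinimumAdvancement} to the shifted restriction on each window $[(j-1)L,\,jL]$, $j = 1,\dots,k$ — each of length $L > S(\bar\StateSet)$ and starting in $\bar\StateSet$ — yields $t(\bar\State(jL)) - t(\bar\State((j-1)L)) > T(\bar\StateSet)$. Telescoping these inequalities and using monotonicity of $t$ on the leftover window $[kL,S']$ gives $t(\bar\State(S')) - t(\bar\State(0)) > k\,T(\bar\StateSet)$, hence $T'(S') \geq \lfloor S'/L\rfloor\,T(\bar\StateSet)$. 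Dividing by $S'$ and letting $S' \to \infty$, the fact that $\lfloor S'/L\rfloor / S' \to 1/L$ gives $\liminf_{S'\to\infty} T'(S')/S' \geq T(\bar\StateSet)/(S(\bar\StateSet)+\delta)$, and letting $\delta \to 0^+$ delivers the claimed bound $T(\bar\StateSet)/S(\bar\StateSet)$.

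The argument is essentially bookkeeping, and the only point requiring care — the main obstacle, such as it is — is that Theorem~\ref{thm:ContinuousTimeMinimumAdvancement} only guarantees advancement over windows of length \emph{strictly} greater than $S(\bar\StateSet)$; this is why the tiling uses windows of length $S(\bar\StateSet)+\delta$ and the sharp constant $T(\bar\StateSet)/S(\bar\StateSet)$ is recovered only after sending $\delta \to 0$, rather than by tiling directly with windows of length $S(\bar\StateSet)$. A secondary detail worth stating explicitly is the shift-invariance of solution sets, which rests on the autonomy of \eqref{eq:ContinuousTimeModel} and is precisely what permits reusing a single theorem on every window while always re-initializing inside $\bar\StateSet$.
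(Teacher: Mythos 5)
Your proof is correct and takes essentially the same route as the paper's: tile $[0,S']$ into windows on which Theorem~\ref{thm:ContinuousTimeMinimumAdvancement} applies, telescope the per-window advancement $T(\bar \StateSet)$, bound the count by a floor function, and divide by $S'$ before passing to the limit. Your explicit $\delta$-padding of the window length to respect the strict inequality $S' > S(\bar \StateSet)$, followed by $\delta \to 0^+$, is a minor refinement of a point the paper's proof glosses over; otherwise the two arguments coincide.
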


The results in this section guarantee that solutions to our model \eqref{eq:ContinuousTimeModel} are well-behaved and exist over arbitrary time horizons.
These results came with a number of structural assumptions on the involved terms in the manipulator equations, but ultimately provide a state-of-the-art result on consistency with relatively few assumptions.
We have argued that Assumptions \ref{assump:NoDegenerateContacts} and \ref{assump:ConvexCompactUSCInputs}--\ref{assump:DerivativeMapCompactImages} are satisfied by the large majority of robotic systems, while Assumption \ref{assump:nondegenerate} is also made in the preeminent solution existence results for MDI's \citep{Stewart1998}.
However, unlike the limited analysis of the multiple contacts case for MDI's, each solution of our DI is by definition compliant with inelastic and Coulomb friction constraints.

Under these assumptions, our model is able to make predictions in pathological scenarios, including Painlev\'e and Zeno behaviors.
We have seen how our model complies with Coulomb friction in the sustained contact case, and thus can capture Painlev\'e's ubiquitous example of a rod sliding on a flat surface with high friction \citep{Zhao2007,Stewart1998}.
As our model is guaranteed to have a solution over some time horizon for this system (\Cref{thm:ContinuousTimeSolutions}), the only possibility for this scenario is that our model generates an impact without collision.
As the resulting behavior is equivalent to Routh's (and therefore Darboux-Keller's) model for one contact, we refer the reader to \citep{Zhao2007} to learn more about the prediction of such models in this example.

\subsection{Zeno behavior example}\label{subsec:zenoexample}
\begin{figure*}[h]
    \center
     \begin{subfigure}[b]{.24\textwidth}
        \includegraphics[width=.8\hsize]{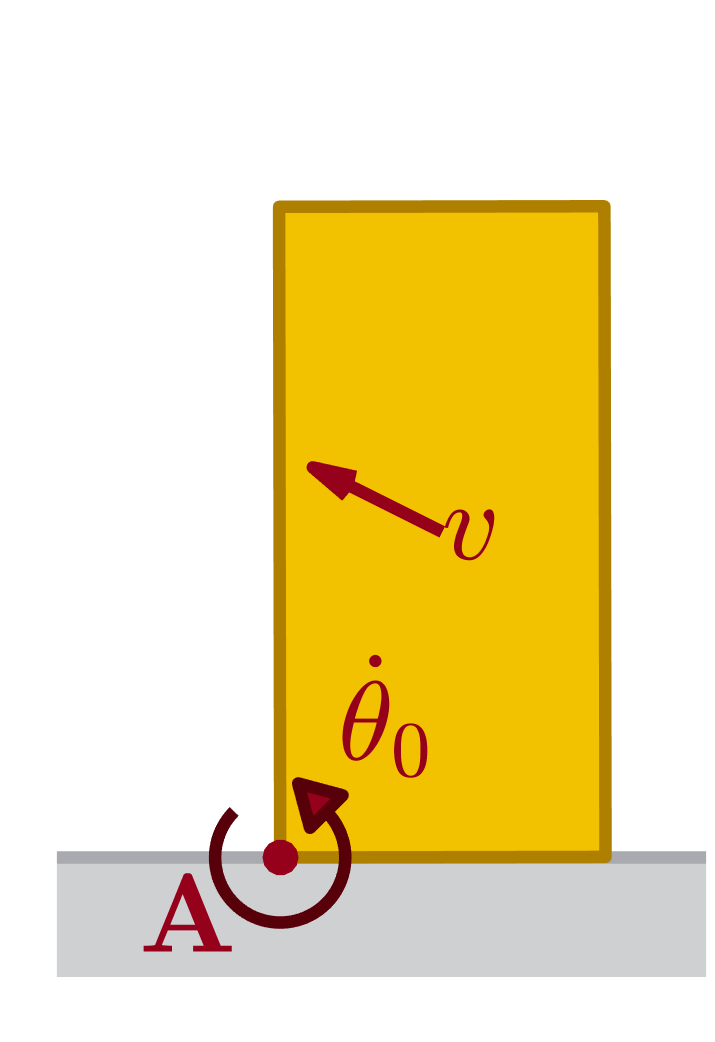}
        \centering
        \caption{\label{fig:zeno_0}Initial condition}
    \end{subfigure}
    \begin{subfigure}[b]{.24\textwidth}
        \includegraphics[width=.8\hsize]{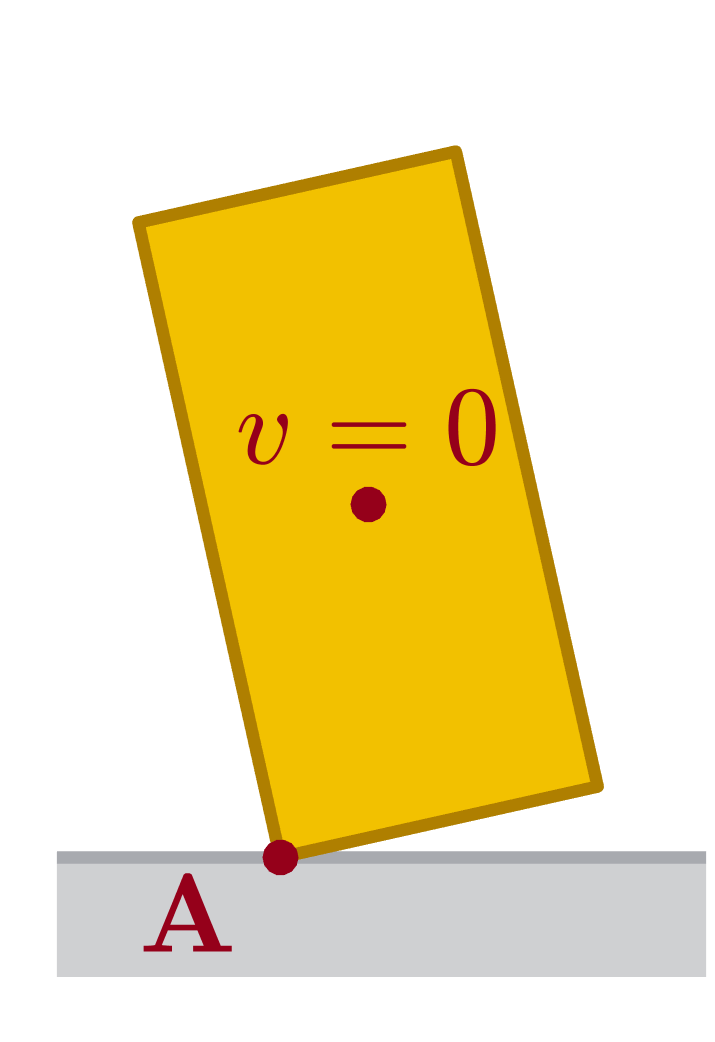}
        \centering
        \caption{\label{fig:zeno_apex}Wobble apex}
    \end{subfigure}
    \begin{subfigure}[b]{.24\textwidth}
        \includegraphics[width=.8\hsize]{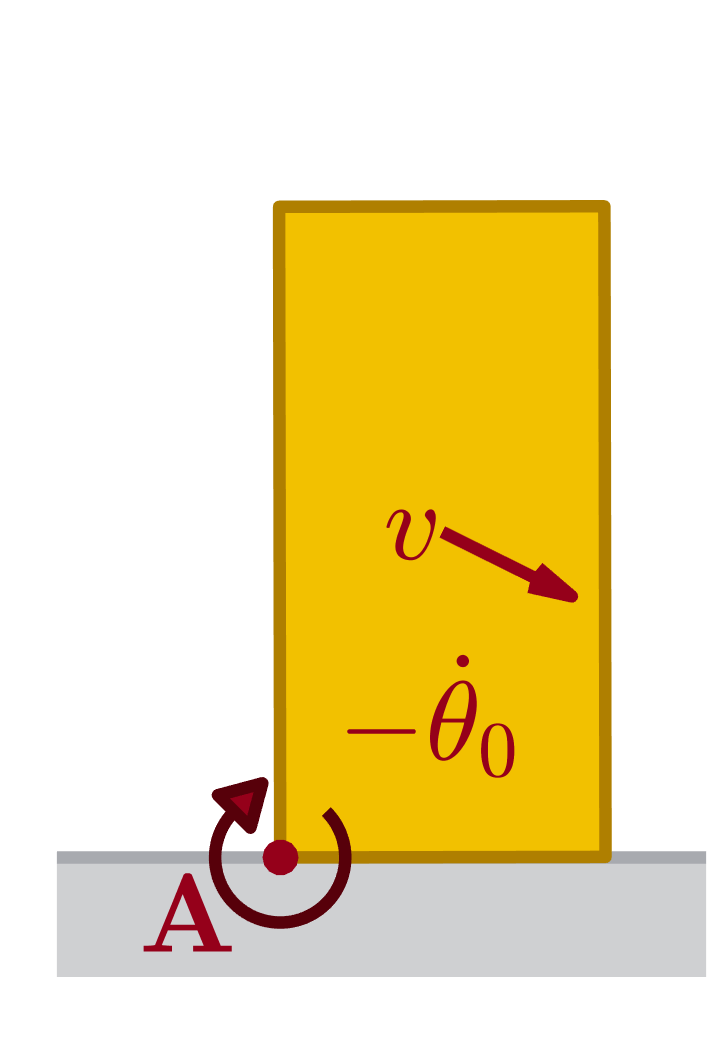}
        \centering
        \caption{\label{fig:zeno_pre}Before first impact}
    \end{subfigure}
    \begin{subfigure}[b]{.24\textwidth}
        \includegraphics[width=.8\hsize]{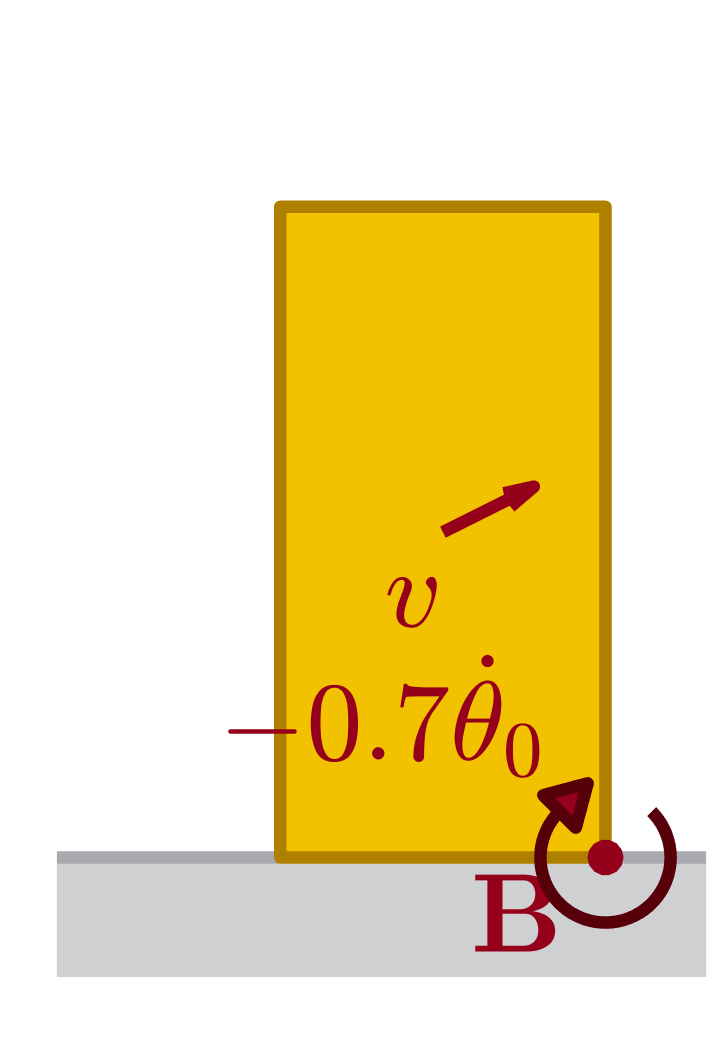}
        \centering
        \caption{\label{fig:zeno_post}After first impact}
    \end{subfigure}
    \caption{Half-cycle of rocking block Zeno trajectory, modified from \cite{Lygeros2003}. (\subref{fig:zeno_0}) At the beginning of the half-cycle, the block is pivoting about the bottom left corner $(\vect A)$, with angular velocity $\dot \theta_0 = 1$. (\subref{fig:zeno_apex}) Coulumb friction maintains sticking contact at $\vect A$, such that the block follows a pendulum trajectory, reaching an apex at $\theta^* \approx .22 \si{\radian}$. (\subref{fig:zeno_pre}) As the center of mass remained to the right of the pivot point $\vect A$, the block falls back down and by conservation of energy impacts the ground with angular velocity $\dot \theta = -\dot \theta_0$. (\subref{fig:zeno_post}) By conservation of angular momentum, the impact reduces the angular velocity by a factor of $0.7$.
    This end state is a mirror image of the initial condition, except with the angular velocity reduced by a factor of $0.7$.
    Therefore, another half cycle will return to the initial condition, except with a modified angular velocity $\dot \theta = (0.7)^2\dot \theta_0$.\label{fig:zeno}}
    \vspace{2mm}
    \begin{subfigure}[b]{.33\textwidth}
        \includegraphics[width=\hsize]{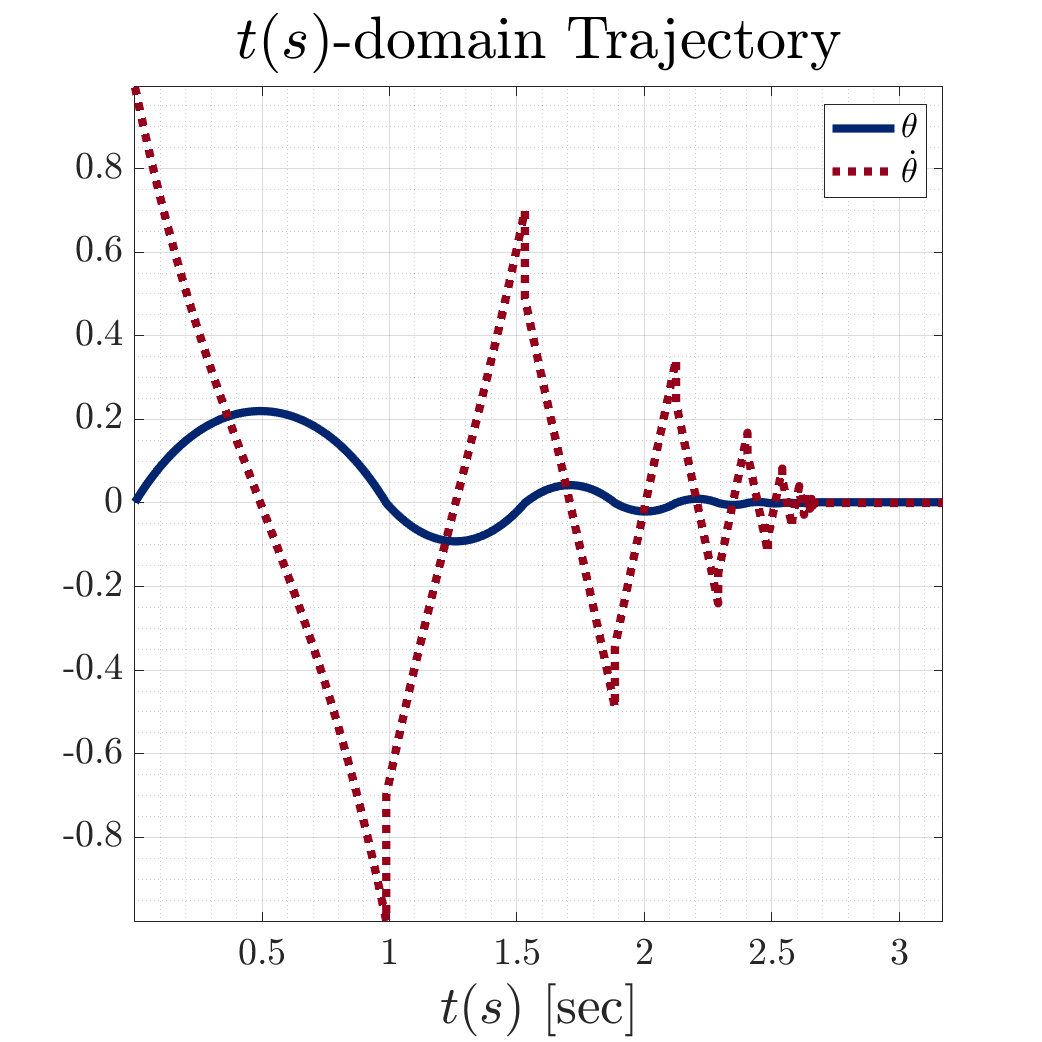}
        \centering
    \end{subfigure}
    \begin{subfigure}[b]{.33\textwidth}
        \includegraphics[width=\hsize]{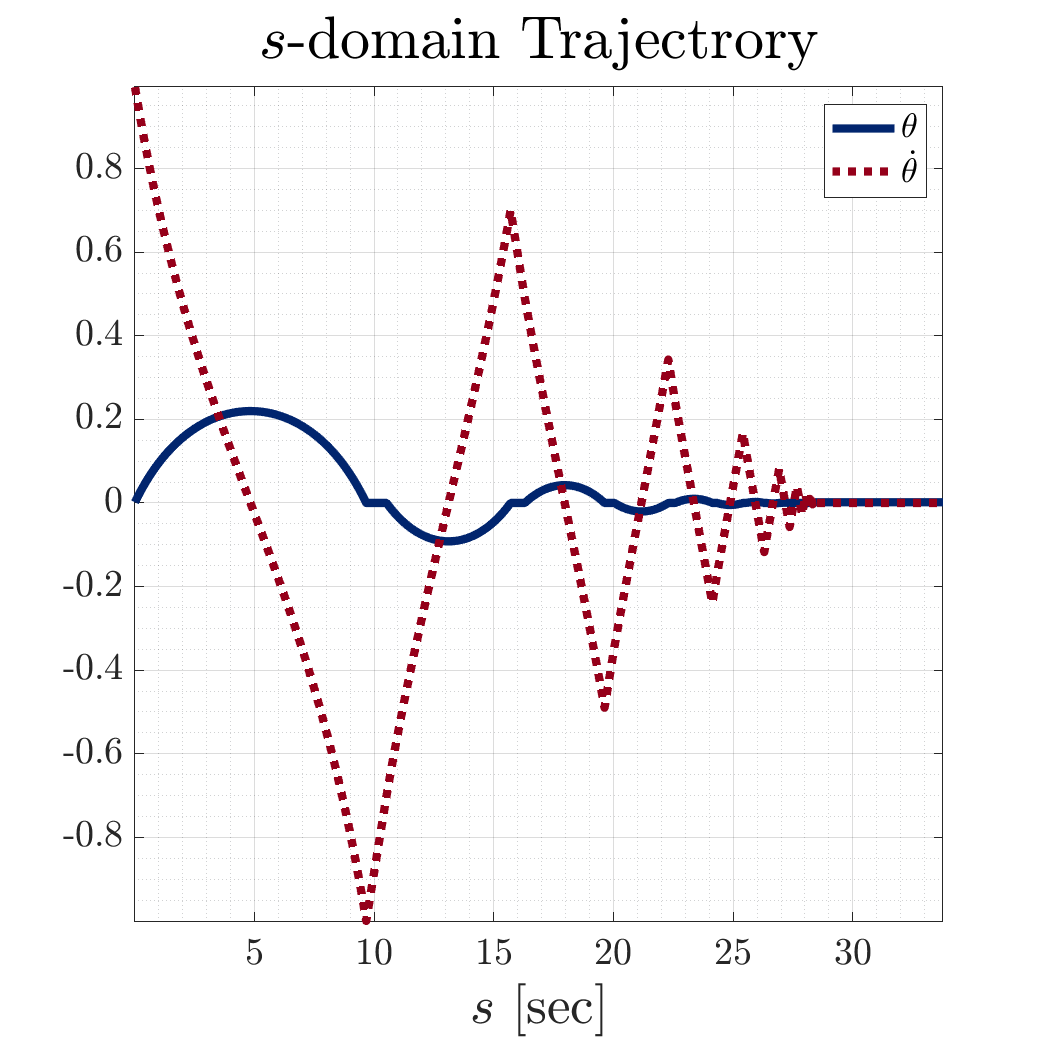}
        \centering
    \end{subfigure}
    \begin{subfigure}[b]{.33\textwidth}
        \includegraphics[width=\hsize]{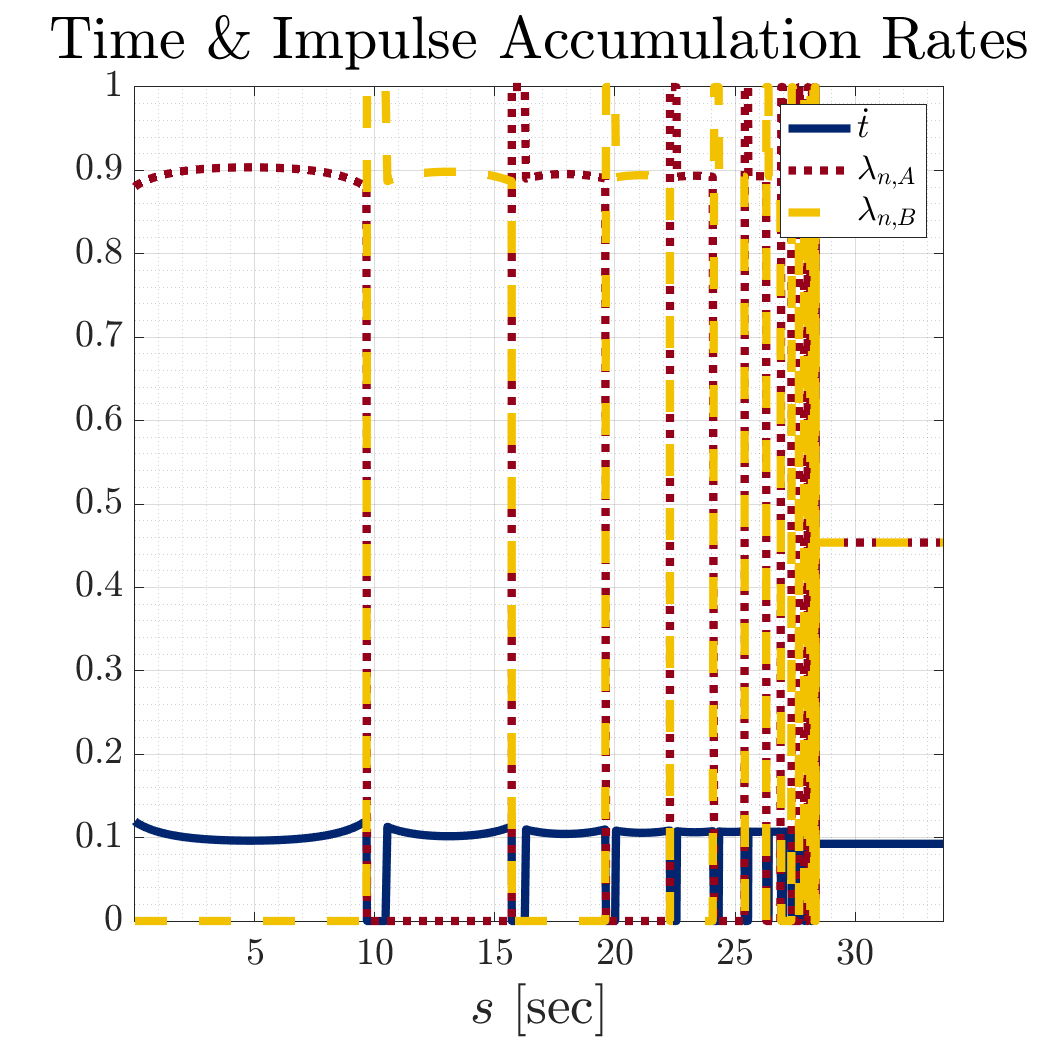}
        \centering
    \end{subfigure}
	\caption{\label{fig:zeno_trajectory}Trajectory of the wobbling trajectory of the rocking block system. Left: The evolution of $\theta$ and $\dot \theta$ are shown in the $t(s)$-domain. Beginning with $t(s) \approx 1$, there are a series of vertical jumps in the plot of $\dot \theta$, corresponding to impacts; the trajectory comes to rest after the Zeno accumulation point at $t(s^*) \approx 2.67$. Center: in the $s$ domain, the jumps in $\dot \theta$ are replaced with time-frozen impact, where $\theta$ is held constant. Due to the time slowing factor described by \Cref{eq:SustainedConfigurationRearrangement}, the $s$ domain is significantly longer, with the accumulation point occuring at $s^* \approx 28.4$. Right: the rates of accumulation for time ($\dot t$) and normal impulse ($\Force[A], \Force[B]$) are displayed.
	While the trajectory is absolutely continuous in $s$, these derivatives are only defined almost everywhere, with discontinuous switches during the transitions between impact and sustained contact.
	These switches become closer and closer together as $s$ reaches $s^*$.}
\end{figure*}
As discussed in \Cref{subsubsec:IVPs}, even inelastic contact can exhibit Zeno behavior.
In the remainder of this section, both to verify that our model \eqref{eq:ContinuousTimeModel}, captures Zeno behavior and to illustrate its solutions $\bar\State(s)$, we will examine an instance of the rocking block example of \citet{Lygeros2003}, where an alternate ``wobbling'' trajectory (\Cref{fig:zeno}) of the system described in \Cref{fig:phone} and \Cref{subsec:rockingblockexample} is considered.
The simplified setting will allow us to explicitly construct a solution $\bar\State(s)$ which exhibits Zeno behavior as well as transitions between sustained contact and impact modes.
The code used to generate the figures associated with the example is available online\endnote{\label{endnote:code}The codebase for this paper is available at \url{https://github.com/mshalm/routh-multi-impact}.
Examples related to \Cref{sec:examples} can be run by calling \texttt{Results()}; more details are available in \Cref{adx:exampledetails}.
Figures relating to the Zeno example are computed numerically by calling \texttt{ZenoBlock()}.}.

We consider a block of width $w = 1 \si{\meter}$; height $h = 2 \si{\meter}$; coefficient of friction $\FrictionCoeff = 1$ with the ground; and uniformly-distributed mass $m = 1 \si{\kilo\gram}$, with moment of inertia $I = \frac{1}{12}(w^2 + h^2)$.
The block has configuration $\Configuration = [x;y;\theta]$ composed of its center of mass position and angle with the horizontal.
The block begins by rotating about the bottom left corner on the ground (\Cref{fig:zeno_0}), with initial angular velocity $\dot \theta_0 = 1$ at time $t = 0$.

As $\mu > \frac{w}{h}$, Coulomb friction can maintain stiction during the rotating motion \citep{Zhang2001}.
The motion of the block is thus fully determined by its orientation $\theta$ which follows pendulum dynamics
\begin{equation}
	(I + m\TwoNorm{\vect r}^2) \frac{\Differential^2 \theta}{\Differential t^2} = -mg\TwoNorm{\vect r}\cos(\theta + \alpha)\,,\label{eq:pendulumODE}
\end{equation}
where $g=9.81$ is the gravitational acceleration; $\alpha = \arctan\Parentheses{\frac{h}{w}}$; and $\vect r = [r_x(\theta); r_y(\theta)]$ is the (world-frame coordinates) vector from the corner to the center of mass.
By conservation of energy, this motion will reach an apex at
\begin{equation}
	\theta^* = \arcsin\Parentheses{\frac{(I + m\TwoNorm{\vect r}^2)\dot\theta_0^2}{2mg\TwoNorm{\vect r}} - \sin(\alpha)} - \alpha \approx .22\,[\si{\radian}]\,,\label{eq:zenoapex}
\end{equation}
at which the center of mass remains to the right of the contact point.
Thus, again by energy conservation the block pivots back down to its initial position with angular velocity $-\dot\theta_0$ at some time $t_1$,
which by \eqref{eq:pendulumODE} and \eqref{eq:zenoapex} is no more than $L\dot\theta_0$, with
\begin{align}
	L &= \frac{2\dot\theta_0}{\ddot \theta_{min}}\,,\\
	\ddot \theta_{min} &= \frac{mg\TwoNorm{\vect r}\cos(\theta^* + \alpha)}{(I + m\TwoNorm{\vect r}^2)}\,.
\end{align}
The velocity of the center of mass during this period is
\begin{equation}
	\TimeDiff{}\begin{bmatrix}
		x \\ y
	\end{bmatrix} = \TimeDiff{\theta}\begin{bmatrix}
		-r_y \\ r_x
	\end{bmatrix}\,.
\end{equation}

The forces which maintain stiction can be determined by applying the manipulator equations \eqref{eq:ManipulatorEquations} along with the constraint that the acceleration of the pivot point $A$ is zero:
\begin{align}
	\TimeDiff{} \J[A] (\Configuration)\Velocity &= \J[A](\Configuration)\TimeDiff{\Velocity} + \TimeDiff{\J[A]}\Velocity = 0\,,\\
	\TimeDiff{\Velocity}  &= \Mass^{-1}\Parentheses{\begin{bmatrix}
		0 \\ -g \\ 0
	\end{bmatrix} + \J[A]^T\Force[A]}\,.
\end{align}
As $\J[A]\Mass^{-1}\J[A]^T \succ 0$ is bounded away from $0$ and the constituent functions are smooth in time, $\Force[A]$ is a smooth function of time during this period.
By examining \eqref{eq:SustanedEvolutionRearrangement}, we form a solution of the DI by reparameterization of time into $s$ with the relation $\Differential s = \dot t + \NormalForce[A]$, and thus
\begin{equation}
	s(t) = \int_0^t (1 + \NormalForce[A](\tau))\Differential\tau \,.
\end{equation}
$s(t)$ is smooth and strictly monotonically increasing with slope $\geq 1$ and thus has differentiable, continuous inverse $t(s)$ with slope $\dot t \in [0,1]$. Define
\begin{align}
	\Velocity(s) &= \TimeDiff{\theta}\begin{bmatrix}
		-r_y(\theta(t(s))) \\ r_x(\theta(t(s))) \\ 1
	\end{bmatrix}\,,\\
	\Configuration(s) &= \Configuration(0) + \int_0^{s} \dot t(s) \Velocity(\bar s)\Differential\bar s\,.
\end{align}
$t(s)$ can thus be composed with $\Configuration(s), \Velocity(s)$ to form a a solution to the DI \eqref{eq:ContinuousTimeModel}.
Once the bottom right corner (point $\vect B$) comes back down to the ground, the DI can admit a sticking impact at $\vect B$ starting integration value $s_1 = s(t_1)$ (\Cref{fig:zeno_pre,fig:zeno_post}).
As $\Jt[B]\Velocity(s_1) = 0$, an impact which sticks the entire time (and thus any impulse accumulation rate $\NormalForce[B] \geq \FrictionCoeff |\FrictionForce[B]|$) is allowed by the DI.
We can determine the total impulse $\Impulse[B]$ which brings point $\vect B$ to rest by solving the impact equation \eqref{eq:ImpulsiveNetwonsSecond}
\begin{equation}
0 = \J[B]\Parentheses{\Velocity(s_1) + \Mass^{-1}\J[B]^T\Impulse[B]}\,.
\end{equation}
We note that $\NormalImpulse[B]$ is homogeneous in $\Velocity(s_1)$.
We can thus extend the DI solution from the pendulum phase through the impact by freezing time and configuration and setting
\begin{equation}
	\Velocity(s) = \Velocity(s_1) + \frac{s-s_1}{\NormalImpulse[B]}\Mass^{-1}\J[B]^T\Impulse[B]\,,
\end{equation}
on $s \in [s_1, s_1 + \NormalImpulse[B]]$.
By conservation of angular momentum, the post-impact velocity is pivoting about point $\vect B$, in a mirror image to the initial condition (\Cref{fig:zeno_post}), with angular velocity
\begin{equation}
	-\dot \theta_0\frac{I + m\TwoNorm{\vect r}^2(-\cos(2\alpha))}{I + m\TwoNorm{\vect r}^2} = -0.7\dot \theta_0\,.
\end{equation}

By symmetry, the block with then have a mirror-image pendulum motion of time/integration duration no more than $0.7t_1$ and $0.7s_1$, causing another impact to again pivot about point $A$ with angular velocity $(0.7)^2\theta_0$.
Thus, we can infinitely repeat this cycle to construct a solution to the DI with the total time/integration duration reaching finite accumulation points
\begin{align}
	s^* &\leq \sum_{i=0}^\infty 0.7^i(s_1 + \NormalImpulse[B])\,,\\
	t(s^*) &\leq \sum_{i=0}^\infty 0.7^i t(s_1)\,.
\end{align}
As $\Velocity$ and $\Configuration$ are continuous on $[0,s^*)$, we can take the limit as $s \to s^*$ to extend to the accumulation point as $\Velocity(s^*) = \ZeroVector$ and $\Configuration(s^*)$ is at rest with both points $\vect A$ and $\vect B$ on the ground.
This is clearly an equilibrium of the DI model, where normal forces of $0.5mg$ at each contact point keep the block at rest.
Thus $\Velocity(s) = \ZeroVector$, $\Configuration(s) = \Configuration(s^*)$, $t(s) = t(s^*) + \frac{s - s^*}{1 + mg}$ is a valid continuation of the solution past $s^*$.
A visualization of this trajectory is given in both $t(s)$ and $s$ domains in \Cref{fig:zeno_trajectory}, with a supplemental figure displaying the relative accumulations rates of $t$, $\Impulse[A]$, and $\Impulse[B]$ with respect to $s$.


\section{Discrete Impact Integration}
\label{section:simulation}
Section \ref{section:model} provides a rigorous theoretical framework guaranteeing existence of solutions to our impact model.
In this section, we now develop a computational framework that allows this model to be applied to two key settings.

In the first, we consider embedding this model into an ``event-based'' discrete-time simulation environment, such as the one developed in \citet{Anitescu97}, where collisions are resolved instantaneously via an update function
\begin{equation}
	\Velocity^+ \gets \mathrm{ImpactLaw}(\Configuration,\Velocity)\,.
\end{equation}
Faithful capture of the non-uniqueness in our model means that $\mathrm{ImpactLaw}(\Configuration,\Velocity)$ should be capable of returning a set of different values for $\Velocity^+$; simulation will be considered as sampling stochastically from this set.
Our second application, reachability analysis, is to approximate the entire set of possible outcomes for a given initial condition.
In pursuit of both applications, we develop an LCP-based integration scheme for our impact DI \eqref{eq:multicontact}.
We will bound the number of LCP solves required for each of these applications.
We conclude the section with several numerical examples of post-impact set approximation, and provide comparisons to other complaint and rigid impact resolution methods.

We note that this section is purely focused on resolving an impact event with $\mathrm{ImpactLaw}(\Configuration,\Velocity)$, rather than the integration of this subroutine into e.g. a particular event-based simulator.
Extensive analysis on when the impact update should be triggered and whether every post-impact velocity is suitable for every event-based simulation scheme is therefore excluded, but we offer some brief discussion here.
For instance, most event-based simulators are vulnerable to Zeno behaviors, as infinite impacts would require infinite runs of the $\mathrm{ImpactLaw}$ algorithm.
Secondly, the post-impact termination used herein will simply be that the velocity is non-colliding.
While this is not sufficient e.g. to avoid immediately triggering another impact in Painlev\'e-type scenarios, some simulators such as \citet{Anitescu97} will successfully step forward in time if this condition is met.
Finally, many simulators such as \citet{Anitescu97} only trigger $\mathrm{ImpactLaw}$ at a collision, and will thus never trigger a collision under grazing.
We assume that the logic for handling such events is appropriately handled outside of the impact resolution scheme.

\subsection{Model Construction}
Just as forward Euler integration can cause penetration in continuous-time simulators \citep{Halm2018}, it can also break the inelastic condition if applied to our impact model \eqref{eq:multicontact}.
To rectify this issue, we develop an approximate, implicit, and discrete integration scheme.
Our method takes a simulation step by finding a small contact impulse increment $\Force$:
\begin{subequations}
\begin{align}
	\textrm{find} &&\Velocity'; \Braces{\Force[i]:  i \in \ContactSet_A}\,, \\
	\textrm{s.t.}&& \Mass(\Velocity'-\Velocity) = \J^T\Force\,,\\
	&& \NormalForce[i] \geq \ZeroVector \textrm{ and } \NormalForce[i]\Jn[i]\Velocity' \leq 0 \label{eq:ImplicitInelasticConstraint}\,,\\
	&& \FrictionForce[i] \in -\FrictionCoeff[i]\NormalForce[i]\Unit_{\vect D}(\Jt[i]\Velocity')\label{eq:ImplicitCoulombConstraint}\,, 
\end{align}\label{eq:SimulationUpdate}
\end{subequations}
where the dependence of $\J, \Mass$ on $\Configuration = \Configuration_0$ is suppressed for notational compactness.

Conceptually, our simulation scheme can be understood as differential, as it closely mirrors our impact DI \eqref{eq:multicontact} which selects $\dot \Velocity \in \Mass^{-1}\FrictionCone (\Configuration)$.
The primary changes are that \eqref{eq:SimulationUpdate} approximates the derivative with a finite difference $(\Velocity' - \Velocity)$; enforces Coulomb friction \eqref{eq:ImplicitCoulombConstraint} and inelasticity \eqref{eq:ImplicitInelasticConstraint} at the incremented velocity $\Velocity'$; and replaces Coulomb friction with its linear approximation.
However, computationally, our method seems most similar to the LCP-based algebraic method \eqref{eq:AnitestcuDiscreteFormulation}, and we will show the each increment \eqref{eq:SimulationUpdate} can also be solved as an LCP.
Despite this similarity, the are significant philosophical differences between \eqref{eq:AnitestcuDiscreteFormulation} and \eqref{eq:SimulationUpdate} that lead to qualitatively different predictions.
As opposed to the unrealistic velocity-based complementarity constraint \eqref{eq:velocitybasedcomplementarity}, the termination condition $\Jn[i]\Velocity' \geq 0$ is removed from \eqref{eq:ImplicitInelasticConstraint}, and thus it may take many increments of our model to reach post-impact velocity.
Furthermore, the removal of this constraint makes \eqref{eq:SimulationUpdate} underconstrained, and thus allows significant freedom for selection of the normal impulse increments.

As our model intentionally captures a set of realistic outcomes, we frame resolving an impact as sampling from that set.
We parameterize the sampling process with a normal impulse distribution with finite-valued probability density $p(\NormalForce)$ over the unit box; step size $h>0$; and (possibly infinite) max iteration count $N$.
We compute samples from our discrete approximation of Routh's method (Algorithm \ref{alg:ImpactSimulation}) as follows:
\begin{enumerate}
	\item Generate a non-zero, maximum normal impulse increment $\NormalForce[max] \sim h\cdot p(\NormalForce)$.
	\item Find a set of forces $\Force \neq \ZeroVector$ with normal component $\NormalForce \leq \NormalForce[max]$ that solves \eqref{eq:SimulationUpdate}; Increment $\Velocity \gets \Velocity + \Mass^{-1}\J^T\Force$.\label{step:DiscreteIncrementSelection}
	\item Terminate and take $\Velocity^+ = \Velocity$ if it is non-colliding ($\Velocity \not \in \ActiveSet(\Configuration)$) or the iteration limit is reached; else, return to 1.
\end{enumerate}
While our theoretical results extend to any $p(\NormalForce)$, we assume in this section that $p(\NormalForce)$ is uniform density for simplicity.
For notational compactness, we assume that all contacts are active and non-penetrating ($\Gap(\Configuration) = 0$).

A difficulty in step \ref{step:DiscreteIncrementSelection}) above is that $\Force = \ZeroVector$ solves \eqref{eq:SimulationUpdate}, and makes no progress towards impact termination.
Additionally, it is possibile that no solution to step \ref{step:DiscreteIncrementSelection}) allows $\NormalForce = \NormalForce[max]$.
We therefore add constraints that encourage $\NormalForce$ to be large:
\begin{align}
	&\Complementary{\SlackForce}{\NormalForce[max] - \NormalForce}\,,\label{eq:ImpactSimulationSlackForceComplementairty} \\
	&\Complementary{\NormalForce}{\Jn \Velocity' + \SlackForce}\,. \label{eq:ImpactSimulationNormalForceComplementairty}
\end{align}
Together, \eqref{eq:ImpactSimulationSlackForceComplementairty}--\eqref{eq:ImpactSimulationNormalForceComplementairty} enforce \eqref{eq:ImplicitInelasticConstraint}; $\NormalForce \leq \NormalForce[max_i]$; and either $\NormalForce[i] = \NormalForce[max_i]$ or contact $i$ has terminated.

Similar to the methods of \citet{Glocker1995} and \citet{Anitescu97} described in \ref{subsec:impactbackground} (see also \eqref{eq:SimulationFrictionBasisComplementarity}--\eqref{eq:SimulationSlackVelocityComplementarity}), we transcribe our model as $\LCP{\LCPMatrix_{\Configuration_0}}{\LCPVector_{\Configuration_0}(\Velocity,\NormalForce[max])}$:
\begin{subequations}
 \begin{align}
	\LCPVariables &= \begin{bmatrix}
		\SlackForce \\ \bar\Force \\ \SlackVelocity
	\end{bmatrix}\,, \quad \LCPVector_{\Configuration}(\Velocity,\NormalForce[max])= \begin{bmatrix}
		\NormalForce[max] \\
		\bar\J \Velocity \\
		\ZeroVector \\
	\end{bmatrix}\,,\\
	\LCPMatrix_{\Configuration} & = \begin{bmatrix}
		\ZeroVector & -\Identity & \ZeroVector & \ZeroVector \\
		\Identity & \Jn \Mass^{-1} \Jn & \Jn \Mass^{-1} \JD & \ZeroVector \\
		\ZeroVector & \JD \Mass^{-1} \Jn &  \JD \Mass^{-1} \JD & \OneVectorMatrix \\
		\ZeroVector & \FrictionCoeff & -\OneVectorMatrix^T & \ZeroVector
	\end{bmatrix}\,,\\
	\Velocity'(\bar \Force) &= \Velocity + \Mass^{-1}\bar\J^T\bar\Force\,,
\end{align}\label{eq:SimulationLCP}
\end{subequations}
where $\Configuration$ is the configuration of the impacting state; $\FrictionCoeff = \mathrm{diag}(\FrictionCoeff[1],\dots,\FrictionCoeff[\Contacts])$; and $\OneVectorMatrix = \mathrm{blkdiag}(\OneVector,\dots,\OneVector)$.
We note in particular that the columns and rows of $\LCPMatrix_{\Configuration}$ and $\LCPVector_{\Configuration}$ associated with $[\bar \Force; \SlackVelocity]$ above are identical to the impact LCP of \citet{Anitescu97}.
\begin{algorithm}
\small
\SetAlgoLined
\KwIn{step $h$, initial state $\State_0 = [\Configuration_0;\Velocity_0]$, max iterations $N$}
\KwOut{final velocity $\Velocity$}
 $(\Velocity, i) \gets (\Velocity_0, 0)$\;
 \While{$\Velocity \in \ActiveSet(\Configuration_0)$ and $i \leq N$}{
  $\NormalForce[max] \sim h\cdot p(\NormalForce)$\label{line:LambdaMaxSelection}\;
  Select $\LCPVariables = [\SlackForce; \bar{\Force}; \SlackVelocity] \in \LCP{\LCPMatrix_{\Configuration_0}}{\LCPVector_{\Configuration_0}(\Velocity,\NormalForce[max])}$\label{line:ZSelection}\;
  $(\Velocity, i) \gets (\Velocity + \Mass(\Configuration_0)^{-1}\bar\J(\Configuration_0)^T\bar{\Force}, i + 1)$\label{line:VelocityUpdate}\;
 }
 \caption{$\mathrm{Sim}(h,\State_0,N)$\label{alg:ImpactSimulation}}
\end{algorithm} 
\subsection{Properties}
\subsubsection{Existence} The most essential property of our integration step is that, because $\LCPMatrix_{\Configuration}$ is copositive, we can leverage \Cref{prop:LCPCopExist} to show that the constituent LCP has a solution:
\begin{theorem}[Single-Step Existence (Appendix \ref{adx:TimesteppingSolutionExistenceProof})]\label{thm:TimesteppingSolutionExistence}
	$\LCP{\LCPMatrix_{\Configuration}}{\LCPVector_{\Configuration}(\Velocity,\NormalForce[max])}$ is non-empty for all states $[\Configuration;\; \Velocity]$, and normal impulse $\NormalForce[max] \geq \ZeroVector$. A solution can be found with Lemke's Algorithm in finite time.
\end{theorem}
\subsubsection{Dissipation} As discussed in Section \ref{subsubsec:Dissipation}, an essential property of inelastic impacts is energy dissipation; because solutions to our model approximate the DI, each integration step cannot increase kinetic energy:
\begin{theorem}[Dissipation (Appendix \ref{adx:TimesteppingDissipationProof})]\label{thm:TimesteppingDissipation}
	Let $[\Configuration;\; \Velocity]$ be any state with active contact, and let $\NormalForce[max] \geq \ZeroVector$ be a normal impulse. Then all impulses $\bar \Force$ generated by the impact constraints ($\LCP{\LCPMatrix_{\Configuration}}{\LCPVector_{\Configuration}(\Velocity,\NormalForce[max])}$) dissipate kinetic energy:
	\begin{equation}
		\KineticEnergy(\Configuration,\Velocity'(\bar \Force)) \leq \KineticEnergy(\Configuration,\Velocity)\,.
	\end{equation}
\end{theorem}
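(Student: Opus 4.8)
The plan is to reduce the claim to the standard impulsive energy-balance identity and then sign the resulting terms using the complementarity structure of the LCP. First I would expand the kinetic-energy difference directly from \eqref{eq:SimulationUpdate}: writing $\Delta\Velocity = \Mass^{-1}\bar\J^T\bar\Force$ so that $\Mass\Delta\Velocity = \bar\J^T\bar\Force$, a short computation (using $\KineticEnergy(\Configuration,\Velocity) = \tfrac12\Velocity^T\Mass\Velocity$ and substituting $\bar\Force^T\bar\J\Velocity = \bar\Force^T\bar\J\Velocity'(\bar\Force) - \bar\Force^T\bar\J\Mass^{-1}\bar\J^T\bar\Force$) yields
\begin{equation}
	\KineticEnergy(\Configuration,\Velocity'(\bar\Force)) - \KineticEnergy(\Configuration,\Velocity) = \bar\Force^T\bar\J\Velocity'(\bar\Force) - \tfrac12\bar\Force^T\bar\J\Mass^{-1}\bar\J^T\bar\Force\,.
\end{equation}
Since $\Mass \succ 0$, the quadratic term $\tfrac12\bar\Force^T\bar\J\Mass^{-1}\bar\J^T\bar\Force$ is nonnegative, so it suffices to prove $\bar\Force^T\bar\J\Velocity'(\bar\Force) \leq 0$ --- physically, the statement that the contact impulse increment does nonpositive work on the post-step velocity.

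Next I would split $\bar\Force^T\bar\J\Velocity'(\bar\Force)$ into its normal and tangential parts, $\NormalForce^T\Jn\Velocity'(\bar\Force) + \FrictionBasisForce^T\JD\Velocity'(\bar\Force)$, and sign each using the LCP complementarity conditions. For the normal part, \eqref{eq:ImpactSimulationNormalForceComplementairty} gives $\NormalForce^T(\Jn\Velocity'(\bar\Force) + \SlackForce) = 0$, hence $\NormalForce^T\Jn\Velocity'(\bar\Force) = -\NormalForce^T\SlackForce \leq 0$ because $\NormalForce, \SlackForce \geq \ZeroVector$. For the tangential part I would work contact-by-contact: \eqref{eq:SimulationFrictionBasisComplementarity} gives $\FrictionBasisForce[i]^T\JD[i]\Velocity'(\bar\Force) = -\SlackVelocity[i]\,\OneVector^T\FrictionBasisForce[i]$, and \eqref{eq:SimulationSlackVelocityComplementarity} gives $\SlackVelocity[i]\,\OneVector^T\FrictionBasisForce[i] = \SlackVelocity[i]\FrictionCoeff[i]\NormalForce[i] \geq 0$ since $\SlackVelocity[i] \geq 0$, $\FrictionCoeff[i] > 0$, and $\NormalForce[i] \geq 0$. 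Summing over $i$ gives $\FrictionBasisForce^T\JD\Velocity'(\bar\Force) \leq 0$, and combining with the normal bound yields $\bar\Force^T\bar\J\Velocity'(\bar\Force) \leq 0$, completing the argument.

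The main obstacle is bookkeeping rather than conceptual: one must carefully match which block of $\LCPVariables = [\SlackForce;\bar\Force;\SlackVelocity]$ and which rows of $\LCPMatrix_{\Configuration}\LCPVariables + \LCPVector_{\Configuration}(\Velocity,\NormalForce[max])$ encode \eqref{eq:ImpactSimulationSlackForceComplementairty}, \eqref{eq:ImpactSimulationNormalForceComplementairty}, \eqref{eq:SimulationFrictionBasisComplementarity}, and \eqref{eq:SimulationSlackVelocityComplementarity}, and verify that the normal and tangential velocities appearing in those rows are exactly $\bar\J(\Velocity + \Mass^{-1}\bar\J^T\bar\Force) = \bar\J\Velocity'(\bar\Force)$ --- i.e. that the finite-difference velocity is genuinely the argument of the friction-cone constraint. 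Once that identification is made, the chain of complementarity implications above is immediate, needs no hypotheses beyond $\Mass \succ 0$ and $\FrictionCoeff[i] > 0$ (plus the reduction to active contacts, so that elements removed from $\bar\Force$ and $\bar\J$ play no role), and in particular applies to \emph{every} solution of the LCP without invoking Theorem \ref{thm:TimesteppingSolutionExistence}.
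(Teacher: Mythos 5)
Your proof is correct and takes essentially the same route as the paper's: both arguments reduce the claim to showing $\bar\Force^T\bar\J\Velocity'(\bar\Force)\leq 0$ using the LCP complementarity conditions (you establish this block-by-block from \eqref{eq:ImpactSimulationNormalForceComplementairty}, \eqref{eq:SimulationFrictionBasisComplementarity}, and \eqref{eq:SimulationSlackVelocityComplementarity}, while the paper gets it in one line from $\LCPVariables^T\Parentheses{\LCPMatrix_{\Configuration}\LCPVariables+\LCPVector_{\Configuration}}=0$). The only cosmetic difference is the final step, where you complete the square in the energy balance exactly while the paper applies Cauchy--Schwarz to $\Parentheses{\Velocity'-\Velocity}^T\Mass\Velocity'\leq 0$; both are immediate.
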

\subsubsection{Linear Impact Termination}
We now show that Algorithm \ref{alg:ImpactSimulation} likely terminates in a small number of steps, allowing it to be used to implement $\mathrm{ImpactLaw}(\Configuration,\Velocity)$.

To understand the rate at which this termination happens, we consider that a pointed friction cone (\Cref{assump:nondegenerate}) guarantees that the magnitude of the change in velocity $\Mass^{-1}\bar \J^T \bar\Force$ for a single step not only grows linearly in $\Norm{\NormalForce}_1$, but also moves $\Velocity$ in some (non-unit) direction $\vect r(\Configuration)$:
\begin{lemma}[Net Force Bound (Appendix \ref{adx:NetForceBoundedByIndividualForcesProof})]
	Consider a configuration $\Configuration \in \ConfigurationSet_A \setminus \ConfigurationSet_P$.
	There exists a nonzero $\vect r(\Configuration) \in \Real^{\Velocities}$, such that for any $\bar \Force = [\NormalForce;\; \FrictionBasisForce]$ obeying \eqref{eq:SimulationSlackVelocityComplementarity},
	\begin{equation}
		\vect r(\Configuration) \cdot \Mass^{-1}\bar\J(\Configuration)^T\bar\Force \geq \Norm{\NormalForce}_1\label{eq:repsiloninequality} \,.
	\end{equation}
		\label{lem:NetForceBoundedByIndividualForces}
\end{lemma}
\noindent $\vect r(\Configuration)$ is computable as a linear program, as it arises from minimization over a polygonal set, $\LinearFrictionCone$.

Let the random variable $Z(h,\Configuration_0,\Velocity_0)$ be the number of LCP solves required for $\mathrm{Sim}(h,[\Configuration_0;\;\Velocity_0],\infty)
$ to terminate. Given that multiple impacts might occur in a single time-step, it is crucial that $Z(h,\Configuration_0,\Velocity_0)$ be as small as possible. Consider that \Cref{lem:NetForceBoundedByIndividualForces} implies that the velocity takes large steps in the $\vect r$ direction with high probability, yet total movement in any direction is bounded by $2\Norm{\Velocity_0}_{\Mass}$ as kinetic energy is non-increasing (\Cref{lem:NonzeroLambdamaxNonzeroLambda}). We can therefore show that with high probability, $Z$ grows linearly with $\Norm{\Velocity_0}_{\Mass}$:
\begin{theorem}[Discrete Termination (Appendix \ref{adx:ExponentialSimulationDecayProof})]
	Let $\Configuration_0 \in \ConfigurationSet_A \setminus \ConfigurationSet_P$ have $\Contacts$ active contacts. Pick $\sigma$ such that $\Mass(\Configuration_0) \succeq \sigma \Identity$; pick $\vect r(\Configuration_0)$ as in \Cref{lem:NetForceBoundedByIndividualForces}; let $h>0$ be a step-size. Let
	\begin{equation}\textstyle
		c = 4 \left\lceil \frac{(\Contacts +1)\TwoNorm{\vect r(\Configuration_0)}}{h \sqrt{\sigma}} \right\rceil\,.
	\end{equation}
	Then for all $k \in \Integer^+$, $\Velocity_0 \in \ActiveSet(\Configuration_0)$,
	\begin{equation}
		P\Parentheses{Z(h,\Configuration_0,\Velocity_0) > c \left\lceil \Norm{\Velocity_0}_{\Mass} \right\rceil + k} \leq e^{-\frac{k}{(m+1)^2}}\,.
	\end{equation}\label{thm:ExponentialSimulationDecay}
\end{theorem}
As the probability density of $Z$ exponentially decays, it has finite moments (including mean and variance).

We conclude by noting that \Cref{lem:NetForceBoundedByIndividualForces}, and thus the pointed friction cone assumption (\Cref{assump:nondegenerate}), is an essential component of our theoretical analysis for impact termination.
Without this assumption there is no \textit{guarantee} that impact simulations will terminate, but there is no inherent reason that simulations that \textit{happen} to terminate are any less reasonable.
\subsection{Post-Impact Set Approximation}\label{subsec:SetApproximation}
We now describe a method to approximate the set of outcomes to simultaneous impacts as modeled in our DI \eqref{eq:multicontact}, which culminates in probabilistic guarantees on densely sampling this set via \Cref{prop:epsilonnet}.

In order for computation of the set of possible outcomes of Algorithm \ref{alg:ImpactSimulation} to be well-posed, we must consider a key practical ramification of the LCP solve on line \ref{line:ZSelection}: numerical LCP solvers typically only find a \textit{single} solution, and may be systematically biased in their selection among multiple solutions.
For all claims in this section, we therefore make the additional assumption that this selection process does not affect the outcome of an individual integration step:
\begin{assumption}
Consider a configuration $\Configuration \in \ConfigurationSet_A \setminus \ConfigurationSet_P$.
For each velocity $\Velocity$ and normal impulse increment $\NormalForce[max] \geq \ZeroVector$, every $\bar \Force$ generated by \eqref{eq:SimulationLCP} results in the same incremented velocity $\Velocity'$.
Equivalently, there exists a function $\vect f_{\Configuration}: \VelocitySpace \times \Closure(\Real^{\Contacts+}) \to \VelocitySpace$, such that 
\begin{equation}
\Velocity'(\bar\Force) = 
	\Velocity + \Mass^{-1}\bar\J^T\bar\Force = \vect f_{\Configuration}(\Velocity,\NormalForce[max])\,.
	\end{equation}\label{assump:uniqueLCPoutcome}
\end{assumption}
This assumption can be verified via Semidefinite Programming \citep{Aydinoglu2020}.
We note that $\Velocity'(\bar\Force)$ under Assumption \ref{assump:uniqueLCPoutcome} is only unique \textit{given} $\NormalForce[max]$; different velocity increments can be 
While Assumption \ref{assump:uniqueLCPoutcome} is violated for at least some systems (e.g. for compass gait and RAMone in Section \ref{sec:examples}), it implies useful properties including Lipschitz continuity:
\begin{lemma}
	For each configuration $\Configuration \in \ConfigurationSet_A \setminus \ConfigurationSet_P$, $\vect f_{\Configuration}(\Velocity,\NormalForce[max])$ is Lipschitz continuous.\label{lem:SimulationLCPLipschitz}
\end{lemma}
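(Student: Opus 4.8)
The plan is to realize $\vect f_{\Configuration}$ as a composition of Lipschitz maps by exploiting Proposition \ref{prop:AffineLCPLipschitz}. The single-step map sends the pair $(\Velocity, \NormalForce[max])$ to $\Velocity' = \Velocity + \Mass^{-1}\bar\J^T\bar\Force$, where $\bar\Force$ is the middle block of a solution $\LCPVariables \in \LCP{\LCPMatrix_{\Configuration}}{\LCPVector_{\Configuration}(\Velocity,\NormalForce[max])}$. First I would observe that the LCP vector $\LCPVector_{\Configuration}(\Velocity,\NormalForce[max]) = [\NormalForce[max];\; \bar\J\Velocity;\; \ZeroVector]$ is an affine (indeed linear) function of the input $(\Velocity, \NormalForce[max])$, and that $\Velocity' - \Velocity$ is obtained by applying the fixed matrix $\Mass^{-1}\bar\J^T$ to the $\bar\Force$-block of $\LCPVariables$, i.e.\ $\Velocity' = \Velocity + \vect A\,\LCP{\LCPMatrix_{\Configuration}}{\LCPVector_{\Configuration}(\Velocity,\NormalForce[max])}$ for a suitable selector-times-$\Mass^{-1}\bar\J^T$ matrix $\vect A$.

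The second step is to invoke Assumption \ref{assump:uniqueLCPoutcome}, which says precisely that $\vect A\,\LCP{\LCPMatrix_{\Configuration}}{\cdot}$ (and hence $\vect f_{\Configuration}$ minus the identity) is single-valued on the relevant domain, as a function of the LCP right-hand side $\LCPVector_{\Configuration}$. Proposition \ref{prop:AffineLCPLipschitz} then applies directly: since $\vect A\,\LCP{\LCPMatrix_{\Configuration}}{\LCPVector}$ is unique over the convex set of attainable $\LCPVector$ values, it is Lipschitz in $\LCPVector$. Composing with the linear (hence Lipschitz) map $(\Velocity,\NormalForce[max]) \mapsto \LCPVector_{\Configuration}(\Velocity,\NormalForce[max])$ and adding the identity map on $\Velocity$ (Lipschitz with constant $1$), Proposition \ref{prop:LipschitzComposition} and the fact that sums of Lipschitz maps are Lipschitz give that $\vect f_{\Configuration}$ is Lipschitz continuous, as desired.

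The main obstacle is a domain/convexity bookkeeping issue: Proposition \ref{prop:AffineLCPLipschitz} requires the uniqueness to hold over a \emph{convex} domain $\Omega$ of right-hand-side vectors. The set $\{\LCPVector_{\Configuration}(\Velocity,\NormalForce[max]) : \Velocity \in \VelocitySpace,\ \NormalForce[max] \geq \ZeroVector\}$ is the image of the convex set $\VelocitySpace \times \Closure(\Real^{\Contacts+})$ under a linear map, hence convex, so this is fine --- but I would need to state carefully that Assumption \ref{assump:uniqueLCPoutcome} grants uniqueness of the \emph{velocity increment} $\vect A\LCPVariables$ (not of $\LCPVariables$ itself, nor of $\bar\Force$), which is exactly the quantity Proposition \ref{prop:AffineLCPLipschitz} controls. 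A secondary subtlety is making sure the LCP is known to be solvable over this whole domain so that $\vect f_{\Configuration}$ is everywhere-defined; this is furnished by Theorem \ref{thm:TimesteppingSolutionExistence}. Once these two points are pinned down, the argument is a short chain of citations with no real computation.
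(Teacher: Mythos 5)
Your proposal is correct and follows essentially the same route as the paper's proof: both reduce the claim to Proposition \ref{prop:AffineLCPLipschitz} by noting that, under Assumption \ref{assump:uniqueLCPoutcome}, the velocity increment $\Mass^{-1}\bar\J^T\bar\Force = \vect A\,\LCP{\LCPMatrix_{\Configuration}}{\LCPVector}$ is single-valued over the convex domain $\LCPVector_{\Configuration}(\VelocitySpace,\Closure(\Real^{\Contacts+}))$, and then compose with the linear map $(\Velocity,\NormalForce[max]) \mapsto \LCPVector_{\Configuration}(\Velocity,\NormalForce[max])$ and the identity on $\Velocity$. Your additional remarks on convexity of the attainable right-hand-side set and on everywhere-definedness via Theorem \ref{thm:TimesteppingSolutionExistence} are correct points that the paper leaves implicit.
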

\begin{proof}
	Because $\Velocity'(\bar \Force)$ is unique, we must have that
\begin{equation}
	\bar\J^T\bar\Force = \begin{bmatrix}
		\ZeroVector & \bar\J^T & \ZeroVector
	\end{bmatrix}\LCP{\LCPMatrix_{\Configuration}}{\LCPVector_{\Configuration}(\Velocity,\NormalForce[max])}\,,
\end{equation}
is a singleton over the convex domain $\LCPVector_{\Configuration}(\VelocitySpace,\Closure(\Real^{\Contacts+}))$. Therefore by \Cref{prop:AffineLCPLipschitz}, $\vect f_{\Configuration}$ is Lipschitz continuous.
\end{proof}
We will also make use of two scenarios where the integration step LCP is guaranteed select zero impulse:
\begin{lemma}
	Consider a configuration $\Configuration \in \ConfigurationSet_A \setminus \ConfigurationSet_P$ and $\NormalForce[max] \geq \ZeroVector$. Then if either $\Jn\Velocity \geq \ZeroVector$ or $\NormalForce[max] = \ZeroVector$,
	\begin{equation}
		\Velocity = \vect f_{\Configuration}(\Velocity,\NormalForce[max])\,.
	\end{equation}
	\label{lem:VelocityStationaryConditions}
\end{lemma}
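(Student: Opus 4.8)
The plan is to reduce the claim to the single observation that the zero impulse $\bar\Force = \ZeroVector$ either \emph{is} a solution of $\LCP{\LCPMatrix_{\Configuration}}{\LCPVector_{\Configuration}(\Velocity,\NormalForce[max])}$ (in the first case) or \emph{must be} the impulse of every solution (in the second case). Once this is established, $\Velocity'(\ZeroVector) = \Velocity + \Mass^{-1}\bar\J^T\ZeroVector = \Velocity$, and Assumption \ref{assump:uniqueLCPoutcome}---under which every LCP solution yields the same incremented velocity $\vect f_{\Configuration}(\Velocity,\NormalForce[max])$---immediately forces $\vect f_{\Configuration}(\Velocity,\NormalForce[max]) = \Velocity$. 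Existence of at least one solution is guaranteed throughout by Theorem \ref{thm:TimesteppingSolutionExistence}.

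For the case $\Jn\Velocity \geq \ZeroVector$, I would exhibit the explicit candidate $\LCPVariables = [\SlackForce;\; \bar\Force;\; \SlackVelocity] = [\ZeroVector;\; \ZeroVector;\; \SlackVelocity]$, where the slack $\SlackVelocity$ is chosen componentwise as $\SlackVelocity[i] = \max\Braces{0,\, -\min_k (\JD[i]\Velocity)_k}$ so that $\JD[i]\Velocity + \OneVector\SlackVelocity[i] \geq \ZeroVector$ for every contact $i$. Substituting into the block structure of $\LCPMatrix_{\Configuration}$ and $\LCPVector_{\Configuration}$, the four blocks of $\LCPMatrix_{\Configuration}\LCPVariables + \LCPVector_{\Configuration}$ become, respectively, $\NormalForce[max] - \NormalForce = \NormalForce[max] \geq \ZeroVector$; $\SlackForce + \Jn\Velocity = \Jn\Velocity \geq \ZeroVector$ (this is exactly the hypothesis); $\JD\Velocity + \OneVectorMatrix\SlackVelocity \geq \ZeroVector$ (by choice of $\SlackVelocity$); and $\FrictionCoeff\NormalForce - \OneVectorMatrix^T\FrictionBasisForce = \ZeroVector$. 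All four are non-negative, $\LCPVariables \geq \ZeroVector$, and $\LCPVariables^T(\LCPMatrix_{\Configuration}\LCPVariables + \LCPVector_{\Configuration}) = \ZeroVector$ because the only nonzero entries of $\LCPVariables$ lie in the $\SlackVelocity$ block, whose conjugate slack $\FrictionCoeff\NormalForce - \OneVectorMatrix^T\FrictionBasisForce$ is identically $\ZeroVector$. Hence this $\LCPVariables$ solves the LCP with $\bar\Force = \ZeroVector$, as desired.

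For the case $\NormalForce[max] = \ZeroVector$, I would instead argue that \emph{every} solution has $\bar\Force = \ZeroVector$: the complementarity pair \eqref{eq:ImpactSimulationSlackForceComplementairty} enforces $\ZeroVector \leq \NormalForce \leq \NormalForce[max] = \ZeroVector$, so $\NormalForce = \ZeroVector$; then the pair \eqref{eq:SimulationSlackVelocityComplementarity}, which reads $0 \leq \SlackVelocity[i] \perp \FrictionCoeff[i]\NormalForce[i] - \OneVector^T\FrictionBasisForce[i] \geq 0$, degenerates to $-\OneVector^T\FrictionBasisForce[i] \geq 0$, which together with $\FrictionBasisForce[i] \geq \ZeroVector$ forces $\FrictionBasisForce[i] = \ZeroVector$ for each $i$. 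Thus $\bar\Force = \ZeroVector$ and $\Velocity'(\bar\Force) = \Velocity$ in any solution, and Assumption \ref{assump:uniqueLCPoutcome} completes the argument. The only mildly delicate bookkeeping is, in the first case, confirming that the slack $\SlackVelocity$ introduced to absorb the negative entries of $\JD\Velocity$ does not spoil any complementarity pairing; this holds precisely because its conjugate block vanishes when $\bar\Force = \ZeroVector$, so no genuine obstacle arises.
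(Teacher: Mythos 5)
Your proof is correct and follows essentially the same route as the paper's: exhibit (or, in the $\NormalForce[max]=\ZeroVector$ case, force) a zero-impulse LCP solution, so that $\Velocity'(\ZeroVector)=\Velocity$ and Assumption \ref{assump:uniqueLCPoutcome} gives $\vect f_{\Configuration}(\Velocity,\NormalForce[max])=\Velocity$. Your write-up is in fact slightly more careful than the paper's, which omits the explicit choice of $\SlackVelocity$ needed to complete the LCP solution.
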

\begin{proof}
	Observe that if either $\NormalForce[max] = \ZeroVector$ or if $\Velocity$ is not impacting ($\Jn\Velocity \geq \ZeroVector$), we can select zero normal impulse ($\NormalForce = \ZeroVector$, thus $\Velocity' = \Velocity$) and satisfy the normal complementary equations \eqref{eq:ImpactSimulationSlackForceComplementairty}--\eqref{eq:ImpactSimulationNormalForceComplementairty}. Setting $\FrictionBasisForce = \ZeroVector$; $\SlackVelocity = \ZeroVector$; and $\SlackForce$ as the negative part of $\Jn\Velocity$ constitutes a full solution to the LCP.
\end{proof}
The continuity of $\vect f_{\Configuration}$ allows for expansion of the $\Jn\Velocity \geq \ZeroVector$ case; if $\Velocity$ is \textit{almost} terminated, then only a single simulation step with a small $\NormalForce[max]$ is required to end the impact:
\begin{lemma}[Single-Step Termination (Appendix \ref{adx:ApproximateTerminationProof})]
	For all configurations $\Configuration \in \ConfigurationSet_A \setminus \ConfigurationSet_P$, velocities $\Velocity$, and $\varepsilon > 0 $, there exists $\delta(\varepsilon,\Velocity)$, such that for any almost-separating velocity $\bar \Velocity$ ($\Jn\bar \Velocity \geq -\delta(\varepsilon,\Velocity)$) that is sufficiently small ($\Norm{\bar \Velocity}_{\Mass} \leq \Norm{\Velocity}_{\Mass}$), a small impulse can terminate the impact: $\vect f_{\Configuration}(\bar \Velocity,\varepsilon\OneVector ) \not \in \ActiveSet (\Configuration)$.
	\label{lem:ApproximateTermination}
\end{lemma}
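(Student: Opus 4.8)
The plan is a proof by contradiction that combines the Lipschitz continuity of the single-step map, its behavior on non-impacting velocities, and the slack-variable structure of the integration LCP. Suppose the claim fails for some $\Configuration \in \ConfigurationSet_A \setminus \ConfigurationSet_P$, some $\Velocity$, and some $\varepsilon > 0$. Then for every $n \in \Natural$ there is a velocity $\bar\Velocity_n$ with $\Jn\bar\Velocity_n \geq -\tfrac1n\OneVector$, $\Norm{\bar\Velocity_n}_{\Mass} \leq \Norm{\Velocity}_{\Mass}$, and $\vect f_{\Configuration}(\bar\Velocity_n,\varepsilon\OneVector) \in \ActiveSet(\Configuration)$. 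Since the $\Mass$-norm ball of radius $\Norm{\Velocity}_{\Mass}$ is compact, I pass to a subsequence with $\bar\Velocity_n \to \bar\Velocity_\infty$; taking limits in the defining inequalities gives $\Jn\bar\Velocity_\infty \geq \ZeroVector$ (so $\bar\Velocity_\infty$ is non-impacting) and $\Norm{\bar\Velocity_\infty}_{\Mass} \leq \Norm{\Velocity}_{\Mass}$.

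Next I control the step outputs $\Velocity'_n := \vect f_{\Configuration}(\bar\Velocity_n,\varepsilon\OneVector)$. By Lemma \ref{lem:SimulationLCPLipschitz}, $\vect f_{\Configuration}(\cdot,\varepsilon\OneVector)$ is continuous, so $\Velocity'_n \to \vect f_{\Configuration}(\bar\Velocity_\infty,\varepsilon\OneVector)$; since $\Jn\bar\Velocity_\infty \geq \ZeroVector$, Lemma \ref{lem:VelocityStationaryConditions} gives $\vect f_{\Configuration}(\bar\Velocity_\infty,\varepsilon\OneVector) = \bar\Velocity_\infty$, hence $\Velocity'_n - \bar\Velocity_n \to \ZeroVector$. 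This is not yet enough: because $\ActiveSet(\Configuration)$ is open and $\bar\Velocity_\infty$ may lie on its boundary, closeness of $\Velocity'_n$ to the non-impacting set does not place $\Velocity'_n$ in its complement. This boundary issue is the main obstacle, and resolving it requires the quantitative impulse bound rather than mere continuity.

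To close the gap I translate the vanishing velocity increment into a vanishing normal impulse. For each $n$ pick any LCP solution (which exists by Theorem \ref{thm:TimesteppingSolutionExistence}), giving $\Velocity'_n = \bar\Velocity_n + \Mass^{-1}\bar\J^T\bar\Force_n$ with normal part $\NormalForce_n$; note $\bar\Force_n \in \LinearFrictionCone(\Configuration,\Velocity'_n) \subseteq \LinearFrictionCone(\Configuration,\ZeroVector)$ by \eqref{eq:LinearConeZeroContainment}. Applying Lemma \ref{lem:NetForceBoundedByIndividualForces} at the zero velocity (legitimate since $\ZeroVector \in \Closure\ActiveSet(\Configuration)$, and the bound is stated for every force in the linear cone) yields $\Norm{\NormalForce_n}_1 \leq \vect r(\Configuration)\cdot\Mass^{-1}\bar\J(\Configuration)^T\bar\Force_n = \vect r(\Configuration)\cdot(\Velocity'_n - \bar\Velocity_n) \leq \TwoNorm{\vect r(\Configuration)}\,\TwoNorm{\Velocity'_n - \bar\Velocity_n} \to 0$. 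Thus for all large $n$, $\Norm{\NormalForce_n}_\infty \leq \Norm{\NormalForce_n}_1 < \varepsilon$, i.e. the cap $\NormalForce[max]=\varepsilon\OneVector$ is strictly slack at every contact. Then complementarity \eqref{eq:ImpactSimulationSlackForceComplementairty} forces $\SlackForce_n = \ZeroVector$, and \eqref{eq:ImpactSimulationNormalForceComplementairty} reduces to $\ZeroVector \leq \NormalForce_n \perp \Jn\Velocity'_n \geq \ZeroVector$; in particular $\Jn\Velocity'_n \geq \ZeroVector$, so $\Velocity'_n \notin \ActiveSet(\Configuration)$, contradicting the choice of $\bar\Velocity_n$. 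Hence a suitable $\delta(\varepsilon,\Velocity)$ exists.

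In summary, the hard part is not the compactness bookkeeping but the observation that continuity alone cannot certify membership in the (closed) complement of the open set $\ActiveSet(\Configuration)$ near its boundary; the fix is to use Lemma \ref{lem:NetForceBoundedByIndividualForces} to convert a small velocity increment into a small total normal impulse, and then to invoke the specific slack-force complementarity design \eqref{eq:ImpactSimulationSlackForceComplementairty}--\eqref{eq:ImpactSimulationNormalForceComplementairty} of this scheme, under which a non-binding cap certifies $\SlackForce = \ZeroVector$ and therefore $\Jn\Velocity' \geq \ZeroVector$. The only other technical care needed is evaluating Lemma \ref{lem:NetForceBoundedByIndividualForces} at $\ZeroVector$ instead of at $\Velocity'_n$, which is justified by the cone-containment relation \eqref{eq:LinearConeZeroContainment}.
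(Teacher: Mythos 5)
Your proof is correct and follows essentially the same route as the paper's: a compactness/contradiction argument using Lipschitz continuity of $\vect f_{\Configuration}$ and Lemma \ref{lem:VelocityStationaryConditions} to force $\Velocity'_n - \bar\Velocity_n \to \ZeroVector$, then Lemma \ref{lem:NetForceBoundedByIndividualForces} together with the slack-force complementarity structure to reach a contradiction. The only difference is cosmetic: the paper argues ``not terminated $\Rightarrow$ some cap binds $\Rightarrow$ $\Norm{\NormalForce}_1 \geq \varepsilon$ $\Rightarrow$ large velocity increment,'' citing Lemma \ref{lem:NonzeroLambdamaxNonzeroLambda} directly, whereas you take the contrapositive (small increment $\Rightarrow$ small impulse $\Rightarrow$ all caps slack $\Rightarrow$ terminated) and re-derive that lemma's content inline from \eqref{eq:ImpactSimulationSlackForceComplementairty}--\eqref{eq:ImpactSimulationNormalForceComplementairty}.
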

\begin{figure}
	\centering
    \includegraphics[width=0.7\linewidth,trim={0mm 0mm 15mm 12mm},clip]{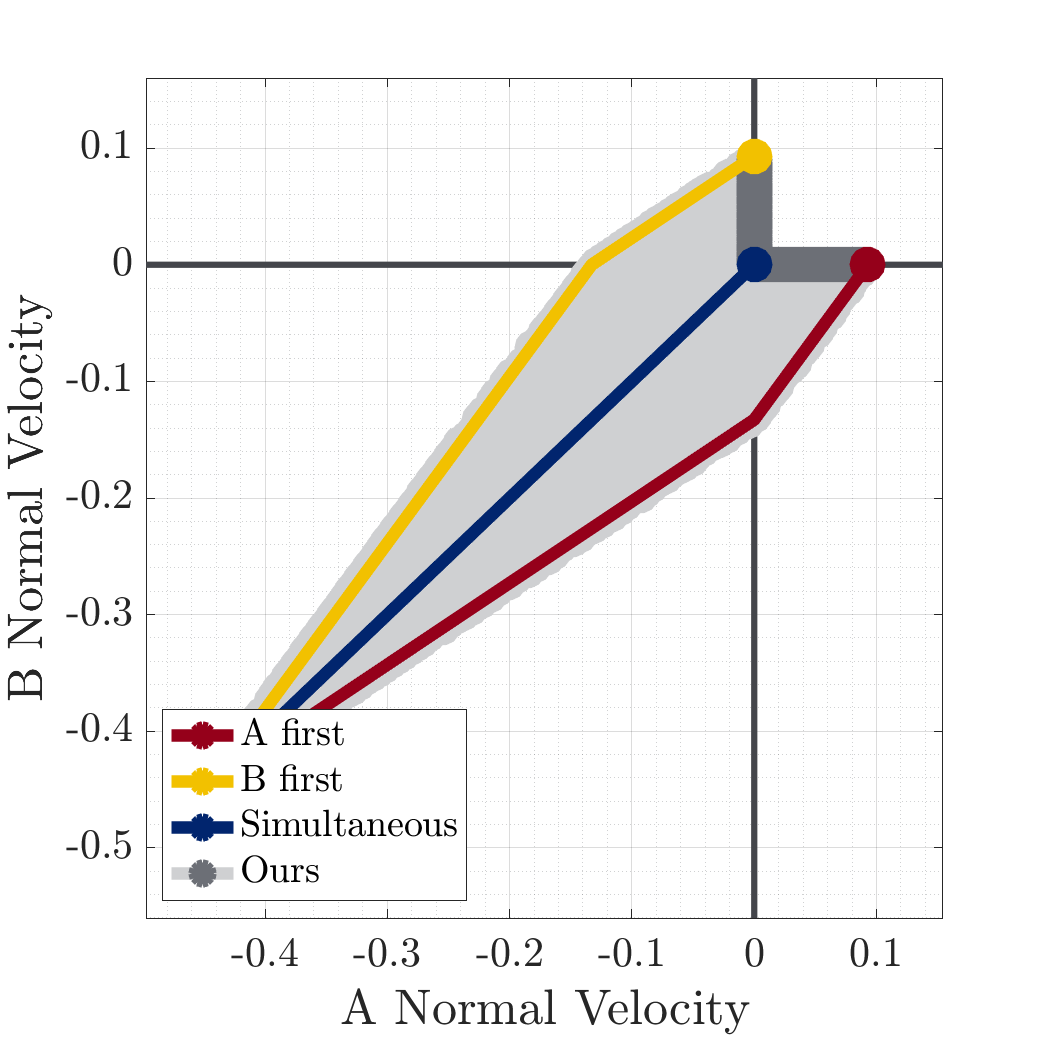}
    \includegraphics[width=0.7\linewidth,trim={0mm 0mm 15mm 12mm},clip]{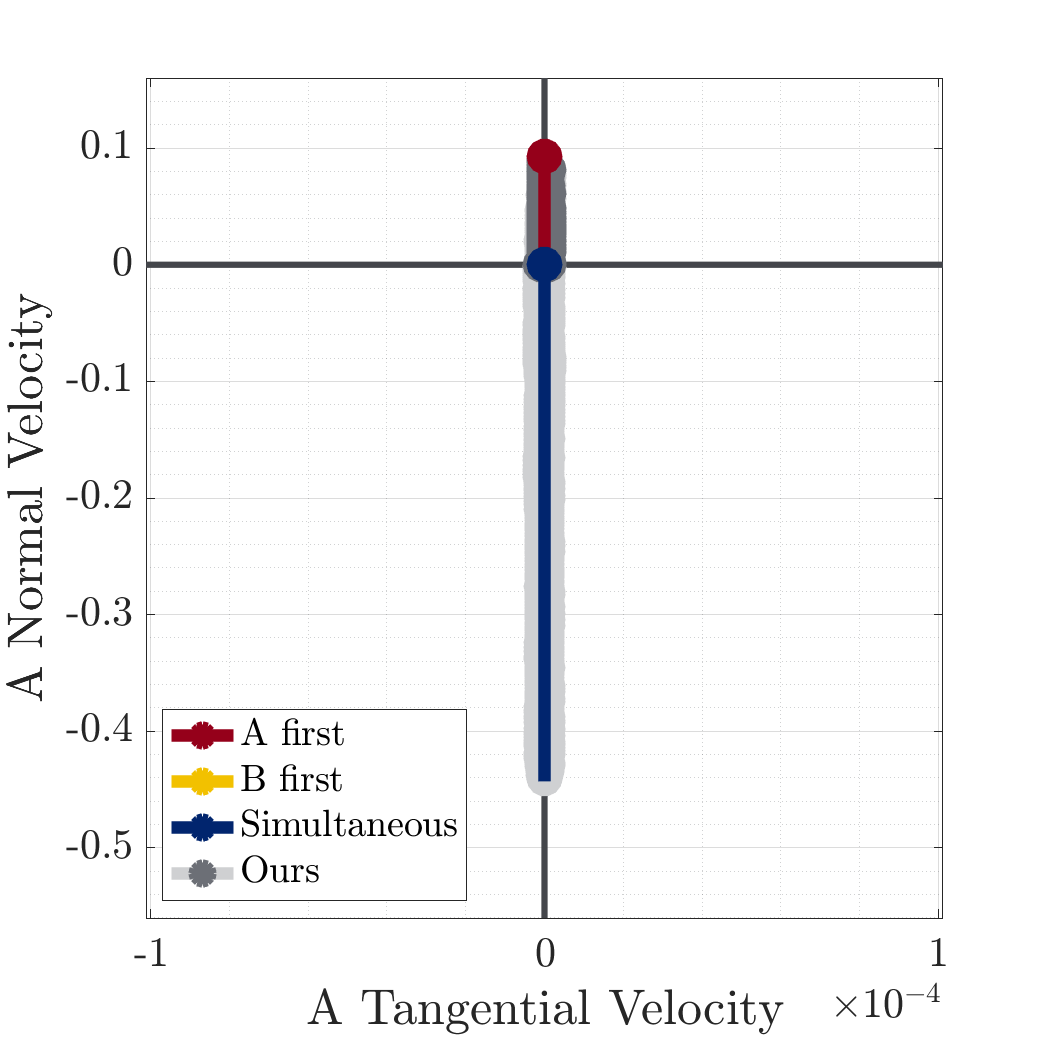}
    \caption{Evolution of the rocking block impact, displayed as the normal velocities of the two points (top), and the normal and tangential velocity of point A (bottom). Our method generates all three outcomes from Figure \ref{fig:phone}, as well as intermediate results between the symmetric and sequential impacts.}
    \label{fig:phone_evolution}
\end{figure}
\begin{figure*}[ht]

    \centering
    \includegraphics[width=0.33\hsize,trim={0mm 0mm 15mm 12mm},clip]{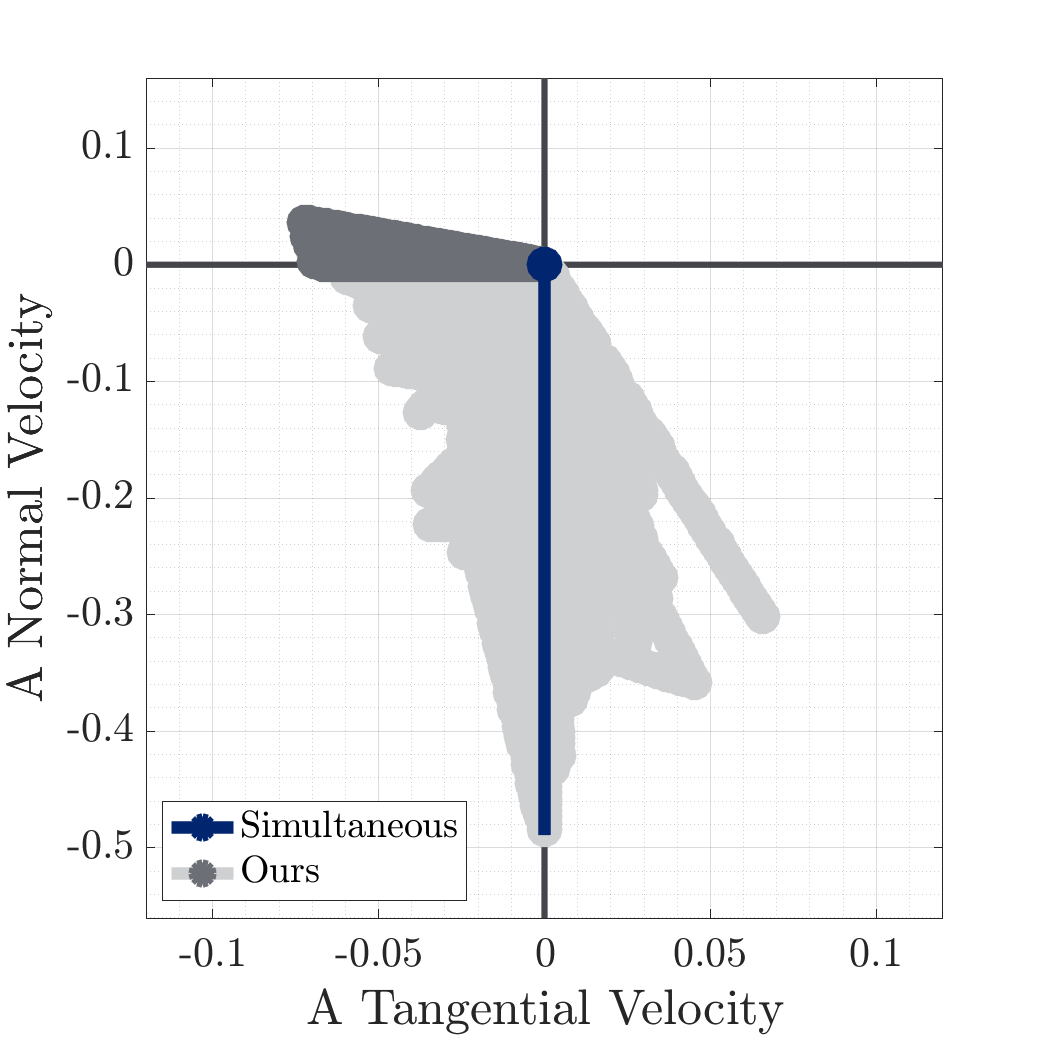}
    \includegraphics[width=0.33\hsize,trim={0mm 0mm 15mm 12mm},clip]{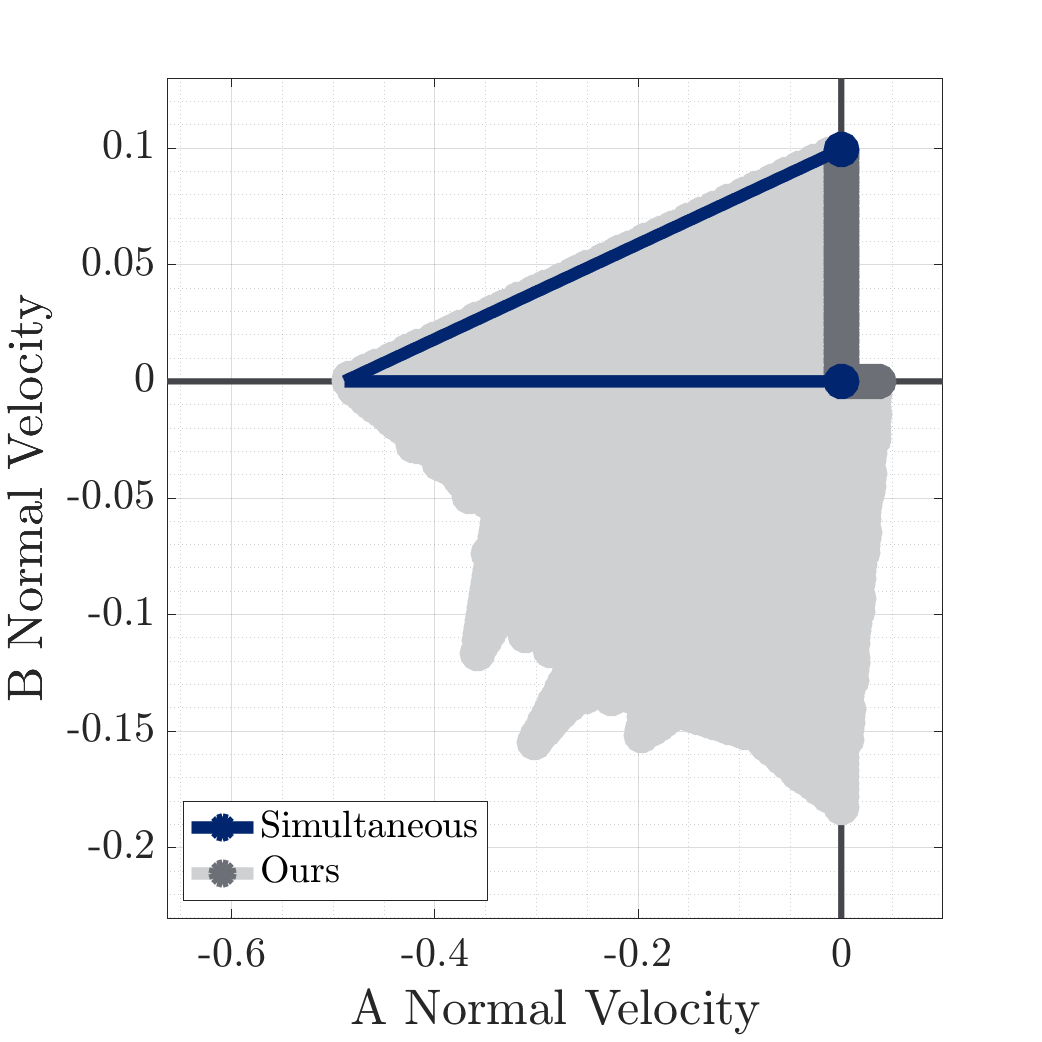}
    \includegraphics[width=0.33\hsize,trim={0mm 0mm 15mm 12mm},clip]{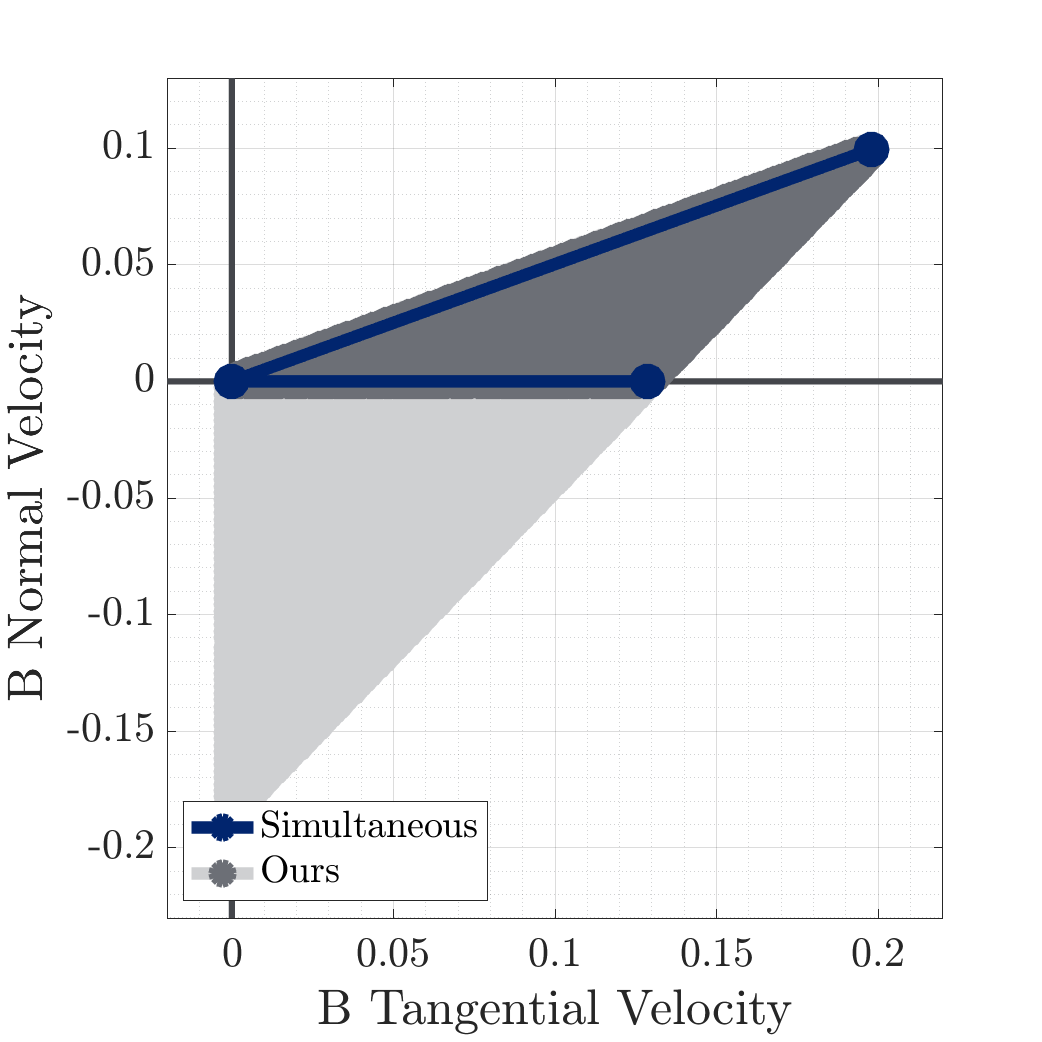}
    \caption{Evolution of the compass gait step. The center plot compares the normal velocities of the two contacts, while the left and right show velocities of points A and B, respectively. Our model produces the three outcomes in Figure \ref{fig:compass_examples}, as well as all reasonable intermediate velocities of point B. Furthermore, oscillation of impact between the feet allows point A to slide or lift off, while point B maintains contact.}
    \label{fig:compass_evolution}\vspace{2mm}
        \centering
    \includegraphics[width=0.33\linewidth,trim={2mm 2mm 15mm 12mm},clip]{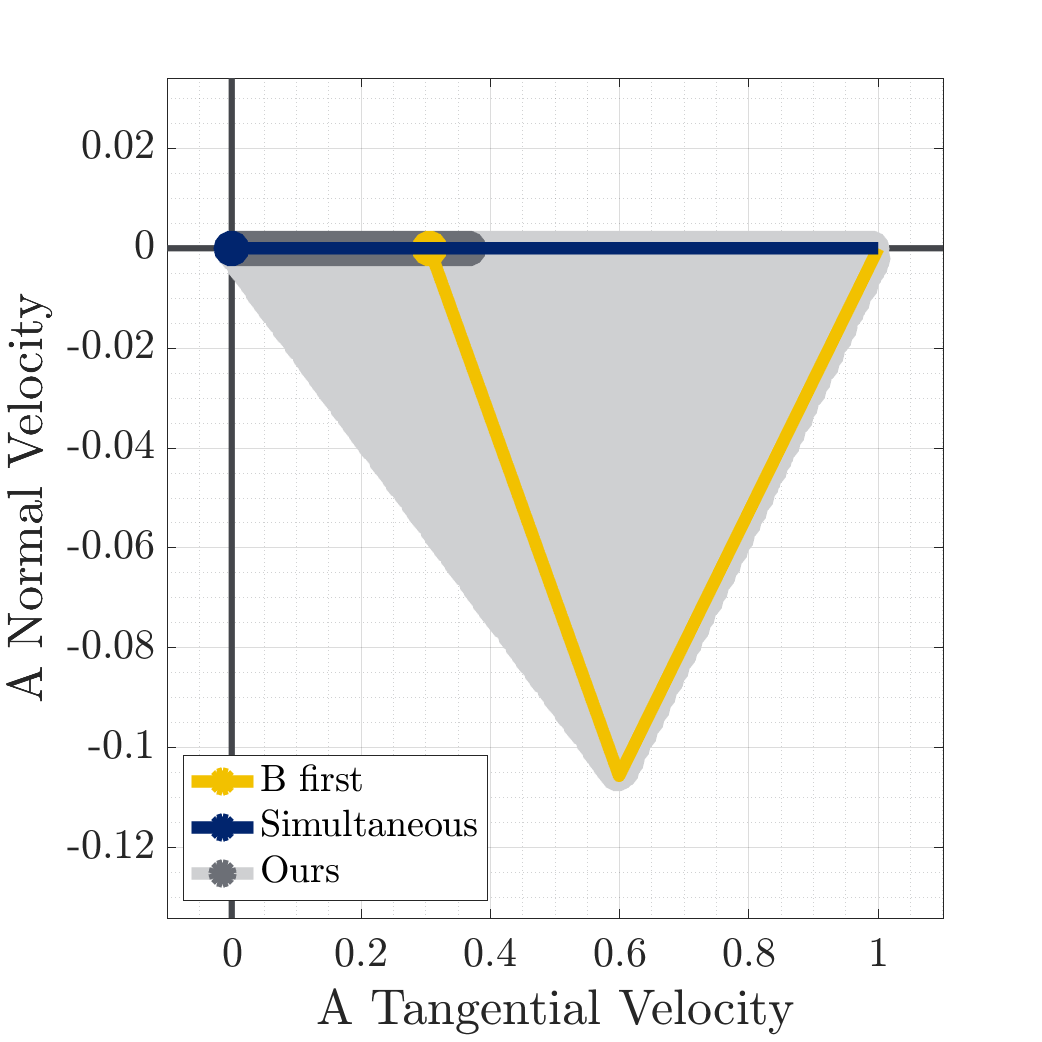}
    \includegraphics[width=0.33\linewidth,trim={2mm 2mm 15mm 12mm},clip]{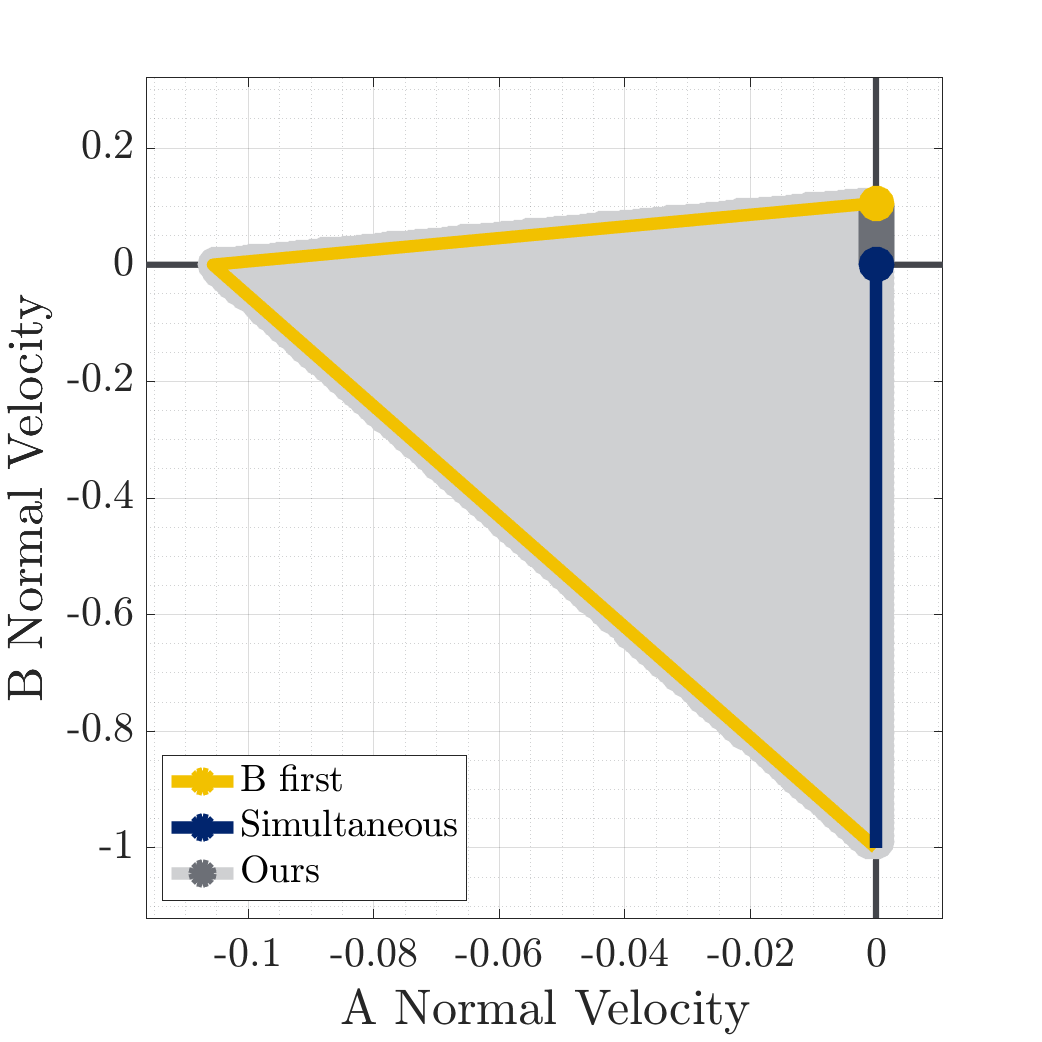}
    \includegraphics[width=0.33\linewidth,trim={2mm 2mm 15mm 12mm},clip]{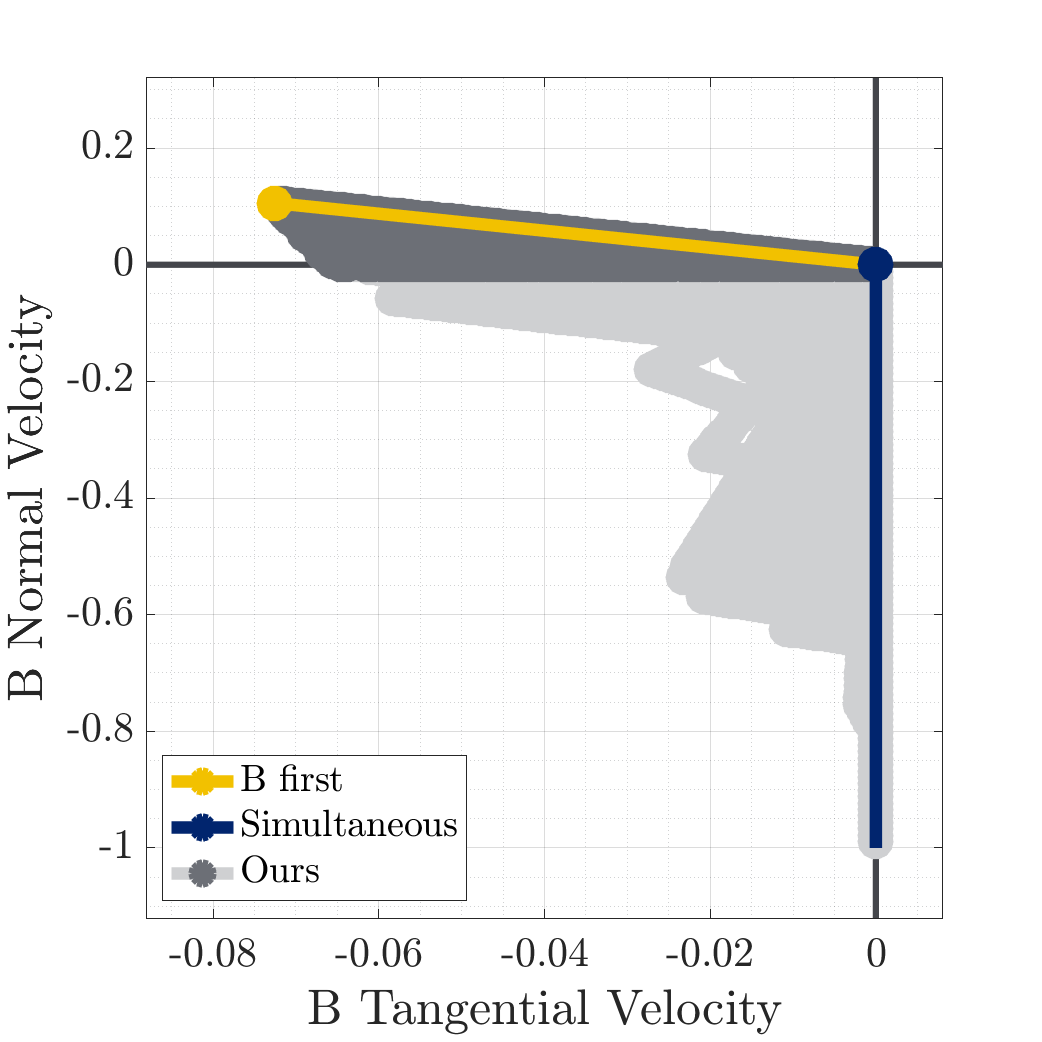}
    \caption{Evolution of the box-wall impact. The center plot compares the normal velocities of the two contacts, while the left and right show velocities of points A and B, respectively. Our model produces both the simultaneous and sequential outcome in Figure \ref{fig:box}, as well as all reasonable intermediate velocities where A still slides and B lifts off. Furthermore, results are also generated where B slides instead of sticking or lifting off.}
    \label{fig:box_evolution}

\end{figure*}

We now iteratively define the reachable set of Alg. \ref{alg:ImpactSimulation}. Let $\VelocitySet_N(\State_0,h)$ be the set of possible outputs of $\mathrm{Sim}(h,\State_0,N)$. Then we have that
\begin{align}
	\VelocitySet_0(\State_0,h) &= \Braces{\Velocity_0}\,,\\
	\VelocitySet_i(\State_0,h) &= \vect f_{\Configuration}(\VelocitySet_{i-1}(\State_0,h),[0,h]^\Contacts)\,,\label{eq:ReachabilityStep} \\
	\VelocitySet_i(\State_0,h) &\supseteq \VelocitySet_{i-1}(\State_0,h)\,.\label{eq:VPlusMonotone}
\end{align}
Here, we used
\Cref{lem:VelocityStationaryConditions} to ignore early termination (i.e. $\Jn\Velocity \geq \ZeroVector$ before $N$ loop iterations) in \eqref{eq:ReachabilityStep}, and
to establish the monotonic growth in \eqref{eq:VPlusMonotone}.
We construct the entire set of reachable velocities as
\begin{equation}
	\VelocitySet_\infty(\State_0,h) = \cup_{i\in\Natural} \VelocitySet_i(\State_0,h)\,.
\end{equation}
$\VelocitySet_i(\State_0,h)$ can approximate $\VelocitySet_\infty(\State_0,h)$ with arbitrarily well:
\begin{lemma}
	Consider a configuration $\Configuration_0 \in \ConfigurationSet_A \setminus \ConfigurationSet_P$; velocity $\Velocity_0$, and step-size $h \geq 0$. Then for each $\varepsilon > 0$, there exists an $i$, such that $\VelocitySet_i([\Configuration_0;\;\Velocity_0],h)$ is an $\varepsilon$-net of $\VelocitySet_\infty([\Configuration_0;\;\Velocity_0],h)$.\label{lem:VPlusNConvergence}
\end{lemma}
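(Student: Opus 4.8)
The plan is to prove this by a compactness argument, using only two structural properties of the reachable sets: that the family $\Braces{\VelocitySet_N(\State_0,h)}_{N}$ is monotonically increasing and that all of its members lie in a fixed compact set. First I would establish boundedness. By Theorem~\ref{thm:TimesteppingDissipation} every impulse increment admitted by the step LCP is dissipative, so induction on the recursion \eqref{eq:ReachabilityStep} (with base case $\VelocitySet_0(\State_0,h) = \Braces{\Velocity_0}$) gives $\Norm{\Velocity}_{\Mass} \leq \Norm{\Velocity_0}_{\Mass}$ for every $\Velocity \in \VelocitySet_N(\State_0,h)$, hence for every $\Velocity \in \VelocitySet_\infty(\State_0,h)$. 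Since $\Mass(\Configuration_0) \succeq c_2\Identity$, this bounds $\VelocitySet_\infty(\State_0,h)$ inside the Euclidean ball of radius $\Norm{\Velocity_0}_{\Mass}/\sqrt{c_2}$, so $K := \Closure(\VelocitySet_\infty(\State_0,h))$ is compact. (Lemma~\ref{lem:SimulationLCPLipschitz} additionally makes each $\VelocitySet_N(\State_0,h)$ compact in its own right, as the continuous image of the compact set $\VelocitySet_{N-1}(\State_0,h) \times [0,h]^{\Contacts}$, but this is not needed below.)

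Next I would record the monotonicity \eqref{eq:VPlusMonotone}: taking $\NormalForce[max] = \ZeroVector$ in \eqref{eq:ReachabilityStep} leaves the velocity fixed by Lemma~\ref{lem:VelocityStationaryConditions}, so $\VelocitySet_0(\State_0,h) \subseteq \VelocitySet_1(\State_0,h) \subseteq \cdots$, and by construction $\bigcup_N \VelocitySet_N(\State_0,h) = \VelocitySet_\infty(\State_0,h)$ is dense in $K$. The proof then reduces to a standard covering fact: the open balls $\Braces{\vect u + \Ball[\varepsilon] : \vect u \in \VelocitySet_\infty(\State_0,h)}$ cover $K$ by density, so by compactness of $K$ finitely many of them --- centered at $\Velocity_1,\dots,\Velocity_p$ --- already cover $K$. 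Each $\Velocity_j$ lies in some $\VelocitySet_{N_j}(\State_0,h)$; setting $N = \max_j N_j$ and using monotonicity, all the centers lie in $\VelocitySet_N(\State_0,h)$, so every point of $K$ --- and in particular every point of $\VelocitySet_\infty(\State_0,h) \subseteq K$ --- lies within $\varepsilon$ of a point of $\VelocitySet_N(\State_0,h)$. Since also $\VelocitySet_N(\State_0,h) \subseteq \VelocitySet_\infty(\State_0,h)$, this $\VelocitySet_N(\State_0,h)$ is an $\varepsilon$-net of $\VelocitySet_\infty(\State_0,h)$, as required.

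I expect the main subtlety to be recognizing that a \emph{quantitative} choice of $N$ in terms of $\varepsilon$ is hard to extract directly: a run of Algorithm~\ref{alg:ImpactSimulation} may dwell near a velocity for arbitrarily many iterations using vanishingly small normal impulses $\NormalForce[max]$, so no fixed truncation depth uniformly approximates every reachable velocity. The compactness argument sidesteps this at the cost of being non-constructive, which is acceptable because the effective, sample-based version is handled separately in Theorem~\ref{thm:VelocitySetApproximation} via Lemma~\ref{lem:SimulationLCPLipschitz}, Lemma~\ref{lem:ApproximateTermination}, and Proposition~\ref{prop:epsilonnet}. A small technical point to keep straight throughout is the passage between the $\Mass$-weighted norm, in which dissipation is phrased, and the Euclidean norm, in which $\varepsilon$-nets are defined; this is harmless since $c_2\Identity \preceq \Mass \preceq c_1\Identity$.
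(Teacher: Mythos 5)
Your proof is correct and follows essentially the same route as the paper, which likewise deduces the claim from monotonicity \eqref{eq:VPlusMonotone} and uniform boundedness via Theorem~\ref{thm:TimesteppingDissipation}; you have simply made explicit the compactness/finite-subcover step that the paper leaves implicit in asserting convergence ``in the $\varepsilon$-net sense.'' No gaps.
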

\begin{proof}
	 $\VelocitySet_i([\Configuration_0;\;\Velocity_0],h)$ is a monotonic \eqref{eq:VPlusMonotone} and uniformly bounded (via \Cref{thm:TimesteppingDissipation}) sequence of sets. It is then convergent in the $\varepsilon$-net sense to $\cup_i \VelocitySet_i =\VelocitySet_\infty([\Configuration_0;\;\Velocity_0],h)$.
\end{proof}
Similarly, the post-impact reachable set is simply the reachable velocities which are non-penetrating:
\begin{equation}
	\mathrm{Sim}(h,\State_0,\infty) \in \VelocitySet_\infty(\State_0,h) \setminus \ActiveSet(\Configuration_0)\,.
\end{equation}
\begin{algorithm}[h]
\small
\SetAlgoLined
\KwIn{step size $h$, initial state $\State_0 = [\Configuration_0;\;\Velocity_0]$, approximation $\varepsilon \in (0,h)$, trajectory length $N$, trajectory count $M$}
\KwOut{post-impact set approximation $\tilde\VelocitySet^+$}
  $\tilde \VelocitySet^+ \gets \Braces{}$\;
  $\psi \gets \sigma_{max}\Parentheses{\Mass^{-1}\bar\J^T}\Contacts(1 + \max_i\FrictionCoeff[i]) + 1$\label{line:psi} \;
 \For{$i=1$ \KwTo $M$}{
  $\Velocity \gets \mathrm{Sim}(h,\State_0,N)$\;\label{line:ApproximationSimulation}
  $\tilde \VelocitySet^+ \gets \tilde\VelocitySet^+ \cup \Braces{\vect f_{\Configuration_0}(\Velocity,\frac{\varepsilon}{3\psi}\OneVector_\Contacts)}$\label{line:FinalStep}\;
 }
 $\tilde \VelocitySet^+ \gets \tilde \VelocitySet^+ \setminus \ActiveSet(\Configuration_0)$\;
 \caption{$\mathrm{Approximate}(h,\State_0,\varepsilon,N,M)$\label{alg:Approximate}}
\end{algorithm}

We can finally use the above derived properties to construct a method, Algorithm \ref{alg:Approximate}, for approximating the post-impact set. \Cref{lem:VPlusNConvergence} and \Cref{prop:epsilonnet} together show that $M$ samples from $\mathrm{Sim}(h,\State_0,N)$ well-approximate $\VelocitySet_\infty$, and can be forced to terminate with only a small additional step (\Cref{lem:ApproximateTermination}). Therefore, Algorithm \ref{alg:Approximate} is approximately complete:
\begin{theorem}
	Consider an initial configuration $\Configuration_0 \in \ConfigurationSet_A \setminus \ConfigurationSet_P$, initial velocity $\Velocity_0 \in \VelocitySpace$, and step-size $h > 0$.
	For all $\varepsilon,\delta > 0$, there exists $N,M > 0$, such that $\mathrm{Approximate}(h,\State_0,\varepsilon,N,M)$ returns an $\varepsilon$-net of $\VelocitySet_\infty(\State_0,h) \setminus \ActiveSet(\Configuration_0)$ with probability at least $1-\delta$.
	\label{thm:VelocitySetApproximation}
\end{theorem}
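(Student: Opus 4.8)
The plan is to chain together the results already established for the single step map $\vect f_{\Configuration_0}$ and the reachable sets $\VelocitySet_N$. First I would invoke Lemma \ref{lem:VPlusNConvergence} to choose $N_1$ so that $\VelocitySet_{N_1}(\State_0,h)$ is an $(\varepsilon/3)$-net of $\VelocitySet_\infty(\State_0,h)$. This handles the ``how deep do the trajectories need to go'' question deterministically. Next I would use Lemma \ref{lem:ApproximateTermination}: since $\VelocitySet_{N_1}$ is a bounded set on which $\Norm{\cdot}_{\Mass} \leq \Norm{\Velocity_0}_{\Mass}$ (Theorem \ref{thm:TimesteppingDissipation}), there is a $\delta_0 > 0$ such that any velocity $\bar\Velocity$ with $\Jn\bar\Velocity \geq -\delta_0$ and $\Norm{\bar\Velocity}_{\Mass} \leq \Norm{\Velocity_0}_{\Mass}$ satisfies $\vect f_{\Configuration_0}(\bar\Velocity,\tfrac{\varepsilon}{3\psi}\OneVector) \not\in \ActiveSet(\Configuration_0)$. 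I then want $N \geq N_1$ large enough that \emph{every} velocity in $\VelocitySet_{N}$ is within the ``almost-terminated'' regime of some point of $\VelocitySet_\infty \setminus \ActiveSet(\Configuration_0)$ --- concretely, because $\VelocitySet_N \to \VelocitySet_\infty$ in the $\varepsilon$-net sense and the true post-impact set is closed and consists of non-impacting velocities, for large $N$ each sample either has already terminated or is within $\delta_0/\Norm{\bar\J}$ (in the $\Jn$ sense) of a terminated velocity, so the extra step on Line \ref{line:FinalStep} lands it in $\Complement{\ActiveSet(\Configuration_0)}$.

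The next step is the probabilistic content. By Proposition \ref{prop:epsilonnet} applied to $g = $ the composition of the $N$ single-step maps --- which is Lipschitz (with some constant $L_N$) by Lemma \ref{lem:SimulationLCPLipschitz} and Proposition \ref{prop:LipschitzComposition}, since $\mathrm{Sim}(h,\State_0,N)$ is exactly $g$ evaluated on an i.i.d.\ uniform sample from $[0,h]^{N\Contacts}$ --- $M$ samples give an $(\varepsilon/3)$-net of $g([0,h]^{N\Contacts}) = \VelocitySet_N(\State_0,h)$ with probability at least $1 - (1-\Omega)^M/\Omega$ for the stated $\Omega$. Choosing $M$ large makes this $\geq 1-\delta$. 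So with probability $\geq 1-\delta$ the $M$ sampled pre-final-step velocities form an $(\varepsilon/3)$-net of $\VelocitySet_N$, which is itself an $(\varepsilon/3)$-net of $\VelocitySet_\infty$; composing, they are a $(2\varepsilon/3)$-net of $\VelocitySet_\infty$.

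It then remains to show that applying the final step $\vect f_{\Configuration_0}(\cdot,\tfrac{\varepsilon}{3\psi}\OneVector)$ and then deleting points in $\ActiveSet(\Configuration_0)$ still yields an $\varepsilon$-net of $\VelocitySet_\infty \setminus \ActiveSet(\Configuration_0)$. Here the constant $\psi$ on Line \ref{line:psi} does the work: a single step moves a velocity by $\Mass^{-1}\bar\J^T\bar\Force$, and since each $\NormalForce[i] \leq \varepsilon/(3\psi)$ one bounds $\Norm{\bar\Force}$ and hence the displacement by $\sigma_{max}(\Mass^{-1}\bar\J^T)\Contacts(1+\max_i\FrictionCoeff[i]) \cdot \tfrac{\varepsilon}{3\psi} \leq \varepsilon/3$ (using $\Norm{\FrictionBasisForce[i]}_1 \leq \FrictionCoeff[i]\NormalForce[i]$). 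Thus the final step perturbs every sample by at most $\varepsilon/3$; combined with the $(2\varepsilon/3)$-net property this gives an $\varepsilon$-net of $\VelocitySet_\infty$. Finally, any target point $\Velocity^* \in \VelocitySet_\infty \setminus \ActiveSet(\Configuration_0)$ is within $\varepsilon$ of some post-final-step sample; that sample is non-impacting by the Lemma \ref{lem:ApproximateTermination} argument above, hence survives the deletion $\tilde\VelocitySet^+ \setminus \mathcal{I}(\Configuration_0)$, so the returned set is an $\varepsilon$-net of the post-impact set. I expect the main obstacle to be the middle step --- rigorously arguing that for large enough $N$ \emph{all} (not merely most) of $\VelocitySet_N$ lies in the basin where Lemma \ref{lem:ApproximateTermination} forces termination after the single small final step, since Lemma \ref{lem:VPlusNConvergence} only gives $\varepsilon$-net convergence and one must convert closeness-to-$\VelocitySet_\infty$ into closeness-to-the-non-impacting-part of it, using that $\VelocitySet_\infty$'s impacting portion is ``shrinking'' under further iteration. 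Carefully bookkeeping the three $\varepsilon/3$ budgets and the interaction of $\delta_0$ with $N$ is the delicate part; everything else is assembly of cited results.
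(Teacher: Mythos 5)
Your overall architecture matches the paper's: pick $N$ so that $\VelocitySet_N(\State_0,h)$ is an $\varepsilon/3$-net of $\VelocitySet_\infty(\State_0,h)$ (Lemma \ref{lem:VPlusNConvergence}); apply Proposition \ref{prop:epsilonnet} to the Lipschitz composition of $N$ single-step maps (Lemma \ref{lem:SimulationLCPLipschitz}, Proposition \ref{prop:LipschitzComposition}) to get a fine net of $\VelocitySet_N$ with probability $1-\delta$; bound the displacement of the final step by $\varepsilon/3$ using $\psi$; and invoke Lemma \ref{lem:ApproximateTermination} so that the covering samples survive the deletion. The probabilistic step and the three-way budget are correct.

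The genuine gap is your middle step. You ask for $N$ large enough that \emph{every} element of $\VelocitySet_N$ lies in the almost-terminated regime, justified by ``$\VelocitySet_N\to\VelocitySet_\infty$'' and the claim that the impacting portion of $\VelocitySet_\infty$ shrinks under further iteration. Neither holds: by \eqref{eq:VPlusMonotone} the sets $\VelocitySet_N$ are \emph{monotonically increasing}, so $\Velocity_0\in\VelocitySet_N\subseteq\VelocitySet_\infty$ for every $N$, and the impacting initial velocity (along with every partially-resolved intermediate velocity) never leaves the reachable set; no choice of $N$ makes all of $\VelocitySet_N$ almost-terminated. The property you actually need is weaker and is not a function of $N$: only the particular sample $\Velocity_3$ that covers a given target $\Velocity_1\in\VelocitySet_\infty(\State_0,h)\setminus\ActiveSet(\Configuration_0)$ must satisfy $\Jn\Velocity_3\geq-\delta_0$, and this is obtained by making the \emph{sampled net fine enough}. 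The paper requires the $M$ samples to form an $\varepsilon'$-net of $\VelocitySet_N$ with $\varepsilon'=\min\Parentheses{\varepsilon/3,\ \delta_0/\sigma_{max}(\Jn)}$, so that the covering sample is within $\delta_0/\sigma_{max}(\Jn)$ of a non-impacting velocity and hence $\Jn\Velocity_3\geq-\delta_0$; this is a constraint on $M$ (and on the fineness of the net), not on $N$. Samples far from the terminated set, such as $\Velocity_0$ itself, may well remain impacting after the final step on Line \ref{line:FinalStep}; they are simply removed by $\tilde\VelocitySet^+\setminus\mathcal{I}(\Configuration_0)$, which cannot damage the $\varepsilon$-net property since the returned set stays inside $\VelocitySet_\infty(\State_0,h)\setminus\ActiveSet(\Configuration_0)$. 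With your middle step replaced by this constraint on $\varepsilon'$, the rest of your argument assembles as written.
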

\begin{proof}
	See Appendix \ref{adx:VelocitySetApproximationProof}
\end{proof}

\subsection{Numerical Examples}
\label{sec:examples}

We now show several examples of the post-impact velocity sets generated by our model.
The MATLAB code is available online\endnotemark[2].
We analyze three examples shown thus far (Figs. \ref{fig:phone}, \ref{fig:compass_setup}, \ref{fig:box}), along with two more complex systems.

For each system, we plot the evolution of the velocity through the impact process with lines, projected onto the contact frames; these plots compare the impact process in our method to both sequential and simultaneous resolution via \Cref{eq:AnitestcuDiscreteFormulation} \citep{Anitescu97}, as described in \Cref{subsec:impactbackground}.
Our method is shown in gray and simultaneous resolution via \citet{Anitescu97} is shown in blue.
For two-contact impacts, we show the two sequential resolutions $(A,B,A,\dots)$ and $(B,A,B,\dots)$ in red and yellow.
We show samples of the post-impact velocity sets generated via Algorithm \ref{alg:Approximate}, as a dark gray region.
The light gray region by contrast traces the intermediate velocities encountered along the impact-resolving trajectories of our model.
For some examples, axes of symmetry were used to duplicate samples.

All examples were computed on a MacBook Pro with 2.4 GHz Quad-Core Intel Core i5.
In \Cref{table:CompStats}, we report mean runtime of our algorithm for each of these examples in terms of the number of LCP steps to resolve each impact; wall-clock time of each impact sample; and wall-clock computation time for each LCP solve.
In general, we find that the step sizes implemented in our examples are capable of terminating all impacts within a handful of steps.
From a simulation perspective, generating a single sample would therefore only be a few times slower than e.g. the LCP method of \citet{Anitescu97}, with solve times on the order of \SI{2}{\milli\second}.
However, global set approximation takes between $2^{10}$ and $2^{20}$ samples depending on the example (see \Cref{adx:exampledetails}), and thus fast set approximation remains an open challenge.

\begin{table}
\centering
	\caption{Computational Performance for Post-impact Set Sampling}
	\label{table:CompStats}
	\small
	\centering
	\setlength\tabcolsep{1.2mm}
	\begin{tabular}{c c c c}
		\toprule
		Example & LCP's / Sample & Time / Sample  & Time / LCP \\
		\midrule
		Rocking Block & 2.67 & \SI{2.1e-3}{\second} & \SI{8.0e-4}{\second} \\
		Compass Gait & 2.27 & \SI{1.9e-3}{\second}  & \SI{8.5e-4}{\second} \\
		Box and Wall & 1.97 & \SI{1.6e-3}{\second}  & \SI{8.1e-4}{\second} \\
		RAMone & 3.20 & \SI{2.2e-3}{\second}  & \SI{7.0e-4}{\second} \\
		Disk Stacking & 9.04 & \SI{1.1e-2}{\second}  & \SI{1.3e-3}{\second} \\
		\bottomrule
	\end{tabular}
\end{table}

Additionally, for the Rocking Block example, we analyze whether it is valid to interpret the set of predictions of our model as the results of highly-sensitive outcomes of impacts between highly-stiff, deformable bodies.
\subsubsection{Rocking Block}\label{subsec:rockingblockexample}
We revisit dropping a narrow, rectangular object onto flat ground (Fig. \ref{fig:phone}), which may either result in the object coming to rest or pivoting on a corner.
As shown in Figure \ref{fig:phone_evolution}, our method produces each of these symmetric and sequential outcomes.
The real-world analogues of these three outcomes are that the short but non-zero duration impacts either happen at the exact same time and rate or sequentially with no overlap.
Our model also produces analogues to where there is some partial overlap in these durations, for which scaled-down versions of the purely-sequential outcomes (i.e., rolling on one foot with a smaller angular velocity) is the final result.

To examine whether or not these additional results can be physically interpreted as originating from sensitivity to impact ordering, we employ a compliant contact simulation scheme as a point of comparison.
Under these dynamics, the block evolves through time according to the manipulator equations \eqref{eq:ManipulatorEquations}, with contact forces determined by a Kelvin-Voigt linear elasticity model.
The normal forces are applied at each corner $ i \in \Braces{\vect A, \vect B}$ (see \Cref{fig:phone}), computed as
\begin{equation}
	\NormalForce[i] = \begin{cases}
		\max\Parentheses{0,-k_i\Gap_i - 2\zeta \sqrt{\frac{k_i}{m}} \TimeDiff{\Gap_i}} & \Gap_i \leq 0\,,\\
		0 & \Gap_i > 0\,.
	\end{cases}
\end{equation}
$k_i$ is the contact stiffness at point $i$.
$\zeta$ is the damping ratio, which can make impacts inelastic if set high enough; for all experiments, we use $\zeta = 5$ and stiffness at least \SI{1e6}{\newton\per\meter} to approximate instantaneous, inelastic impacts.
As the resulting impacts are inelastic, we can consider a continuous-time collision as being terminated when the penetrating velocity has nearly vanished i.e. $\Jn[i]\Velocity \geq -\delta_i$ for all active contacts $\Gap_i(\Configuration) \leq 0$.
For all experiments, we use $\delta_i = \frac{m}{\min_i k_i}$ \SI{1e-3}{\meter\per\second}.
The friction forces follow a typical continuous approximation of Coulomb friction \eqref{eq:CoulombForce} \citep{Nurkanovic2021,Castro2020}, with
\begin{align}
	\FrictionForce[i] &= -\FrictionCoeff[i]\NormalForce[i]\Unit_\varepsilon(\Jt[i]\Velocity)\,,\\
	\Unit_\varepsilon(\vect r) &= \frac{\vect r}{\max\Parentheses{\TwoNorm{\vect r}, \varepsilon}}\,.
\end{align}
For all experiments, we set $\varepsilon = \SI{1e-10}{\meter\per\second}$.
We simulate trajectories of this stiff system with MATLAB's stiff ODE solver \texttt{ode15s}.

\begin{figure}[t]
	\centering
    \includegraphics[width=0.8\linewidth]{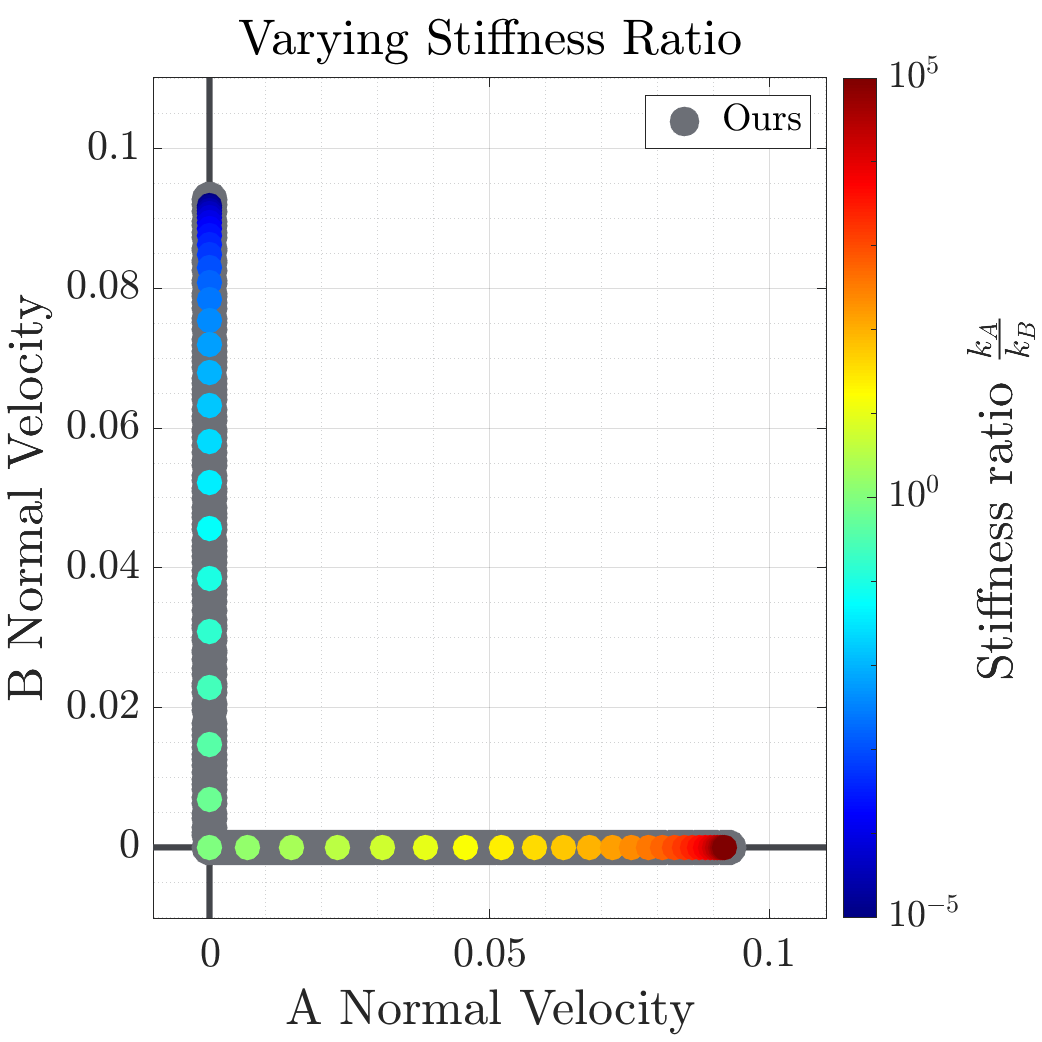}
    \includegraphics[width=0.8\linewidth]{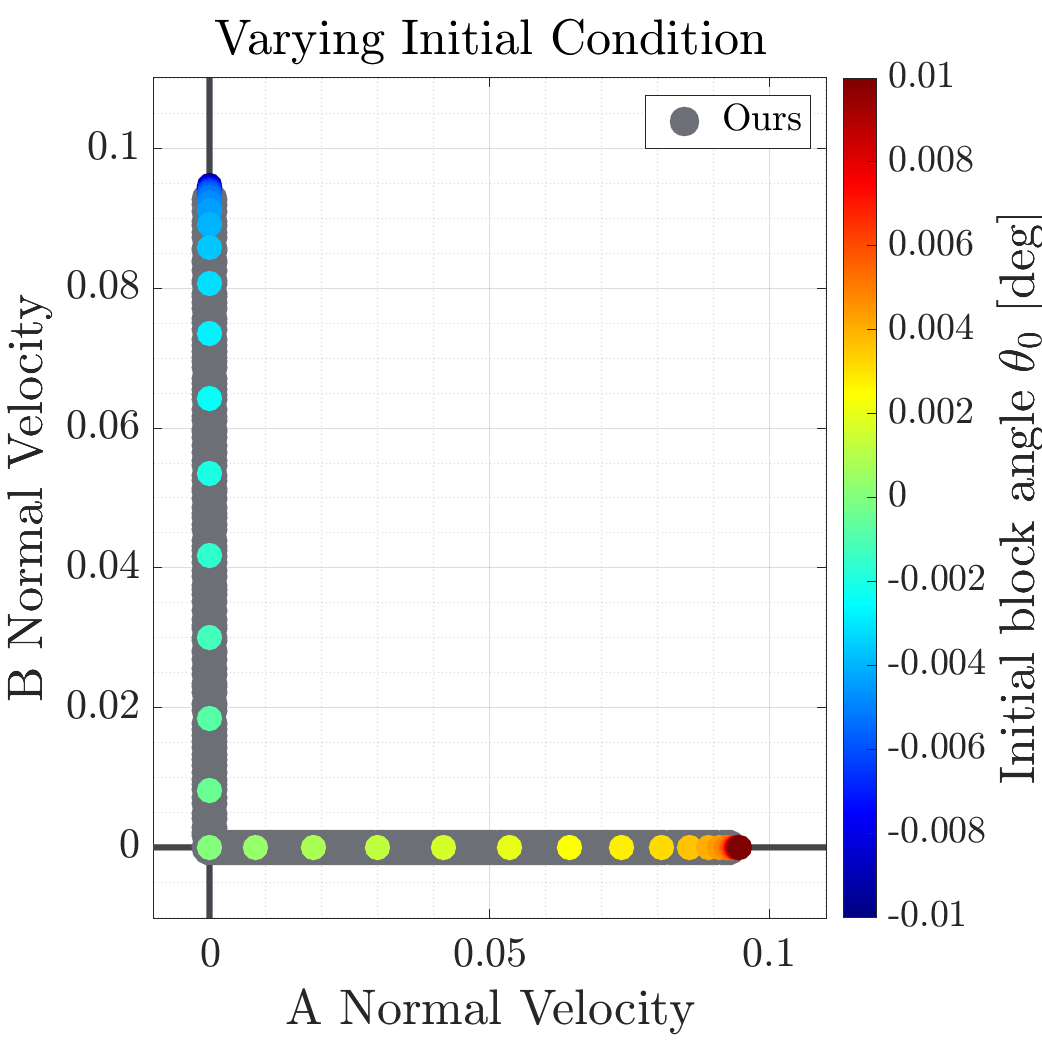}
    \caption{Post-impact velocities of perturbations to the rocking block impact under compliant contact are well-captured by our model's predictions.
    Velocities under perturbation are are plotted as circles of varying color, projected onto the contact normals.
    Top: the stiffness ratio between the contacts is varied from $10^{-5}$ to $10^5$. 
    The set of compliant-contact outcomes is properly contained in and nearly covers the entirety of our model's predictions.
    Bottom: the angle of the block before the impact is varied from \SI{-1e-2}{\deg} to \SI{1e-2}{\deg}.
    As before, nearly the entire set of our model's predictions is covered by these perturbations.
    However, the maximum post-impact separating velocity is slightly larger than our model's predictions.}
    \label{fig:block_sensitivity}
\end{figure}

We consider two phenomena in the compliant contact domain which may lead to differing outcomes: different stiffness ratios $\frac{k_A}{k_B}$ and slight changes in initial conditions.
In the former, we vary $\frac{k_A}{k_B}$ from $10^{-5}$ to $10^5$, while holding $\min(k_A,k_B) =\SI{1e6}{\newton\per\meter}$ and holding the pre-impact state the same as in \Cref{fig:phone_evolution};
$10^5$ was chosen at the maximum ratio due to numerical limitations of the ODE solver.
In the latter, we vary the initial angle $\theta$ at which the block hits the ground, such that the collision process will start at one corner first.
We vary $\theta_0$ from \SI{-1e-2}{\deg} to \SI{1e-2}{\deg}, which is enough that the impacts are nearly sequential in the extreme cases, with 99.9992\% of the total impulse accumulated at the first collision occuring before the second collision starts.
For each angle $\theta_0$, the initial high of the block is raised so one corner touches the ground, and the initial downward velocity is accordingly slightly decreased to maintain the same total mechanical energy (kinetic plus gravitational potential) across initial conditions.
For each of these two experiments, we additionally consider $49$ intermediate values for $\frac{k_A}{k_B}$ and $\theta_0$.

We plot outcomes of the concurrent impacts in \Cref{fig:block_sensitivity}, and find that the scaled-down predictions of our model in \Cref{fig:phone_evolution} can be attributed to differing orders of impulse accrual due to either changing stiffness ratio or initial condition.
In the case of varying stiffness ratio with the same initial condition as \Cref{fig:phone_evolution}, we find that the results, when projected onto the contact normals, are properly contained by and nearly cover the entirety of our model's predictions.
In the case of varying initial angle, we again see a tight match between the compliant collisions and our model's predictions, except that the maximum post-impact separating velocity for compliant collisions is slightly larger.

As our model's predictions were only calculated for $\theta_0 = 0$, we would expect that over the set of perturbations to $\theta_0$ we encounter a slightly larger set of post-impact velocities, due to slight shifts from the continuous-time evolution between states.

\subsubsection{Compass Gait}
We revisit the compass gait walker model taking a wide step (Fig. \ref{fig:compass_setup}).
Previously, we showed that the model of \citet{Anitescu97} predicts that the leading foot sticks (point A), while the trailing foot (point B) could slide, stick, or lift off.
Our model generates each of these outcomes, as well as various convex combinations of these results (Figure \ref{fig:compass_evolution}).
It also generates oscillatory behavior where impulses at points A and B alternate during the impact process.
This can potentially cause A to lift off, and B to remain on the ground instead.

\subsubsection{Box and Wall}
We examine our model's predictions on the scenario described in Figure \ref{fig:box}, where a box impacts a wall (at point B) while sliding along flat ground (at point A).
Simultaneous resolution with \citet{Anitescu97} predicted that the box came to rest, while sequential resolution predicts that A continues sliding and B lifts off.
As in the previous examples, our model reproduces both behaviors, as well as convex combinations of them (Figure \ref{fig:box_evolution}).
Additionally, some sequences allow A to slide even faster, while others allow B to slide instead of lifting or sticking.

\subsubsection{RAMone}
In this example, we examine a footstep of a more complex 5-link bipedal robot, RAMone, originally considered by \citet{Remy2017}.
As shown in Figure \ref{fig:ramone}, much like the compass gait example, \citet{Anitescu97} always predicts that the leading foot sticks, while the trailing foot can stick, slide, or lift.
Our model reproduces the same results, as well as ones where the final contact velocities are scaled down.
\begin{figure*}[ht]
    \center
    \begin{subfigure}[b]{\linewidth}
        \includegraphics[width=.25\hsize,trim={0mm 20mm 10mm 14mm},clip]{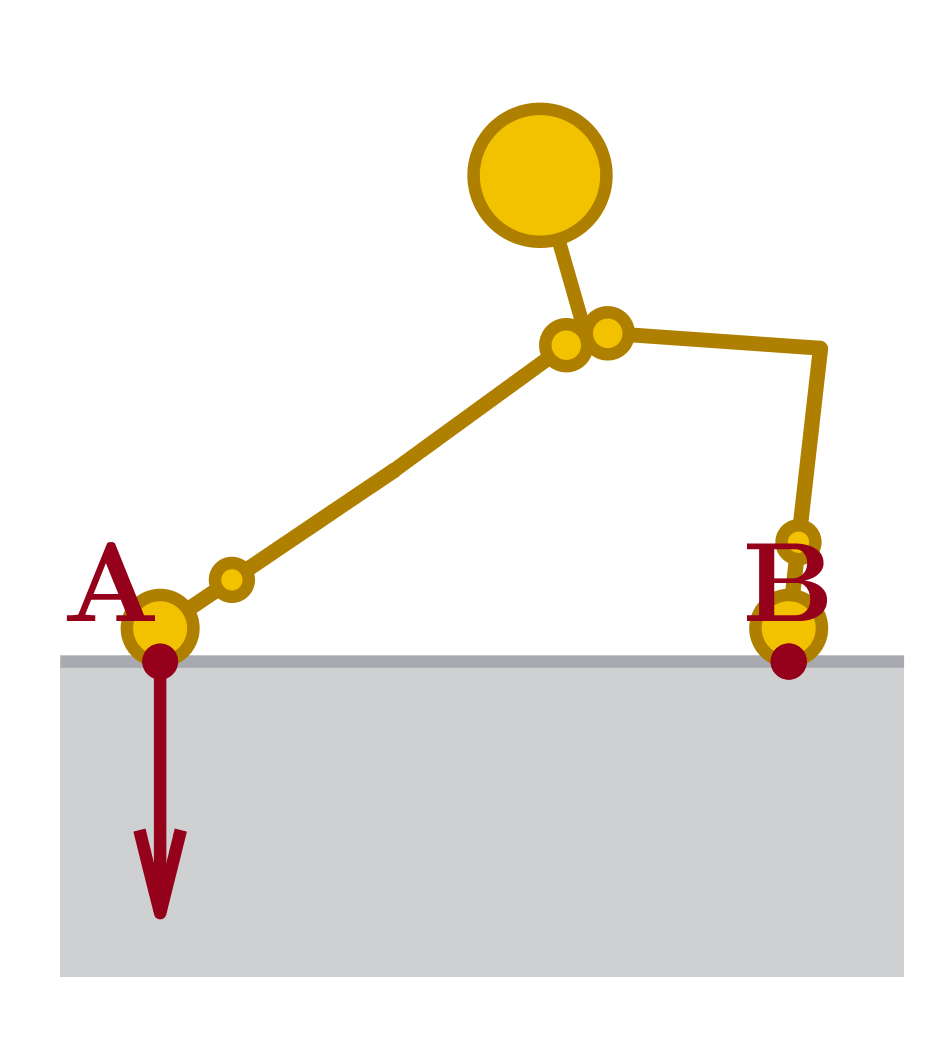}
        \centering
        \caption{\label{fig:ramone_diagram} RAMone initial condition}
    \end{subfigure}
    \begin{subfigure}[b]{\linewidth}
        \includegraphics[width=0.33\hsize,trim={2mm 2mm 15mm 12mm},clip]{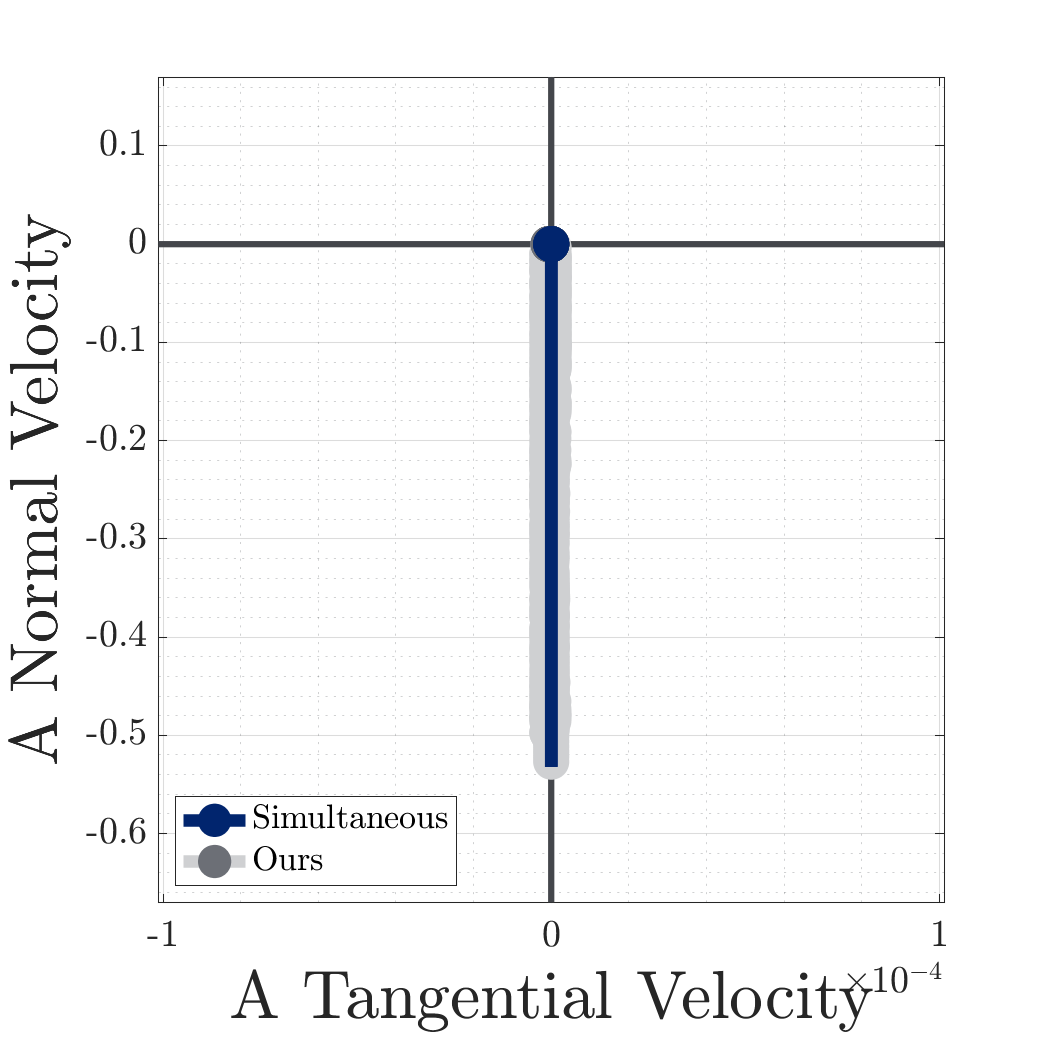}
    \includegraphics[width=0.33\hsize,trim={2mm 2mm 15mm 12mm},clip]{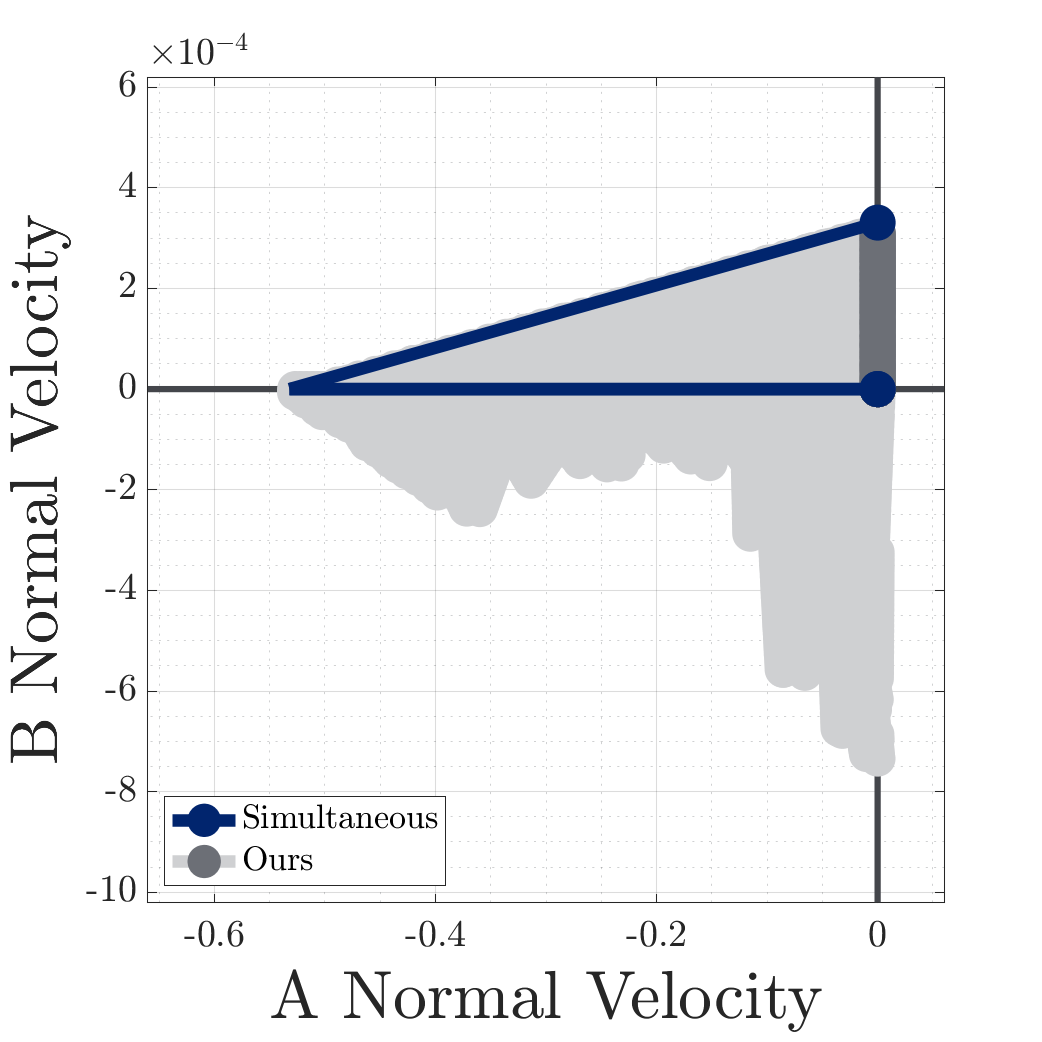}
    \includegraphics[width=0.33\hsize,trim={2mm 2mm 15mm 12mm},clip]{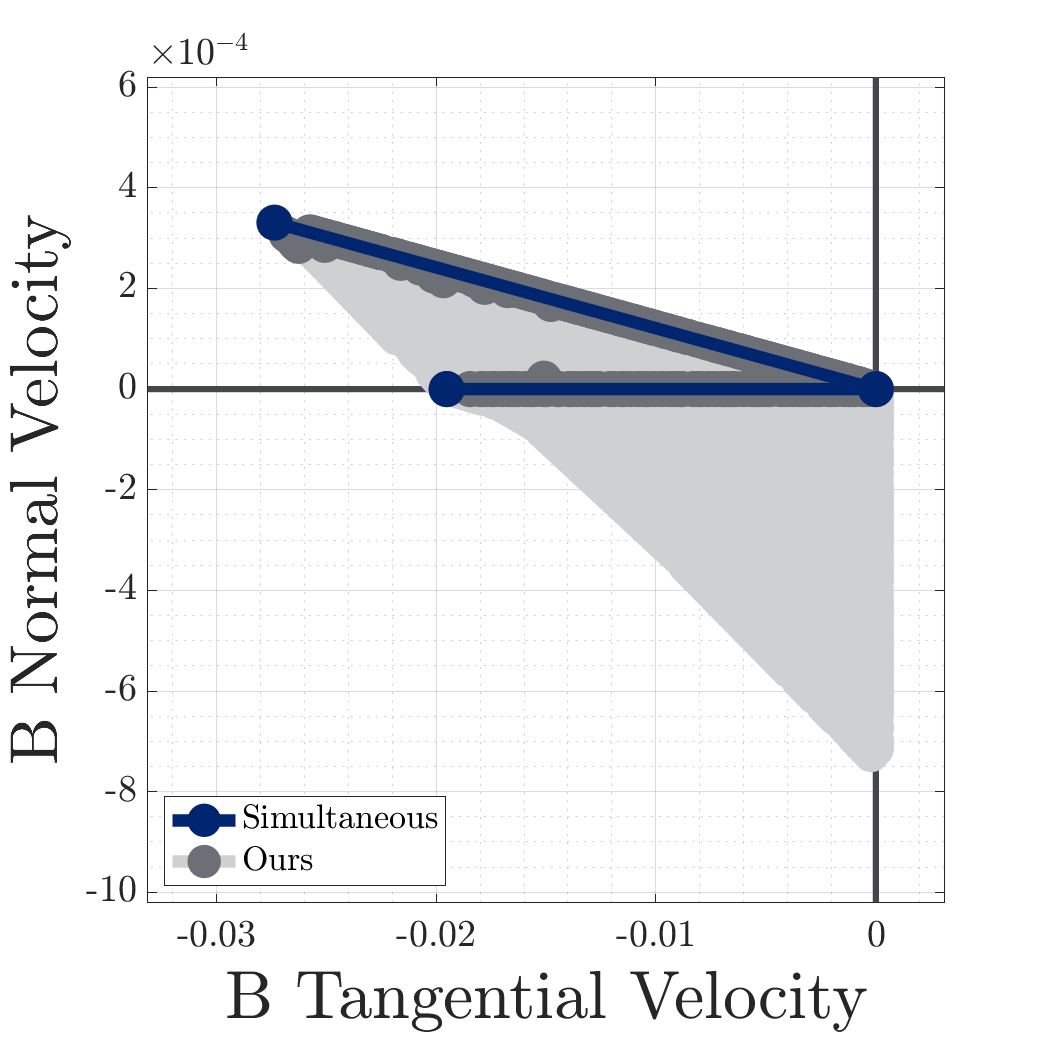}
        \centering
        \caption{\label{fig:ramone_evolution} Footfall impact resolution process}
    \end{subfigure}

    \caption{Evolution of a footfall (\subref*{fig:ramone_diagram}) of the RAMone robot. (\subref*{fig:ramone_evolution}) Similar to the compass gait example, \citet{Anitescu97} predicts that the leading foot, point A, comes to rest, while point B may come to rest, slide, or lift off. All results from our model produce intermediate outcomes between these three results, and point A remains in stiction for the entire duration of the impact.}
    \label{fig:ramone}
\end{figure*}
\begin{figure*}[ht]
    \center
    \begin{subfigure}[b]{.33\linewidth}
        \includegraphics[width=\hsize]{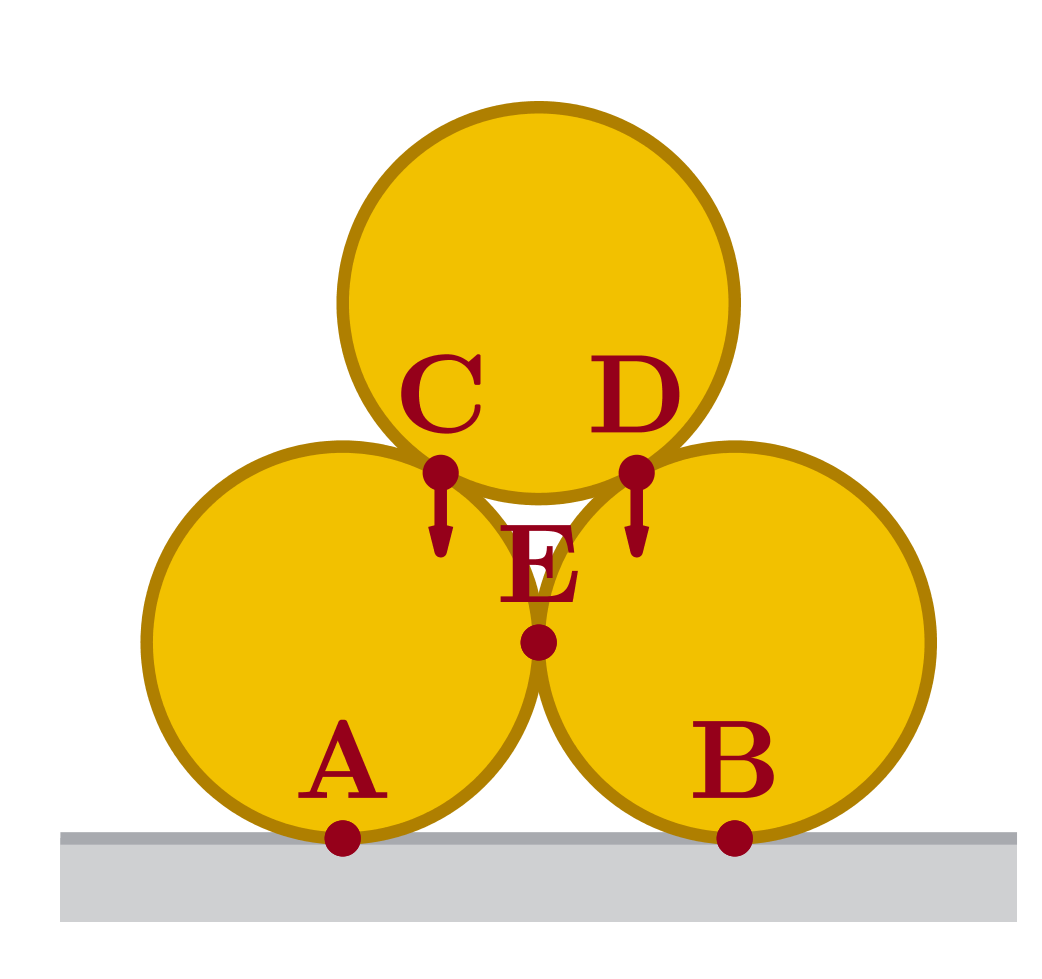}
        \centering
        \caption{\label{fig:stack_diagram} Disk stack initial condition}
    \end{subfigure}
    \begin{subfigure}[b]{.66\linewidth}
        \includegraphics[width=0.49\hsize]{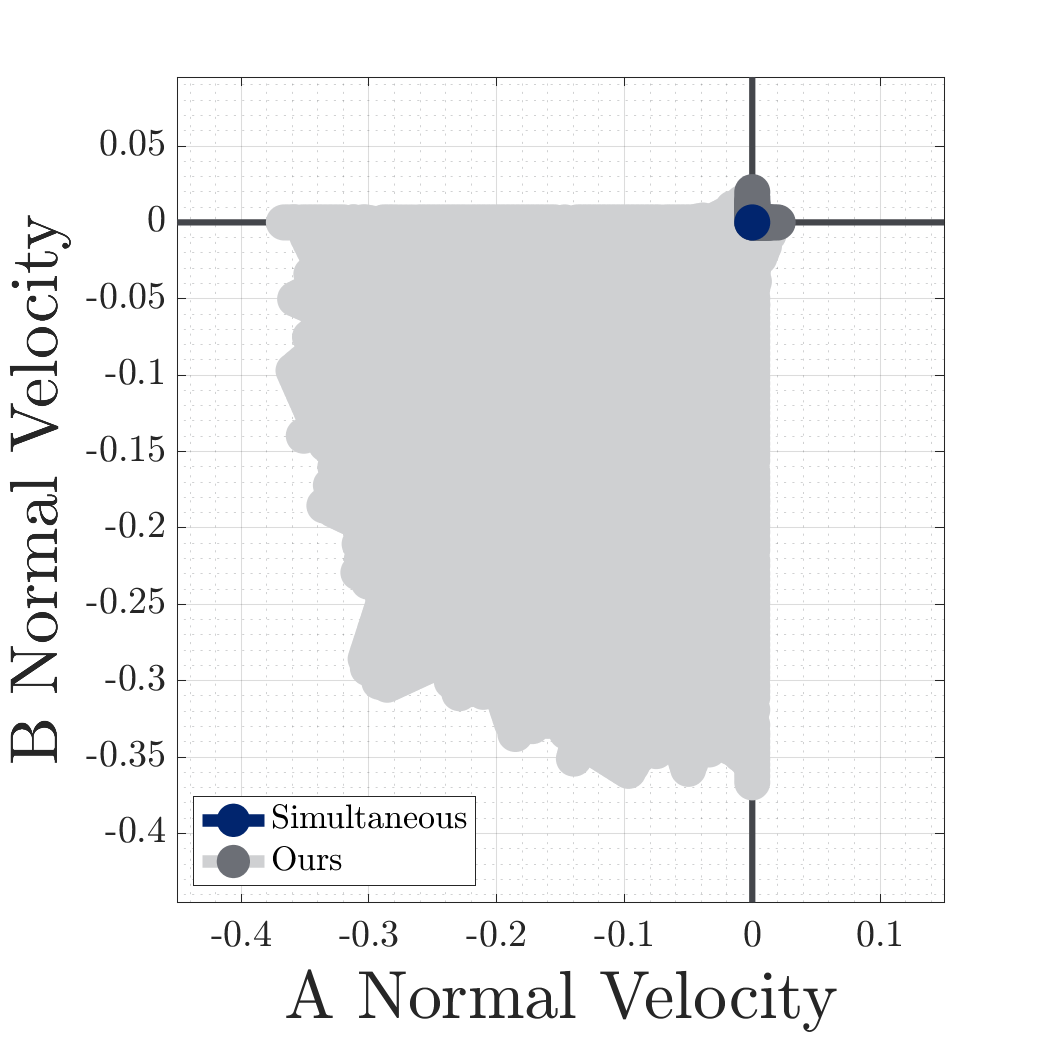}
    \includegraphics[width=0.49\hsize]{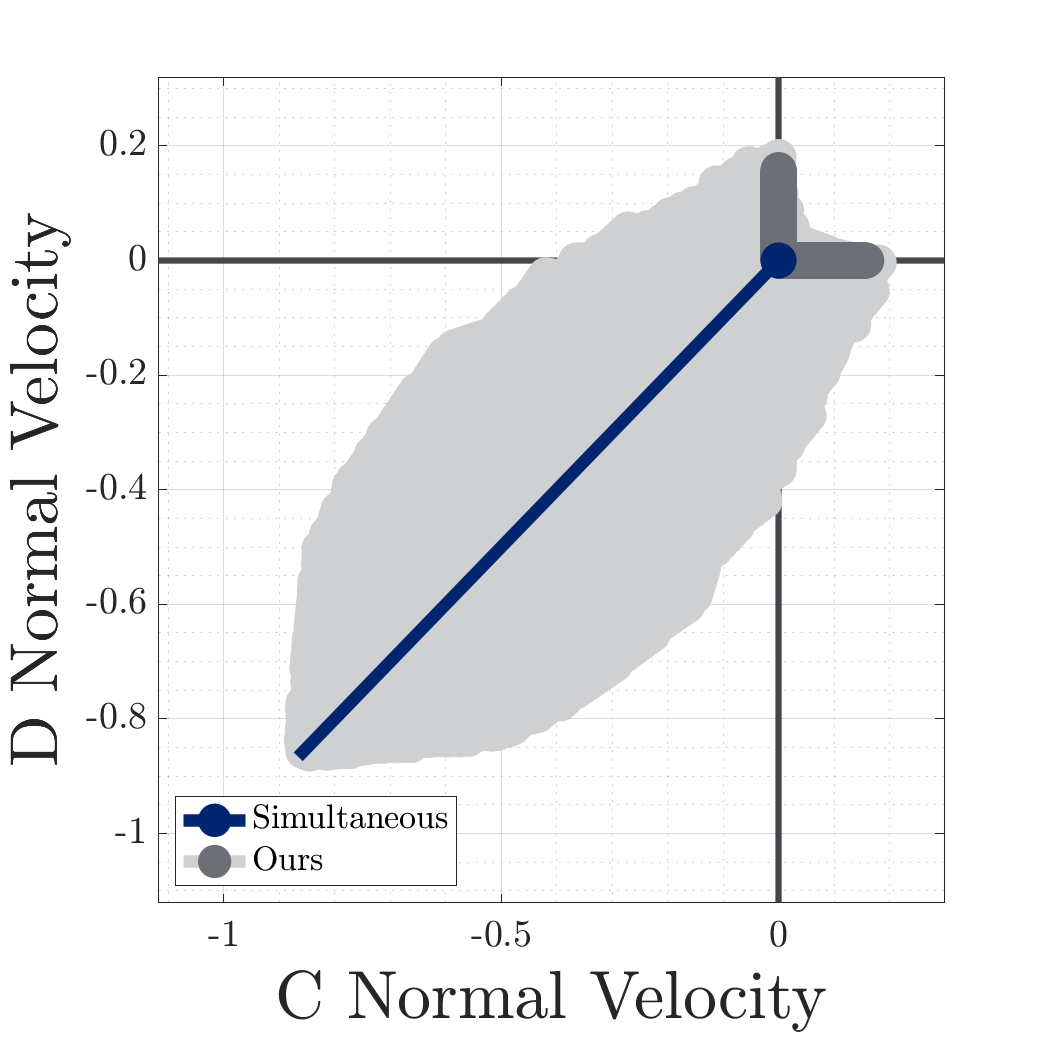}
        \centering
        \caption{\label{fig:stack_side_evolution} Normal impact resolution process comparing left and right sides}
    \end{subfigure}
    
    \begin{subfigure}[b]{\linewidth}
    	\centering
        \includegraphics[width=0.32\hsize]{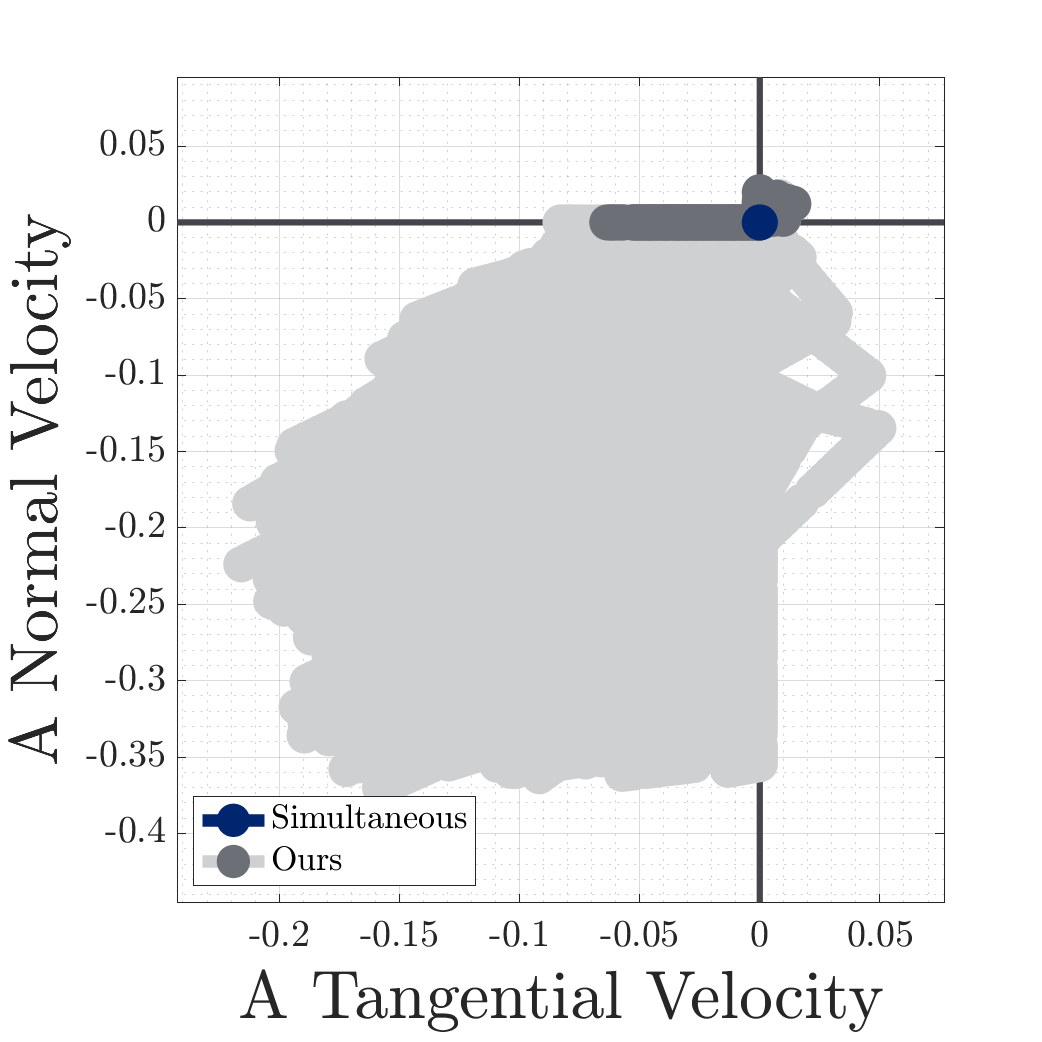}
    	\includegraphics[width=0.32\hsize]{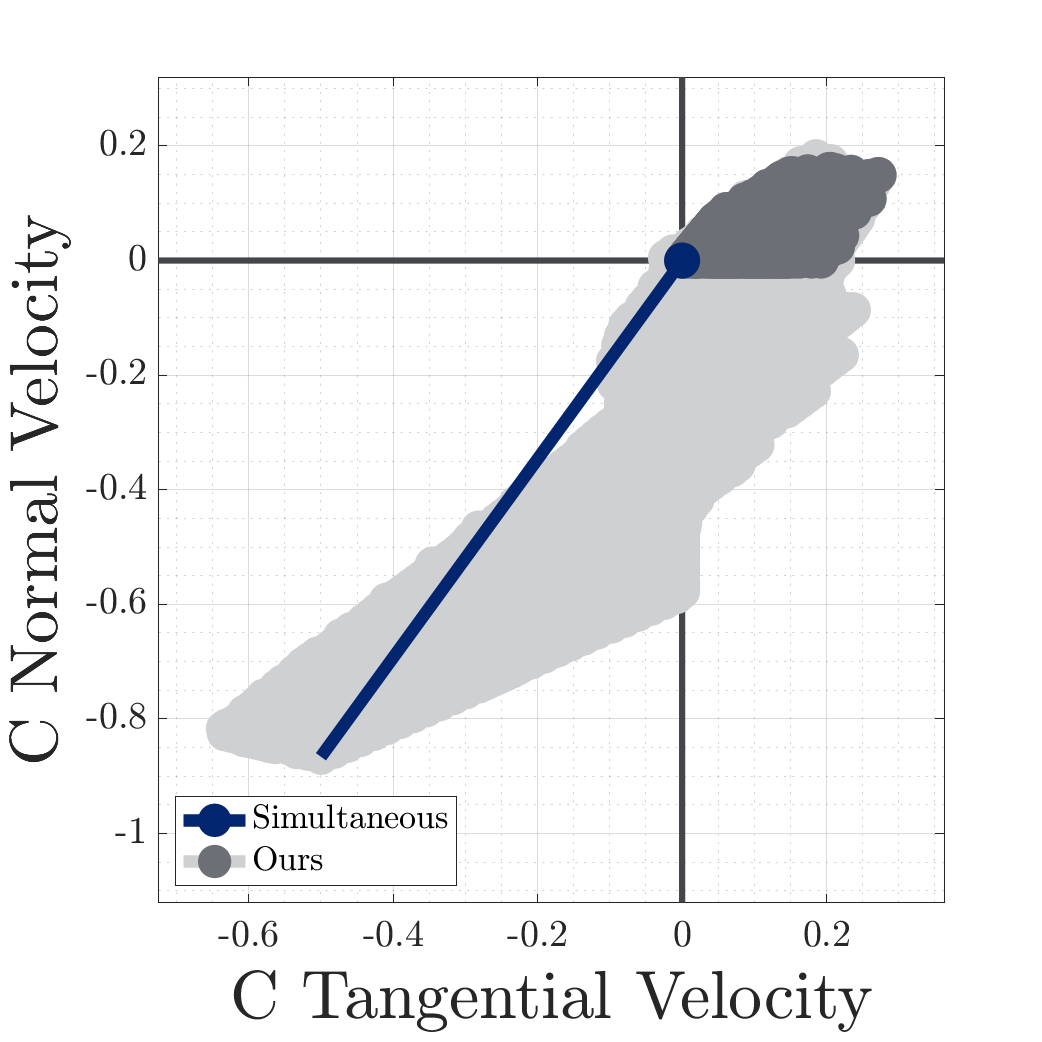}
    	\includegraphics[width=0.32\hsize]{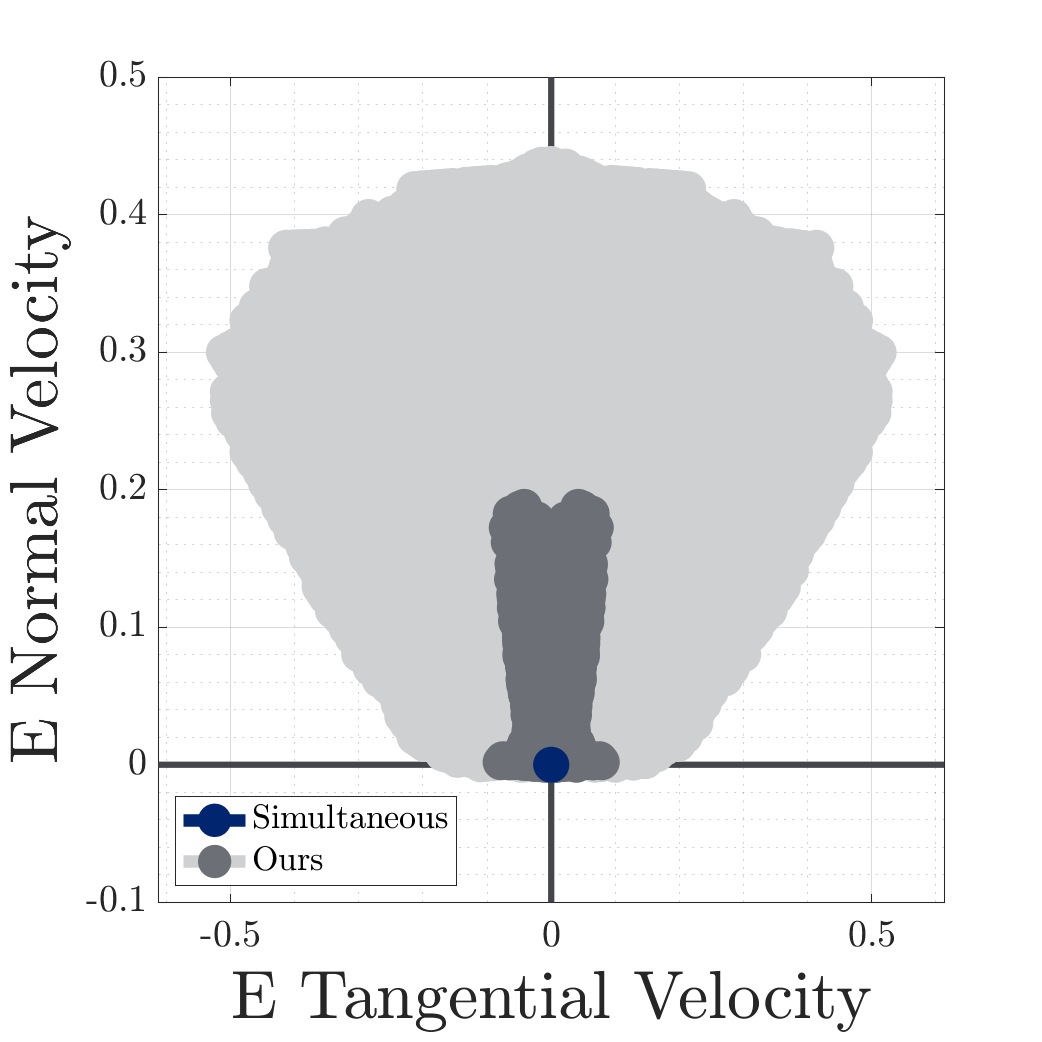}
        \caption{\label{fig:stack_individual_evolution} Impact resolution process at individual contact points}
    \end{subfigure}
    
    \caption{Evolution of a stack of disks (\subref*{fig:stack_diagram}) as the top disk fall on the bottom two. \citet{Anitescu97} only predicts that the entire system comes to rest. (\subref*{fig:stack_side_evolution}) Our method additional predicts several scenarios where the top disk remains in contact with only one of the bottom two disks, while the other may roll away or even lift off the ground slightly. (\subref*{fig:stack_individual_evolution}) various states of rolling, sliding, and lifting contact are shown for points A, C, and E; plots for B and D are omitted as they are symmetric with A and C, respectively.\label{fig:stack}}

\end{figure*}

\subsubsection{Disk Stacking}
In this example, we demonstrate our ability to generate non-unique results in a multi-object scenario motivated by manipulation: stacking disks.
A tower of 3 discs (Figure \ref{fig:stack}) is created by dropping a disk on two others, which rest on the ground.
The only prediction for this 5-contact collision offered by \citet{Anitescu97} is the entire tower coming to rest.
While we cannot be sure that the numerical results cover all possible outputs of our model, we are able to generate various outcomes in which the tower falls apart.
Figure \ref{fig:stack_side_evolution} shows how the post-impact normal velocities compare in the left and right sides of the tower.
The top ball always maintains contact with at least one of the left or right balls, and one of those balls always stays on the ground.
The contacts that are maintained may slide, while the ball on the opposite side may even lift off the ground (Figure \ref{fig:stack_individual_evolution}).

\section{Conclusion}
Non-unique behavior is a pervasive complexity that is present in both real-world robotic systems and common models capturing frictional impacts between rigid bodies---and thus accurate incorporation of such phenomena is an essential component of robust planning, control, and estimation algorithms.
Our model presents a state-of-the-art theoretical foundation for capturing these set-valued outcomes.
Despite the high versatility of allowing impacts to resolve at arbitrary relative rates, both the continuous-time formulation and simulation method have termination guarantees.

Future development of our model will focus on capturing a wider array of contact-driven behaviors; improved theoretical guarantees; and more efficient computational approaches.
For instance, while many models in robotics assume impacts are inelastic, capturing restitution would increase the accuracy of our model for some systems.
Several different approaches have been used to extend Routh's method to capture partially elastic impacts which are guaranteed to dissipate kinetic energy \citep{Liu2008,Liu2008_2,Mirtich1996}.
In particular, \citet{Mirtich1996} combines Routh's model with Stronge's energetic coefficient of restitution \citep{Stronge90}, which relates the quantity work done in compression and restitution phases of elastic impacts.
One possible variant of our models would allow for Stronge-type restitution to resolve at arbitrary relative rates in a similar fashion to the compression-only setting of our inelastic models.

Additionally, while approximation of the post-impact set is probabilistically complete, a straightforward process for computing the sample count $M$ and trajectory lengths $N$ required for particular approximation constants $\varepsilon, \delta$ is not known.
This proof thus did not inform useful computational settings in our examples.
Future research could instead develop outer approximations of the post-impact set via Lyapunov reachability and semidefinite programming \citep{Posa16}.
\label{sec:conclusion}

\begin{funding}
The authors disclosed receipt of the following financial support
for the research, authorship, and/or publication of this article: This work was supported by the National Science Foundation under Grant No. CMMI-1830218; and an NSF Graduate Research Fellowship under Grant No. DGE-1845298.
\end{funding}

\theendnotes
\Urlmuskip=0mu plus 1mu\relax
\bibliographystyle{SageH}
\raggedright
\bibliography{matt,library}
\justifying
\appendix

\section{Example Details}\label{adx:exampledetails}
Here, we list relevant details on the examples in Section \ref{sec:examples}.
The MATLAB codebase at \texttt{\url{https://github.com/mshalm/routh-multi-impact}} may be run via \texttt{Results()}.
The PATH LCP solver \citep{Dirske1995} must be installed, and \texttt{pathlcp.m} must be available from the MATLAB path.
Geometric, inertial, and simulation parameters of the examples are listed in Tables \ref{table:PhoneDetails}--\ref{table:DiskStackingDetails}, and the listed symbols match the variable names used in the codebase.
Unless otherwise stated, bodies have uniform density, and links are massless.
For the RAMone example, we refer the reader to \citet{Remy2017} for a full description of the system's inertial and geometric properties.
\begin{table}
\centering
	\caption{Rocking block parameters}
	\label{table:PhoneDetails}
	\centering
	\begin{tabular}{l c r}
		\toprule
		Parameter & Symbol & Value \\
		\midrule
		Block width & $w$ & \SI{1}{\meter} \\
		Block height & $h$ &\SI{2}{\meter} \\
		Block mass & $m$ & \SI{1}{\kilo\gram} \\
		Init. downward vel. & $v_0$ & \SI{.4429}{\meter\per\second} \\
		Friction coefficient & $\mu$ & \SI{1}{} \\
		Step size & $h$ & \SI{0.3}{\newton\second}\\
		Trajectory length & $N$ & 10\\
		Trajectory set size & $M$ & $2^{14}$\\
		\bottomrule
	\end{tabular}
\end{table}
\begin{table}
\caption{Compass gait parameters}
	\label{table:CompassGaitDetails}
	\centering
	\begin{tabular}{l c r}
		\toprule
		Parameter & Symbol & Value \\
		\midrule
		Leg length & $l$ & \SI{1}{\meter} \\
		Mass-to-foot length &$s_{\parallel}$ & \SI{0.5}{\meter} \\
		Leg mass & $m$ & \SI{1}{\kilo\gram} \\
		Trailing leg pitch & $\varphi_{tr}$ & \SI{78}{\degree} \\
		Leading leg pitch & $\varphi_{le}$ & \SI{-78}{\degree} \\
		Trailing leg init. ang. vel. & $\dot \varphi_{tr,0}$ & \hspace{-2mm}\SI{0.25}{\radian\per\second} \\
		Leading leg init. ang. vel. & $\dot \varphi_{le,0}$ & \hspace{-2mm}\SI{0.25}{\radian\per\second} \\
		Friction coefficient & $\mu$ & \SI{5}{} \\
		Step size  & $h$ & \SI{.25}{\newton\second}\\
		Trajectory length & $N$ & 5\\
		Trajectory set size &  $M$ & $2^{20}$\\
		\bottomrule
	\end{tabular}
\end{table}
\begin{table}
\centering
	\caption{Box and wall parameters}
	\label{table:BoxAndWallDetails}
	\centering
	\begin{tabular}{l c r}
		\toprule
		Parameter & Symbol & Value \\
		\midrule
		Box side length & $w$ & \SI{1}{\meter} \\
		Box mass & $m$ & \SI{1}{\kilo\gram} \\
		Angle from box to ground & $\theta$ & \SI{10}{\degree} \\
		Init. horizontal velocity & $v_0$ & \SI{1}{\meter\per\second} \\
		Friction coefficient & $\mu$ & \SI{1}{} \\
		Step size & $h$ & \SI{2}{\newton\second}\\
		Trajectory length & $N$ & 5\\
		Trajectory set size & $M$ & $2^{18}$\\
		\bottomrule
	\end{tabular}
\end{table}
\begin{table}
\centering
	\caption{RAMone parameters}
	\label{table:RAMoneDetails}
	\centering
	\begin{tabular}{l c r}
		\toprule
		Parameter & Symbol & Value \\
		\midrule
		Trunk pitch & $\Phi$ & \SI{16}{\degree} \\
		Leading hip angle & $\alpha_{le}$ & \SI{-70}{\degree} \\
		Trailing hip angle &$\alpha_{tr}$ & \SI{70}{\degree} \\
		Leading knee angle &$\beta_{le}$ & \SI{-2}{\degree} \\
		Trailing knee angle &$\beta_{tr}$ & \SI{-92.48}{\degree} \\
		Trunk init. $x$ vel. & $\dot x_0$ & \hspace{-2mm}\SI{-0.4114}{\meter\per\second} \\
		Trunk init. $y$ vel. & $\dot y_0$ &\hspace{-2mm}\SI{-0.2105}{\meter\per\second} \\
		Trunk init. ang. vel. & $\dot \Phi_0$ &\SI{1}{\radian\per\second} \\
		Leading hip init. vel. &$\dot \alpha_{le,0}$ & \SI{0}{\radian\per\second} \\
		Trailing hip init. vel. &$\dot \alpha_{tr,0}$ & \SI{0}{\radian\per\second} \\
		Leading knee init. vel. &$\dot \beta_{le,0}$ & \SI{0}{\radian\per\second} \\
		Trailing knee init. vel. &$\dot \beta_{tr,0}$ & \SI{0}{\radian\per\second} \\
		Friction coefficient & $\mu$ & $10^5$ \\
		Step size & $h$ & \SI{1}{\newton\second}\\
		Trajectory length & $N$ & 10\\
		Trajectory set size & $M$ & $2^{20}$\\
		\bottomrule
	\end{tabular}
\end{table}
\begin{table}
\centering
	\caption{Disk stacking parameters}
	\label{table:DiskStackingDetails}
	\centering
	\begin{tabular}{l c r}
		\toprule
		Parameter & Symbol & Value \\
		\midrule
		Disk radius & $R$ & \SI{1}{\meter} \\
		Disk mass & $m$ & \SI{1}{\kilo\gram} \\
		Initial vertical velocity & $v_0$ & \SI{-1}{\meter\per\second} \\
		Friction coefficient & $\mu$ & $\sqrt{3}$ \\
		Step size & $h$ & \SI{1}{\newton\second}\\
		Trajectory length & $N$ & 10\\
		Trajectory set size& $M$ & $2^{20}$\\
		\bottomrule
	\end{tabular}
\end{table}

\section{Additional Background and Proofs}\label{adx:notation}
In this appendix, we discuss aditional theoretical background and proofs of technical lemmas necessary for detailed understanding of the theoretical results in this thesis. 
We equip domains $\Domain \subseteq \Real^n$ with the Euclidean ($l_2$) norm; Euclidean metric $d(\State_1, \State_2) = \TwoNorm{\State_2 - \State_1}$, and the Lebesgue measure.
The total derivative $\dot{\vect f}(s)$ of an absolutely continuous function $\vect f(s)$ is taken in the Lebesgue sense.
We make use of the following theorem:

\begin{proposition}[Arzel\`a-Ascoli \citep{Rudin1991}]\label{thm:Rellich}
	Consider a uniformly bounded sequence $\Sequence{\vect f}{n}$ of $\Real^n$-valued functions on a compact interval $[a,b]$, where each function $\vect f_n$ is Lipschitz continuous with the same constant $L$. Then, there exists a subsequence $\Parentheses{{\vect f}_{n_k}}_{k \in \Natural}$ that converges uniformly.
\end{proposition}

\subsection{Proof of \Cref{prop:epsilonnet}}\label{adx:epsilonnetproof}
Let $g(x): \Real^n \to \Real^m$ be Lipschitz with constant $L$ and let $h>0$. Let $\StateSet = \Braces{x_1,\dots,x_N}$ be a set of uniform i.i.d. samples from $[0,h]^n$.
By Lipschitz continuity, $g(\StateSet)$ is a $\varepsilon$-net of $g([0,h]^n)$ if $\StateSet$ is an $\frac{\varepsilon}{L}$-net of $[0,h]^n$; we examine the latter.

Consider a regularly-spaced grid of cardinality $M^n$:
\begin{equation}\textstyle
	\StateSet' = \Braces{\frac{h}{2M},\frac{3h}{2M},\dots,\frac{(2M-1)h}{2M}}^n\,.
\end{equation}
$\StateSet'$ is a $\frac{h\sqrt{n}}{2M}$-net of $[0,h]^n$. 
Thus, setting $M = \left\lceil \frac{hL\sqrt{n}}{\varepsilon}\right\rceil$, $\StateSet'$ is an $\frac{\varepsilon}{2L}$-net of $[0,h]^n$.
Consider the case where for each $x \in \StateSet'$, $\StateSet$ contains an $x_i$ with
\begin{equation}\textstyle
	x_i \in x + \Brackets{-\frac{h}{2M},\frac{h}{2M}}^n \subseteq [0,h]^n\,,\label{eq:surroundingsquare}
\end{equation}
and thus $\TwoNorm{x_i-x}\leq\frac{\varepsilon}{2L}$.
Then by triangle inequality, $\StateSet$ is an $\frac{\varepsilon}{L}$-net of $[0,h]^n$ when \eqref{eq:surroundingsquare} holds for each $x_i$.
For a single $x \in \StateSet'$, as the elements of $\StateSet$ are chosen uniform i.i.d, the probability of \eqref{eq:surroundingsquare} \textit{not} holding is $(1 - M^{-n})^N$.
The probability of \eqref{eq:surroundingsquare} holding for every $x$ is at least
\begin{equation}
	1 - M^n(1 - M^{-n})^N\,,
\end{equation}
by union bound. The proof holds as $M^{-n} = \Omega$.

\subsection{Proof of \Cref{lem:singlefrictional}}\label{adx:singlestopproof}
	We may assume without loss of generality (WLOG) that $\Mass = \Identity$ by applying a coordinate transformation of $\Mass^{\frac{1}{2}}$ to $\Velocity$.
	Let $\vect R$ be a matrix with columns that constitute an orthogonal basis of $\RangeSpace{\J[i]^T}$. By equivalence of norms there exists $\varepsilon > 0$ such that
	\begin{equation}
		\Norm{\Jn[i]\Velocity} + \TwoNorm{\Jt[i]\Velocity} \geq \varepsilon\TwoNorm{\vect R^T\Velocity} .
	\end{equation}
	We will show that $\kappa = \Parentheses{\varepsilon\min\Parentheses{\FrictionCoeff[i],1}}^{-1}$ satisfies the claim.
	Let $V(s) = \TwoNorm{\vect R^T\Velocity(s)}^2$.
	Assume WLOG that $\Velocity(s)$ is a colling velocity ($\Velocity(s) \in \ActiveSet(\Configuration)$) at least until $s^* = \TwoNorm{\vect R^T\Velocity(0)}\kappa \leq \TwoNorm{\Velocity(0)}\kappa$.
	Then, on the interval $[0,s^*)$,
	\begin{align}\textstyle
		\dot V &= 2 \dot \Velocity^T \vect R \vect R^T \Velocity\,,\\
		&\in  2\Parentheses{\Jn[i] - \FrictionCoeff[i] \Unit\Parentheses{\Jt[i]\Velocity}^T\Jt[i]\ } \vect R \vect R^T \Velocity\,,\\
		&= - 2\Norm{\Jn[i]\Velocity} - 2\FrictionCoeff[i] \TwoNorm{\Jt[i]\Velocity}\,,\\
		&\leq -2\varepsilon\min\Parentheses{\FrictionCoeff[i],1}\sqrt{V}\,,\\
		&\leq -\frac{2}{\kappa}\sqrt{V}\,.
	\end{align}
	 The unique solution to the IVP $\dot x = -\frac{2}{\kappa}\sqrt{x}$,
	\begin{equation}
		x(s) = \Parentheses{\sqrt{x(0)} - \frac{s}{\kappa}}^2\,,
	\end{equation}
	therefore bounds $V$ from above on $[0,s^*)$.
Thus,
\begin{align}\textstyle
	V\Parentheses{s^*} &\leq \Parentheses{\sqrt{V(0)} - \frac{s^*}{\kappa}}^2\,,\\
	&= \Parentheses{\TwoNorm{\vect R^T\Velocity(0)} - \TwoNorm{\vect R^T\Velocity(0)}}^2\,,\\
	&= 0\,.
\end{align}
Therefore $\vect R^T\Velocity(s^*) = \ZeroVector$, $\Jn[i]\Velocity\Parentheses{s^*} = 0$, and $\Velocity\Parentheses{s^*} \not \in \ActiveSet(\Configuration)$.
\section{Impact Model Proofs and Lemmas}
\subsection{Proof of \Cref{lem:NonEmptyClosure}}\label{adx:NonEmptyClosureProof}
The final claim may be reached via direct application of \Cref{thm:Rellich}, as long as $\DerivativeMap[\Configuration](\Velocity)$ is non-empty, uniformly bounded, closed-valued, convex-valued, and u.s.c. We will demonstrate that each of these properties hold.

We first observe that the set of contacts $\ContactSet_{\Configuration}(\Velocity)$, used in the construction of $\DerivativeMap[\Configuration](\Velocity)$ in \eqref{eq:multicontact}, is non-empty by construction.
Furthermore, $\ContactSet_{\Configuration}(\Velocity)$ is u.s.c. in $\Velocity$, because it is constructed from non-strict inequalities of linear functions of $\Velocity$.
Next, we note that for each $i$, $\NetForce[i](\Configuration, \Velocity, 1)$ is non-empty, uniformly bounded, closed-valued, and u.s.c. as it is an affine transformation of $\Unit(\cdot)$.
Finally, we characterize $\DerivativeMap[\Configuration](\Velocity)$. $\DerivativeMap[\Configuration](\Velocity)$ is non-empty, uniformly bounded, and close-convex valued, by construction from the convex hull of a non-empty union of $\NetForce[i](\Configuration, \Velocity, 1)$.
Now, consider an arbitrary velocity $\Velocity_0$ and neighborhood $\dot \VelocitySet_0 \supset \DerivativeMap[\Configuration](\Velocity_0)$. As $\ContactSet_{\Configuration}(\Velocity)$ is u.s.c., we can select a neighborhood $\VelocitySet$ with $\ContactSet_{\Configuration}(\VelocitySet) \subseteq \ContactSet_{\Configuration}(\Velocity_0)$. Therefore on $\VelocitySet$,
\begin{equation}
	\DerivativeMap[\Configuration](\Velocity) \subseteq \DerivativeMap[0](\Velocity) = \Hull\Parentheses{\cup_{i\in\ContactSet_{\Configuration}(\Velocity_0)} \NetForce[i](\Configuration,\Velocity,1)}\,.
\end{equation}
$\DerivativeMap[0](\Velocity)$ is u.s.c. as the convex hull of u.s.c. functions, and furthermore $\DerivativeMap[\Configuration](\Velocity_0) =  \DerivativeMap[0](\Velocity_0)$. Therefore by definition of u.s.c. there exists a neighborhood $\VelocitySet_0$ of $\Velocity_0$ such that
\begin{equation}
	\DerivativeMap[\Configuration](\VelocitySet_0) \subseteq \DerivativeMap[0](\VelocitySet_0) \subseteq \dot \VelocitySet_0\,.
\end{equation}
$\DerivativeMap[\Configuration](\VelocitySet)$ is therefore by definition u.s.c. and the claim is satified.

\subsection{Homogeneity Lemma}\label{adx:homogeneityproof}
As the set of allowable contact forces are only dependent on the direction of $\Velocity$, $\DerivativeMap[\Configuration](\Velocity)$ is positively homogeneous in $\Velocity$, in the sense that $\forall k>0,\Velocity \in \VelocitySpace$, $\DerivativeMap[\Configuration](\Velocity) = \DerivativeMap[\Configuration](k\Velocity)$. Positive homogeneity induces a similar property on the solution set to the differential inclusion:
\begin{lemma}[Homogeneity]\label{lem:homogeneity}
	For all $\Configuration$, $k > 0$, and $[a,b]$ compact, if $\Velocity(s) \in \SolutionSet{\DerivativeMap[\Configuration]}[[a,b]]$, $k\Velocity(\frac{s}{k}) \in \SolutionSet{\DerivativeMap[\Configuration]}[[ka,kb]]$.
\end{lemma}
\begin{proof}
Consider a configuration $\Configuration \in \ConfigurationSpace$ and compact interval $[a,b]$.
We first demonstrate that the impact DI mapping $\DerivativeMap[\Configuration](\Velocity)$ is positively homogeneous in $\Velocity$ ($\DerivativeMap[\Configuration](\Velocity) = \DerivativeMap[\Configuration](k \Velocity)$ for $k>0$).
The DI mapping $\NetForce[i](\Configuration, \Velocity, 1)$ for contact $i$ is an affine transform of $\Unit(\Jt[i]\Velocity)$ and thus positively homogeneous.
Also, the the set of contacts $\ContactSet_{\Configuration}(\Velocity)$ used in the construction of $\DerivativeMap[\Configuration](\Velocity)$ in \eqref{eq:multicontact}, is also positively homogeneous in $\Velocity$. Therefore, $\DerivativeMap[\Configuration](\Velocity) = \Mass^{-1}\Hull\Parentheses{\cup_{i \in \ContactSet_{\Configuration}} \NetForce[i]}$ is positively homogeneous.

Now, consider a solution $\Velocity(s)$ to the impact DI $\dot \Velocity \in \DerivativeMap[\Configuration](\Velocity)$ over $[a,b]$, and $k > 0$. $k\Velocity(\frac{s}{k})$ is well-defined and absolutely continuous over the interval $[ka,kb]$, and has derivative equal to $\dot\Velocity(\frac{s}{k})$ a.e. on $[ka,kb]$. Then $\dot\Velocity(\frac{s}{k}) \in \DerivativeMap[\Configuration](\Velocity(\frac{s}{k})) = \DerivativeMap[\Configuration](k\Velocity(\frac{s}{k}))$ a.e., and $k\Velocity(\frac{s}{k}) \in \SolutionSet{\DerivativeMap[\Configuration]}[[ka,kb]]$.
\end{proof}

\subsection{Proof of \Cref{lem:dissipate}}\label{adx:dissipateproof}
Let $\Configuration \in \ConfigurationSet_A$, and let $[a,b]$ be a compact interval.
Consider a solution of the impact DI $\Velocity(s) \in \SolutionSet{\DerivativeMap[\Configuration]}[[a,b]]$ with non-separating velocity ($\Velocity([a,b]) \subseteq \Closure \ActiveSet(\Configuration)$).
We will show that $\Norm{\Velocity(s)}_{\Mass}$ is non-increasing by proving that $\dot \KineticEnergy(\Configuration,\Velocity(s))$ is non-positive almost everywhere.
Pick any $s \in [a,b]$ where $\dot \Velocity(s) \in \DerivativeMap[\Configuration](\Velocity(s))$.
By construction of $\DerivativeMap[\Configuration](\Velocity)$ \eqref{eq:multicontact} and the definition of the convex hull, there exists coefficients $c_i \geq 0$ such that
\begin{equation}
	\Mass\dot \Velocity(s) \in \sum_{i : \Jn[i] \Velocity(s) \leq 0} c_i\NetForce[i](\Configuration, \Velocity(s), 1)\,.
\end{equation}
We observe by chain rule that
\begin{equation}
	\dot K = \Velocity^T\Mass\dot\Velocity \in  
	\sum_{i : \Jn[i] \Velocity \leq 0} c_i\Velocity^T\NetForce[i](\Configuration, \Velocity, 1)\,.
\end{equation}
$\dot \KineticEnergy$ is then non-positive as each term in this sum is non-positive by construction of $\NetForce[i]$ and \eqref{eq:NormalDissipationGuaranteed}:
\begin{equation}
	\Velocity^T\NetForce[i](\Configuration, \Velocity, 1)=
	\Velocity^T\Jn[i]^T - \FrictionCoeff[i]\TwoNorm{\Jt[i]\Velocity}\,.
\end{equation}

\subsection{Strong Dissipation Lemma}\label{adx:stopproof}
\begin{lemma}[Strong Dissipation]\label{thm:stop}
	Let $\Configuration \in \ConfigurationSet_A$, and let $[a,b]$ be compact. If $\Velocity(s) \in \SolutionSet{\DerivativeMap[\Configuration]}[[a,b]]$ and $\Velocity\Parentheses{[a,b]} \subseteq \Closure \ActiveSet(\Configuration)$, $\Norm{\Velocity(s)}_{\Mass}$ constant implies $\Velocity(s)$ constant.
\end{lemma}
\begin{proof}
Let $\Configuration \in \ConfigurationSet_A$ be a configuration with active contact, and $\Velocity(s) \in \SolutionSet{\DerivativeMap[\Configuration]}[[a,b]]$ a solution to the associated impact DI with impacting velocity ($\Velocity([a,b]) \subseteq \Closure \ActiveSet(\Configuration)$).
Let $\Force(s)$ be the associated vector of force variables defined a.e..

Assume that $\Velocity(s)$ is non-constant.
We may now prove the claim by showing that $\Norm{\Velocity(b)}_{\Mass} < \Norm{\Velocity(a)}_{\Mass}$.
As $\Velocity(s)$ is continuous, we may select $a < s^* < b$ such that $\forall \delta > 0$, $\Velocity(s)$ is not constant on $[s^*, s^* + \delta]$.
Let $A = \Braces{i \in \ContactSet_A (\Configuration) : \Jn[i]\Velocity(s^*) \leq 0}$ be the set of non-separating contacts at $s = s^*$.
Let $B$ be the set of contacts $b \in A$ with zero contact velocity ($\J[b]\Velocity(s^*) = \ZeroVector$).
As $\Velocity(s)$ is continuous, $\exists \delta_\varepsilon > 0$ and $\varepsilon > 0$ such that $\forall s \in [s^*,s^* + \delta_\varepsilon] \subseteq [a,b]$,
	\begin{itemize}
		\item All $i \in \ContactSet_A \setminus A$ separate ($\Jn[i]\Velocity(s) > \varepsilon$)
		\item All $i \in A \setminus B$ move: $\Jn[i]\Velocity(s) < -\varepsilon$ or $\TwoNorm{\Jt[i]\Velocity(s)} > \frac{\varepsilon}{\FrictionCoeff[i]}$.
	\end{itemize}
	Select $s$ from $[s^*,s^* + \delta_\varepsilon]$ with $\Velocity(s) \neq \Velocity(s^*)$. By \Cref{lem:dissipate},
	\begin{align}
		0 &\geq \Norm{\Velocity(s)}_{\Mass}^2 - \Norm{\Velocity(s^*)}_{\Mass}^2,\\
						&= 2\Velocity(s^*)^T\Mass\Parentheses{\Velocity(s) - \Velocity(s^*)} + \Norm{\Velocity(s) - \Velocity(s^*)}_{\Mass}^2,\\
						&= 2\Parentheses{\J \Velocity (s^*)}^T
						\int_{s^*}^s \lambda(\sigma)\Differential \sigma +\Norm{\Velocity(s) - \Velocity(s^*)}_{\Mass}^2.\label{eq:dissipationNormInequality}
	\end{align}
	Therefore, there must exist a contact $a \in A \setminus B$ with $\int_{s^*}^s \NormalForce[a] > 0$ as \eqref{eq:dissipationNormInequality} is non-positive. Finally,
	\begin{align}
		K(\Velocity(s)) &= K(\Velocity(s^*)) + \int_{s^*}^{s} (\J \Velocity(\tau))^T\Force(\sigma) \Differential \sigma\,, \\
			   &\leq K(\Velocity(s^*)) - \varepsilon \int_{s^*}^s \NormalForce[a](\sigma) \Differential \sigma\,, \\  &< K(\Velocity(s^*))\,.
	\end{align}
	Therefore, as $\KineticEnergy$ is non-increasing, $\Norm{\Velocity(b)}_{\Mass} < \Norm{\Velocity(a)}_{\Mass}$.
\end{proof}
\subsection{Proof of \Cref{thm:strictdissipation}}\label{adx:strictdissipationproof}
As $\Velocity$ is never constant on $\Closure \ActiveSet(\Configuration)$ via Assumption \ref{assump:nondegenerate}, $\Norm{\Velocity(s)}_{\Mass}$ strictly decreases by \Cref{thm:stop} and \Cref{lem:dissipate}.
\subsection{Proof of \Cref{lem:exit}}\label{adx:exitproof}
Suppose not, so there exists a configuration $\Configuration \in \ConfigurationSet_A$, dissipation rate $\DissipationRate[\Configuration](s)$ such that $\dot\Velocity \in \DerivativeMap[\Configuration](\Velocity)$ is $\DissipationRate[\Configuration](s)$-dissipative, $s>0$ and $s^* > \Norm{\Velocity(0)}_{\Mass}\frac{s}{\DissipationRate[\Configuration](s)}$, and $\Velocity(s) \in \SolutionSet{\DerivativeMap[\Configuration]}[\Brackets{0, s^*}]$ with $\Velocity([0,s^*]) \subseteq \Closure\ActiveSet (\Configuration)$.
Assume WLOG by \Cref{lem:homogeneity} that $\Norm{\Velocity(0)}_{\Mass} = 1$.
By solution homogeneity (\Cref{lem:homogeneity}) we have for any $s_{k} = s_{k-1} + s\Norm{v(s_k)}_{\Mass}$,
\begin{equation}
		\frac{\Norm{\Velocity\Parentheses{s_k}}_{\Mass}}{\Norm{\Velocity\Parentheses{s_{k-1}}}_{\Mass}} \leq \Parentheses{1-\DissipationRate[\Configuration](s)}\,.
\end{equation}
Setting $s_0 = 0$, we thus have
\begin{align}
	\Norm{\Velocity\Parentheses{s_k}}_{\Mass} & \leq \Parentheses{1-\DissipationRate[\Configuration](s)}^k\,,\\
	s_k & \leq s\sum_{i=1}^k \Parentheses{1-\DissipationRate[\Configuration](s)}^{k-1}\,.
\end{align}
Therefore $s_\infty = \lim_{k\to\infty} s_k = \frac{s}{\DissipationRate[\Configuration](s)} < s^*$ and by continuity of $\Velocity$, $\Velocity\Parentheses{s_\infty} = \ZeroVector$.
But then by \Cref{thm:strictdissipation} $\Norm{\Velocity(s)}_{\Mass}$ must decrease below $0$ on $\Brackets{s_\infty,s^*}$, a contradiction.
\subsection{Proof of \Cref{thm:nondegeneratedissipation}}
\label{adx:nondegeneratedissipationproof}
Suppose not.
Then there exists a $\Configuration \in \ConfigurationSet_A \setminus \ConfigurationSet_P$, an $s_f > 0$, and a corresponding sequence of solutions $\Parentheses{\Velocity^j(s)}_{j \in \Natural}$, $\Velocity^j(s) \in \SolutionSet{\DerivativeMap[\Configuration]}[[0,s_f]]$, all starting with $\Norm{\Velocity^j(0)}_{\Mass} = 1$ and never exiting $\Closure \ActiveSet(\Configuration)$, which dissipate less and less energy:
\begin{equation}
	\lim_{j \to \infty} \Norm{\Velocity^j(s_f)}_{\Mass} = 1\,.\label{eq:LimitingNormIsOne}
\end{equation}
As $\DerivativeMap[\Configuration]$ is bounded and each solution $\Velocity^j(s)$ never exits $\Braces{\Norm{\Velocity^j(s)}_{\Mass} \leq 1}$ by dissipation (\Cref{lem:dissipate}), the sequence is equicontinuous and bounded. 
By \Cref{thm:Rellich}, a subsequence of $\Velocity^j(s)$ converges uniformly to a function $\Velocity_\infty(s)$, with $\Norm{\Velocity_\infty(s_f)}_{\Mass} = 1$ by \eqref{eq:LimitingNormIsOne}. Because kinetic energy is non-increasing, $\Norm{\Velocity_\infty(s)}_{\Mass} = 1$ for all $s \in [0,s_f]$.
By \Cref{lem:NonEmptyClosure} $\Velocity_\infty(s)$ is a solution to $\dot \Velocity \in \DerivativeMap[\Configuration](\Velocity)$.
Therefore as $\Velocity_\infty(s)$ does not dissipate kinetic energy, it is constant by \Cref{thm:stop}, and thus $\ZeroVector \in \DerivativeMap[\Configuration](\Velocity_\infty(s))$.
But as each $\Velocity^j(s) \in \Closure \ActiveSet(\Configuration)$, we must also have $\Velocity_\infty(s) \in \Closure \ActiveSet(\Configuration)$, which contradicts Assumption \ref{assump:nondegenerate}.

\subsection{Proof of \Cref{coro:uniform}}\label{adx:uniformproof}
	Let $\ConfigurationSet \subseteq \ConfigurationSet_A \setminus \ConfigurationSet_P $ be a compact set of non-penetrating configurations with active contact. We will construct a suitable $\DissipationRate[\ConfigurationSet]$ explicitly. Let $s_f>0$. Define the DI
\begin{equation}
	\begin{bmatrix}
		\dot \Configuration \\ \dot \Velocity
	\end{bmatrix} = \dot \State \in \DerivativeMap '(\State) = \begin{bmatrix}
		\ZeroVector \\ \DerivativeMap[\Configuration](\Velocity)
	\end{bmatrix}\,.\label{eq:UniformDissipationInclusion}
\end{equation}
As the set of active contacts $\ContactSet_A(\Configuration)$ is u.s.c. and $\Mass,\J$ are continuous, $D'$ is compact-convex, uniformly bounded, and u.s.c..
Now consider the sets
\begin{align}
	\StateSet_0 &= \Braces{\begin{bmatrix}
		\Configuration_0 \\ \Velocity_0
	\end{bmatrix} : \Configuration_0 \in \ConfigurationSet \wedge \Norm{\Velocity_0}_{\Mass(\Configuration_0)} = 1 }\,,\\
	\StateSet_{f} &= \Braces{\State(s_f) : \State(s) \in \IVP{\DerivativeMap '}{\StateSet_0}{[0,s_f]}}\,.
\end{align}
$\StateSet_0$ represents all initial conditions with configurations in $\ConfigurationSet$ and initial kinetic energy $\frac{1}{2}$, and $\StateSet_f$ is set of states reachable from $\StateSet_0$ via solutions to the dynamics \eqref{eq:UniformDissipationInclusion} for a duration $s_f$.
As $\StateSet_0$ is compact, $\IVP{\DerivativeMap '}{\StateSet_0}{[0,s_f]}$ and therefore $\StateSet_f$ is closed and non-empty by \Cref{prop:closure}.
Any solution $[\Configuration(s);\; \Velocity(s)] \in \IVP{\DerivativeMap '}{\StateSet_0}{[0,s_f]}$ must have constant $\Configuration(s) = \Configuration(0) \in \ConfigurationSet$, because the inclusion \eqref{eq:UniformDissipationInclusion} prescribes $\dot \Configuration = \ZeroVector$. Therefore, $\Velocity(s)$ must be a solution to the associated impact differential inclusion $\dot \Velocity \in \DerivativeMap[\Configuration(0)](\Velocity)$. Therefore, by \Cref{thm:nondegeneratedissipation},
\begin{equation}
	\DissipationRate[\ConfigurationSet](s_f) = 1 - \max_{[\Configuration_f;\; \Velocity_f] \in \StateSet_f} \Norm{\Velocity_f}_{\Mass(\Configuration_f)} \in (0,1]\,,
\end{equation}
where the fact that $\StateSet_f$ is closed implies the strict inequality $\DissipationRate[\ConfigurationSet](s_f) > 0$.
Setting $\DissipationRate[\ConfigurationSet](0) = 0$ and selecting a configuration $\Configuration \in \ConfigurationSet$, we now show that $\dot\Velocity \in \DerivativeMap[\Configuration](\Velocity)$ is $\DissipationRate[\ConfigurationSet](s)$-dissipative.
Let $s_f>0$, $\Norm{\Velocity_0}_{\Mass(\Configuration)} = 1$, and $\Velocity(s) \in \IVP{\DerivativeMap[\Configuration]}{\Velocity_0}{[0,s_f]}$.
By Definition \ref{def:inclusionsolution}, $\State(s) = [\Configuration;\; \Velocity(s)] \in \IVP{\DerivativeMap '}{\StateSet_0}{[0,s_f]}$ and thus $\Norm{\Velocity(s)}_{\Mass(\Configuration)} \leq 1 - \DissipationRate[\ConfigurationSet](s) < 1$ for all $s \in [0,s_f]$.
\section{Continuous-time Model Proofs}
\subsection{Proof of \Cref{thm:ContinuousTimeSolutions}}\label{adx:ContinuousTimeSolutionsProof}
Let $[a,b]$ and $\bar \StateSet$ be compact. As $\DerivativeMap(\bar \State)$ neither depends on $t(\bar \State)$ nor $s$, WLOG $[a,b]=[0,s_f]$ and $t(\bar \State)=0$ for each $\bar \State \in \bar \StateSet$. We will prove that $\IVP{\DerivativeMap}{\bar \State}{[0,s_f]}$ has the claimed properties in the following manner:
\begin{enumerate}
	\item We will bound kinetic energy growth (via Assumption \ref{assump:EnergyIncreasesSlowly}), which will guarantee that solutions starting in $\bar \StateSet$ remain in a larger open bounded set, $\bar \StateSet'$.
	\item We will show that, restricted to $\bar \StateSet'$, $\dot{\bar \State} \in \DerivativeMap(\bar \State)$ is equivalent to another DI $\dot{\bar \State} \in \tilde \DerivativeMap(\bar \State)$ which is compatible with \Cref{prop:closure}.
\end{enumerate}

First, we construct a suitable $\bar \StateSet'$. As $\bar \StateSet$ is compact, we may pick $c>0$ such that $\bar \StateSet \subseteq \Ball_c$. Let $\bar \State(s) \in \IVP{\DerivativeMap}{\bar \StateSet}{[0,s_f]}$. We begin by establishing a bound on $\Velocity(\bar \State(s))$ over $[0,s_f]$. Let $\KineticEnergy(\bar \State) = \KineticEnergy(\Configuration(\bar \State),\Velocity(\bar \State))$. By Assumption \ref{assump:EnergyIncreasesSlowly}, $\exists c_\KineticEnergy >0$ such that for all $\bar \State$,
\begin{equation}
	\PartialDiff{K}{\bar \State}\DerivativeMap[C](\bar \State)  = \Velocity^T\NetForce[s](\State,\InputSet(\bar\State)) \leq \sqrt{2} c_\KineticEnergy  \Norm{\Velocity}_{\Mass}\,.
\end{equation}
As the impact dynamics dissipate energy (\Cref{lem:dissipate}),
\begin{equation}
\dot \KineticEnergy(\bar \State(s)) \in \PartialDiff{K}{\bar \State}\DerivativeMap(\bar \State) \leq 2 c_\KineticEnergy \sqrt{\KineticEnergy (\bar \State)} \,.\label{eq:KineticEnergyGrowthInequality}
\end{equation}
Similar to the argument in Appendix \ref{adx:singlestopproof}, we compare \eqref{eq:KineticEnergyGrowthInequality} to the differential equation $\dot x = 2 c_\KineticEnergy \sqrt{x}$, and upper bound $\KineticEnergy$ as
	\begin{equation}
		\KineticEnergy(\Configuration(s),\Velocity(s)) \leq \Parentheses{\sqrt{\KineticEnergy(\Configuration(0),\Velocity(0))} + c_\KineticEnergy s}^2 \,.
	\end{equation}
Thus, picking $c_{\Mass}$ with $c_{\Mass}^{-1}\Norm{\Velocity}_{\Mass} \leq \sqrt 2 \TwoNorm{\Velocity} \leq c_{\Mass}\Norm{\Velocity}_{\Mass}$,
	\begin{align}
		\TwoNorm{\Velocity(s)} & \leq c_{\Mass}\sqrt{\KineticEnergy(\Configuration(s),\Velocity(s))}\,,\\
		& \leq c_{\Mass}\Parentheses{\sqrt{\KineticEnergy(\Configuration(0),\Velocity(0))} + c_\KineticEnergy s}\,,\\
		& \leq c_{\Mass}^2\TwoNorm{\Velocity(0)}+ c_{\Mass}c_\KineticEnergy s\,.\label{eq:VelocityNormBound}
	\end{align}
Now, we bound $\Configuration(\bar \State(s))$.
As $\TwoNorm{\dot \Configuration} \leq \Norm{\GeneralizedVelocityJacobian}_F\TwoNorm{\Velocity}$, selecting $c_{\GeneralizedVelocityJacobian} = \sup_{\Configuration} \Norm{\GeneralizedVelocityJacobian}_F$, we can apply the triangle inequality as:
\begin{equation}
	\TwoNorm{\Configuration(s)} \leq \TwoNorm{\Configuration(0)} + c_{\GeneralizedVelocityJacobian} s \max_{s'\in[0,s]} \TwoNorm{\Velocity(s')}\,.
\end{equation}
Finally, we bound $\Norm{t(s)} \leq s$ from $\dot t \leq 1$. As $\TwoNorm{\bar \State(0)} < c$,
\begin{align}
\TwoNorm{\bar \State(s)} &\leq \TwoNorm{\Configuration(s)} + \TwoNorm{\Velocity(s)} + \Norm{t(s)}\,,\\
&< c + \Parentheses{ c_{\GeneralizedVelocityJacobian}s_f + 1}\Parentheses{ c_{\Mass}^2c + c_{\Mass}c_\KineticEnergy s_f} + s_f\,.\label{eq:CPrime}
\end{align}
As \eqref{eq:CPrime} is constant, $\bar \State(s)$ remains in a bounded open set $\bar \StateSet'$.

Now, we relate $\dot{\bar \State} \in \DerivativeMap(\bar \State)$ to another DI which can be analyzed via \Cref{prop:closure}.
First we show that $\DerivativeMap(\bar \State)$ is u.s.c.
For any $\Configuration$ and separating velocity $\Velocity \in \InactiveSet(\Configuration)$, we can pick an open neighborhood $\ConfigurationSet \times \VelocitySet$ of $[\Configuration;\; \Velocity ]$ which also consists solely of separating velocities by continuity of $\Gap$ and $\Jn$.
Therefore, the set of separating-velocity states $\SeparatingStateSet$ is open.
Furthermore, each of $\SeparatingStateDerivative(\bar \State)$ (Assumption \ref{assump:ConvexCompactUSCInputs}) and $\ImpactingStateDerivative(\bar \State)$ (\Cref{coro:uniform}) are u.s.c..
$\DerivativeMap(\bar \State)$ must then be u.s.c., because it is constructed from two u.s.c. functions on disjoint open sets, and their convex hull on the remainder of the space.

By Assumption \ref{assump:DerivativeMapCompactImages}, $\DerivativeMap(\bar \StateSet')$ is bounded. As $\bar \StateSet'$ is open and bounded, we can construct a bounded, non-empty, compact-convex valued u.s.c. function $\tilde \DerivativeMap (\bar \State)$ defined over $\Real^n$ such that $\tilde \DerivativeMap |_{\bar \StateSet'} = \DerivativeMap |_{\bar \StateSet'}$.
In particular,
\begin{equation}
	\tilde \DerivativeMap (\bar \State) = \begin{cases}
		\DerivativeMap (\bar \State) & \bar \State \in \bar \StateSet'\,,\\
		\Hull\Parentheses{\Closure\DerivativeMap (\bar \StateSet')} & \Otherwise\,.
	\end{cases}
\end{equation}
Therefore, by \Cref{thm:Rellich}, $\IVP{\tilde \DerivativeMap}{\bar \State}{[0,s_f]}$ is non-empty, closed under uniform convergence, and u.s.c. on $\bar \StateSet$.
As $\DerivativeMap$ and $\tilde \DerivativeMap$ are locally equivalent,
$\IVP{\tilde \DerivativeMap}{\bar \State}{[0,s_f]} = \IVP{\DerivativeMap}{\bar \State}{[0,s_f]}$ on $\bar \StateSet$ and the claim is proven.
\subsection{Proof of \Cref{thm:ContinuousTimeDoesNotPenetrate}}\label{adx:ContinuousTimeDoesNotPenetrateProof}
Suppose not.
Then there exists a non-penetrating initial state $\bar \State_0 = [\Configuration_0;\;\Velocity_0;\; t_0] \not \in \bar \StateSet_P$, compact interval $[0,s_f]$, and corresponding solution $\bar \State(s) = [\Configuration(s);\Velocity(s);t(s)] \in \IVP{\DerivativeMap}{\bar \State_0}{[0,s_f]}$ that penetrates at some $s_P \in (0,s_f]$ ($\bar \State(s_P) \in \bar \StateSet_P$). 
Thus some contact $i \in \ContactsSpace$ penetrates at $s_P$ ($\Gap_i(\Configuration(s_P)) < 0$).
By the intermediate value theorem, we may select $s_A \in [0,s_P)$ such that $\Gap_i(\Configuration(s_P)) < \Gap_i(\Configuration(s_A)) < 0$ and contact $i$ penetrates on the entire interval $[s_A,s_P]$. But then by the definition of $\DerivativeMap(\bar \State)$, $\Configuration(s)$ and therefore $\Gap_i$ must be constant on $[s_A,s_P]$, a contradiction.
\subsection{Proof of \Cref{thm:ContinuousTimeImpactSelection}}\label{adx:ContinuousTimeImpactSelectionProof}
Suppose not.
Then there exists a compact interval $[a,b]$; solution $\bar \State(s) \in \SolutionSet{ \DerivativeMap}[[a,b]]$ with $\bar \State(s)$ impacting but not penetrating, $\bar \State([a,b]) \subseteq \ImpactingStateSet \setminus \bar \StateSet_P $; and set $\IntegrationSet= \Braces{s : \dot {\bar \State}(s) \in \DerivativeMap({\bar \State}(s)) \setminus \ImpactingStateDerivative ({\bar \State}(s))}$ with positive measure.
Furthermore, $\dot t(s)|_\IntegrationSet > 0$ and $\dot \Configuration(s) = \GeneralizedVelocityJacobian(\Configuration(s))\Velocity(s)\dot t(s)$.

We will now show that allowing $\dot t(s)|_\IntegrationSet > 0$ must lead to penetration and therefore a contradiction with \Cref{thm:ContinuousTimeDoesNotPenetrate}.
By Lebesgue's density theorem, we may select a point of density $a < s_1 < b$, i.e., for all $\delta > 0$, $[s_1, s_1 + \delta] \cap \IntegrationSet$ has non-zero measure.
As $\bar \State(s)$ remains in $\ImpactingStateSet$, by continuity of $\J(\Configuration)$ and $\bar \State(s)$ we may select $\delta > 0$ and a contact $i$ that is active $\Gap_i(\Configuration(s)) = 0$ with negative time derivative $\Jn[i]\Velocity(s) < 0$ on $[s_1, s_1 + \delta] \subseteq [a,b]$. Let $\dot \Gap_{\max} = \max_{s\in [s_1,s_1 + \delta]} \Jn[i]\Velocity(s) < 0$. Then
\begin{align}
	\Gap_i(s_1 + \delta) & = \int_{[s_1,s_1 + \delta]} \Jn[i]\Velocity(s)\dot t(s)\Differential s\,,\\
	& \leq \dot \Gap_{\max} \int_{[s_1,s_1 + \delta] \cap \IntegrationSet} \dot t(s) \Differential s\,,\\
	& < 0\,,
\end{align}
and thus $\bar \State(s_1 + \delta) \in \bar \StateSet_P$, a contradiction.
\subsection{Proof of \Cref{thm:ContinuousTimeMinimumAdvancement}}
\label{adx:ContinuousTimeMinimumAdvancementProof}
Let $\bar \StateSet \subseteq \bar \StateSet_P^c$ be compact. By \Cref{coro:uniform} there exists a dissipation rate $\DissipationRate[\bar \StateSet](s)$ such that the impact differential inclusion $\dot\Velocity \in \DerivativeMap[\Configuration](\Velocity)$ for each configuration $\Configuration \in \Configuration(\bar \StateSet)$ is $\DissipationRate[\bar \StateSet](s)$-dissipative. 
Let $\bar K = \max_{\Bar \StateSet} \Norm{\Velocity}_{\Mass(\Configuration)}$.

Suppose the claim is not true.
Then, for some $s_f > s^*(\bar \StateSet) = \frac{\bar K}{\alpha_{\bar X}(1)}$, there must exist a sequence of solutions $\Parentheses{\bar \State_j(s)}_{j \in \Natural}$, $\bar \State_j(s) \in \IVP{\DerivativeMap}{\bar \StateSet}{[0,s_f]}$, for which the elapsed times grows arbitrarily small: $t_j(s_f) - t_j(0) \to 0$.
By \Cref{thm:ContinuousTimeSolutions}, $\IVP{\DerivativeMap}{\bar \StateSet}{[0,s_f]}$ is compact, and therefore by Assumption \ref{assump:DerivativeMapCompactImages}, the derivatives $\dot{\bar \State}_j(s)$ are uniformly bounded. Therefore $\Parentheses{\bar \State_j(s)}_{j \in \Natural}$ is equicontinuous.
Thus by \Cref{thm:Rellich}, a subsequence of $\bar \State_j(s)$ converges uniformly to some $\bar \State_\infty(s)$ with $t_\infty([0,s_f]) = t_\infty(0)$.
As $\IVP{\DerivativeMap}{\bar \StateSet}{[0,s_f]}$ is closed (\Cref{thm:ContinuousTimeSolutions}), $\bar \State_\infty(s)$ must also solve the initial value problem.

We now show a contradiction arises because $\bar \State_\infty(s)$ follows impact dynamics longer than $\frac{\bar K}{\alpha_{\bar X}(1)}$.
As $t_\infty(s)$ is constant, $\dot t_\infty(s) = 0$, and thus $\bar \State_\infty(s)$ is following only impact dynamics, $\bar \State_\infty(s) \in \IVP{\ImpactingStateDerivative }{\bar \StateSet}{[0,s_f]}$.
In order for $\dot{\bar \State}_\infty(s)$ to be selected from $\ImpactingStateDerivative$, we must have $\bar \State_\infty(s) \not \in \SeparatingStateSet$ a.e., and thus $\Velocity_\infty(s) \not\in \InactiveSet(\Configuration_\infty(s))$ a.e.
Additionally, as $\bar \State_\infty(s)$ only follows impact dynamics, the configuration is constant, i.e. $\Configuration_\infty([0,s_f]) = \Configuration_\infty(0) = \Configuration_\infty$.
Therefore $\Velocity_\infty(s)$ is a solution of $\dot \Velocity \in \DerivativeMap[\Configuration_\infty](\Velocity)$,
and $\Velocity_\infty(s) \in \Closure \ActiveSet (\Configuration_\infty)$.
Therefore $\ActiveSet (\Configuration_\infty)$ is non-empty and therefore has active contact ($\Configuration_\infty \in \ConfigurationSet_A$).
As, as $\bar \StateSet$ is closed, $\Configuration_\infty \in \Configuration(\bar \StateSet)$ and thus $\dot \Velocity \in \DerivativeMap[\Configuration_\infty](\Velocity)$ is $\DissipationRate[\bar \StateSet](s)$-dissipative.
Finally, by \Cref{lem:exit}, $s_f < \frac{\Norm{\Velocity_\infty(0)}_{\Mass}}{\DissipationRate[\bar \StateSet](1)} \leq s^*(\bar \StateSet)$, a contradiction.
\subsection{Proof of \Cref{coro:AggregateAdvancement}}
\label{adx:AggregateAdvancementProof}
As $\bar \StateSet (s_f)$ is non-empty and compact for all $s_f>0$, $t_f(s_f)$ is well-defined.
Then, $\lim \inf_{s_f \to \infty} \frac{t_f(s_f)}{s_f} \in [0,1]$ as the DI \eqref{eq:ContinuousTimeModel} enforces $\dot t(s) \in [0, 1]$. 
Consider a particular $s_f>0$, and let $\bar \State(s) \in \IVP{\DerivativeMap}{\bar \StateSet}{[0,s_f]}$.
By \Cref{thm:ContinuousTimeMinimumAdvancement}, $t(s)$ increases by $t^*(\bar \StateSet)$ over each interval of duration $s^*(\bar \StateSet)$, bounding
\begin{equation}
	\frac{t_f(s_f)}{s_f} \geq \frac{t^*(\bar \StateSet)}{s_f}\left\lfloor \frac{s_f}{s^*(\bar \StateSet)} \right\rfloor \geq \frac{t^*(\bar \StateSet)}{s^*(\bar \StateSet)} - \frac{t^*(\bar \StateSet)}{s_f} \,.
\end{equation}
Therefore, $\lim \inf_{s_f \to \infty} \frac{t_f(s_f)}{s_f} \geq \frac{t^*(\bar \StateSet)}{s^*(\bar \StateSet)}$.

\section{Simulation Proofs}
\subsection{Proof of \Cref{thm:TimesteppingSolutionExistence}}
\label{adx:TimesteppingSolutionExistenceProof}
Consider some state $[\Configuration;\;\Velocity]$ and normal impulse $\NormalForce[max] \geq \ZeroVector$.
Let $\LCPVariables = [\SlackForce;\; \bar \Force;\; \SlackVelocity]$
Then we have
\begin{align}
	\LCPVariables^T \LCPMatrix_{\Configuration} \LCPVariables &= \frac{1}{2} \LCPVariables^T \Parentheses{\LCPMatrix_{\Configuration} + \LCPMatrix_{\Configuration}^T} \LCPVariables\,,\\
	&= \Norm{\bar \J^T \bar \Force}_{\Mass^{-1}}^2 + \NormalForce^T\FrictionCoeff\SlackVelocity\,,\\
	&\geq 0\,,
\end{align}
where the final inequality holds because $\FrictionCoeff$ has positive entries and $\Mass \succ 0$. Therefore, $\Mass_{\Configuration}$ is copositive.

Suppose further that $\LCPVariables \in \LCP{\LCPMatrix_{\Configuration}}{\ZeroVector}$, thus $\LCPMatrix_{\Configuration} \LCPVariables \geq \ZeroVector$ and $\LCPVariables^T \LCPMatrix_{\Configuration} \LCPVariables = 0$. $\LCPMatrix_{\Configuration} \LCPVariables \geq \ZeroVector$ implies by construction that
\begin{align}
	\NormalForce &\leq \ZeroVector\,, &
	\OneVectorMatrix^T \FrictionBasisForce &\leq \FrictionCoeff \NormalForce \leq \ZeroVector\,.
\end{align}
Therefore as $\NormalForce, \FrictionBasisForce \geq \ZeroVector$, $\NormalForce = \ZeroVector$ and $\FrictionBasisForce = \ZeroVector$. Finally, as $\NormalForce[max]$ and $\SlackForce$ are non-negative,
\begin{equation}
	\LCPVariables^T \LCPVector_{\Configuration}(\Velocity,\NormalForce[max]) = \SlackForce^T \NormalForce[max] \geq \ZeroVector\,.
\end{equation}
Therefore by \Cref{prop:LCPCopExist}, $\LCP{\LCPMatrix_{\Configuration}}{\LCPVector_{\Configuration}(\Velocity,\NormalForce[max])}$ is non-empty.

\subsection{Proof of \Cref{thm:TimesteppingDissipation}}
\label{adx:TimesteppingDissipationProof}
Consider a state $[\Configuration;\; \Velocity]$, normal impulse increment $\NormalForce[max] \geq \ZeroVector$, and solution to the impact LCP $\LCPVariables = \Brackets{\SlackForce;\;\bar\Force;\;\SlackVelocity} \in \LCP{\LCPMatrix_{\Configuration}}{\LCPVector_{\Configuration}(\Velocity,\NormalForce[max])}$. Let $\Velocity' = \Velocity + \Mass^{-1}\bar\J^T\bar \Force$.
Then from the complementarity condition we have
\begin{align}
	0 &= \LCPVariables^T \Parentheses{ \LCPMatrix_{\Configuration} \LCPVariables + \LCPVector_{\Configuration}(\Velocity,\NormalForce[max])}\,,\\
	& = \Parentheses{\bar\Force^T\bar \J}\Velocity' + \NormalForce^T\FrictionCoeff\SlackVelocity + \SlackForce^T\NormalForce[max] \,, \\
	& = \Parentheses{\Velocity' - \Velocity}^T\Mass \Velocity' + \NormalForce^T\FrictionCoeff\SlackVelocity + \SlackForce^T\NormalForce[max] \,, \\
	& = \Norm{\Velocity'}_{\Mass}^2 - \Velocity^T \Mass \Velocity' + \NormalForce^T\FrictionCoeff\SlackVelocity + \SlackForce^T\NormalForce[max] \,,
\end{align}
As $\NormalForce^T\FrictionCoeff\SlackVelocity + \SlackForce^T\NormalForce[max] \geq 0$, $\Norm{\Velocity'}_{\Mass}^2 \leq \Velocity^T \Mass \Velocity'$.
Cauchy-Schwartz then gives $\Norm{\Velocity'}_{\Mass}^2 \leq \Norm{\Velocity}_{\Mass}\Norm{\Velocity'}_{\Mass}$, and thus $\KineticEnergy(\Configuration,\Velocity') - \KineticEnergy(\Configuration,\Velocity) \leq 0$.
\subsection{Impulse Advancement Lemma}
\label{adx:NonzeroLambdamaxNonzeroLambdaProof}
If $\NormalForce = \ZeroVector$ were allowed by the simulation LCP \eqref{eq:SimulationLCP} at a penetrating velocity $\Velocity \in \ActiveSet (\Configuration)$, then $\Velocity = \Velocity'$ could be selected in an infinite loop, and Algorithm \ref{alg:ImpactSimulation} might never terminate. The structure of the normal impulse constraints \eqref{eq:ImpactSimulationSlackForceComplementairty} and \eqref{eq:ImpactSimulationNormalForceComplementairty} prevents this behavior by design for $\NormalForce[max] > \ZeroVector$:

\begin{lemma}[Impact Advancement (Appendix \ref{adx:NonzeroLambdamaxNonzeroLambdaProof})]\label{lem:NonzeroLambdamaxNonzeroLambda}
	Let $[\Configuration;\; \Velocity]$ be colliding ($\Velocity \in \ActiveSet ( \Configuration)$), and $\NormalForce[max] > \ZeroVector$. Let $\bar \Force = [\NormalForce;\;\FrictionBasisForce] $ be an impulse generated by $\LCP{\LCPMatrix_{\Configuration}}{\LCPVector_{\Configuration}(\Velocity,\NormalForce[max])}$. Then either some contact $i$ activates fully ($\NormalForce[i] = \NormalForce[{max_i}]$), or all contacts terminate ($\Jn\Velocity'(\bar \Force) \geq \ZeroVector$).
\end{lemma}
\begin{proof}
Let $[\Configuration;\; \Velocity]$ be an impacting state ($\Velocity \in \ActiveSet ( \Configuration)$), and let $\NormalForce[max] > \ZeroVector$ be a normal impulse. Consider an impact LCP solution
$$\Brackets{\SlackForce;\;\NormalForce;\;\FrictionBasisForce;\;\SlackVelocity} \in \LCP{\LCPMatrix_{\Configuration}}{\LCPVector_{\Configuration}(\Velocity,\NormalForce[max])}\,.$$
such that
	\begin{equation}
		\NormalForce < \NormalForce[{max}]\,.
	\end{equation}
	Therefore for each contact $i$, the complementary equation \eqref{eq:ImpactSimulationSlackForceComplementairty} yields $\SlackForce[i] = 0$ as $\NormalForce[{max_i}] - \NormalForce[i] > 0$. Then from complementarity equation \eqref{eq:ImpactSimulationNormalForceComplementairty}, $\Jn[i]\Velocity' \geq 0$.
\end{proof}
	
\subsection{Proof of \Cref{lem:NetForceBoundedByIndividualForces} }
\label{adx:NetForceBoundedByIndividualForcesProof}
Consider a configuration $\Configuration \in \ConfigurationSet_A \setminus \ConfigurationSet_P$ and $\bar \Force = [\NormalForce;\; \FrictionBasisForce]$ obeying \eqref{eq:SimulationSlackVelocityComplementarity}. By construction, $\bar \J^T \bar \Force \in \LinearFrictionCone(\Configuration)$. Let
\begin{equation}
\mathcal F = \LinearFrictionCone \Parentheses{\Configuration} \cap \bar \J^T\Braces{[\NormalForce;\;\FrictionBasisForce] : \Norm{\NormalForce}_1 = 1}\,.
\end{equation}
As $\LinearFrictionCone \Parentheses{\Configuration}$ is a convex cone, $\vect r$ satisfies the claim if $\mathcal F \cdot \Mass^{-1}\vect r > 1$. 
As $\LinearFrictionCone \subseteq \FrictionCone$, by Assumption \ref{assump:nondegenerate}, $\ZeroVector \not \in \mathcal F$. $\mathcal F$ is compact, non-empty, and convex polyhedron.
Therefore, 
by \citet[Theorem 11.4]{Rockafellar1970} there exists $\tilde {\vect r}$ such that
\begin{align}
	\varepsilon = \min_{\vect F \in \mathcal F} \vect F \cdot \tilde {\vect r} > \max_{\vect F \in -\mathcal F} \vect F \cdot \tilde {\vect r} = - \varepsilon\,.
\end{align}
Setting $\vect r(\Configuration) = \frac{\Mass(\Configuration)\tilde {\vect r}}{\varepsilon}$ satisfies the claim.

\subsection{Proof of \Cref{thm:ExponentialSimulationDecay}}
\label{adx:ExponentialSimulationDecayProof}
Let $\Configuration_0 \in \ConfigurationSet_A \setminus \ConfigurationSet_P$ be a pre-impact configuration; let $\Velocity_0 \in \ActiveSet(\Configuration_0)$ be a pre-impact velocity; and let $h>0$ be a step size.
As each $\NormalForce[{max}]$ is selected from the uniform distribution over the $h$-width box, we have that
\begin{equation}
	c_p = \Expectation{\underset{i}{\min}\, \NormalForce[{max_i}]}[\NormalForce[max] \sim h \cdot p] = \frac{h}{m+1} \,.
\end{equation}
We assume WLOG that $p$ is supported on the interior of the unit box $(0,1)^\Contacts$, as the probability of being on the boundary is $0$.
Let $\sigma = \sigma_{min}\Parentheses{\Mass = \Mass(\Configuration_0)}$, and therefore $
	\sqrt{\sigma} \TwoNorm{\Velocity} \leq \Norm{\Velocity}_{\Mass}$.
Now, select $\vect r$ for $\Configuration_0$ as defined in \Cref{lem:NetForceBoundedByIndividualForces}. We will now show that the existence of $\vect r$ in conjunction with dissipation (\Cref{thm:TimesteppingDissipation}), allows us to create a useful sufficient condition for impact termination.

Consider any execution of Algorithm \ref{alg:ImpactSimulation} with initial state $[\Configuration_0;\;\Velocity_0]$, and let $\NormalForce[max]^k$, $\bar\Force^k = [\NormalForce^k;\;\FrictionBasisForce^k]$ and $\Velocity_k$ be the maximum normal impulse; selected impulse; and velocity computed on lines \ref{line:LambdaMaxSelection}--\ref{line:VelocityUpdate} on the $k$th iteration of the loop.
If the loop has not terminated after $K$ steps, then for all loop iterations $k\in \Braces{1,\dots,K}$, $\Velocity_k \in \ActiveSet(\Configuration_0)$. By \Cref{thm:TimesteppingDissipation} and \Cref{lem:NonzeroLambdamaxNonzeroLambda,lem:NetForceBoundedByIndividualForces}, we have that
\begin{align}
	\Norm{\Velocity_0}_{\Mass}  &\geq \Norm{\Velocity_K}_{\Mass}\,,\\
	&\geq \sqrt{\sigma} \TwoNorm{\Velocity_K}\,,\\
	&\geq \sqrt{\sigma} \frac{\vect r}{\TwoNorm{\vect r}} \cdot \Velocity_K \\
	&\geq \frac{\sqrt{\sigma}}{\TwoNorm{\vect r}} \Parentheses{\vect r \cdot \Velocity_0 +  \sum_{k=1}^K \Norm{ \NormalForce^k }_1}\,,\\
	&\geq  - \sqrt{\sigma} \TwoNorm{\Velocity_0} + \frac{\sqrt{\sigma}}{\TwoNorm{\vect r}} \sum_{k=1}^K \Norm{ \NormalForce^k }_1\,,\\
	&\geq - \Norm{\Velocity_0}_{\Mass}+ \sum_{k=1}^K\frac{\sqrt{\sigma}}{\TwoNorm{\vect r}} \underset{i}{\min}\, \NormalForce[max_i]^k \,.
\label{eq:KineticEnergyDecreasesWithNormalForce} 
\end{align}
For this inequality to hold, and thus for $\Velocity_K$ to remain in $\ActiveSet(\Configuration_0)$, it must be true that the summation in \eqref{eq:KineticEnergyDecreasesWithNormalForce} is no greater than $2\Norm{\Velocity_0}_{\Mass}$.
Therefore, termination of the impact within $K$ steps (i.e. $Z(h,\Configuration_0,\Velocity_0) \leq K$) is implied by $Z_K > c_Z\Norm{\Velocity_0}_{\Mass}$, where
\begin{align}
	c_Z &= \frac{2\TwoNorm{\vect r}}{\sqrt{\sigma}}\,,\\
	Z_K &= \sum_{k=1}^K \underset{i}{\min}\, \NormalForce[{max_i}]^k \,.
\end{align}
Given that the $\NormalForce[{max}] \sim h\cdot p$ are selected i.i.d. we have that $\Expectation{Z_K} = Kc_p$. Thus we would expect an impact to terminate proportional to
\begin{equation}
K^* = \left\lceil \frac{c_Z}{c_p} \right\rceil \left\lceil \Norm{\Velocity_0}_{\Mass} \right\rceil\,.
\end{equation}
We now bound the termination time $Z$ using Hoeffding's inequality, applied below in \eqref{eq:hoeffding}; for $k\in\Integer^+$ and $K = 2K^*+k$,
\begin{align}
	P\Parentheses{Z \geq K} & \leq P\Parentheses{Z_K \leq c_Z\Norm{\Velocity_0}_{\Mass}}\,,\\
	& \leq P\Parentheses{Z_K \leq K^*c_p}\,,\\
	&= P\Parentheses{Z_K - Kc_p \leq -\Parentheses{K^* + k}c_p}\,,\\
	&\leq \exp\Parentheses{-\frac{2}{K}\Parentheses{K^* + k}^2\frac{c_p^2}{h^2}} \,,\label{eq:hoeffding} \\
	&\leq \exp\Parentheses{-\Parentheses{K^* + k}\frac{c_p^2}{h^2}} \,,\\
	&\leq \exp\Parentheses{-\frac{k}{(m+1)^2}} \,.
\end{align}
Thus the claim is satisfied.

\subsection{Proof of \Cref{lem:ApproximateTermination}}\label{adx:ApproximateTerminationProof}
Suppose not. Then there exists a configuration $\Configuration \in \ConfigurationSet_A \setminus \ConfigurationSet_P$, velocity $\Velocity$, and $\varepsilon > 0$, such that for all $N \in \Natural$, there exists a $\Velocity_N$, $\Jn\Velocity_N \geq -\frac{1}{N}$, $\Norm{\Velocity_N}_{\Mass} \leq \Norm{\Velocity}_{\Mass}$, and yet $\Velocity_N' = \vect f_{\Configuration}(\Velocity_N,\varepsilon\OneVector ) \in \ActiveSet (\Configuration)$.

Due to energy dissipation (\Cref{thm:TimesteppingDissipation}) and the boundedness of $\Velocity_N$, the sequence $\Velocity_N'$ is bounded as well.
Without loss of generality we can therefore assume that $\Velocity_N\to \Velocity_\infty$ and $\Velocity_N'\to \Velocity_\infty'$.
As $\Jn\Velocity_N \geq -\frac{1}{N}$, it must be that $\Jn \Velocity_\infty \geq 0$.
Therefore, $\Velocity_\infty' = \vect f_{\Configuration}(\Velocity_\infty,\varepsilon\OneVector_\Contacts) = \Velocity_\infty$ via \Cref{lem:VelocityStationaryConditions}.
As $\Velocity_N$ and $\Velocity_N'$ converge to each other, there exists an $N^*$, with LCP-selected force $\bar \Force_{N^*} = [\NormalForce;\; \FrictionBasisForce]$ such that
	\begin{equation}
		\TwoNorm{\Parentheses{\Velocity_{N^*}' - \Velocity_{N^*}}} = \TwoNorm{\Mass^{-1} \bar \J^T \bar \Force_{N^*}} < \frac{\varepsilon}{\TwoNorm{\vect r(\Configuration)}}\,,
	\end{equation}
	where $\vect r(\Configuration)$ comes from \Cref{lem:NetForceBoundedByIndividualForces}.
	However, by \Cref{lem:NonzeroLambdamaxNonzeroLambda}, as $\Velocity_{N^*}' \in \ActiveSet (\Configuration)$, at least one contact must fully activate, and thus $\Norm{\NormalForce}_1 \geq \varepsilon$. But then again by \Cref{lem:NetForceBoundedByIndividualForces}, $ \TwoNorm{\Mass^{-1}\bar \J^T \bar \Force_{N^*}} \geq \frac{\varepsilon}{\TwoNorm{\vect r(\Configuration)}}$, a contradiction.
\subsection{Proof of \Cref{thm:VelocitySetApproximation}}\label{adx:VelocitySetApproximationProof}
First we show that generating an $\varepsilon$-net of $\VelocitySet_\infty(\State_0,h) \setminus \ActiveSet(\Configuration_0)$ can be reduced to generating an $\varepsilon'$-net of $\VelocitySet_N(\State_0,h)$ for a suitable $(\varepsilon',N)$. We then show that $\VelocitySet_N(\State_0,h)$ is the image of a box under a Lipschitz function, and apply \Cref{prop:epsilonnet}.

Select an initial $\State_0 = [\Configuration_0;\;\Velocity_0] \in (\ConfigurationSet_A \setminus \ConfigurationSet_P) \times \VelocitySpace$; step size $h>0$; and constants $\varepsilon,\delta > 0$. Define $\psi$ as on line \ref{line:psi} of Alg. \ref{alg:Approximate}. Select
\begin{equation}
	\varepsilon' = \min\Parentheses{\frac{\varepsilon}{3},\frac{\delta\Parentheses{\frac{\varepsilon}{3\psi},\Velocity_0}}{2\sigma_{max}(\Jn)}}\,,\label{eq:EpsilonPrime}
\end{equation}
where $\delta\Parentheses{\frac{\varepsilon}{3\psi},\Velocity_0}$ comes from \Cref{lem:ApproximateTermination}.
Via \Cref{lem:VPlusNConvergence}, select $N$ such that $\VelocitySet_N(\State_0,h)$ is an $\varepsilon'$-net of $\VelocitySet_\infty(\State_0,h)$.
Consider a run of $\mathrm{Approximate}(h,\State_0,\epsilon,N,M)$ for some $M>0$.
Suppose that the $M$ samples generated on line \ref{line:ApproximationSimulation} of Alg. \ref{alg:Approximate} constitute a $\varepsilon'$ net of $\VelocitySet_N(\State_0,h)$.
Consider a post-impact velocity $\Velocity_1 \in \VelocitySet_\infty(\State_0,h) \setminus \ActiveSet(\Configuration_0)$. Then there exists a $\Velocity_2 \in \VelocitySet_N(\State_0,h)$ with $\TwoNorm{\Velocity_1 - \Velocity_2} < \varepsilon'$.
Pick the closest $\Velocity_3$
to $\Velocity_2$ selected on line \ref{line:ApproximationSimulation} of Alg. \ref{alg:Approximate}.
From \eqref{eq:EpsilonPrime}, we know that $\TwoNorm{\Velocity_3-\Velocity_2}\leq \frac{\varepsilon}{3}$, and $\Jn\Velocity_3 \geq -\delta\Parentheses{\frac{\varepsilon}{3\psi},\Velocity_0}$.
$\Velocity_3$ is used to generate $\Velocity_4 = \vect f_{\Configuration_0}(\Velocity_3,\frac{\varepsilon}{3\psi}\OneVector_\Contacts)$ on line \ref{line:FinalStep}, and $\Jn\Velocity_4 \geq \ZeroVector$ via \Cref{lem:ApproximateTermination}.

$\Velocity_4$ is thus in the post-impact set $\VelocitySet_\infty(\State_0,h) \setminus \ActiveSet(\Configuration_0)$ and is output by $\mathrm{Approximate}(h,\State_0,\epsilon,N,M)$.
Suppose that $\bar \Force = [\NormalForce;\;\FrictionBasisForce]$ was the LCP-selected force in the calculation of $\Velocity_4$; we then have that $\TwoNorm{\Velocity_4-\Velocity_3}
	\leq \frac{\varepsilon}{3}$ by construction of $\psi$ on line \ref{line:psi} of Algorithm \ref{alg:Approximate}.
Thus, $\TwoNorm{\Velocity_4-\Velocity_1}$ is smaller than
\begin{equation}
\TwoNorm{\Velocity_2-\Velocity_1} + \TwoNorm{\Velocity_3-\Velocity_2} + \TwoNorm{\Velocity_4-\Velocity_3} \leq\varepsilon \,.
\end{equation}

Therefore, the claim is true if the samples from $\VelocitySet_N(\State_0,h)$ generated on line \ref{line:ApproximationSimulation} of Algorithm \ref{alg:Approximate} are a $\varepsilon'$ net of $\VelocitySet_N(\State_0,h)$ with probability $1 -\delta$; we now calculate a $M$ that guarantees this property.

Consider the sequence of functions
\begin{align}
	{\vect f}^1(\NormalForce^1) &= \vect f_{\Configuration_0}(\Velocity_0,\NormalForce^1)\,,\\
	{\vect f}^k(\NormalForce^1,\dots,\NormalForce^k) &= \vect f_{\Configuration_0}({\vect f}^{k-1}(\NormalForce^1,\dots,\NormalForce^{k-1}),\NormalForce)\,.
\end{align}
Examining \eqref{eq:ReachabilityStep}, we see that
\begin{equation}
	\VelocitySet_N(\State_0,h) = {\vect f}^N([0,h]^{N\Contacts})\,.
\end{equation}
Furthermore, if $f_{\Configuration_0}$ has Lipschitz constant $L$, then ${\vect f}^N$ is Lipschitz with constant no more than $L^N$ by the composition rule for Lipschitz functions.
Under Assumption \ref{assump:uniqueLCPoutcome}, $\mathrm{Sim}(h,\State_0,N)$ yields a uniform sample of $[0,h]^{N\Contacts}$ mapped under ${\vect f}^N$. Therefore, the claim holds, with $M$ given by \Cref{prop:epsilonnet}:
\begin{align}
	M &\geq \frac{\ln(\delta\Omega)}{\ln(1-\Omega)}\,, &
	\Omega &= \left\lceil \frac{hL^N\sqrt{N\Contacts}}{\varepsilon'} \right\rceil^{-N\Contacts}\,.
\end{align}

\end{document}